\newcommand{\cmark}{\ding{51}}%
\newcommand{\xmark}{\ding{55}}%
\definecolor{dkgreen}{rgb}{0,0.6,0}
\definecolor{dkred}{rgb}{0.8,0.0,0}
\definecolor{dkblue}{rgb}{0.0,0.0,0.9}
\definecolor{gray}{rgb}{0.5,0.5,0.5}
\definecolor{mauve}{rgb}{0.58,0,0.82}
\definecolor{lightgray}{HTML}{EEEEEE}
\tiny\color{gray},
  \def\\{}%
  \def\texttt#1{<#1>}%
\setlist[itemize]{leftmargin=*}
\def\<{ {\langle} }
\def\>{ {\rangle} }
\def\eqref#1{equation~\ref{#1}}
\def\1{\bm{1}}
\newcommand{\norm}[1]{\left\lVert#1\right\rVert}
\def\vtheta{{\bm{\theta}}}
\DeclareMathAlphabet{\mathsfit}{\encodingdefault}{\sfdefault}{m}{sl}
\SetMathAlphabet{\mathsfit}{bold}{\encodingdefault}{\sfdefault}{bx}{n}
\DeclareMathOperator*{\argmin}{arg\,min}
\newcommand{\loss}{\mathcal{L}}
\newcommand{\cost}{\mathcal{J}}
\newcommand{\pre}{\mathbf{P}}
\newcommand{\grad}{\nabla}
\newcommand{\hyper}{\boldsymbol{\phi}}
\newcommand{\lambfsd}{\lambda_{\text{FSD}}}
\newcommand{\lambwsd}{\lambda_{\text{WSD}}}
\newcommand{\boldu}{\mathbf{u}}
\newcommand{\boldF}{\mathbf{F}}
\newcommand{\boldI}{\mathbf{I}}
\newcommand{\boldt}{\mathbf{t}}
\newcommand{\boldH}{\mathbf{H}}
\newcommand{\boldS}{\mathbf{S}}
\newcommand{\boldG}{\mathbf{G}}
\newcommand{\boldg}{\mathbf{g}}
\newcommand{\boldP}{\mathbf{P}}
\newcommand{\boldW}{\mathbf{W}}
\newcommand{\boldz}{\mathbf{z}}
\newcommand{\bolds}{\mathbf{s}}
\newcommand{\boldx}{\mathbf{x}}
\newcommand{\boldy}{\mathbf{y}}
\newcommand{\boldA}{\mathbf{A}}
\newcommand{\boldB}{\mathbf{B}}
\newcommand{\boldC}{\mathbf{C}}
\newcommand{\boldD}{\mathbf{D}}
\newcommand{\boldJ}{\mathbf{J}}
\newcommand{\cD}{\mathcal{D}}
\newcommand{\Dtrain}{\mathcal{D}_{\text{train}}}
\newcommand{\bolda}{\mathbf{a}}
\newcommand{\boldxtil}{\tilde{\mathbf{x}}}
\newcommand{\lamwsd}{\lambda_{\text{WSD}}}
\newcommand{\lamfsd}{\lambda_{\text{FSD}}}
\newcommand{\boldtheta}{\boldsymbol{\theta}}
\newcommand{\disf}{D_{\text{F}}}
\newcommand{\disw}{D_{\text{W}}}
\newcommand{\batch}{\mathcal{B}}
\newcommand{\boldphi}{\boldsymbol{\phi}}
\newtheorem{theorem}{Theorem}
\newtheorem{corollary}[theorem]{Corollary}
\newtheorem*{theorem*}{Theorem}
\newtheorem*{corollary*}{Corollary}
\renewenvironment{proof}{\noindent\textbf{Proof.}\hspace*{.3em}}{\qed\\}
\newenvironment{proof-sketch}{\noindent\textbf{Proof Sketch}
  \hspace*{1em}}{\qed\bigskip\\}
\newenvironment{proof-idea}{\noindent\textbf{Proof Idea}
  \hspace*{1em}}{\qed\bigskip\\}
\newenvironment{proof-of-lemma}[1][{}]{\noindent\textbf{Proof of Lemma {#1}}
  \hspace*{1em}}{\qed\\}
\newenvironment{proof-of-theorem}[1][{}]{\noindent\textbf{Proof of Theorem {#1}}
  \hspace*{1em}}{\qed\\}
\newenvironment{proof-attempt}{\noindent\textbf{Proof Attempt}
  \hspace*{1em}}{\qed\bigskip\\}
\title{\vspace{-12mm}Amortized Proximal Optimization}
\newcommand*\samethanks[1][\value{footnote}]{\footnotemark[#1]}
\author{\thanks{Equal Contribution.} Juhan Bae\thanks{University of Toronto and Vector Institute for Artificial Intelligence. \texttt{\{jbae,pvicol,rgrosse\}@cs.toronto.edu}.} ,\,
*Paul Vicol\samethanks[2] ,\,
Jeff Z. HaoChen\samethanks[3]\thanks{Stanford University. \texttt{jhaochen@stanford.edu}.} ,\,
Roger Grosse\samethanks[2]
}
\begin{document}
\etocdepthtag.toc{mtchapter}
\etocsettagdepth{mtchapter}{subsection}
\etocsettagdepth{mtappendix}{none}

\maketitle

\begin{abstract}
We propose a framework for online meta-optimization of parameters that govern optimization, called Amortized Proximal Optimization (APO). We first interpret various existing neural network optimizers as approximate stochastic proximal point methods which trade off the current-batch loss with proximity terms in both function space and weight space. The idea behind APO is to amortize the minimization of the proximal point objective by meta-learning the parameters of an update rule. We show how APO can be used to adapt a learning rate or a structured preconditioning matrix. Under appropriate assumptions, APO can recover existing optimizers such as natural gradient descent and KFAC. It enjoys low computational overhead and avoids expensive and numerically sensitive operations required by some second-order optimizers, such as matrix inverses. We empirically test APO for online adaptation of learning rates and structured preconditioning matrices for regression, image reconstruction, image classification, and natural language translation tasks. Empirically, the learning rate schedules found by APO generally outperform optimal fixed learning rates and are competitive with manually tuned decay schedules. Using APO to adapt a structured preconditioning matrix generally results in optimization performance competitive with second-order methods. Moreover, the absence of matrix inversion provides numerical stability, making it effective for low precision training.
\end{abstract}
\section{Introduction}
\label{sec:introduction}

Many optimization algorithms widely used in machine learning can be seen as approximations to an idealized algorithm called the proximal point method (PPM). 
When training neural networks, the stochastic PPM iteratively minimizes a loss function $\cost_{\batch} \colon \mathbb{R}^m \to \mathbb{R}$ on a mini-batch $\mathcal{B}$, plus a proximity term that penalizes the discrepancy from the current iterate:
\begin{align}
    \vtheta^{(t+1)} \leftarrow \argmin_{\mathbf{u} \in \mathbb{R}^m} \cost_{\batch^{(t)}} (\mathbf{u}) + \lambda D (\mathbf{u}, \vtheta^{(t)}),
    \label{eq:proximal-point-method}
\end{align}
where $D(\cdot, \cdot)$ measures the discrepancy between two vectors and $\lambda > 0$ is a hyperparameter that controls the strength of the proximity term.
The proximity term discourages the update from excessively changing the parameters, hence preventing aggressive updates.
Moreover, the stochastic PPM has good convergence properties~\citep{asi2019stochastic}. While minimizing Eq.~\ref{eq:proximal-point-method} exactly is usually impractical (or at least uneconomical), solving it approximately (by taking first or second-order Taylor series approximations to the loss or the proximity term) has motivated important and widely used optimization algorithms such as natural gradient descent~\citep{amari1998natural} and mirror descent~\citep{beck2003mirror}. Stochastic gradient descent (SGD)~\citep{robbins1951stochastic} itself can be seen as an approximate PPM where the loss term is linearized and the discrepancy function is squared Euclidean distance. 

Inspired by the idea that the PPM is a useful algorithm to approximate, we propose to amortize the minimization of Eq.~\ref{eq:proximal-point-method} by defining a parametric form for an update rule which is likely to be good at minimizing it and adapting its parameters with gradient-based optimization. We consider adapting optimization hyperparameters (such as the learning rate) for existing optimizers such as SGD and RMSprop~\citep{Tieleman2012}, as well as learning structured preconditioning matrices. By choosing a structure for the update rule inspired by existing optimizers, we can take advantage of the insights that went into those optimizers while still being robust to cases where their assumptions (such as the use of linear or quadratic approximations) break down. By doing meta-descent on the optimization parameters, we can amortize the cost of minimizing the PPM objective, which would otherwise take many steps per parameter update.
Hence, we call our approach \emph{Amortized Proximal Optimization (APO)}.

Eq.~\ref{eq:proximal-point-method} leaves a lot of freedom for the proximity term. We argue that many of the most effective neural network optimizers can be seen as trading off two different proximity terms: a \emph{function space discrepancy (FSD)} term which penalizes the average change to the network's predictions, and a \emph{weight space discrepancy (WSD)} term which prevents the weights from moving too far, hence encouraging smoothness to the update and maintaining the accuracy of second-order approximations. Our meta-objective includes both terms.

Our formulation of APO is general, and can be applied to various settings, from optimizing a single optimization hyperparameter to learning a flexible update rule.
We consider two use cases that cover both ends of this spectrum.
At one end, we consider the problem of adapting learning rates of existing optimizers, specifically SGD, RMSprop, and Adam~\citep{duchi2011adaptive}. 
The learning rate is considered one of the most essential hyperparameters to tune~\citep{bengio2012practical}, and good learning rate schedules are often found by years of trial and error. Empirically, the learning rates found by APO outperformed the best fixed learning rates and were competitive with manual step decay schedules.

Our second use case is more ambitious. We use APO to learn a preconditioning matrix, giving the update rule the flexibility to represent second-order optimization updates such as Newton's method, Gauss-Newton, or natural gradient descent. We show that, under certain conditions, the optimum of our APO meta-objective with respect to a full preconditioning matrix coincides with damped versions of natural gradient descent or Gauss-Newton. While computing and storing a full preconditioning matrix for a large neural network is impractical, various practical approximations have been developed. We use APO to meta-learn a structured preconditioning matrix based on the EKFAC optimizer~\citep{george2018fast}. APO is more straightforward to implement in current-day deep learning frameworks than EKFAC and is also more computationally efficient per iteration because it avoids the need to compute eigendecompositions. Empirically, we evaluate APO for learning structured preconditioners on regression, image reconstruction, image classification, and neural machine translation tasks. The preconditioning matrix adapted by APO achieved competitive convergence to other second-order optimizers.
\section{Preliminaries}
\label{sec:preliminaries}

Consider a prediction problem from some input space $\mathcal{X}$ to an output space $\mathcal{T}$.
We are given a finite training set $\mathcal{D}_{\text{train}} = \{(\mathbf{x}^{(i)}, \mathbf{t}^{(i)})\}^{N}_{i=1}$. For a data point $(\mathbf{x}, \mathbf{t})$ and parameters $\vtheta \in \mathbb{R}^m$, let $\mathbf{y} = f(\mathbf{x}, \vtheta)$ be the prediction of a network parameterized by $\vtheta$ and $\mathcal{L} (\mathbf{y}, \mathbf{t})$ be the loss.
Our goal is to minimize the cost function:
\begin{align}
    \cost (\vtheta) = \frac{1}{N} \sum_{i=1}^N \mathcal{L} (f(\mathbf{x}^{(i)}, \vtheta), \mathbf{t}^{(i)}).
    \label{eq:cost}
\end{align}
We use $\cost_{\mathcal{B}} (\vtheta)$ to denote the mean loss on a mini-batch of examples $\mathcal{B} = \{(\mathbf{x}^{(i)}, \mathbf{t}^{(i)})\}_{i=1}^B$. We summarize the notation used in this paper in Appendix~\ref{app:table-of-notation}.

\section{Proximal Optimization and Second-Order Methods: \\A Unifying Framework}
\label{sec:prox-second-order}

We first motivate the proximal objective that we use as the meta-objective for APO, and relate it to existing neural network optimization methods.
Our framework is largely based on \citet{GrosseNNTDChapter4}, to which readers are referred for a more detailed discussion. 

\subsection{Proximal Optimization}
\label{sec:proximal-optimization}

\begin{figure*}[t]
    \centering
    \small
    \includegraphics[width=1\linewidth]{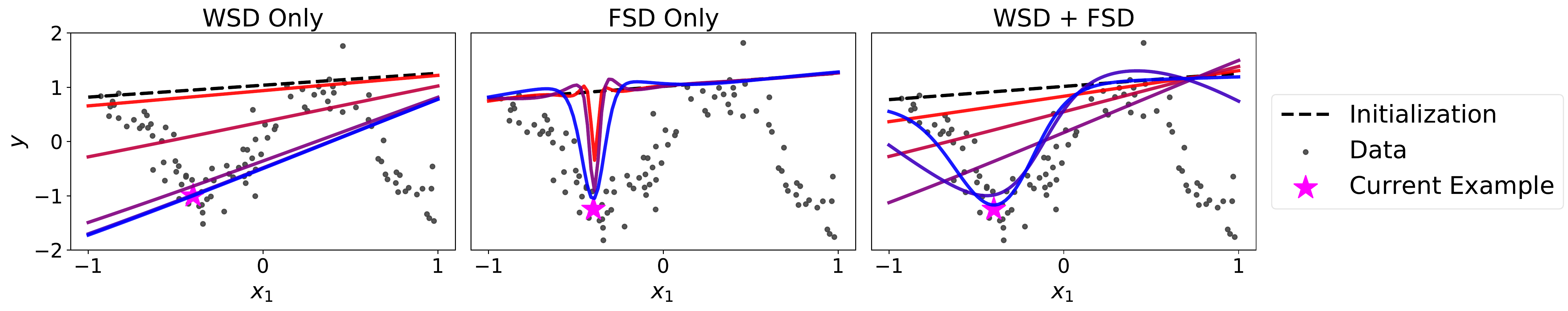}
    \vspace{-0.4cm}
    \caption{1D illustration of the exact proximal update on a regression problem with a batch size of 1, taking inspiration from~\citet{GrosseNNTDChapter4}.
    The weight of the discrepancy term(s) ($\lamwsd$ and $\lamfsd$) is decreased from {\color{red} \textbf{red}} to {\color{blue} \textbf{blue}}.
    }
    \label{fig:1d_regression}
    \vspace{-0.9cm}
\end{figure*}

When we update the parameters on a mini-batch of data, we would like to reduce the loss on that mini-batch, while not changing the predictions on previously visited examples or moving too far in weight space.
This motivates the following proximal point update:
\begin{align}
     \vtheta^{(t+1)} \leftarrow \argmin_{\mathbf{u} \in \mathbb{R}^m} \cost_{\batch^{(t)}} (\mathbf{u}) &+ \lamfsd \mathbb{E}_{\tilde{\mathbf{x}} \sim \mathcal{D}} [ \disf(\mathbf{u}, \vtheta^{(t)}, \tilde{\mathbf{x}})] + \lamwsd \disw ({\mathbf{u}, \vtheta^{(t)}}),
    \label{eq:nn-proximal-point-method}
\end{align}
where $\disf(\mathbf{u}, \vtheta, \mathbf{x}) = \rho (f(\mathbf{x}, \mathbf{u}), f(\mathbf{x}, \boldsymbol{\theta}))$ and $\disw(\mathbf{u}, \vtheta) = \sfrac{1}{2}\|\mathbf{u} - \vtheta\|^2_2$ are discrepancy functions, described in detail below.
Here, $\lamfsd$ and $\lamwsd$ are hyperparameters that control the strength of each discrepancy term, $\tilde{\mathbf{x}}$ is a random data point sampled from the data-generating distribution $\mathcal{D}$, and $\rho(\cdot, \cdot)$ is the output-space divergence. 

The proximal objective in Eq.~\ref{eq:nn-proximal-point-method} consists of three terms.
The first term is the loss on the current mini-batch.
The second term is the \emph{function space discrepancy (FSD)}, whose role is to prevent the update from substantially altering the predictions on other data points.
The general idea of the FSD term has been successful in alleviating catastrophic forgetting~\citep{benjamin2018measuring}, fine-tuning pre-trained models~\citep{jiang2019smart}, and training a student model from a teacher network~\citep{hinton2015distilling}.

The final term is the \emph{weight space discrepancy (WSD)}; this term encourages the update to move the parameters as little as possible. It can be used to motivate damping in the context of second-order optimization~\citep{martens2016second}. While weight space distance may appear counterproductive from an optimization standpoint because it depends on the model parameterization, modern analyses of neural network optimization and generalization often rely on neural network parameters staying close to their initialization in the Euclidean norm~\citep{jacot2018neural,zhang2019fast,belkin2019reconciling}. In fact, \citet{wadia2021whitening} showed that pure second-order optimizers (i.e.~ones without some form of WSD regularization) are unable to generalize in the overparameterized setting.

Figure~\ref{fig:1d_regression} illustrates the effects of the WSD and FSD terms on the exact PPM update for a 1D regression example with a batch size of 1.
If the proximal objective includes only the loss and WSD term (i.e.~$\lamfsd = 0$), the PPM update makes the minimal change to the weights which fits the current example, resulting in a global adjustment to the function which overwrites all the other predictions.
If only the loss and FSD terms are used (i.e.~$\lamwsd = 0$), the update carves a spike around the current data point, failing to improve predictions on nearby examples and running the risk of memorization. When both WSD and FSD are penalized, it makes a local adjustment to the predictions, but one which nonetheless improves performance on nearby examples.

\subsection{Connection Between Proximal Optimization and Second-Order Optimization}
\label{sec:connection}

\begin{wraptable}{r}{0.5\textwidth}
    \vspace{-0.9cm}
    \footnotesize
    \begin{tabular}{@{}lccc@{}}
    \toprule
    \textbf{Method} & \textbf{Loss Approx.} & \textbf{FSD} & \textbf{WSD} \\ \midrule
    \textbf{Gradient Descent}         & $1^{\text{st}}$-order  & - & \cmark \\
    \textbf{Hessian-Free} & $2^{\text{nd}}$-order & - & \cmark \\
    \textbf{Natural Gradient}  & $1^{\text{st}}$-order  & $2^{\text{nd}}$-order  & \xmark  \\
    \textbf{Proximal Point} & Exact & Exact & \cmark \\
    \bottomrule
    \end{tabular}
    \caption{Various classical first- and second-order optimization algorithms can be interpreted as minimizing approximations of the proximal objective in Eq.~\ref{eq:nn-proximal-point-method}, using 1$^{\text{st}}$ or 2$^{\text{nd}}$ order Taylor expansions of the loss or FSD terms.}
    \label{table:second-order-methods}
\end{wraptable}

We further motivate our proximal objective by relating it to existing neural network optimizers. Ordinary SGD can be viewed as an approximate PPM update with a first-order approximation to the loss term and no FSD term. Hessian-free optimization~\citep{martens2010deep}, a classic second-order optimization method for neural networks, approximately minimizes a second-order approximation to the loss on each batch using conjugate gradients. It can be seen as minimizing a quadratic approximation to Eq.~\ref{eq:nn-proximal-point-method} with no FSD term.

\citet{amari1998natural} motivated natural gradient descent (NGD) as a steepest descent method with an infinitesimal step size; this justifies a first-order approximation to the loss term and a second-order approximation to the proximity term. Optimizing over a manifold of probability distributions with KL divergence as the proximity term yields the familiar update involving the Fisher information matrix. Natural gradient optimization of neural networks~\citep{martens2014new,martens2015optimizing} can be interpreted as minimizing Eq.~\ref{eq:nn-proximal-point-method} with a linear approximation to the loss term and a quadratic approximation to the FSD term. While NGD traditionally omits the WSD term in order to achieve parameterization invariance, it is typically included in practical neural network optimizers for stability \citep{martens2015optimizing}. 

In a more general context, when taking a first-order approximation to the loss and a second-order approximation to the FSD, the update rule is given in closed form as:
\begin{align}
    \vtheta^{(t+1)} \approx \vtheta^{(t)} -  (\lambda_{\text{FSD}} \mathbf{G} + \lambda_{\text{WSD}} \mathbf{I})^{-1}  \grad_{\vtheta} \cost_{\mathcal{B}} (\vtheta^{(t)}),
    \label{eq:quadratic-proximal}
\end{align}
where $\mathbf{G}$ is the Hessian of the FSD term. The derivation is shown in Appendix~\ref{app:approximate-proximal-point}. All of these relationships are summarized in Table~\ref{table:second-order-methods}, and derivations of all of these claims are given in Appendix~\ref{app:opt-alg-derivations}.

\section{Amortized Proximal Optimization}
\label{sec:apo}

In this section, we introduce Amortized Proximal Optimization (APO), an approach for online meta-learning of optimization parameters. 
Then, we describe two use cases that we explore in the paper: (1) adapting learning rates of existing base optimizers such as SGD, RMSProp, and Adam, and (2) meta-learning a structured preconditioning matrix.

\begin{figure*}[t]
    \small
    \centering
    \includegraphics[width=0.8\linewidth]{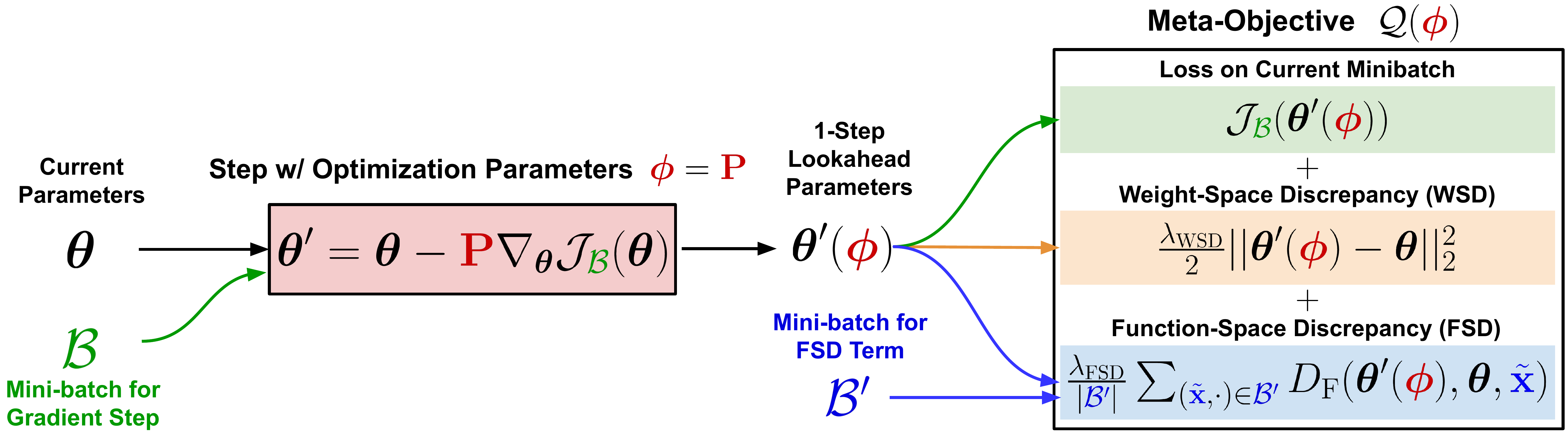}
    \caption{\textbf{Amortized Proximal Optimization (APO)}. 
    In each meta-optimization step, we perform a one-step lookahead from the current parameters $\boldtheta$ to obtain updated parameters $\boldtheta'({\color{dkred}\boldphi})$, where ${\color{dkred} \boldphi}$ denotes the optimization parameters (e.g.~learning rate ${\color{dkred}\eta}$ or preconditioner ${\color{dkred} \boldP}$).
    The meta-objective $\mathcal{Q}({\color{dkred}\boldphi})$ then evaluates the proximal point objective at $\boldtheta'({\color{dkred}\boldphi})$.
    Note that the loss term in $\mathcal{Q}({\color{dkred}\boldphi})$ is computed on the same data that was used to compute the gradient for the lookahead step, ${\color{dkgreen} \batch}$, while the FSD term is computed using a different batch ${\color{dkblue} \batch'}$.
    The optimization parameters ${\color{dkred} \boldphi}$ are updated via the meta-gradient $\nabla_{{\color{dkred} \boldphi}} \mathcal{Q}({\color{dkred} \boldphi})$.
    }
    \label{fig:apo-diagram}
    \vspace{-0.4cm}
\end{figure*}

\setlength{\textfloatsep}{5pt}
\begin{algorithm*}[h]
    \caption{Amortized Proximal Optimization (APO) --- Meta-Learning the Optimization Parameters ${\color{red}\boldphi}$}
    \begin{algorithmic}[l]
    \State \textbf{Require:} $\boldtheta$ (initial model parameters), $\boldphi$ (initial optimization parameters), $K$ (meta-update interval), $\alpha$ (meta-LR)
    \State \textbf{Require:} $\lamwsd$ (weight-space discrepancy term weighting), $\lamfsd$ (function-space discrepancy term weighting)
    \While{not converged, iteration $t$}
      	\State ${\color{dkgreen} \batch} \sim \Dtrain$  \algorithmiccomment{Sample mini-batch to compute the gradient and loss term}
        \If{\ $t \mod K = 0$\ }  \algorithmiccomment{Perform meta-update every $K$ iterations}
        \State ${\color{dkblue}\mathcal{B}'} \sim \Dtrain$ \algorithmiccomment{Sample additional mini-batch to compute the FSD term}
        \State $\boldtheta'({\color{dkred}\boldphi}) := u(\vtheta, {\color{dkred}\boldphi}, {\color{dkgreen} \batch})$  \algorithmiccomment{Compute the 1-step lookahead parameters}
        \State $\mathcal{Q}({\color{dkred}\boldphi}) := \mathcal{J}_{{\color{dkgreen} \batch}} \left(\boldtheta'({\color{dkred}\boldphi}\right)) + \sfrac{\lamfsd}{|{\color{dkblue}\mathcal{B}'}|} \sum_{({\color{dkblue}\tilde{\boldx}}, \cdot) \in {\color{dkblue}\mathcal{B}'}} \disf(\boldtheta'({\color{dkred}\boldphi}), \boldtheta, {\color{dkblue}\tilde{\boldx}}) + \sfrac{\lamwsd}{2} \norm{\boldtheta'({\color{dkred}\boldphi}) - \boldtheta}^2_2$ \algorithmiccomment{Meta-objective}
            \State ${\color{dkred}\boldphi} \gets {\color{dkred}\boldphi} - \alpha \nabla_{{\color{dkred}\boldphi}} \mathcal{Q}({\color{dkred}\boldphi})$ \algorithmiccomment{Update optimizer parameters (e.g.~LR or preconditioner)}
        \EndIf
        \State $\boldtheta \gets u(\boldtheta, {\color{dkred}\boldphi}, {\color{dkgreen}\mathcal{B}})$ \algorithmiccomment{Update model parameters}
    \EndWhile
    \end{algorithmic}
    \label{alg:APO}
\end{algorithm*}

\subsection{Proximal Meta-Optimization}
We assume an update rule $u$ parameterized by a vector $\hyper$ which updates the network weights $\vtheta$ on a batch $\mathcal{B}^{(t)}$:\footnote{The update rule may also depend on state maintained by the optimizer, such as the second moments in RMSprop~\citep{Tieleman2012}. This state is treated as fixed by APO, so we suppress it to avoid clutter.}
\begin{align}
    \vtheta^{(t+1)} \gets u(\vtheta^{(t)}, \hyper, \mathcal{B}^{(t)}),
\end{align}

One use case of APO is to tune the hyperparameters of an existing optimizer, in which case $\hyper$ denotes the hyperparameters.
For example, when tuning the SGD learning rate, we have $\hyper = \eta$ and the update is given by:
\begin{align}
    u_{\text{SGD}} (\vtheta, \eta, \mathcal{B}) &= \vtheta - \eta \grad_{\vtheta} \mathcal{J}_{\batch} (\vtheta).
\end{align}

More ambitiously, we could use APO to adapt a full preconditioning matrix $\mathbf{P}$. In this case, we define $\hyper = \boldP$ and the update is given by:
\begin{align}
    u_{\text{Precond}} (\vtheta, \boldP, \mathcal{B}) &= \vtheta - \boldP \grad_{\vtheta} \mathcal{J}_{\batch} (\vtheta).
\end{align}

In Section~\ref{sec:prox-second-order}, we introduced a general proximal objective for training neural networks and observed that many optimization techniques could be seen as an approximation of PPM. Motivated by this connection, we propose to directly minimize the proximal objective with respect to the optimization parameters. While still being able to take advantage of valuable properties of existing optimizers, direct minimization can be robust to cases when the assumptions (such as linear and quadratic approximation of the cost) do not hold. Another advantage of adapting a parametric update rule is that we can amortize the cost of minimizing the proximal objective throughout training.

We propose to use the following meta-objective, which evaluates the proximal objective at $u(\boldtheta, \boldphi, \mathcal{B})$:
\begin{align}
    \mathcal{Q}(\hyper) = & \mathbb{E}_{\mathcal{B} \sim \mathcal{D}} \Big[ \cost_{\mathcal{B}}(u(\boldtheta, \boldphi, \mathcal{B}))
    +
    \lambda_{\text{FSD}} \mathbb{E}_{(\tilde{\boldx}, \cdot) \sim \mathcal{D}} \left[ \disf ( u(\boldtheta, \boldphi, \mathcal{B}), \vtheta, \tilde{\mathbf{x}}) \right]
    +
    \frac{\lambda_{\text{WSD}}}{2} \|u(\boldtheta, \hyper, \mathcal{B}) -  \vtheta\|^2 \Big].
    \label{eq:meta-objective-exact}
\end{align}
In practice, we estimate the expectations in the meta-objective by sampling two different mini-batches, $\mathcal{B}$ and $\mathcal{B}'$, where $\mathcal{B}$ is used to compute the gradient and the loss term, and $\mathcal{B}'$ is used to compute the FSD term. Intuitively, this proximal meta-objective aims to find optimizer parameters $\boldphi$ that minimize the loss on the current mini-batch, while constraining the size of the step with the FSD and WSD terms so that it does not overfit the current mini-batch and undo progress that has been made by other mini-batches.

The optimization parameters $\boldphi$ are optimized with a stochastic gradient-based algorithm (the \textit{meta-optimizer}). The meta-gradient $\nabla_{\boldphi} \mathcal{Q}(\boldphi)$ can be computed via automatic differentiation through the one-step unrolled computation graph (see Figure~\ref{fig:apo-diagram}). We refer to our framework as Amortized Proximal Optimization (APO). 
An overview of APO is provided in Algorithm~\ref{alg:APO}.

\vspace{-0.1cm}
\subsection{APO for Learning Rate Adaptation}
\vspace{-0.1cm}
One use case of APO is to adapt the learning rate of an existing base optimizer such as SGD.
To do so, we let $u_{\text{SGD}} (\vtheta, \eta, \mathcal{B})$ be the 1-step lookahead of parameters and minimize the proximal meta-objective with respect to the learning rate $\eta$.
Although adaptive optimizers such as RMSProp and Adam use coordinate-wise adaptive learning rates, they still have a global learning rate parameter which is essential to tune.
APO can be applied to such global learning rates to find learning rate schedules (that depend on $\lamfsd$ or $\lamwsd$).

\vspace{-0.1cm}
\subsection{APO for Adaptive Preconditioning}
\vspace{-0.1cm}
More ambitiously, we can use the APO framework to adapt the preconditioning matrix, allowing the update rule to flexibly represent various second-order optimization updates.
We let $u_{\text{Precond}} (\vtheta, \mathbf{P}, \mathcal{B})$ denote the parameters after 1 preconditioned gradient step and adapt the preconditioning matrix $\boldP$ according to our framework.

If the assumptions made when deriving the second-order methods (detailed in Section~\ref{sec:connection}) hold, then the optimal preconditioning matrix of the proximal meta-objective can be equivalent to various second-order updates, depending on the choice of the FSD function.
\begin{theorem}
    Consider an approximation $\hat{\mathcal{Q}}(\pre)$ to the meta-objective (Eq.~\ref{eq:meta-objective-exact}) where the loss term is linearized around the current weights $\boldtheta$ and the FSD term is replaced by its second-order approximation around $\boldtheta$.
    Denote the gradient on a mini-batch as $\boldg = \grad_{\boldtheta} \cost_{\batch}(\boldtheta)$, and assume that the second moment matrix $\mathbb{E}_{\batch \sim \mathcal{D}} \left[ \boldg \boldg^\top \right]$ is non-singular.
    Then, the preconditioning matrix which minimizes $\hat{\mathcal{Q}}$ is given by $\pre^{\star} = (\lamfsd \mathbf{G} + \lamwsd \mathbf{I})^{-1}$, where $\boldG$ denotes the Hessian of the FSD evaluated at $\boldtheta$.
    \label{thm:optimal-precond}
\end{theorem}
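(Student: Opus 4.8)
The plan is to substitute the preconditioned update $\boldtheta' = u_{\text{Precond}}(\boldtheta, \pre, \batch) = \boldtheta - \pre\boldg$ into the meta-objective (Eq.~\ref{eq:meta-objective-exact}), apply the two stated approximations to turn each term into a quadratic in $\pre$, and then minimize in closed form. First I would linearize the loss: $\cost_{\batch}(\boldtheta') \approx \cost_{\batch}(\boldtheta) + \boldg^\top(\boldtheta' - \boldtheta) = \cost_{\batch}(\boldtheta) - \boldg^\top \pre \boldg$. For the FSD term the crucial observation is that $\disf(\boldtheta, \boldtheta, \tilde{\boldx}) = 0$ and its gradient in the first argument vanishes at $\boldtheta' = \boldtheta$ (since $\rho$ is a divergence minimized at coincidence), so its second-order expansion has neither a constant nor a linear part: $\mathbb{E}_{\tilde{\boldx}}[\disf(\boldtheta', \boldtheta, \tilde{\boldx})] \approx \tfrac{1}{2}(\boldtheta'-\boldtheta)^\top \boldG (\boldtheta'-\boldtheta) = \tfrac{1}{2}\boldg^\top \pre^\top \boldG \pre \boldg$, where $\boldG = \mathbb{E}_{\tilde{\boldx}}[\grad^2 \disf]$ is the FSD Hessian at $\boldtheta$. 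The WSD term is exactly $\tfrac{1}{2}\|\pre\boldg\|^2 = \tfrac{1}{2}\boldg^\top\pre^\top\pre\boldg$.

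Next I would take the expectation over $\batch$ and rewrite everything as traces against the second moment matrix $\mathbf{S} = \mathbb{E}_{\batch}[\boldg\boldg^\top]$. Discarding the $\pre$-independent constant, this gives $\hat{\mathcal{Q}}(\pre) = -\Tr(\pre \mathbf{S}) + \tfrac{\lamfsd}{2}\Tr(\pre^\top \boldG \pre \mathbf{S}) + \tfrac{\lamwsd}{2}\Tr(\pre^\top \pre \mathbf{S})$. Here I would note that the independence of $\batch$ and $\tilde{\boldx}$ lets the expected FSD Hessian $\boldG$ pull out of the $\batch$-expectation, so that only $\boldg$ remains inside $\mathbf{S}$. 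Applying the standard identities $\grad_{\pre}\Tr(\pre\mathbf{S}) = \mathbf{S}$ and $\grad_{\pre}\Tr(\pre^\top \mathbf{A}\pre\mathbf{B}) = \mathbf{A}\pre\mathbf{B} + \mathbf{A}^\top\pre\mathbf{B}^\top$ (with the symmetric choices $\mathbf{A}\in\{\boldG,\boldI\}$ and $\mathbf{B}=\mathbf{S}$), the stationarity condition $\grad_{\pre}\hat{\mathcal{Q}} = 0$ becomes $-\mathbf{S} + \lamfsd \boldG\pre\mathbf{S} + \lamwsd \pre\mathbf{S} = 0$, that is $(\lamfsd\boldG + \lamwsd\boldI)\pre\mathbf{S} = \mathbf{S}$.

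The non-singularity hypothesis on $\mathbf{S}$ enters precisely here: right-multiplying by $\mathbf{S}^{-1}$ collapses the condition to $(\lamfsd\boldG + \lamwsd\boldI)\pre = \boldI$, yielding $\pre^{\star} = (\lamfsd\boldG + \lamwsd\boldI)^{-1}$. Finally, to confirm this stationary point is the global minimizer I would verify strict convexity: after vectorizing $\pre$, the quadratic part of $\hat{\mathcal{Q}}$ has Hessian $\mathbf{S}\otimes(\lamfsd\boldG + \lamwsd\boldI)$, which is positive definite because $\mathbf{S}$ is positive definite by the non-singularity assumption and $\lamfsd\boldG + \lamwsd\boldI$ is positive definite since $\boldG \succeq 0$ and $\lamwsd > 0$.

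I expect the main obstacle to be the FSD expansion rather than the algebra: one must justify that the second-order Taylor approximation has no constant or first-order contribution (the divergence property) and correctly identify $\boldG$ as the expected FSD Hessian that factors cleanly out of the batch expectation. Once the single quadratic form $\tfrac{1}{2}\Tr(\pre^\top(\lamfsd\boldG+\lamwsd\boldI)\pre\mathbf{S}) - \Tr(\pre\mathbf{S})$ is assembled, the differentiation, the cancellation of $\mathbf{S}$, and the convexity check are all routine.
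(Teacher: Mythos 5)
Your proposal is correct and follows essentially the same route as the paper's proof in Appendix~\ref{app:precond-opt}: linearize the loss, expand the FSD to second order (with the constant and linear terms vanishing because $\disf$ is a divergence minimized at coincidence), reduce the stationarity condition to $(\lamfsd \boldG + \lamwsd \boldI)\pre\,\mathbb{E}[\boldg\boldg^\top] = \mathbb{E}[\boldg\boldg^\top]$, and cancel the non-singular second moment matrix. Your trace reformulation and the closing convexity check via the Kronecker-product Hessian are cosmetic additions (the paper stops at the stationary point), but the argument is the same.
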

The proof is provided in Appendix~\ref{app:precond-opt}. 
As an important special case, when the FSD term is derived from the KL divergence between distributions in output space, $\boldG$ coincides with the Fisher information matrix $\mathbf{F} = \mathbb{E}_{\boldx \sim \cD, \boldy \sim P_{\boldy \mid \boldx}(\boldtheta)}\left[ \nabla_{\boldtheta} \log p(\boldy | \boldx, \boldtheta) \nabla_{\boldtheta} \log p(\boldy | \boldx, \boldtheta)^\top \right]$, where $P_{\boldy \mid \boldx}(\boldtheta)$ denotes the model's predictive distribution over $\boldy$.
Therefore, the optimal preconditioner is the damped natural gradient preconditioner, $\mathbf{P}^{\star} = (\mathbf{F} + \lamwsd \mathbf{I})^{-1}$ when $\lamfsd = 1$. When APO is used to exactly minimize an approximate meta-objective, the update it yields coincides with classical second-order optimization algorithms, depending on the choice of the FSD function.

\subsection{Structured Preconditioner Adaptation}
In the previous sections, the discussion assumed a full preconditioning matrix for simplicity. However, the full preconditioner is impractical to represent for modern neural networks. Moreover, for practical stability of the learned preconditioned update, we would like to enforce the preconditioner to be positive semidefinite (PSD) so that the transformed gradient is a descent direction~\citep{wright1999numerical}.

To satisfy these requirements, we adopt a structured preconditioner analogous to that of the EKFAC optimizer~\citep{george2018fast}. Given a weight matrix $\mathbf{W} \in \mathbb{R}^{m_i \times m_{i+1}}$ of a layer, we construct the preconditioning matrix as a product of smaller matrices:
\begin{align}
    \pre_{\text{S}} = (\mathbf{A} \otimes \mathbf{B}) \text{diag}(\text{vec}(\mathbf{S}))^2 (\mathbf{A} \otimes \mathbf{B})^\top,
    \label{eq:ekfac-param}
\end{align}
where $\mathbf{A} \in \mathbb{R}^{m_{i+1} \times m_{i+1}}$, $\mathbf{B} \in \mathbb{R}^{m_{i} \times m_{i}}$, and $\mathbf{S} \in \mathbb{R}^{m_i \times m_{i+1}}$ are small block matrices. Here, $\otimes$ denotes the Kronecker product, $\text{diag}(\cdot)$ denotes the diagonalization operator, and $\text{vec}(\cdot)$ denotes the vectorization operator.
This parameterization is memory efficient: it requires $m_i^2 + m_{i+1}^2 + m_i m_{i+1}$ parameters to store, as opposed to $m_i^2 m_{i+1}^2$ parameters for the full preconditioning matrix.
It is straightforward to show that the structured preconditioner in Eq.~\ref{eq:ekfac-param} is PSD, as it takes the form $\boldC \boldD \boldC^\top$, where $\boldD$ is PSD. The preconditioned gradient can be computed efficiently by using the properties of the Kronecker product:
\begin{align}
\pre_{\text{S}} \text{vec}(\grad_{\mathbf{W}} &\cost_{\batch} (\vtheta)) = \text{vec}(\mathbf{B} (\mathbf{S}^2 \odot \mathbf{B}^{\top} \grad_{\mathbf{W}} \cost_{\batch} (\vtheta) \mathbf{A}) \mathbf{A}^{\top}),
\end{align}
where $\odot$ denotes elementwise multiplication. This is tractable to compute as it only requires four additional matrix multiplications and elementwise multiplication of small block matrices in each layer when updating the parameters. While EKFAC uses complicated covariance estimation and eigenvalue decomposition to construct the block matrices, in APO, we meta-learn these block matrices directly, where $\boldphi = [\text{vec}(\boldA)^{\top}, \text{vec}(\boldB)^{\top}, \text{vec}(\boldS)^{\top}]^{\top}$. As APO does not require inverting (or performing eigendecompositions of) the block matrices, our structured representation incurs less computation per iteration than KFAC and EKFAC.

While we defined an optimizer with the same functional form as EKFAC, it is not immediately obvious whether the preconditioner which is actually learned by APO will be at all similar. A Corollary of Theorem~\ref{thm:optimal-precond} shows that if certain conditions are satisfied, including the assumptions underlying KFAC~\citep{martens2015optimizing}, then the structured preconditioner minimizing $\hat{\mathcal{Q}}$ coincides with KFAC:
\begin{corollary}
\label{thm:kfac}
Suppose that (1) the assumptions for Theorem~\ref{thm:optimal-precond} are satisfied, (2) the FSD term measures the KL divergence, and (3) $\lamwsd = 0$ and $\lamfsd = 1$.
Moreover, suppose that the parameters $\vtheta$ satisfy the KFAC assumptions listed in Appendix~\ref{app:meta-opt-kfac}.
Then, the optimal solution to the approximate meta-objective $\hat{\mathcal{Q}}$ recovers the KFAC update, which can be represented using the structured preconditioner in Eq.~\ref{eq:ekfac-param}.
\end{corollary}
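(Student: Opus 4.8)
The plan is to use Theorem~\ref{thm:optimal-precond} to reduce the corollary to a pure representation question, and then verify that the EKFAC-style parameterization of Eq.~\ref{eq:ekfac-param} is expressive enough to contain the resulting optimal preconditioner \emph{exactly}. First I would instantiate Theorem~\ref{thm:optimal-precond} under the hypotheses at hand. Since the FSD term measures the KL divergence, its Hessian $\boldG$ at $\boldtheta$ equals the Fisher information matrix $\boldF$ (as noted immediately after Theorem~\ref{thm:optimal-precond}). Substituting $\lamfsd = 1$ and $\lamwsd = 0$ into $\pre^{\star} = (\lamfsd \boldG + \lamwsd \boldI)^{-1}$ shows that the unconstrained minimizer of $\hat{\mathcal{Q}}$ is $\pre^{\star} = \boldF^{-1}$. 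Here I would remark that the KFAC assumptions force the relevant covariance factors to be positive definite, so $\boldF$ is invertible and $\pre^{\star}$ is well defined despite $\lamwsd = 0$. Because the KFAC assumptions also render $\boldF$ block-diagonal across layers with each block a Kronecker product, and the structured family of Eq.~\ref{eq:ekfac-param} is exactly the set of block-diagonal matrices whose blocks take the exhibited form, it suffices to show that each layer block of $\boldF^{-1}$ lies in that per-layer structured form.

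Next I would carry out the representation step on a single layer. Under the KFAC independence assumption, the layer's Fisher block factors as $\boldF = \widehat{\mathbf{A}} \otimes \widehat{\mathbf{B}}$, where $\widehat{\mathbf{A}}$ and $\widehat{\mathbf{B}}$ are the pre-activation-gradient and activation covariance matrices (with the factor order fixed by the $\text{vec}$ convention). Eigendecomposing the factors as $\widehat{\mathbf{A}} = \mathbf{U}_A \boldsymbol{\Lambda}_A \mathbf{U}_A^\top$ and $\widehat{\mathbf{B}} = \mathbf{U}_B \boldsymbol{\Lambda}_B \mathbf{U}_B^\top$, then using $(\mathbf{X}\otimes\mathbf{Y})^{-1} = \mathbf{X}^{-1}\otimes\mathbf{Y}^{-1}$ together with the mixed-product property, I would obtain $\boldF^{-1} = (\mathbf{U}_A \otimes \mathbf{U}_B)(\boldsymbol{\Lambda}_A^{-1} \otimes \boldsymbol{\Lambda}_B^{-1})(\mathbf{U}_A \otimes \mathbf{U}_B)^\top$. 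Comparing with Eq.~\ref{eq:ekfac-param}, I would set $\boldA = \mathbf{U}_A$, $\boldB = \mathbf{U}_B$, and choose $\boldS$ so that $\text{diag}(\text{vec}(\boldS))^2 = \boldsymbol{\Lambda}_A^{-1} \otimes \boldsymbol{\Lambda}_B^{-1}$, i.e.~taking the entries of $\boldS$ to be the reciprocal square roots of the products of eigenvalues. The one genuinely fiddly point is checking that the column-stacking ordering of $\text{vec}(\boldS)$ agrees with the ordering of the Kronecker-product diagonal; this is a routine index-matching exercise given the dimensions $\boldA\in\mathbb{R}^{m_{i+1}\times m_{i+1}}$, $\boldB\in\mathbb{R}^{m_i\times m_i}$, $\boldS\in\mathbb{R}^{m_i\times m_{i+1}}$.

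Finally I would assemble the optimality conclusion. The identity above exhibits $\boldF^{-1}$ as a member of the structured family of Eq.~\ref{eq:ekfac-param} (a matrix of the form $\boldC\boldD\boldC^\top$ with $\boldD$ diagonal PSD). Since $\boldF^{-1}$ is the unique global minimizer of $\hat{\mathcal{Q}}$ over all preconditioners (uniqueness coming from the non-singularity assumption of Theorem~\ref{thm:optimal-precond}), it is \emph{a fortiori} the minimizer over the smaller structured family; hence the optimal structured preconditioner equals $\boldF^{-1}$. Identifying $\boldF^{-1} = \widehat{\mathbf{A}}^{-1} \otimes \widehat{\mathbf{B}}^{-1}$ as precisely the KFAC update on that layer then completes the argument. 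I expect the main obstacle to be the containment/expressiveness step: verifying that the fixed shape of Eq.~\ref{eq:ekfac-param} can reproduce $\boldF^{-1}$ exactly, and that the KFAC assumptions (Kronecker factorization together with block-diagonality) are exactly what place $\boldF^{-1}$ into this family. Everything else reduces to invoking Theorem~\ref{thm:optimal-precond} and to bookkeeping.
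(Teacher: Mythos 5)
Your proposal is correct and follows essentially the same route as the paper's proof: invoke Theorem~\ref{thm:optimal-precond} to get $\pre^{\star}=\boldF^{-1}$, use the KFAC assumptions to obtain the block-diagonal Kronecker factorization, and observe that this lies in the structured family of Eq.~\ref{eq:ekfac-param}. Your explicit eigendecomposition argument for the representation step (setting $\boldA=\mathbf{U}_A$, $\boldB=\mathbf{U}_B$, and $\boldS$ to reciprocal square roots of eigenvalue products) usefully fills in a detail the paper's proof asserts without elaboration, but it is the same argument.
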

The proof is provided in Appendix~\ref{app:meta-opt-kfac}. If the KFAC assumptions are not satisfied, then APO will generally learn a different preconditioner. This may be desirable, especially if differing probabilistic assumptions lead to different update rules, as is the case for KFAC applied to convolutional networks~\citep{grosse2016kronecker,laurent2018an}. 

\subsection{Computation and Memory Cost}
\paragraph{Computation Cost.}
Computing the FSD term requires sampling an additional mini-batch from the training set and performing two additional forward passes for $f(\tilde{\boldx}, \boldtheta)$ and $f(\tilde{\boldx}, u(\vtheta, \hyper, \batch))$.
Combined with the loss term evaluated on the original mini-batch, one meta-optimization step requires three forward passes to compute the proximal objective. It additionally requires a backward pass through the 1-step unrolled computation graph (Figure~\ref{fig:apo-diagram}) to compute the gradient of the proximal meta-objective $\mathcal{Q}(\boldphi)$ with respect to $\boldphi$.
This overhead can be reduced by performing a meta-update only once every $K$ iterations: the overhead will consist of $3/K$ additional forward passes and $1/K$ additional backward passes per iteration, which is small for modest values of $K$ (e.g.~$K=10$).

\paragraph{Memory Cost.}
APO requires twice the model memory for the 1-step unrolling when computing the proximal meta-objective.
In the context of structured preconditioner adaptation, we further need to store block matrices $\mathbf{A}$, $\mathbf{B}$, and $\mathbf{S}$ (in Eq.~\ref{eq:ekfac-param}) for each layer, as in KFAC and EKFAC. 

\section{Related Work}
\label{sec:related-work}

\paragraph{Black-Box Hyperparameter Optimization.}
Finding good optimization hyperparameters is a longstanding problem~\citep{bengio2012practical}.
Black-box methods for hyperparameter optimization such as grid search, random search~\citep{bergstra2012random}, and Bayesian optimization~\citep{snoek2012practical,swersky2014freeze,snoek2015scalable} are expensive, as they require performing many complete training runs, and can only find fixed hyperparameter values (e.g.~a constant learning rate).
Hyperband~\citep{li2016hyperband} can reduce the cost by terminating poorly-performing runs early, but is still limited to finding fixed hyperparameters.
Population Based Training (PBT)~\citep{jaderberg2017population} trains a population of networks simultaneously, and throughout training it terminates poorly-performing networks, replaces their weights with a copy of the weights of a better-performing network, perturbs the hyperparameters, and continues training from that point.
PBT can find a coarse-grained learning rate schedule, but because it relies on random search, it is far less efficient than gradient-based meta-optimization.

\vspace{-0.2cm}
\paragraph{Gradient-Based Learning Rate Adaptation.}
\citet{maclaurin2015gradient} backpropagate through the full unrolled training procedure to find optimal learning rate schedules \textit{offline}. This is expensive, as it requires completing an entire training run to make a single hyperparameter update. A related approach is to unroll the computation graph for a small number of steps and perform truncated backpropagation~\citep{domke2012generic,franceschi2017forward}.
Hypergradient descent~\citep{baydin2017online} computes adapts the learning rate to minimize the expected loss in the next iteration.

\vspace{-0.2cm}
\paragraph{Loss-Based Learning Rate Adaptation.}
Several works have proposed modern variants of Polyak step size methods~\citep{hazan2019revisiting,vaswani2019painless,vaswani2020adaptive,loizou2021stochastic} for automatic learning rate adaptation.
In a similar vein, \citet{rolinek2018l4} proposed a method that tracks an estimated minimal loss and tunes a global learning rate to drive the loss on each mini-batch to the minimal loss.

\vspace{-0.2cm}
\paragraph{Short-Horizon Bias.} Short-horizon bias (SHB) has been identified as a core challenge in the meta-optimization of optimizer parameters such as learning rates.
\citet{wu2018understanding} analyzed the setting where the meta-objective is the expected loss after an optimization step $\mathbb{E}_{\batch \sim \Dtrain} [\cost_{\batch}(u(\boldtheta, \boldphi, \batch))]$.
Note that the expectation over the entire dataset is intractable and can be typically approximated using a single mini-batch.
This meta-objective yields undesirable behaviour where the learning rate decreases rapidly to reduce fluctuations of the loss caused by stochastic mini-batch evaluation. While our proximal meta-objective is greedy (e.g. it relies on a single lookahead step), APO empirically does not suffer from SHB.
We attribute this to the fact that the first term in our proximal meta-objective evaluates the loss on the \textit{same mini-batch used to compute the gradient} rather than a randomly-selected mini-batch.
We refer readers to Appendix~\ref{app:same-minibatch-first-term} for an ablation demonstrating the difference between using the same and a different mini-batch in computing the first term in our meta-objective. 

\vspace{-0.2cm}
\paragraph{Forward-Mode Differentiation.}
In principle, short horizon meta-objectives cannot model long-term behavior.
Forward-mode autodiff like Real-Time Recurrent Learning (RTRL) and its approximations (UORO~\citep{tallec2017unbiased}, KF-RTRL~\citep{mujika2018approximating}, OK~\citep{benzing2019optimal}) can perform online optimization without suffering from SHB in principle (but still suffering from hysteresis).
However, they do not scale well to many parameters, and thus cannot be used to adapt preconditioner.
Other approaches like PES~\citep{vicol2021unbiased} use a finite-differences approximation to the gradient, which also does not scale to the dimensionality of the preconditioner.
RTRL has been applied to hyperparameter adaptation by~\citet{franceschi2017forward}, and a method for LR adaptation that interpolates between hypergradient descent and RTRL, called MARTHE, was introduced by~\citet{donini2019marthe}.

\vspace{-0.2cm}
\paragraph{Second-Order Optimization.}
Although preconditioning methods theoretically have significantly better convergence rates than first-order methods~\citep{becker1988improving,wright1999numerical}, storing and computing the inverse of large preconditioning matrices is often impractical for high-dimensional problems. To mitigate these computational issues, Hessian-free optimization~\citep{martens2010deep,martens2011learning} approximates Newton's update by only accessing the curvature matrix through Hessian-vector products. 
Other works impose a structure on the preconditioning matrix by representing it as a Kronecker product~\citep{martens2015optimizing,grosse2016kronecker,martens2018kronecker,george2018fast,gupta2018shampoo,tang2021skfac}, a diagonal matrix~\citep{duchi2011adaptive,kingma2014adam}, or a low-rank matrix~\citep{le2007topmoumoute,mishkin2018slang}. However, these approximate second-order methods may not be straightforward to implement in modern deep learning frameworks, and can still be computationally expensive as they often require matrix inversion or eigendecomposition.

\vspace{-0.2cm}
\paragraph{Gradient-Based Preconditioner Adaptation.} There has been some prior work on meta-learning the preconditioning matrix. \citet{moskovitz2019first} learn the preconditioning matrix with hypergradient descent. Meta-curvature~\citep{park2019meta} and warped gradient descent~\citep{lee2018gradient,flennerhag2019meta} adapt the preconditioning matrix that yields effective parameter updates across diverse tasks in the context of few-shot learning.

\vspace{-0.2cm}
\paragraph{Learned Optimization.}
Some authors have proposed learning entire optimization algorithms~\citep{li2016learning,li2017learning,andrychowicz2016learning,wichrowska2017learned,metz2018learned,wichrowska2017learned}. \citet{li2016learning} view this problem from a reinforcement learning perspective, where the state consists of the objective function $\cost$ and the sequence of prior iterates $\{ \vtheta^{(t)} \}$ and gradients $\{ \nabla_\vtheta \cost(\vtheta^{(t)}) \}$, and the action is the step $\Delta \vtheta$. In this setting, the update rule $\alpha$ is a policy that can be found via policy gradient methods~\citep{sutton2000policy}.
Approaches that learn optimizers must be trained on a \textit{set} of objective functions $\{ f_1, \dots, f_n \}$ drawn from a distribution $\mathcal{F}$.
This setup can be restrictive if we only have one instance of an objective function.
In addition, the initial phase of training the optimizer on a distribution of functions can be expensive. Learned optimizers require offline training, and typically must use long unrolls to ameliorate short horizon bias, making training expensive.
APO requires only the objective function of interest as it does not have trainable parameters.
\section{Experiments}
\label{sec:experiments}

Our experiments investigate the following questions: (1) How does the structured preconditioning matrix adapted by APO perform in comparison to existing first- and second-order optimizers? (2) How does the learning rate adapted by APO perform compared to optimal fixed learning rates and manual decay schedules commonly used in the literature? 

We used APO to meta-learn the preconditioning matrices for a broad range of tasks, including several regression datasets, autoencoder training, image classification on CIFAR-10 and CIFAR-100 using several network architectures, neural machine translation using transformers, and low-precision (16-bit) training. Several of these tasks are particularly challenging for first-order optimizers. In addition, we used APO to tune the global learning rate for multiple base optimizers -- SGD, SGD with momentum (denoted SGDm), RMSprop, and Adam -- on CIFAR-10 and CIFAR-100 classification tasks with several network architectures.

We denote our method that adapts the structured preconditioning matrix as ``APO-Precond''. The method that tunes the global learning rate of a base optimizer is denoted as ``Base-APO'' (e.g. SGDm-APO). Experiment details and additional experiments, including ablation studies, are given in Appendix~\ref{app:experiment-details} and ~\ref{app:labmda-ablation}, respectively.

\subsection{Meta-Optimization Setup}
\label{sec:meta-opt-setup}

In all experiments, we perform 1 meta-update every 10 updates of the base optimizer (e.g.~$K = 10$ in Algorithm~\ref{alg:APO}). Hence, the computational overhead of APO is just a small fraction of the original training procedure. We perform grid searches over $\lambfsd$ and $\lambwsd$ for both learning rate and preconditioner adaptation.

\paragraph{Learning Rate Adaptation Setup.} For our learning rate adaptation experiments, we used RMSProp as the meta-optimizer with a meta-learning rate of 0.1. For SGD-APO and SGDm-APO, we set the initial learning rate to 0.1, while for RMSProp-APO and Adam-APO, we set the initial learning rate to $3\text{e-}4$. These specific values are not important for good performance; we show in Appendix~\ref{app:robustmeta} that APO is robust to the choice of initial learning rate.

\paragraph{Preconditioner Adaptation Setup.} For preconditioner adaption experiments, we used Adam as the meta-optimizer with a meta-learning rate chosen from $\{ 1\text{e-}4,  3\text{e-}5 \}$. We initialized the block matrices such that the preconditioner is the identity, so that our update is equivalent to SGD at initialization. Furthermore, we used a warm-up phase of 3000 iterations at the beginning of training that updates the parameters with SGDm (or Adam for Transformer) but still meta-learns the preconditioning matrix (e.g.~performing meta-gradient steps on $\boldP$). When updating the parameters, we scaled the preconditioned gradient by a fixed constant of 0.9 for stability.

\subsection{Meta-Learning Preconditioners}

\subsubsection{Regression}
\paragraph{Poorly-Conditioned Regression.} We first considered a regression task traditionally used to illustrate failures of neural network optimization~\citep{rahimi_recht_2017}. The targets are given by $\mathbf{t} = \mathbf{A} \mathbf{x}$, where $\mathbf{A}$ is an ill-conditioned matrix with $\kappa(\mathbf{A}) = 10^{10}$.
We trained a two-layer linear network $f(\mathbf{x}, \vtheta) = \mathbf{W}_2 \mathbf{W}_1 \mathbf{x}$ and minimized the following objective:
\begin{align}
    \cost (\vtheta) = \mathbb{E}_{\mathbf{x} \sim \mathcal{N}(\mathbf{0}, \mathbf{I})} \left[ \| \mathbf{A} \mathbf{x} - \mathbf{W}_2 \mathbf{W}_1 \mathbf{x} \|^2 \right].
\end{align}
In Figure~\ref{fig:challenging-opt} (left), we show the training curves for SGDm, Adam, Shampoo~\citep{gupta2018shampoo}, KFAC, and APO-Precond. As the problem is ill-conditioned, second-order optimizers such as Shampoo and KFAC converge faster than first-order methods. APO-Precond shows performance comparable with second-order optimizers and achieves lower loss than KFAC.

\paragraph{UCI Tasks.}
We further validated APO-Precond on the Slice, Protein, and Parkinsons datasets from the UCI collection~\citep{Dua:2019}.
We trained a 2-layer multilayer perceptron with 100 hidden units per layer and ReLU activations for 500 epochs. The training curves for each optimizer are shown in Figure~\ref{fig:uci-regression}.
By automatically adjusting the preconditioning matrix during training, APO-Precond consistently achieved competitive convergence compared to other second-order optimizers and reached lower training loss than all baselines.

\begin{figure*}[!h]
    \vspace{-0.4cm}
    \centering
    \begin{tabular}{ccc}
    \includegraphics[width=0.31\linewidth]{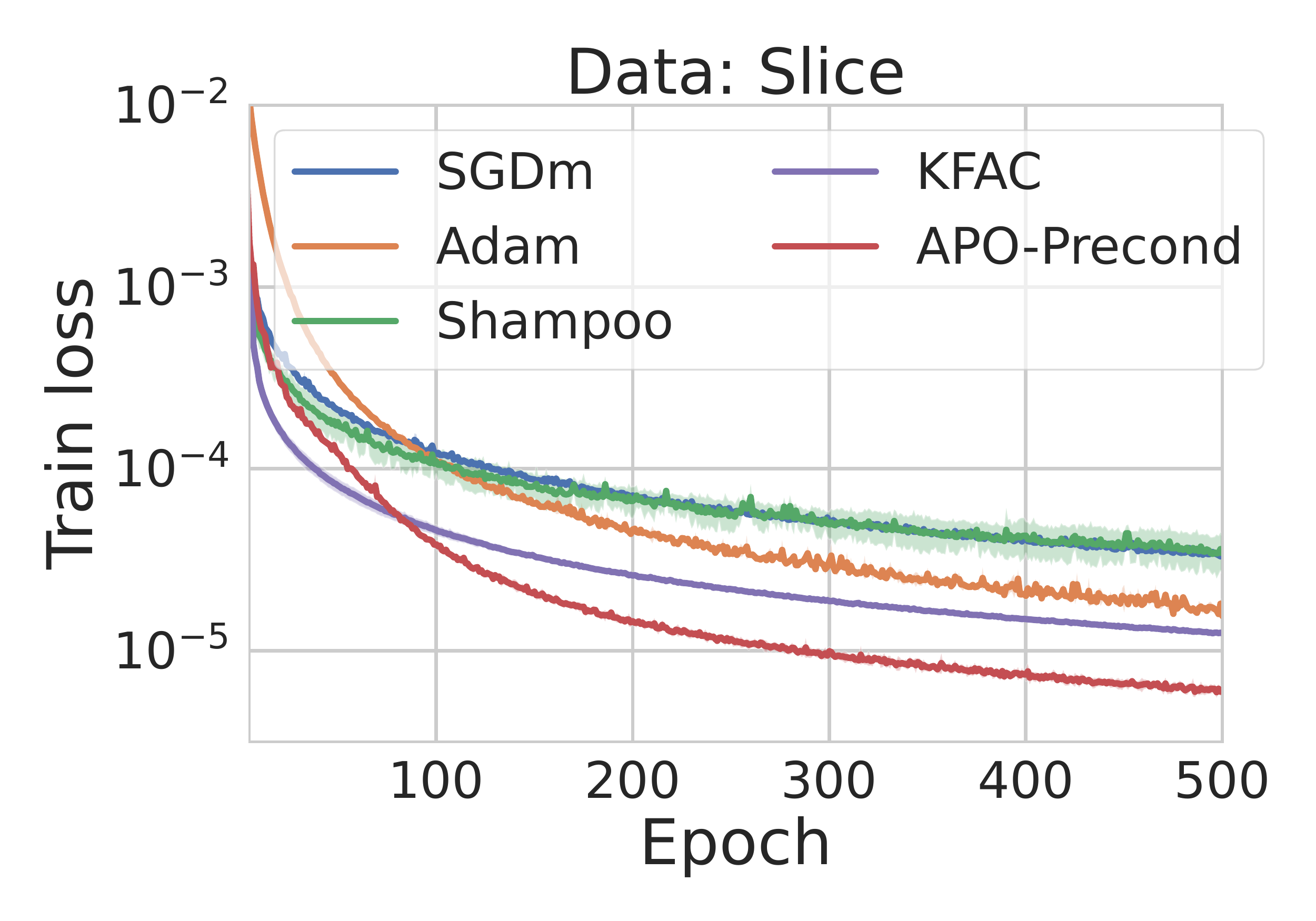}
    \includegraphics[width=0.31\linewidth]{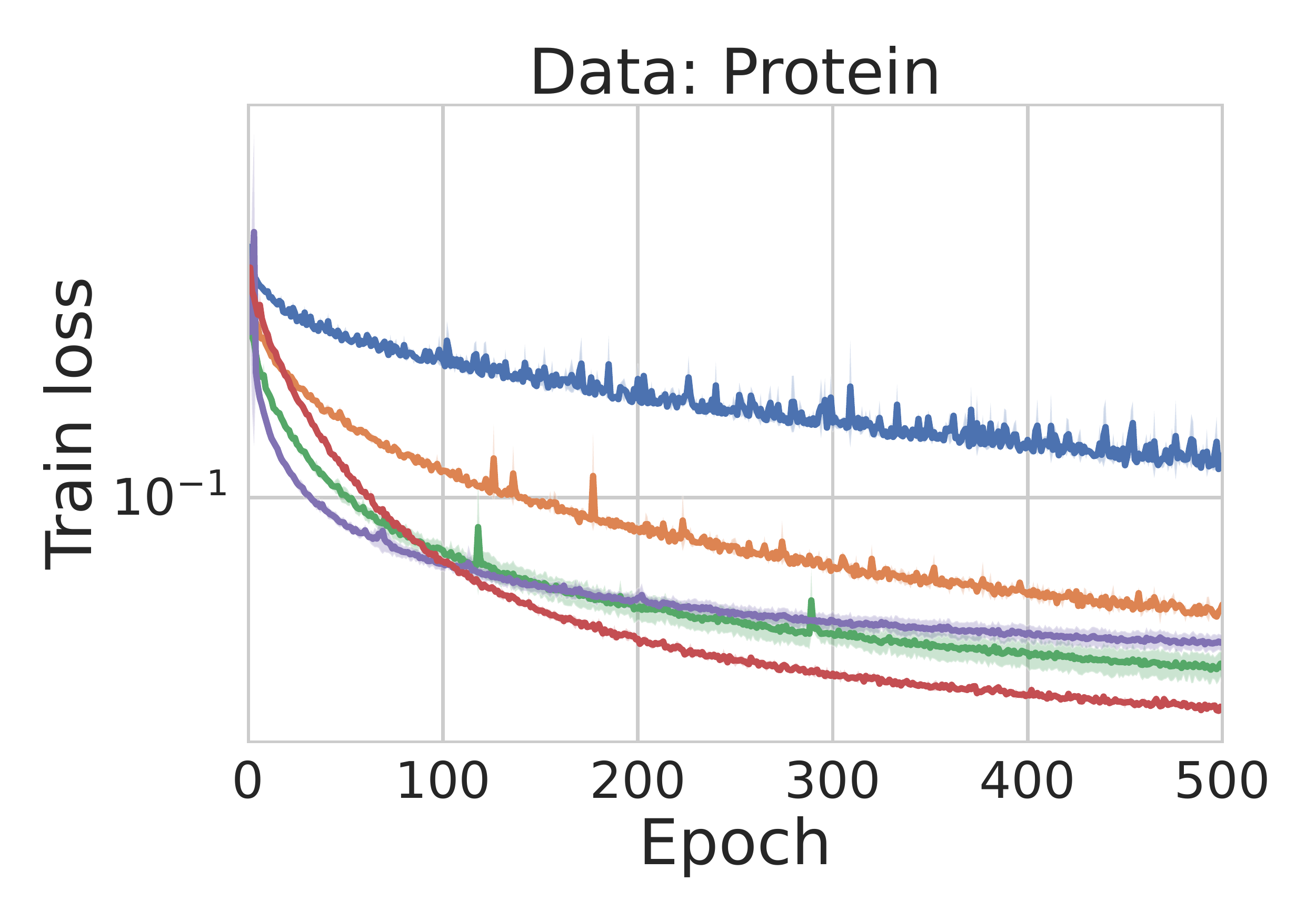}
    \includegraphics[width=0.31\linewidth]{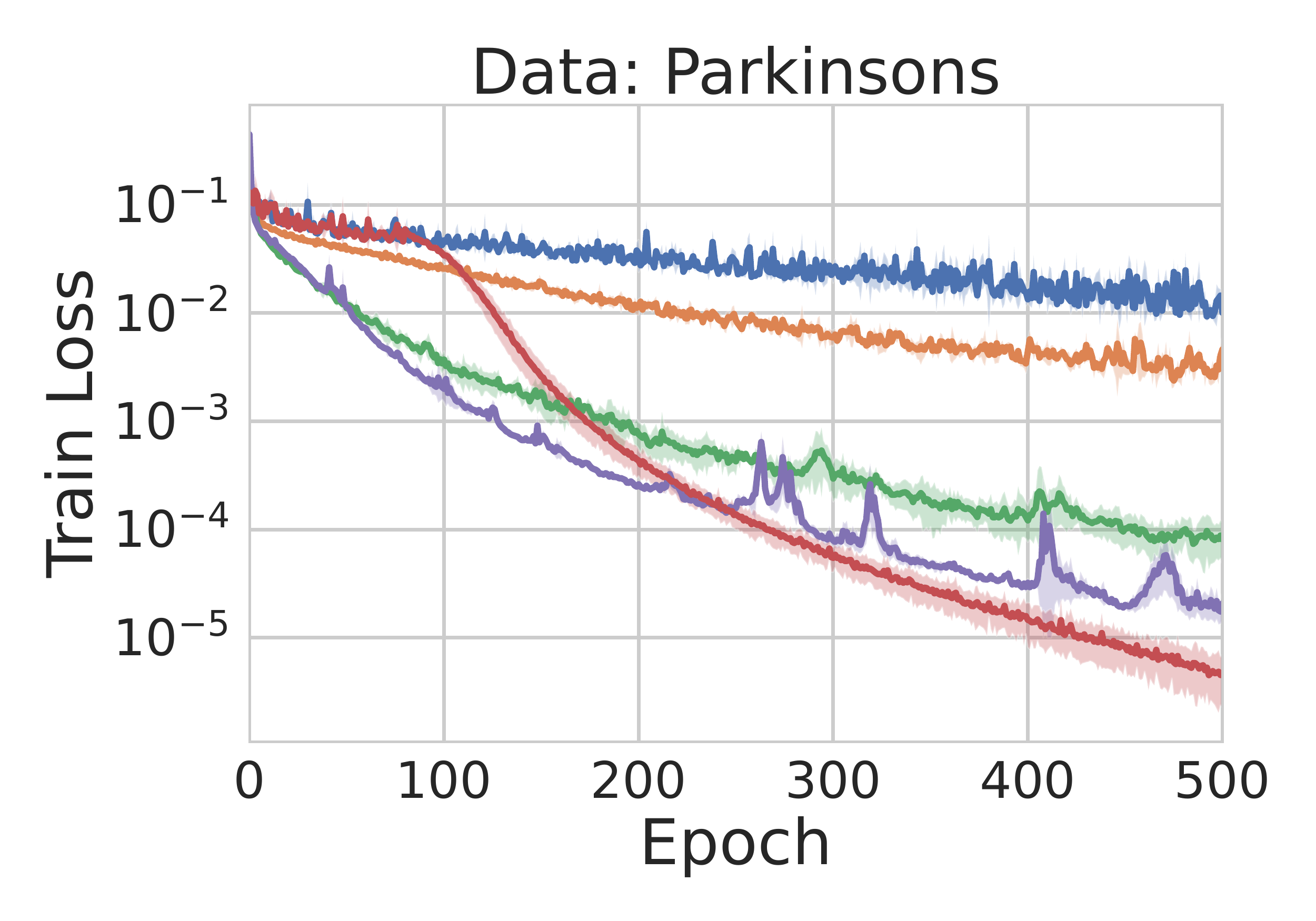}
    \end{tabular}
    \vspace{-4mm}
    \caption{A comparison of SGDm, Adam, KFAC, Shampoo, and APO-Precond on UCI regression tasks. Across all tasks, APO-Precond achieves lower loss with competitive convergence compared to second-order optimizers.}
    \vspace{-0.4cm}
    \label{fig:uci-regression}
\end{figure*}

\begin{figure*}[!htb]
    \centering
    \begin{tabular}{ccc}
    \includegraphics[width=0.35\linewidth]{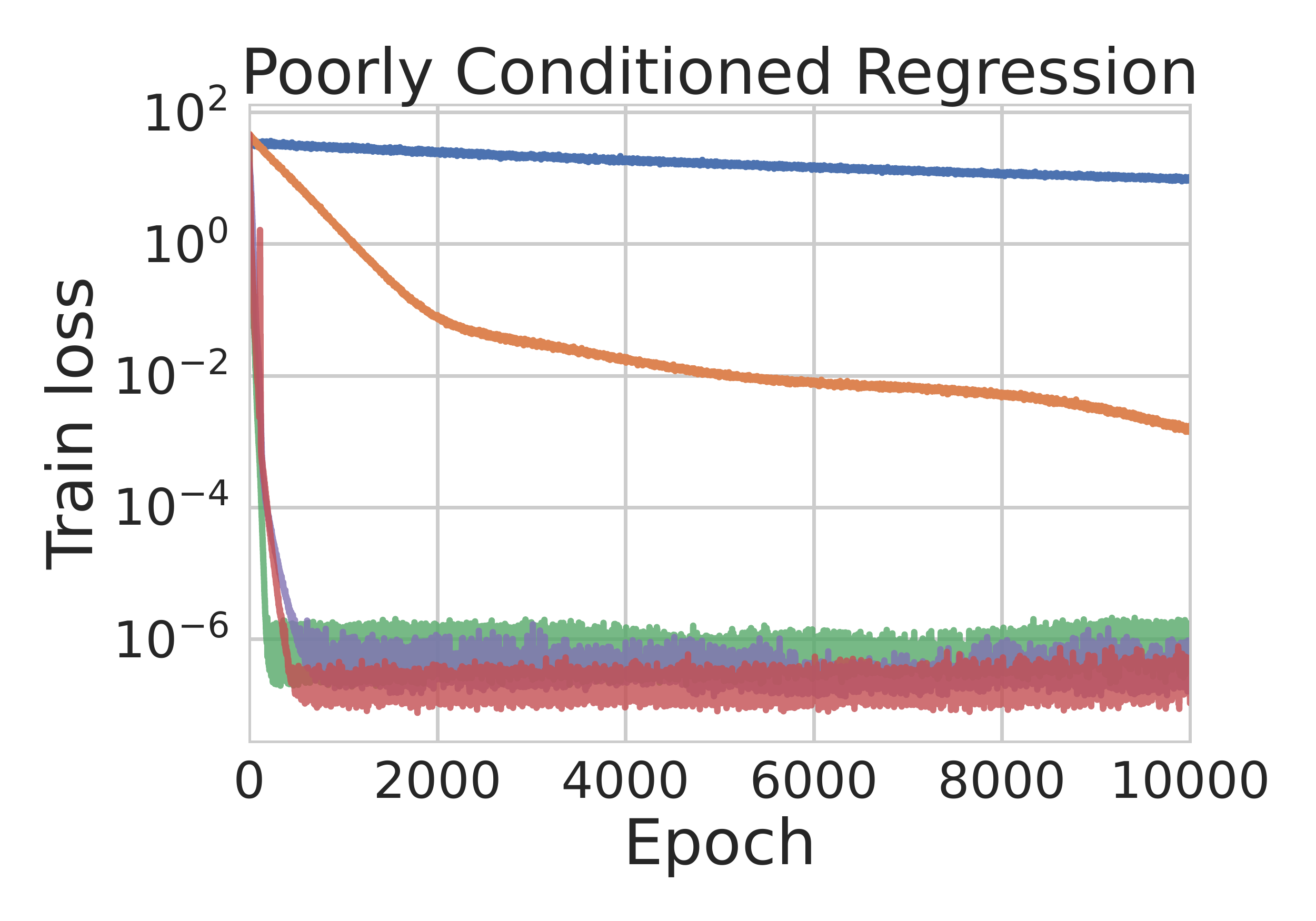}
    \quad
    \includegraphics[width=0.35\linewidth]{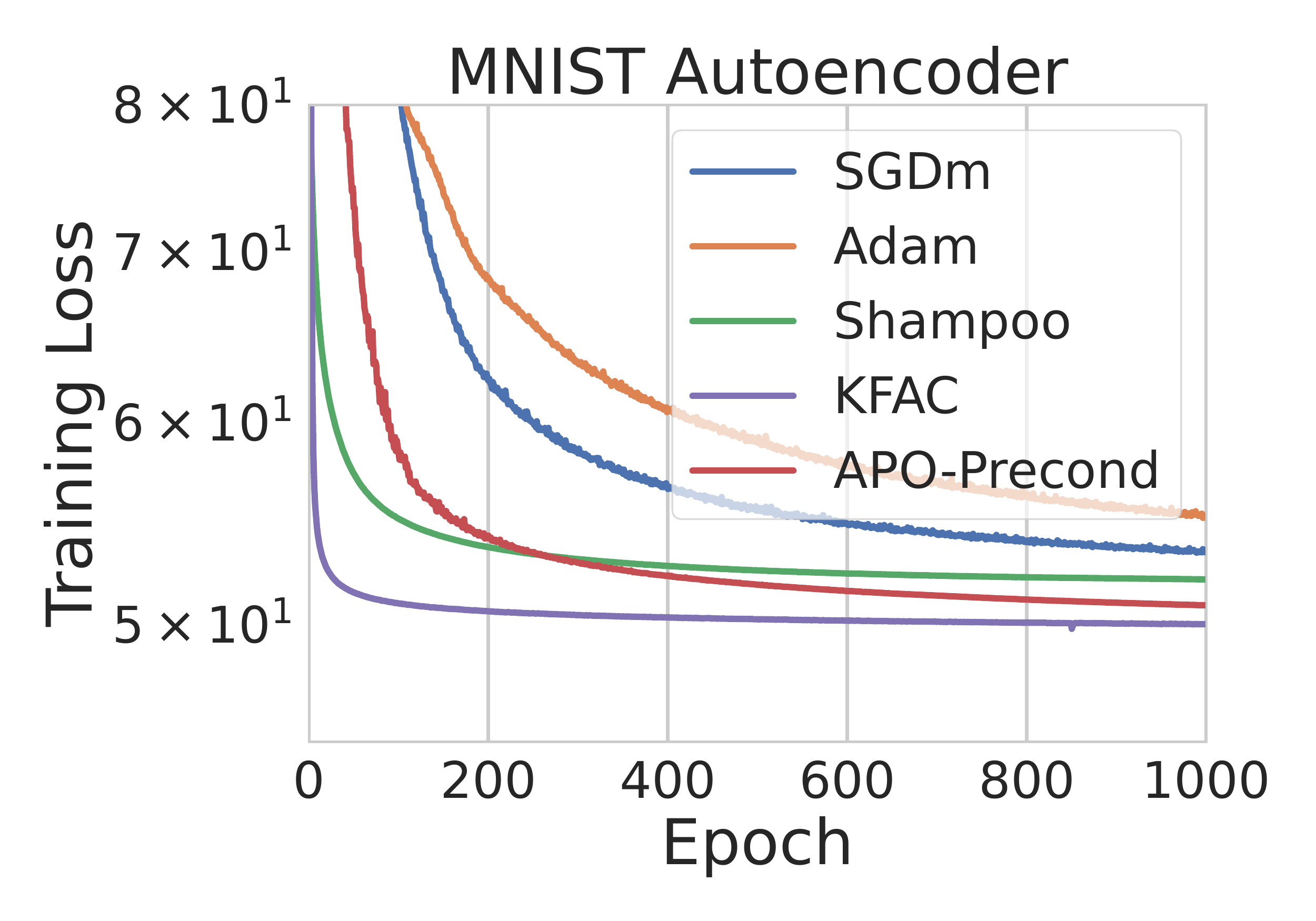}
    \end{tabular}
    \vspace{-4mm}
    \caption{
    A comparison of various optimizers on optimization tasks known to be difficult for first-order methods. \textbf{Left:} synthetic data for poorly-conditioned regression; \textbf{Right:} deep autoencoder on MNIST.
    }
    \vspace{-1.2cm}
    \label{fig:challenging-opt}
\end{figure*}

\subsubsection{Image Reconstruction}
We trained an 8-layer autoencoder on MNIST~\citep{lecun1998mnist}; this is known to be a challenging optimization task for first-order optimizers~\citep{hinton2006reducing,martens2011learning,martens2015optimizing}. We followed the experimental set-up from~\citet{martens2015optimizing}, where the encoder and decoder consist of 4 fully-connected layers with sigmoid activation.
The decoder structure is symmetric to that of the encoder, and they do not have tied weights.
The logistic activation function and the presence of a bottleneck layer make this a challenging optimization problem compared with most current-day architectures. We compare APO-Precond with SGDm, Adam, Shampoo, and KFAC optimizers and show the training losses for each optimizer in Figure~\ref{fig:challenging-opt} (right).
APO-Precond converges faster than first-order methods and achieves competitive training loss to other second-order methods (although there remains a performance gap compared with KFAC).

\begin{table}[t]
  \setlength{\tabcolsep}{4pt}
  \centering
  \small
  \begin{tabular}{ccccccc}
  \toprule
  \textbf{Task} & \textbf{Model} & \textbf{SGDm} & \textbf{Adam} & \textbf{KFAC} & \textbf{APO-Precond}  \\ \midrule
  \textbf{CIFAR-10}   & \textbf{LeNet}     & 75.73 & 73.41 & 76.63 &  \textbf{77.42}  \\
  \textbf{CIFAR-10}   & \textbf{AlexNet}   & 76.27 & 76.09  & 78.33 & \textbf{81.14}  \\
  \textbf{CIFAR-10}   & \textbf{VGG16}  & 91.82 & 90.19  & 92.05 &  \textbf{92.13}  \\
  \textbf{CIFAR-10}   & \textbf{ResNet-18}  & 93.69 & 93.27  & 94.60 & \textbf{94.75}   \\
  \textbf{CIFAR-10}   & \textbf{ResNet-32}  & 94.40 & 93.30   & 94.49 & \textbf{94.83} \\
  \midrule
  \textbf{CIFAR-100}  & \textbf{AlexNet}   & 43.95 & 41.82  & 46.24 & \textbf{52.35} \\
  \textbf{CIFAR-100}  & \textbf{VGG16}  & 65.98 & 60.61 & 61.84 & \textbf{67.95}  \\
  \textbf{CIFAR-100}  & \textbf{ResNet-18}  & 76.85 & 70.87 & 76.48 & \textbf{76.88}   \\
  \textbf{CIFAR-100}  & \textbf{ResNet-32}  & \textbf{77.47} & 68.67 & 75.70 & 77.41   \\
  \midrule
  \textbf{SVHN}   & \textbf{ResNet-18}  & 96.19 & 95.59  & 96.08 & \textbf{96.89}   \\
  \midrule
  \textbf{IWSLT14} & \textbf{Transformer} & 31.43  & 34.60 \tablefootnote{We used AdamW optimizer~\citep{loshchilov2017decoupled} for training Transformer model.} &  -  & \textbf{34.62} \\
  \bottomrule
  \end{tabular}
  \caption{Test accuracy on CIFAR-10 and CIFAR-100, and BLEU score on the IWSLT'14 German-to-English translation dataset for various optimizers.}
  \label{table:testaccuracy}
\end{table}

\subsubsection{Image Classification}

To investigate whether adapting the preconditioner with APO leads to better generalization, we conducted image classification experiments on CIFAR-10 and CIFAR-100. We trained LeNet~\citep{lecun1998mnist}, AlexNet~\citep{krizhevsky2012imagenet}, VGG-16~\citep{simonyan2014very} (w/o batch normalization~\citep{ioffe2015batch}), ResNet-18, and ResNet-32~\citep{he2016deep} architectures for 200 epochs with batches of 128 images. For all optimizers, we decayed the learning rate by a factor of 5 at epochs 60, 120, and 160, following common practice from~\citet{zagoruyko2016wide}.
The test accuracies for SGDm, Adam, KFAC, and APO-Precond are summarized in Table~\ref{table:testaccuracy}.
By adapting the preconditioning matrix with our proximal meta-objective, APO-Precond achieved competitive generalization performance to SGDm.
Moreover, for architectures without batch normalization (e.g.~LeNet and AlexNet), APO-Precond improved the final test accuracy significantly. 

\subsubsection{Neural Machine Translation}

To verify the effectiveness of APO on various tasks, we applied APO-Precond on the IWSLT'14 German-to-English translation task~\citep{cettolo2014report}.
We used a Transformer~\citep{vaswani2017attention} composed of 6 encoder and decoder layers with a word embedding and hidden vector dimension of 512.
We compared APO-Precond with SGDm and AdamW~\citep{loshchilov2017decoupled}.
For AdamW, we used a warmup-then-decay learning rate schedule widely used in practice, and for SGD and APO-Precond, we kept the learning rate fixed after the warmup.
In Table~\ref{table:testaccuracy}, we show the final test BLEU score for SGDm, AdamW, and APO-Precond. While keeping the learning rate fixed, APO-Precond achieved a BLEU score competitive with AdamW.

\subsubsection{Low Precision Training}
\label{sec:low-prec-network}

\begin{wraptable}[6]{r}{0.55\textwidth}
    \vspace{-0.9cm}
    \centering
    \footnotesize
    \begin{tabular}{ccccc}
    \toprule
     \textbf{Task} & \textbf{Model} & \textbf{SGDm} & \textbf{KFAC} & \textbf{APO-P}  \\ \midrule
    \textbf{CIFAR-10}   & \textbf{LeNet}  & 75.65  & 74.95 & \textbf{77.25}  \\
    \textbf{CIFAR-10}   & \textbf{ResNet-18}  & 94.15  & 92.72 &  \textbf{94.79}  \\
    \textbf{CIFAR-100}  & \textbf{ResNet-18}  & 73.53
     & 73.12 & \textbf{75.47}  \\
    \bottomrule
    \end{tabular}
    \vspace{-0.1cm}
    \caption{Test accuracy of 16-bit networks on CIFAR-10/100.}
    \label{table:low-precision}
\end{wraptable}

Low precision training presents a challenge for second-order optimizers such as KFAC which rely on matrix inverses that may be sensitive to quantization noise. We trained LeNet and ResNet-18 with a 16-bit floating-point arithmetic to examine if APO-Precond is applicable in training the networks in lower precision. We used the same experimental setup from the image classification experiments but stored parameters, activations, and gradients in 16-bit floating-point. We found that KFAC sometimes required a large damping in order to maintain stability, and this prevented it from fully utilizing the curvature information. By contrast, as APO-Precond does not require matrix inversion, it remained stable with the same choice of FSD and WSD weights we used in the full precision experiments. The final test accuracies on ResNet-18 for SGDm, KFAC, and APO-Precond are shown in Table~\ref{table:low-precision}. APO-Precond achieved the highest accuracy on both CIFAR-10 and CIFAR-100, whereas KFAC had a $\sim 2\%$ decrease in accuracy. The training curves for ResNet-18 are provided in Appendix~\ref{app:16-bit}.

\begin{wrapfigure}[12]{r}{0.4\linewidth}
\centering
\vspace{-1.8cm}
\includegraphics[width=\linewidth]{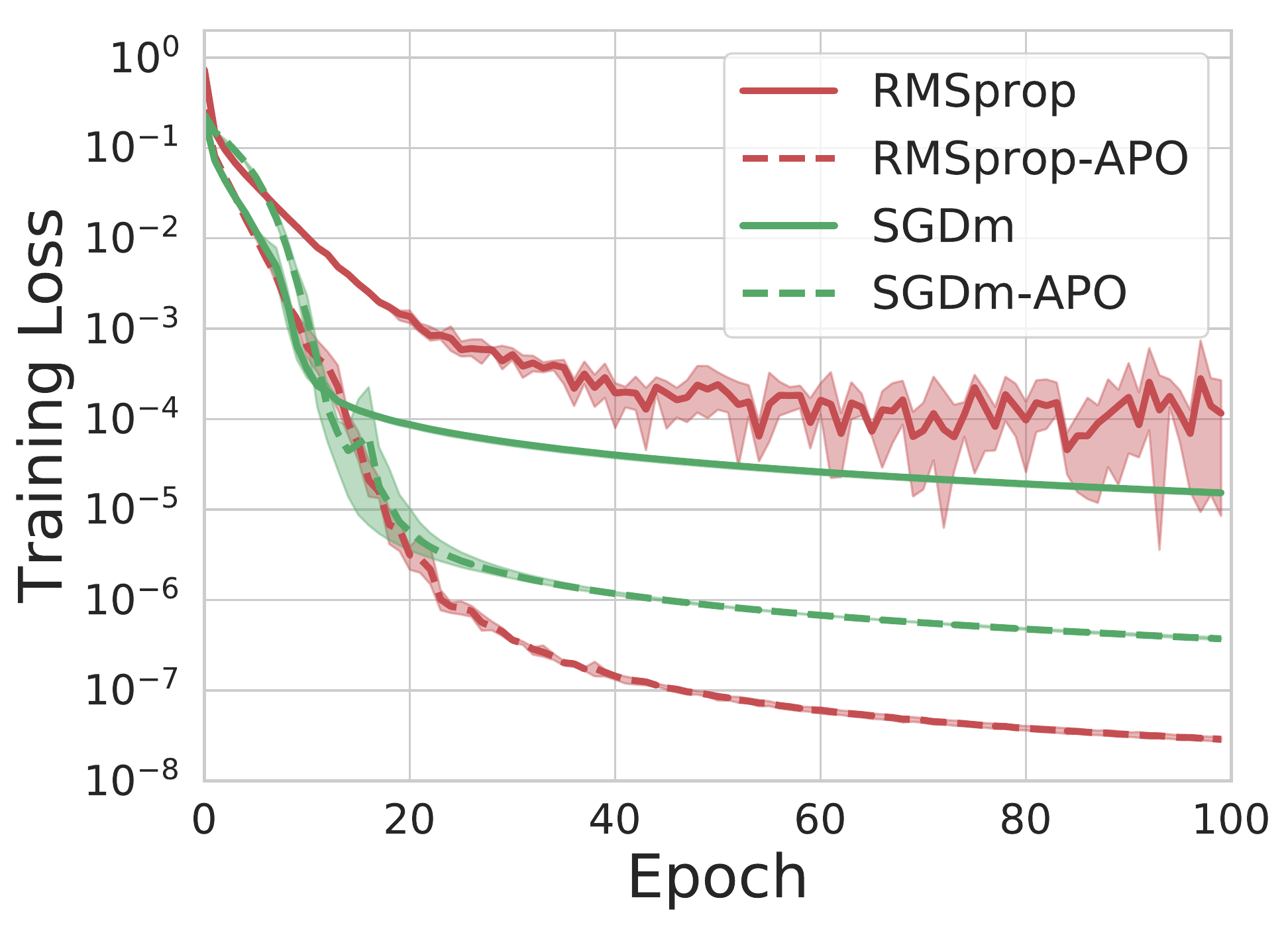}
\vspace{-0.7cm}
\caption{Training loss for a two-layer MLP trained on MNIST, using SGDm and RMSprop (solid lines), and their APO-tuned variants (dashed lines).}
\label{fig:mlp-mnist}
\end{wrapfigure}

\subsection{Meta-Learning Learning Rates}
\paragraph{MNIST.} First, we compared SGDm and RMSprop to their APO-tuned variants to train an MLP on MNIST. We used a two-layer MLP with 1000 hidden units per layer and ReLU activation, and trained on mini-batches of size 100 for 100 epochs. Figure~\ref{fig:mlp-mnist} shows the training loss achieved by each approach; we found that for both base optimizers, APO improved convergence speed and obtained substantially lower loss than the baselines.

\paragraph{CIFAR-10 \& CIFAR-100.}
For learning rate adaptation on CIFAR-10, we experimented with three network architectures: ResNet32~\citep{he2016deep}, ResNet34, and WideResNet (WRN-28-10)~\citep{zagoruyko2016wide}. For ResNet32, we trained for 400 epochs, and the decayed baseline used a step schedule with $10\times$ decay at epochs 150 and 250, following~\citet{lucas2018aggregated}.
For ResNet34 and WRN-28-10, we trained for 200 epochs, and the decayed baseline used a step schedule with $5\times$ decay at epochs 60, 120, and 160, following~\citet{zagoruyko2016wide}.
For CIFAR-100, we used WRN-28-10 with the same schedule as for CIFAR-10.
For each of the base optimizers, we compared APO to (1) the optimal fixed learning rate and (2) a manual step learning rate decay schedule. The test accuracies for each base optimizer and their APO-tuned variants are shown in Table~\ref{table:lr-accuracies}.
In addition, Figure~\ref{fig:wrn-cifar10-sgdm} shows the training loss, test accuracy and learning rate adaptation for WRN-28-10 on CIFAR-10, using SGDm as the base optimizer.
We found that using APO to tune the global learning rate yields higher test accuracy (or lower training loss) than the best fixed learning rate, and is competitive with the manual decay schedule.

\begin{figure*}[t]
\centering
\begin{tabular}{ccc}
\includegraphics[width=0.3\linewidth]{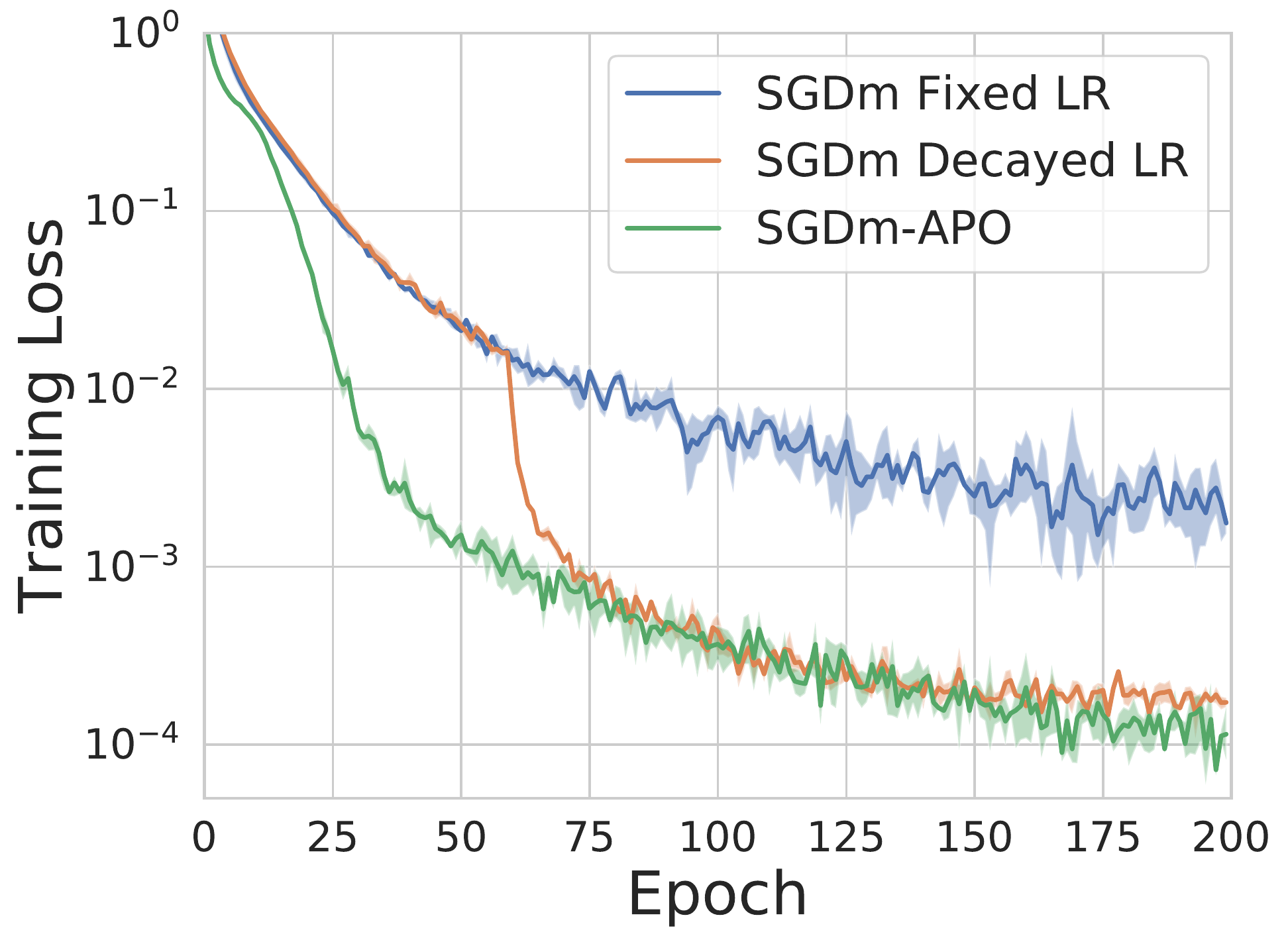}
\includegraphics[width=0.3\linewidth]{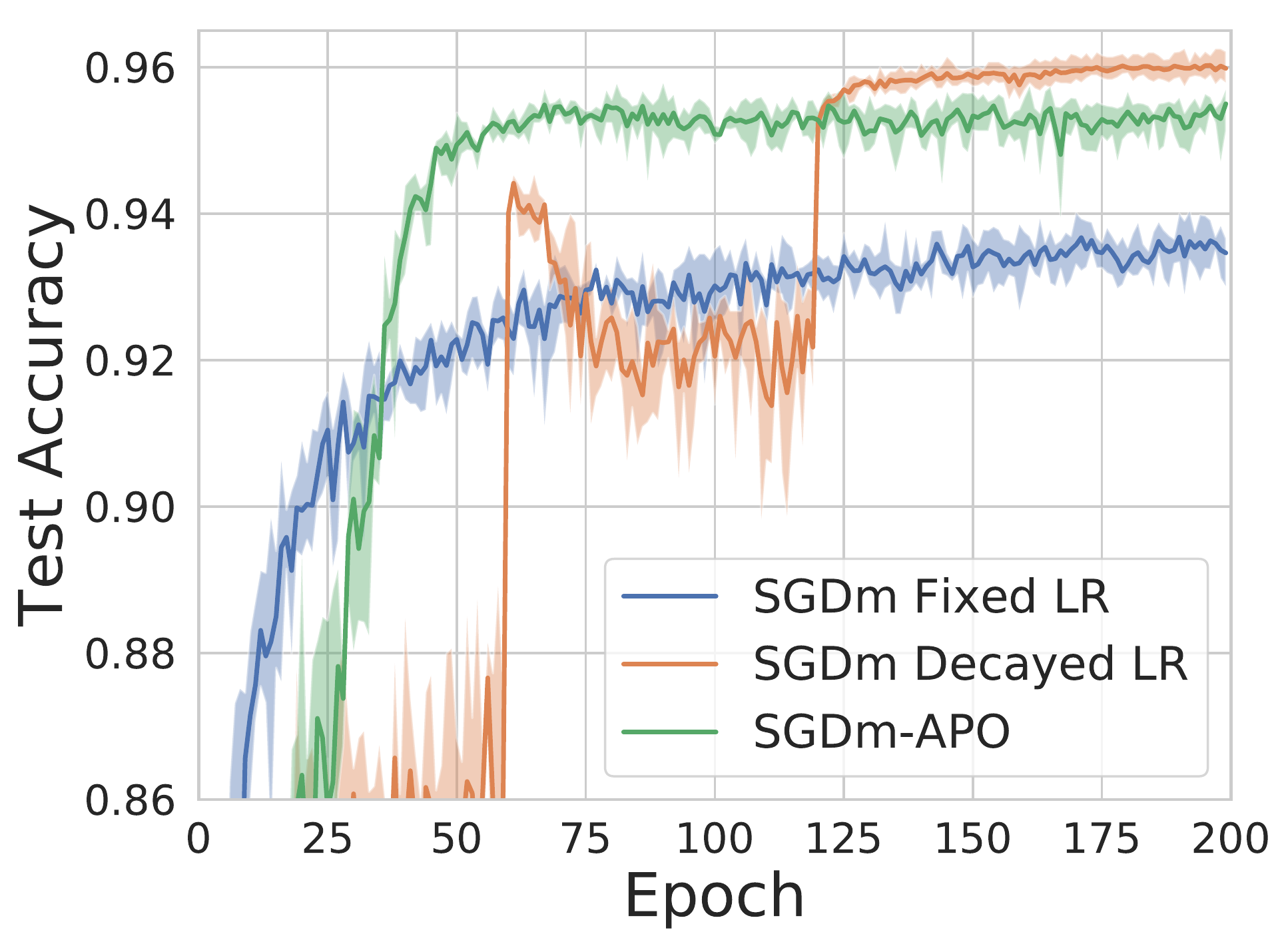}
\includegraphics[width=0.3\linewidth]{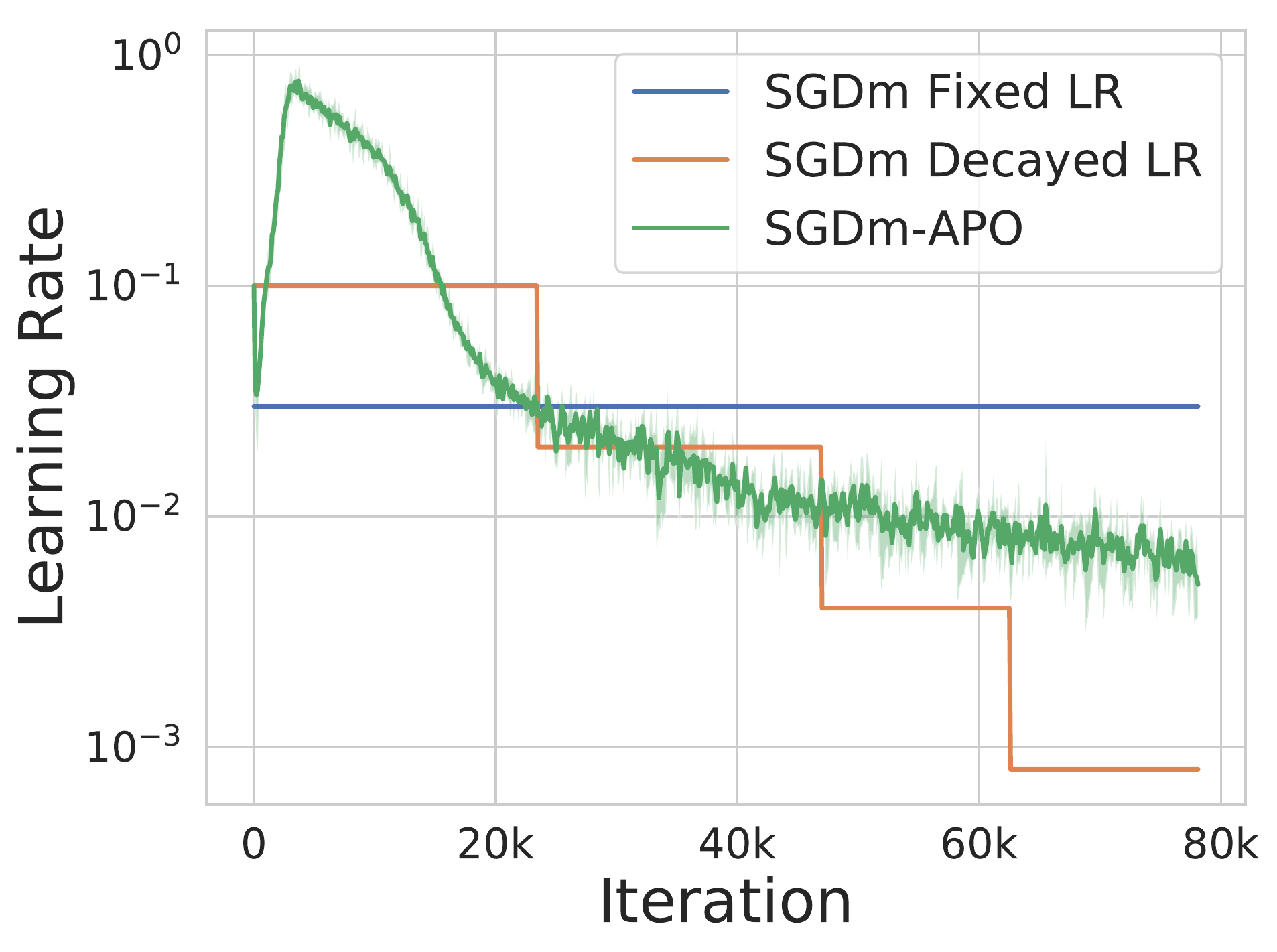}
\end{tabular}
\vspace{-0.2cm}
\caption{A comparison of the training loss (\textbf{left}), test accuracy (\textbf{middle}), and learning rate schedules (\textbf{right}) of baselines with fixed and decayed learning rates trained on WideResNet 28-10 with CIFAR-10. The training loss plot is shown using hyperparameters chosen based on training loss, while the middle and right plots use hyperparameters chosen based on validation accuracy.}
\label{fig:wrn-cifar10-sgdm}
\vspace{-0.5cm}
\end{figure*}

\begin{table*}[t]
    \setlength{\tabcolsep}{4pt}
    \centering
    \footnotesize
    \begin{tabular}{c|ccc|ccc|ccc|ccc}
    \toprule
        & \multicolumn{3}{c|}{\begin{tabular}{c}\textbf{CIFAR-10} \\ \textbf{ResNet-32}\end{tabular}} &
        \multicolumn{3}{c|}{\begin{tabular}{c}\textbf{CIFAR-10} \\ \textbf{ResNet-34}\end{tabular}}  &
        \multicolumn{3}{c|}{\begin{tabular}{c}\textbf{CIFAR-10} \\ \textbf{WRN 28-10}\end{tabular}} & \multicolumn{3}{c}{\begin{tabular}{c}\textbf{CIFAR-100} \\ \textbf{WRN 28-10}\end{tabular}} \\ \midrule
        & \textbf{Fixed} & \textbf{Decay}      & \textbf{APO} & \textbf{Fixed} & \textbf{Decay}      & \textbf{APO}      & \textbf{Fixed}     & \textbf{Decay}    & \textbf{APO}    & \textbf{Fixed}  & \textbf{Decay}  & \textbf{APO}  \\ \midrule
    \textbf{SGD}     & 90.07 & 93.30 & 92.71   & 93.00 & 93.54 & 94.27  & 93.38 & 94.86 & 94.85 & 76.29 & 77.92 &  76.87   \\
    \textbf{SGDm}    & 89.40 & 93.34 & 92.75   & 92.99 & 95.08 & 94.47  & 93.46 & 95.98 & 95.50 & 74.81 & 81.01 & 79.33     \\
    \textbf{RMSprop} & 89.84 & 91.94 & 91.28   & 92.87 & 93.87 & 93.97  & 92.91 & 93.60 & 94.22 & 72.06 & 76.06 &  74.17     \\
    \textbf{Adam}    & 90.45 & 92.26 & 91.81   & 93.23 & 94.12 & 93.80  & 92.81 & 94.04 & 93.83 & 72.01 & 75.53 & 76.33   \\
    \bottomrule
    \end{tabular}
    \vspace{-0.1cm}
    \caption{A comparison of the test accuracies achieved by the optimal fixed learning rate, the manual step decay schedule, and the APO-discovered schedule, using ResNet-32~\citep{he2016deep} and WideResNet 28-10~\citep{zagoruyko2016wide}.
    APO outperforms the optimal fixed learning rate, and is competitive with manual schedule.
    Note that APO generally achieves test accuracy similar to manual schedules in fewer iterations (shown in App.~\ref{app:additional-results}).}
    \label{table:lr-accuracies}
\end{table*}
\vspace{-0.1cm}
\section{Conclusion}
\vspace{-0.1cm}

We introduced Amortized Proximal Optimization (APO), a framework for online meta-learning of optimization parameters which approximates the proximal point method by learning a parametric update rule.
The update rule is adapted to better minimize a combination of the current-batch loss, function space discrepancy, and weight space discrepancy.
As the meta-parameters are updated only once per $K$ steps of optimization, APO incurs minimal computational overhead.
We applied the APO framework to two settings: (1) meta-learning the global learning rate for existing base optimizers such as SGD, RMSprop, and Adam and (2) meta-learning structured preconditioning matrices, which provides a new approach to second-order optimization.
For preconditioning adaptation problem, we showed that, under certain assumptions, exactly solving the meta-objective recovers classic second-order methods.
Compared to other second-order methods such as KFAC, APO eliminates the need to compute matrix inverses, enabling greater efficiency and numerical stability. Adapting update rules to a single meta-objective also opens up the possibility of coming up with even more clever update rules which are better able to exploit the structure of the optimization landscape.

\section*{Acknowledgements}
We thank Michael Zhang for valuable feedback on this paper and thank Alston Lo for helping setting up the experiments. We would also like to thank Saminul Haque, Jonathan Lorraine, Denny Wu, and Guodong Zhang, and our many other colleagues for their helpful discussions throughout this research. Resources used in this research were provided, in part, by the Province of Ontario, the Government of Canada through CIFAR, and companies sponsoring the Vector Institute (\url{www.vectorinstitute.ai/partners}).

\bibliography{main}

\begin{thebibliography}{81}
\providecommand{\natexlab}[1]{#1}
\providecommand{\url}[1]{\texttt{#1}}
\expandafter\ifx\csname urlstyle\endcsname\relax
  \providecommand{\doi}[1]{doi: #1}\else
  \providecommand{\doi}{doi: \begingroup \urlstyle{rm}\Url}\fi

\bibitem[Amari(1998)]{amari1998natural}
Amari, S.-I.
\newblock Natural gradient works efficiently in learning.
\newblock \emph{Neural Computation}, 10\penalty0 (2):\penalty0 251--276, 1998.

\bibitem[Andrychowicz et~al.(2016)Andrychowicz, Denil, Gomez, Hoffman, Pfau,
  Schaul, Shillingford, and De~Freitas]{andrychowicz2016learning}
Andrychowicz, M., Denil, M., Gomez, S., Hoffman, M.~W., Pfau, D., Schaul, T.,
  Shillingford, B., and De~Freitas, N.
\newblock Learning to learn by gradient descent by gradient descent.
\newblock In \emph{Advances in Neural Information Processing Systems}, pp.\
  3981--3989, 2016.

\bibitem[Arora et~al.(2018)Arora, Li, and Lyu]{arora2018theoretical}
Arora, S., Li, Z., and Lyu, K.
\newblock Theoretical analysis of auto rate-tuning by batch normalization.
\newblock \emph{arXiv preprint arXiv:1812.03981}, 2018.

\bibitem[Asi \& Duchi(2019)Asi and Duchi]{asi2019stochastic}
Asi, H. and Duchi, J.~C.
\newblock {Stochastic (approximate) proximal point methods: Convergence,
  optimality, and adaptivity}.
\newblock \emph{SIAM Journal on Optimization}, 29\penalty0 (3):\penalty0
  2257--2290, 2019.

\bibitem[Baydin et~al.(2017)Baydin, Cornish, Rubio, Schmidt, and
  Wood]{baydin2017online}
Baydin, A.~G., Cornish, R., Rubio, D.~M., Schmidt, M., and Wood, F.
\newblock Online learning rate adaptation with hypergradient descent.
\newblock \emph{arXiv preprint arXiv:1703.04782}, 2017.

\bibitem[Beck \& Teboulle(2003)Beck and Teboulle]{beck2003mirror}
Beck, A. and Teboulle, M.
\newblock Mirror descent and nonlinear projected subgradient methods for convex
  optimization.
\newblock \emph{Operations Research Letters}, 31\penalty0 (3):\penalty0
  167--175, 2003.

\bibitem[Becker \& Le~Cun(1988)Becker and Le~Cun]{becker1988improving}
Becker, S. and Le~Cun, Y.
\newblock Improving the convergence of back-propagation learning with second
  order methods.
\newblock In \emph{Proceedings of the 1988 Connecticut Summer School}, 1988.

\bibitem[Belkin et~al.(2019)Belkin, Hsu, Ma, and Mandal]{belkin2019reconciling}
Belkin, M., Hsu, D., Ma, S., and Mandal, S.
\newblock Reconciling modern machine-learning practice and the classical
  bias--variance trade-off.
\newblock \emph{Proceedings of the National Academy of Sciences}, 116\penalty0
  (32):\penalty0 15849--15854, 2019.

\bibitem[Bengio(2012)]{bengio2012practical}
Bengio, Y.
\newblock Practical recommendations for gradient-based training of deep
  architectures.
\newblock In \emph{{Neural Networks: Tricks of the Trade}}, pp.\  437--478.
  Springer, 2012.

\bibitem[Benjamin et~al.(2018)Benjamin, Rolnick, and
  Kording]{benjamin2018measuring}
Benjamin, A.~S., Rolnick, D., and Kording, K.
\newblock Measuring and regularizing networks in function space.
\newblock \emph{arXiv preprint arXiv:1805.08289}, 2018.

\bibitem[Benzing et~al.(2019)Benzing, Gauy, Mujika, Martinsson, and
  Steger]{benzing2019optimal}
Benzing, F., Gauy, M.~M., Mujika, A., Martinsson, A., and Steger, A.
\newblock {Optimal Kronecker-sum approximation of real time recurrent
  learning}.
\newblock In \emph{International Conference on Machine Learning}, pp.\
  604--613, 2019.

\bibitem[Bergstra \& Bengio(2012)Bergstra and Bengio]{bergstra2012random}
Bergstra, J. and Bengio, Y.
\newblock Random search for hyper-parameter optimization.
\newblock \emph{Journal of Machine Learning Research}, 13\penalty0
  (Feb):\penalty0 281--305, 2012.

\bibitem[Bradbury et~al.(2018)Bradbury, Frostig, Hawkins, Johnson, Leary,
  Maclaurin, Necula, Paszke, Vander{P}las, Wanderman-{M}ilne, and
  Zhang]{jax2018github}
Bradbury, J., Frostig, R., Hawkins, P., Johnson, M.~J., Leary, C., Maclaurin,
  D., Necula, G., Paszke, A., Vander{P}las, J., Wanderman-{M}ilne, S., and
  Zhang, Q.
\newblock {JAX: Composable transformations of Python+NumPy programs}, 2018.
\newblock URL \url{http://github.com/google/jax}.

\bibitem[Cettolo et~al.(2014)Cettolo, Niehues, St{\"u}ker, Bentivogli, and
  Federico]{cettolo2014report}
Cettolo, M., Niehues, J., St{\"u}ker, S., Bentivogli, L., and Federico, M.
\newblock {Report on the 11th IWSLT evaluation campaign}.
\newblock In \emph{Proceedings of the International Workshop on Spoken Language
  Translation, Hanoi, Vietnam}, volume~57, 2014.

\bibitem[Domke(2012)]{domke2012generic}
Domke, J.
\newblock Generic methods for optimization-based modeling.
\newblock In \emph{Artificial Intelligence and Statistics}, pp.\  318--326,
  2012.

\bibitem[Donini et~al.(2019)Donini, Franceschi, Pontil, Majumder, and
  Frasconi]{donini2019marthe}
Donini, M., Franceschi, L., Pontil, M., Majumder, O., and Frasconi, P.
\newblock {MARTHE: Scheduling the learning rate via online hypergradients}.
\newblock \emph{arXiv preprint arXiv:1910.08525}, 2019.

\bibitem[Dua \& Graff(2017)Dua and Graff]{Dua:2019}
Dua, D. and Graff, C.
\newblock {UCI} machine learning repository, 2017.
\newblock URL \url{http://archive.ics.uci.edu/ml}.

\bibitem[Duchi et~al.(2011)Duchi, Hazan, and Singer]{duchi2011adaptive}
Duchi, J., Hazan, E., and Singer, Y.
\newblock Adaptive subgradient methods for online learning and stochastic
  optimization.
\newblock \emph{Journal of Machine Learning Research}, 12\penalty0 (7), 2011.

\bibitem[Flennerhag et~al.(2019)Flennerhag, Rusu, Pascanu, Visin, Yin, and
  Hadsell]{flennerhag2019meta}
Flennerhag, S., Rusu, A.~A., Pascanu, R., Visin, F., Yin, H., and Hadsell, R.
\newblock Meta-learning with warped gradient descent.
\newblock \emph{arXiv preprint arXiv:1909.00025}, 2019.

\bibitem[Franceschi et~al.(2017)Franceschi, Donini, Frasconi, and
  Pontil]{franceschi2017forward}
Franceschi, L., Donini, M., Frasconi, P., and Pontil, M.
\newblock Forward and reverse gradient-based hyperparameter optimization.
\newblock In \emph{International Conference on Machine Learning}, pp.\
  1165--1173, 2017.

\bibitem[George et~al.(2018)George, Laurent, Bouthillier, Ballas, and
  Vincent]{george2018fast}
George, T., Laurent, C., Bouthillier, X., Ballas, N., and Vincent, P.
\newblock {Fast approximate natural gradient descent in a Kronecker-factored
  eigenbasis}.
\newblock \emph{arXiv preprint arXiv:1806.03884}, 2018.

\bibitem[Grosse(2021)]{GrosseNNTDChapter4}
Grosse, R.
\newblock {University of Toronto CSC2541, Topics in Machine Learning: Neural
  Net Training Dynamics, Chapter 4: Second-Order Optimization}.
\newblock {Lecture Notes}, 2021.
\newblock URL
  \url{https://www.cs.toronto.edu/~rgrosse/courses/csc2541_2021/readings/L04_second_order.pdf}.

\bibitem[Grosse \& Martens(2016)Grosse and Martens]{grosse2016kronecker}
Grosse, R. and Martens, J.
\newblock {A Kronecker-factored approximate Fisher matrix for convolution
  layers}.
\newblock In \emph{International Conference on Machine Learning}, pp.\
  573--582, 2016.

\bibitem[Gupta et~al.(2018)Gupta, Koren, and Singer]{gupta2018shampoo}
Gupta, V., Koren, T., and Singer, Y.
\newblock {Shampoo: Preconditioned stochastic tensor optimization}.
\newblock In \emph{International Conference on Machine Learning}, pp.\
  1842--1850, 2018.

\bibitem[Hazan \& Kakade(2019)Hazan and Kakade]{hazan2019revisiting}
Hazan, E. and Kakade, S.
\newblock Revisiting the polyak step size.
\newblock \emph{arXiv preprint arXiv:1905.00313}, 2019.

\bibitem[He et~al.(2016)He, Zhang, Ren, and Sun]{he2016deep}
He, K., Zhang, X., Ren, S., and Sun, J.
\newblock Deep residual learning for image recognition.
\newblock In \emph{Conference on Computer Vision and Pattern Recognition}, pp.\
   770--778, 2016.

\bibitem[Hinton et~al.(2015)Hinton, Vinyals, and Dean]{hinton2015distilling}
Hinton, G., Vinyals, O., and Dean, J.
\newblock Distilling the knowledge in a neural network.
\newblock \emph{arXiv preprint arXiv:1503.02531}, 2015.

\bibitem[Hinton \& Salakhutdinov(2006)Hinton and
  Salakhutdinov]{hinton2006reducing}
Hinton, G.~E. and Salakhutdinov, R.~R.
\newblock Reducing the dimensionality of data with neural networks.
\newblock \emph{Science}, 313\penalty0 (5786):\penalty0 504--507, 2006.

\bibitem[Ioffe \& Szegedy(2015)Ioffe and Szegedy]{ioffe2015batch}
Ioffe, S. and Szegedy, C.
\newblock Batch normalization: {A}ccelerating deep network training by reducing
  internal covariate shift.
\newblock \emph{arXiv preprint arXiv:1502.03167}, 2015.

\bibitem[Jacot et~al.(2018)Jacot, Gabriel, and Hongler]{jacot2018neural}
Jacot, A., Gabriel, F., and Hongler, C.
\newblock {Neural tangent kernel: Convergence and generalization in neural
  networks}.
\newblock \emph{arXiv preprint arXiv:1806.07572}, 2018.

\bibitem[Jaderberg et~al.(2017)Jaderberg, Dalibard, Osindero, Czarnecki,
  Donahue, Razavi, Vinyals, Green, Dunning, Simonyan,
  et~al.]{jaderberg2017population}
Jaderberg, M., Dalibard, V., Osindero, S., Czarnecki, W.~M., Donahue, J.,
  Razavi, A., Vinyals, O., Green, T., Dunning, I., Simonyan, K., et~al.
\newblock Population-based training of neural networks.
\newblock \emph{arXiv preprint arXiv:1711.09846}, 2017.

\bibitem[Jiang et~al.(2019)Jiang, He, Chen, Liu, Gao, and Zhao]{jiang2019smart}
Jiang, H., He, P., Chen, W., Liu, X., Gao, J., and Zhao, T.
\newblock {Smart: Robust and efficient fine-tuning for pre-trained natural
  language models through principled regularized optimization}.
\newblock \emph{arXiv preprint arXiv:1911.03437}, 2019.

\bibitem[Kingma \& Ba(2015)Kingma and Ba]{kingma2014adam}
Kingma, D.~P. and Ba, J.
\newblock Adam: {A} method for stochastic optimization.
\newblock In \emph{International Conference on Learning Representations}, 2015.

\bibitem[Krizhevsky et~al.(2012)Krizhevsky, Sutskever, and
  Hinton]{krizhevsky2012imagenet}
Krizhevsky, A., Sutskever, I., and Hinton, G.~E.
\newblock {ImageNet} classification with deep convolutional neural networks.
\newblock In \emph{Advances in Neural Information Processing Systems}, pp.\
  1097--1105, 2012.

\bibitem[Laurent et~al.(2018)Laurent, George, Bouthillier, Ballas, and
  Vincent]{laurent2018an}
Laurent, C., George, T., Bouthillier, X., Ballas, N., and Vincent, P.
\newblock {An evaluation of Fisher approximations beyond Kronecker
  factorization}, 2018.
\newblock URL \url{https://openreview.net/forum?id=ryVC6tkwG}.

\bibitem[Le~Roux et~al.(2007)Le~Roux, Manzagol, and Bengio]{le2007topmoumoute}
Le~Roux, N., Manzagol, P.-A., and Bengio, Y.
\newblock Topmoumoute online natural gradient algorithm.
\newblock In \emph{Advances in Neural Information Processing Systems}, pp.\
  849--856, 2007.

\bibitem[LeCun(1988)]{lecun1998mnist}
LeCun, Y.
\newblock {The MNIST database of handwritten digits}.
\newblock \emph{http://yann. lecun. com/exdb/mnist/}, 1988.

\bibitem[Lee \& Choi(2018)Lee and Choi]{lee2018gradient}
Lee, Y. and Choi, S.
\newblock Gradient-based meta-learning with learned layerwise metric and
  subspace.
\newblock In \emph{International Conference on Machine Learning}, pp.\
  2927--2936, 2018.

\bibitem[Li \& Malik(2016)Li and Malik]{li2016learning}
Li, K. and Malik, J.
\newblock Learning to optimize.
\newblock \emph{arXiv preprint arXiv:1606.01885}, 2016.

\bibitem[Li \& Malik(2017)Li and Malik]{li2017learning}
Li, K. and Malik, J.
\newblock Learning to optimize neural nets.
\newblock \emph{arXiv preprint arXiv:1703.00441}, 2017.

\bibitem[Li et~al.(2016)Li, Jamieson, DeSalvo, Rostamizadeh, and
  Talwalkar]{li2016hyperband}
Li, L., Jamieson, K., DeSalvo, G., Rostamizadeh, A., and Talwalkar, A.
\newblock {Hyperband: Bandit-based configuration evaluation for hyperparameter
  optimization}.
\newblock \emph{arXiv preprint arXiv:1603.06560}, 2016.

\bibitem[Loizou et~al.(2021)Loizou, Vaswani, Laradji, and
  Lacoste-Julien]{loizou2021stochastic}
Loizou, N., Vaswani, S., Laradji, I.~H., and Lacoste-Julien, S.
\newblock Stochastic polyak step-size for sgd: An adaptive learning rate for
  fast convergence.
\newblock In \emph{International Conference on Artificial Intelligence and
  Statistics}, pp.\  1306--1314. PMLR, 2021.

\bibitem[Loshchilov \& Hutter(2017)Loshchilov and
  Hutter]{loshchilov2017decoupled}
Loshchilov, I. and Hutter, F.
\newblock Decoupled weight decay regularization.
\newblock \emph{arXiv preprint arXiv:1711.05101}, 2017.

\bibitem[Lucas et~al.(2019)Lucas, Sun, Zemel, and Grosse]{lucas2018aggregated}
Lucas, J., Sun, S., Zemel, R., and Grosse, R.
\newblock {Aggregated momentum: Stability through passive damping}.
\newblock In \emph{International Conference on Learning Representations}, 2019.

\bibitem[Maclaurin et~al.(2015)Maclaurin, Duvenaud, and
  Adams]{maclaurin2015gradient}
Maclaurin, D., Duvenaud, D., and Adams, R.
\newblock Gradient-based hyperparameter optimization through reversible
  learning.
\newblock In \emph{International Conference on Machine Learning}, pp.\
  2113--2122, 2015.

\bibitem[Martens(2010)]{martens2010deep}
Martens, J.
\newblock {Deep learning via Hessian-free optimization}.
\newblock In \emph{International Conference on Machine Learning}, 2010.

\bibitem[Martens(2014)]{martens2014new}
Martens, J.
\newblock New insights and perspectives on the natural gradient method.
\newblock \emph{arXiv preprint arXiv:1412.1193}, 2014.

\bibitem[Martens(2016)]{martens2016second}
Martens, J.
\newblock \emph{Second-order optimization for neural networks. PhD Thesis.}
\newblock University of Toronto, 2016.

\bibitem[Martens \& Grosse(2015)Martens and Grosse]{martens2015optimizing}
Martens, J. and Grosse, R.
\newblock {Optimizing neural networks with Kronecker-factored approximate
  curvature}.
\newblock In \emph{International Conference on Machine Learning}, pp.\
  2408--2417, 2015.

\bibitem[Martens \& Sutskever(2011)Martens and Sutskever]{martens2011learning}
Martens, J. and Sutskever, I.
\newblock {Learning recurrent neural networks with Hessian-free optimization}.
\newblock In \emph{International Conference on Machine Learning}, 2011.

\bibitem[Martens et~al.(2018)Martens, Ba, and Johnson]{martens2018kronecker}
Martens, J., Ba, J., and Johnson, M.
\newblock Kronecker-factored curvature approximations for recurrent neural
  networks.
\newblock In \emph{International Conference on Learning Representations}, 2018.

\bibitem[Metz et~al.(2018)Metz, Maheswaranathan, Nixon, Freeman, and
  Sohl-Dickstein]{metz2018learned}
Metz, L., Maheswaranathan, N., Nixon, J., Freeman, C.~D., and Sohl-Dickstein,
  J.
\newblock {Learned optimizers that outperform SGD on wall-clock and validation
  loss}.
\newblock \emph{arXiv preprint arXiv:1810.10180}, 2018.

\bibitem[Mishkin et~al.(2018)Mishkin, Kunstner, Nielsen, Schmidt, and
  Khan]{mishkin2018slang}
Mishkin, A., Kunstner, F., Nielsen, D., Schmidt, M., and Khan, M.~E.
\newblock {Slang: Fast structured covariance approximations for Bayesian deep
  learning with natural gradient}.
\newblock \emph{arXiv preprint arXiv:1811.04504}, 2018.

\bibitem[Moskovitz et~al.(2019)Moskovitz, Wang, Lan, Kapoor, Miconi, Yosinski,
  and Rawal]{moskovitz2019first}
Moskovitz, T., Wang, R., Lan, J., Kapoor, S., Miconi, T., Yosinski, J., and
  Rawal, A.
\newblock First-order preconditioning via hypergradient descent.
\newblock \emph{arXiv preprint arXiv:1910.08461}, 2019.

\bibitem[Mujika et~al.(2018)Mujika, Meier, and Steger]{mujika2018approximating}
Mujika, A., Meier, F., and Steger, A.
\newblock {Approximating real-time recurrent learning with random Kronecker
  factors}.
\newblock \emph{arXiv preprint arXiv:1805.10842}, 2018.

\bibitem[Netzer et~al.(2011)Netzer, Wang, Coates, Bissacco, Wu, and
  Ng]{netzer2011reading}
Netzer, Y., Wang, T., Coates, A., Bissacco, A., Wu, B., and Ng, A.~Y.
\newblock Reading digits in natural images with unsupervised feature learning.
\newblock In \emph{NIPS Workshop on Deep Learning and Unsupervised Feature
  Learning}, 2011.

\bibitem[Nocedal \& Wright(1999)Nocedal and Wright]{wright1999numerical}
Nocedal, J. and Wright, S.
\newblock \emph{Numerical Optimization}.
\newblock Springer, 1999.

\bibitem[Ott et~al.(2019)Ott, Edunov, Baevski, Fan, Gross, Ng, Grangier, and
  Auli]{ott2019fairseq}
Ott, M., Edunov, S., Baevski, A., Fan, A., Gross, S., Ng, N., Grangier, D., and
  Auli, M.
\newblock {fairseq: A fast, extensible toolkit for sequence modeling}.
\newblock In \emph{Proceedings of NAACL-HLT 2019: Demonstrations}, 2019.

\bibitem[Park \& Oliva(2019)Park and Oliva]{park2019meta}
Park, E. and Oliva, J.~B.
\newblock Meta-curvature.
\newblock \emph{arXiv preprint arXiv:1902.03356}, 2019.

\bibitem[Paszke et~al.(2019)Paszke, Gross, Massa, Lerer, Bradbury, Chanan,
  Killeen, Lin, Gimelshein, Antiga, Desmaison, Kopf, Yang, DeVito, Raison,
  Tejani, Chilamkurthy, Steiner, Fang, Bai, and Chintala]{NEURIPS2019_9015}
Paszke, A., Gross, S., Massa, F., Lerer, A., Bradbury, J., Chanan, G., Killeen,
  T., Lin, Z., Gimelshein, N., Antiga, L., Desmaison, A., Kopf, A., Yang, E.,
  DeVito, Z., Raison, M., Tejani, A., Chilamkurthy, S., Steiner, B., Fang, L.,
  Bai, J., and Chintala, S.
\newblock Pytorch: An imperative style, high-performance deep learning library.
\newblock In \emph{Advances in Neural Information Processing Systems 32}, pp.\
  8024--8035. 2019.

\bibitem[Rahimi \& Recht(2017)Rahimi and Recht]{rahimi_recht_2017}
Rahimi, A. and Recht, B.
\newblock Reflections on random kitchen sinks, 2017.
\newblock URL \url{http://www.argmin.net/2017/12/05/kitchen-sinks/}.

\bibitem[Robbins \& Monro(1951)Robbins and Monro]{robbins1951stochastic}
Robbins, H. and Monro, S.
\newblock A stochastic approximation method.
\newblock \emph{The Annals of Mathematical Statistics}, pp.\  400--407, 1951.

\bibitem[Rolinek \& Martius(2018)Rolinek and Martius]{rolinek2018l4}
Rolinek, M. and Martius, G.
\newblock {L4: Practical loss-based stepsize adaptation for deep learning}.
\newblock In \emph{Advances in Neural Information Processing Systems}, pp.\
  6433--6443, 2018.

\bibitem[Simonyan \& Zisserman(2014)Simonyan and Zisserman]{simonyan2014very}
Simonyan, K. and Zisserman, A.
\newblock Very deep convolutional networks for large-scale image recognition.
\newblock \emph{arXiv preprint arXiv:1409.1556}, 2014.

\bibitem[Snoek et~al.(2012)Snoek, Larochelle, and Adams]{snoek2012practical}
Snoek, J., Larochelle, H., and Adams, R.~P.
\newblock Practical {B}ayesian optimization of machine learning algorithms.
\newblock In \emph{Advances in Neural Information Processing Systems}, pp.\
  2951--2959, 2012.

\bibitem[Snoek et~al.(2015)Snoek, Rippel, Swersky, Kiros, Satish, Sundaram,
  Patwary, Prabhat, and Adams]{snoek2015scalable}
Snoek, J., Rippel, O., Swersky, K., Kiros, R., Satish, N., Sundaram, N.,
  Patwary, M., Prabhat, P., and Adams, R.
\newblock Scalable {B}ayesian optimization using deep neural networks.
\newblock In \emph{International Conference on Machine Learning}, pp.\
  2171--2180, 2015.

\bibitem[Sutton et~al.(2000)Sutton, McAllester, Singh, and
  Mansour]{sutton2000policy}
Sutton, R.~S., McAllester, D.~A., Singh, S.~P., and Mansour, Y.
\newblock Policy gradient methods for reinforcement learning with function
  approximation.
\newblock In \emph{Advances in Neural Information Processing Systems}, pp.\
  1057--1063, 2000.

\bibitem[Swersky et~al.(2014)Swersky, Snoek, and Adams]{swersky2014freeze}
Swersky, K., Snoek, J., and Adams, R.~P.
\newblock Freeze-thaw {B}ayesian optimization.
\newblock \emph{arXiv preprint arXiv:1406.3896}, 2014.

\bibitem[Tallec \& Ollivier(2017)Tallec and Ollivier]{tallec2017unbiased}
Tallec, C. and Ollivier, Y.
\newblock Unbiased online recurrent optimization.
\newblock \emph{arXiv preprint arXiv:1702.05043}, 2017.

\bibitem[Tang et~al.(2021)Tang, Jiang, Gong, Li, Wu, Yu, Wang, and
  Wang]{tang2021skfac}
Tang, Z., Jiang, F., Gong, M., Li, H., Wu, Y., Yu, F., Wang, Z., and Wang, M.
\newblock {SKFAC: Training neural networks with faster Kronecker-factored
  approximate curvature}.
\newblock In \emph{Conference on Computer Vision and Pattern Recognition}, pp.\
   13479--13487, 2021.

\bibitem[Tieleman \& Hinton(2012)Tieleman and Hinton]{Tieleman2012}
Tieleman, T. and Hinton, G.
\newblock {Lecture 6.5---RMSprop: Divide the gradient by a running average of
  its recent magnitude}.
\newblock {COURSERA: Neural Networks for Machine Learning}, 2012.

\bibitem[Vaswani et~al.(2017)Vaswani, Shazeer, Parmar, Uszkoreit, Jones, Gomez,
  Kaiser, and Polosukhin]{vaswani2017attention}
Vaswani, A., Shazeer, N., Parmar, N., Uszkoreit, J., Jones, L., Gomez, A.~N.,
  Kaiser, {\L}., and Polosukhin, I.
\newblock Attention is all you need.
\newblock In \emph{Advances in Neural Information Processing Systems}, pp.\
  5998--6008, 2017.

\bibitem[Vaswani et~al.(2019)Vaswani, Mishkin, Laradji, Schmidt, Gidel, and
  Lacoste-Julien]{vaswani2019painless}
Vaswani, S., Mishkin, A., Laradji, I., Schmidt, M., Gidel, G., and
  Lacoste-Julien, S.
\newblock Painless stochastic gradient: Interpolation, line-search, and
  convergence rates.
\newblock In \emph{Advances in Neural Information Processing Systems}, pp.\
  3727--3740, 2019.

\bibitem[Vaswani et~al.(2020)Vaswani, Laradji, Kunstner, Meng, Schmidt, and
  Lacoste-Julien]{vaswani2020adaptive}
Vaswani, S., Laradji, I., Kunstner, F., Meng, S.~Y., Schmidt, M., and
  Lacoste-Julien, S.
\newblock Adaptive gradient methods converge faster with over-parameterization
  (but you should do a line-search).
\newblock \emph{arXiv preprint arXiv:2006.06835}, 2020.

\bibitem[Vicol et~al.(2021)Vicol, Metz, and Sohl-Dickstein]{vicol2021unbiased}
Vicol, P., Metz, L., and Sohl-Dickstein, J.
\newblock Unbiased gradient estimation in unrolled computation graphs with
  persistent evolution strategies.
\newblock In \emph{International Conference on Machine Learning}, pp.\
  10553--10563, 2021.

\bibitem[Wadia et~al.(2021)Wadia, Duckworth, Schoenholz, Dyer, and
  Sohl-Dickstein]{wadia2021whitening}
Wadia, N., Duckworth, D., Schoenholz, S.~S., Dyer, E., and Sohl-Dickstein, J.
\newblock Whitening and second order optimization both make information in the
  dataset unusable during training, and can reduce or prevent generalization.
\newblock In \emph{International Conference on Machine Learning}, pp.\
  10617--10629, 2021.

\bibitem[Wichrowska et~al.(2017)Wichrowska, Maheswaranathan, Hoffman,
  Colmenarejo, Denil, de~Freitas, and Sohl-Dickstein]{wichrowska2017learned}
Wichrowska, O., Maheswaranathan, N., Hoffman, M.~W., Colmenarejo, S.~G., Denil,
  M., de~Freitas, N., and Sohl-Dickstein, J.
\newblock Learned optimizers that scale and generalize.
\newblock In \emph{International Conference on Machine Learning}, pp.\
  3751--3760, 2017.

\bibitem[Wu et~al.(2018)Wu, Ren, Liao, and Grosse]{wu2018understanding}
Wu, Y., Ren, M., Liao, R., and Grosse, R.
\newblock Understanding short-horizon bias in stochastic meta-optimization.
\newblock In \emph{International Conference on Learning Representations}, 2018.

\bibitem[Zagoruyko \& Komodakis(2016)Zagoruyko and
  Komodakis]{zagoruyko2016wide}
Zagoruyko, S. and Komodakis, N.
\newblock Wide residual networks.
\newblock \emph{arXiv preprint arXiv:1605.07146}, 2016.

\bibitem[Zhang et~al.(2019{\natexlab{a}})Zhang, Li, Nado, Martens, Sachdeva,
  Dahl, Shallue, and Grosse]{zhang2019algorithmic}
Zhang, G., Li, L., Nado, Z., Martens, J., Sachdeva, S., Dahl, G., Shallue, C.,
  and Grosse, R.~B.
\newblock Which algorithmic choices matter at which batch sizes? insights from
  a noisy quadratic model.
\newblock \emph{Advances in neural information processing systems},
  32:\penalty0 8196--8207, 2019{\natexlab{a}}.

\bibitem[Zhang et~al.(2019{\natexlab{b}})Zhang, Martens, and
  Grosse]{zhang2019fast}
Zhang, G., Martens, J., and Grosse, R.
\newblock Fast convergence of natural gradient descent for overparameterized
  neural networks.
\newblock \emph{arXiv preprint arXiv:1905.10961}, 2019{\natexlab{b}}.

\end{thebibliography}
\bibliographystyle{icml2022}

\newpage
\appendix

\section*{Appendix}

This appendix is structured as follows:
\begin{itemize}
    \item In Section~\ref{app:table-of-notation}, we provide an overview of the notation we use throughout the paper.
    \item In Section~\ref{app:experiment-details}, we provide experiment details.
    \item In Section~\ref{app:additional-results}, we provide additional experimental results.
    \item In Section~\ref{app:same-minibatch-first-term}, we provide an ablation investigating the difference between evaluating the loss term on the same minibatch used to compute the base optimizer step vs a separate randomly-sampled minibatch. Using a separate random minibatch yields a similar meta-objective to that studied in~\citet{wu2018understanding}, which suffers from short-horizon bias.
    \item In Section~\ref{app:approximate-proximal-point}, we provide a derivation of Eq.~\ref{eq:quadratic-proximal}.
    \item In Section~\ref{app:opt-alg-derivations}, we derive classic first- and second-order optimization algorithms (gradient descent, Newton's method, and natural gradient) starting from the proximal objective for the PPM (Eq.~\ref{eq:nn-proximal-point-method}).
    \item In Section~\ref{app:precond-opt}, we provide the proof for Theorem~\ref{thm:optimal-precond}, which shows that exactly optimizing the approximate proximal meta-objective with respect to the preconditioner $\boldP$ can recover various first- and second-order optimization methods.
    \item In Section~\ref{app:meta-opt-kfac}, we list the KFAC assumptions and prove Corollary~\ref{thm:kfac}, which shows that under the assumptions of Theorem~\ref{thm:optimal-precond} and the KFAC assumptions, exactly optimizing the approximate proximal meta-objective yields the KFAC update.
    \item In Section~\ref{app:labmda-ablation}, we show an ablation over $\lamwsd$ and $\lamfsd$, and evaluate how well APO performs with different meta-update intervals.
\end{itemize}

\clearpage

\section{Table of Notation}
\label{app:table-of-notation}

\begin{table}[h!]
    \begin{center}
        \begin{tabular}{c  c}
            \toprule
            Notation         & Description \\
            \midrule
            $\mathcal{D}$ & Data-generating distribution \\
            $\mathcal{D}_{\text{train}}$ & Finite training dataset \\
            $(\boldx, \boldt) \sim \mathcal{D}$ & An input/target pair \\
            $\mathcal{B}$ & Mini-batch of data \\
            $N$ &  Number of training data points, $N = |\mathcal{D}_{\text{train}}|$ \\
            $\vtheta$ & Network parameters \\
            $\hyper$ & Optimization parameters (e.g.~learning rate or preconditioning matrix)\\
            $\pre$ & Preconditioning matrix \\
            $\mathbf{G}$ & Hessian of the function-space discrepancy \\
            $\mathbf{W}$ & Weight matrix of some particular layer of the network \\
            $\pre_{\text{S}}$ & Structured preconditioner, $\pre_{\text{S}} = (\mathbf{A} \otimes \mathbf{B}) \text{diag}(\text{vec}(\mathbf{S}))^2 (\mathbf{A} \otimes \mathbf{B})^\top$ \\
            $\mathbf{A}, \mathbf{B}, \mathbf{S}$ & Block matrices for EKFAC parameterization \\
            $\lambda$      & Weighting of the discrepancy term in the PPM \\
            $\lamfsd$      & Weighting of the function-space discrepancy term \\
            $\lamwsd$      & Weighting of the weight-space discrepancy term \\
            $\eta$      & Base optimizer learning rate \\
            $\alpha$      & Meta optimizer learning rate \\
            $K$ & Meta update interval \\

            $p$ & Number of optimization hyperparameters\\
            $m$ & Number of parameters\\

            $f(\boldx, \boldtheta)$ & Network function with parameters $\boldtheta$ applied to input $\boldx$ \\
            $\mathcal{L}(\mathbf{y}, \mathbf{t})$ & Loss function (e.g.~mean-squared error or cross-entropy) \\
            $\mathcal{J}(\vtheta)$ & Cost function, $\cost (\vtheta) = \frac{1}{N} \sum_{i=1}^N \mathcal{L} (f(\mathbf{x}^{(i)}, \vtheta), \mathbf{t}^{(i)})$ \\
            $\mathcal{J}_{\batch}(\vtheta)$ & Loss on a mini-batch of data, $\cost_{\batch}(\vtheta) = \frac{1}{|\batch|} \sum_{(\boldx, \boldt) \sim \batch} \loss(f(\boldx, \vtheta), \boldt)$ \\
            $\mathcal{Q}(\hyper)$ & Proximal meta-objective function \\
            $\hat{\mathcal{Q}}(\hyper)$ & Approximate proximal meta-objective function \\
            $\rho(\cdot, \cdot)$ & Output-space divergence (e.g.~KL divergence) \\
            $D(\cdot, \cdot)$ & Discrepancy function for the PPM \\
            $\disf(\cdot, \cdot, \cdot)$ & Function-space discrepancy function \\
            $\disw(\cdot, \cdot)$ & Weight-space discrepancy function \\
            $\text{vec}(\cdot)$ & Vectorization function \\
            $\text{diag}(\cdot)$ & Diagonalization function \\
            $u(\vtheta, \hyper, \mathcal{B})$ & Base optimizer update (e.g.~stochastic gradient descent)\\
            $\vtheta'(\hyper)$ & Shorthand for $u(\vtheta, \hyper, \batch)$ where the data $\batch$ is implicit \\

            $\otimes$ & Kronecker product \\
            $\odot$ & Elementwise product \\
            \bottomrule
        \end{tabular}
    \end{center}
    \vspace{-0.2cm}
    \caption{A summary of notation used in this paper.}
    \label{tab:table-of-notation}
\end{table}

\section{Experimental Details}
\label{app:experiment-details}

\subsection{Computing Environment}
All experiments were implemented using the PyTorch~\citep{NEURIPS2019_9015} and JAX~\citep{jax2018github} libraries, and we ran all experiments on NVIDIA P100 GPUs.

\subsection{Regression on UCI Collection}
\label{appendix:experiment-details:mlp_uci}
We used Slice, Protein, and Parkinson data from the UCI collection. In training, we normalized the input features and targets to have a zero mean and unit variance. We used batch size of 128 and trained the network for 500 epochs without weight decay.

We conducted hyperparameter searches over learning rates for all baseline models, making choices based on the final training loss. With the chosen set of hyperparameters, we repeated the experiments 3 times with different random seeds. For SGDm, we set the momentum to 0.9 and swept over the learning rates \{1, 0.3, 0.1, 0.03, 0.01, 0.003, 0.001, 0.0003, 0.0001\}. For Adam, we performed a grid search on learning rate of \{1e-2, 3e-3, 1e-3, 3e-4, 3e-4, 1e-4\}. For Shampoo, we swept over the learning rates \{10, 5, 1, 0.3, 0.1, 0.03, 0.01, 0.003, 0.001\} as suggested by~\citet{gupta2018shampoo}. For KFAC, we did a grid search on learning rates of \{0.03, 0.01, 0.003, 0.001, 0.0003, 0.0001\} and damping values of \{3e-2, 1e-2, 3e-3, 1e-3, 3e-4, 1e-4\}. For APO-Precond, we set $\lambfsd = 0$ and grid searched over $\lambwsd = \{3, 1, 0.1, 0.01\}$.

\subsection{Image Reconstruction}
We used the same experimental set-up from~\citet{martens2015optimizing} for the deep autoencoder experiment. The loss function was defined to be the binary entropy and we further added a regularization term $\frac{\lambda_{\text{WD}}}{2} \|\vtheta\|^2$ to the loss function, where $\lambda_{\text{WD}} = 10^{-5}$. The layer widths for the autoencoder were set to be [784, 1000, 500, 250, 30, 250, 500, 1000, 784] and we used the sigmoid activation function. We trained the network for 1000 epochs with the batch size of 512.

We conducted extensive hyperparameter searches for all baseline models, making choices based on the final training loss. With the chosen set of hyperparameters, we repeated the experiments 3 times with different random seeds. For SGDm, we set the momentum to 0.9 and performed a grid search over the learning rates \{1, 0.3, 0.1, 0.03, 0.01, 0.003, 0.001, 0.0003, 0.0001\}. For Adam, we swept over learning rate of \{3e-2, 1e-2, 3e-3, 1e-3, 3e-4, 3e-4, 1e-4, 3e-5, 1e-5\}. For Shampoo, we tried setting learning rates in range of \{10, 5, 1, 0.3, 0.1, 0.03, 0.01, 0.003, 0.001\}. For KFAC, we did a grid search on learning rates of \{0.03, 0.01, 0.003, 0.001, 0.0003, 0.0001, 0.00003, 0.00001\} and damping values of \{3e-1, 1e-1, 3e-2, 1e-2, 3e-3, 1e-3, 3e-4, 1e-4\}. For APO-Precond, we performed a grid search over 
$\lambfsd = \{0.3, 0.1\}$ and $\lambwsd = \{3, 1, 0.3, 0.1\}$. We set the FSD to measure the KL divergence. 

\subsection{Image Classification}

\paragraph{CIFAR-10 \& CIFAR-100.} We used the standard procedure from~\citet{zagoruyko2016wide} for training convolutional neural networks. The images are zero-padded with 4 pixels on each side and then a random $32 \times 32$ crop is extracted from the image, horizontally flipped with the probability of 0.5. We further normalized the inputs with per-channel mean and standard deviations. We performed the extensive search over the hyperparameters for all networks. We held out 5k examples from the CIFAR-10 and CIFAR-100 to form the validation set following~\citet{zagoruyko2016wide} and selected the hyperparameters with the highest validation accuracy. With the chosen hyperparameters, we re-trained the network with the full training dataset and reported the final test accuracy.

\paragraph{Preconditioning Adaptation.} Across all experiments, we used the batch size of 128 and trained the network for 200 epochs. 
With the chosen set of hyperparameters, we repeated the experiments 3 times with different random seeds and reported the mean accuracy on the test dataset. For SGDm, we set the momentum to 0.9 and performed a grid search over the learning rates \{0.3, 0.1, 0.03, 0.01, 0.003, 0.001, 0.0003, 0.0001\}. For Adam, we swept over learning rate of \{3e-2, 1e-2, 3e-3, 1e-3, 3e-4, 3e-4, 1e-4\}. For KFAC, we did a grid search on learning rates of \{0.03, 0.01, 0.003, 0.001, 0.0003, 0.0001, 0.00003, 0.00001\} and damping values of \{3e-2, 1e-2, 3e-3, 1e-3, 3e-4, 1e-4\}. For APO-Precond, we performed a grid search over 
$\lambfsd = \{0.3, 0.1\}$ and $\lambwsd = \{3, 1, 0.3, 0.1\}$ for architectures without batch normalization. As batch normalization makes the parameters scale-invariant~\citep{arora2018theoretical}, we imposed a higher regularization in the function space and lower regularization in the weight space, searching over $\lambfsd = \{3, 1\}$ and $\lambwsd = \{0.3, 0.1, 0.03, 0.01\}$. We set the FSD term to measure the KL divergence. We also searched over the weight decay in range of \{5e-4, 1e-4, 5e-5\} for all optimizers. 

\paragraph{Learning Rate Adaptation.}
For ResNet32, we trained for 400 epochs, and used a manual schedule that decays the learning rate by a factor of 10 at epochs 150 and 250, following~\citet{lucas2018aggregated}.
For ResNet34 and WideResNet 28-10, we trained for 200 epochs, with a manual schedule that decays the learning rate by a factor of 5 at epochs 60, 120, and 160, following~\citet{zagoruyko2016wide}.

For the baseline optimizers, we performed grid searches over the fixed learning rate or initial learning rate for a fixed step schedule, as well as the weight decay.
For all base optimizers and APO-tuned variants, we searched over weight decay values in $\{ 0.01, 0.003, 0.001, 0.0003, 0.0001, 0.0 \}$.
For SGD and SGDm, we searched over learning rates in $\{ 1.0, 0.3, 0.1, 0.03, 0.01, 0.003, 0.001, 0.0003, 0.0001 \}$.
For RMSprop and Adam, we searched over learning rates in $\{ 0.1, 0.03, 0.01, 0.003, 0.001, 0.0003, 0.0001, 0.00003, 0.00001 \}$.
For each of the APO-tuned variants (e.g.~SGDm-APO), we kept $\lamwsd = 0$ and searched over $\lamfsd \in \{ 0.3, 0.1, 0.03, 0.01, 0.003, 0.001, 0.0003,  0.0001 \}$.

\subsection{Neural Machine Translation}
We trained a Transformer~\citep{vaswani2017attention} composed of 6 encoder and decoder layers, with a word embedding and hidden vector dimension of 512. The architecture has feed-forward size of 1024, 4 attention heads, dropout value 0.3, and weight decay value 0.0001. For APO-Precond, following the practice from~\citet{zhang2019algorithmic}, we used a diagonal block matrix in the structured preconditioner of the embedding weight matrix to reduce memory overhead. We used the Fairseq toolkit~\citep{ott2019fairseq} to conduct all experiments.

For SGDm, we grid searched over the learning rates in \{10, 3, 1, 0.3, 0.1, 0.03, 0.01, 0.003, 0.001\} and tried both the fixed learning rate and inverse sqrt learning rate schedules. For AdamW, we set $\beta_1 = 0.9$, $\beta_2 = 0.98$, and $\epsilon = 10^{-1}$ and searched over the learning rates in range \{0.05, 0.001, 0.0005, 0.0001\}. AdamW also used the inverse sqrt learning rate schedule with 4000 warmup steps. For APO-Precond, we searched over $\lambfsd = \{0.3, 0.1\}$ and $\lambwsd = \{3, 1, 0.3, 0.1\}$ and let FSD term to measure the KL divergence. As we found a fixed learning rate schedule to work best for SGDm, we also used a fixed learning rate schedule for APO-Precond. We selected the hyperparameters based on the BLEU score on the validation set and reported on the final test BLEU score on the checkpoint, which achieved the highest validation BLEU score.

\clearpage

\section{Additional Results}
\label{app:additional-results}

\subsection{Rosenbrock Function}
\label{app:rosenbrock}

\begin{wrapfigure}[12]{r}{0.36\linewidth}
    \vspace{-2.8cm}
    \centering
    \includegraphics[width=\linewidth]{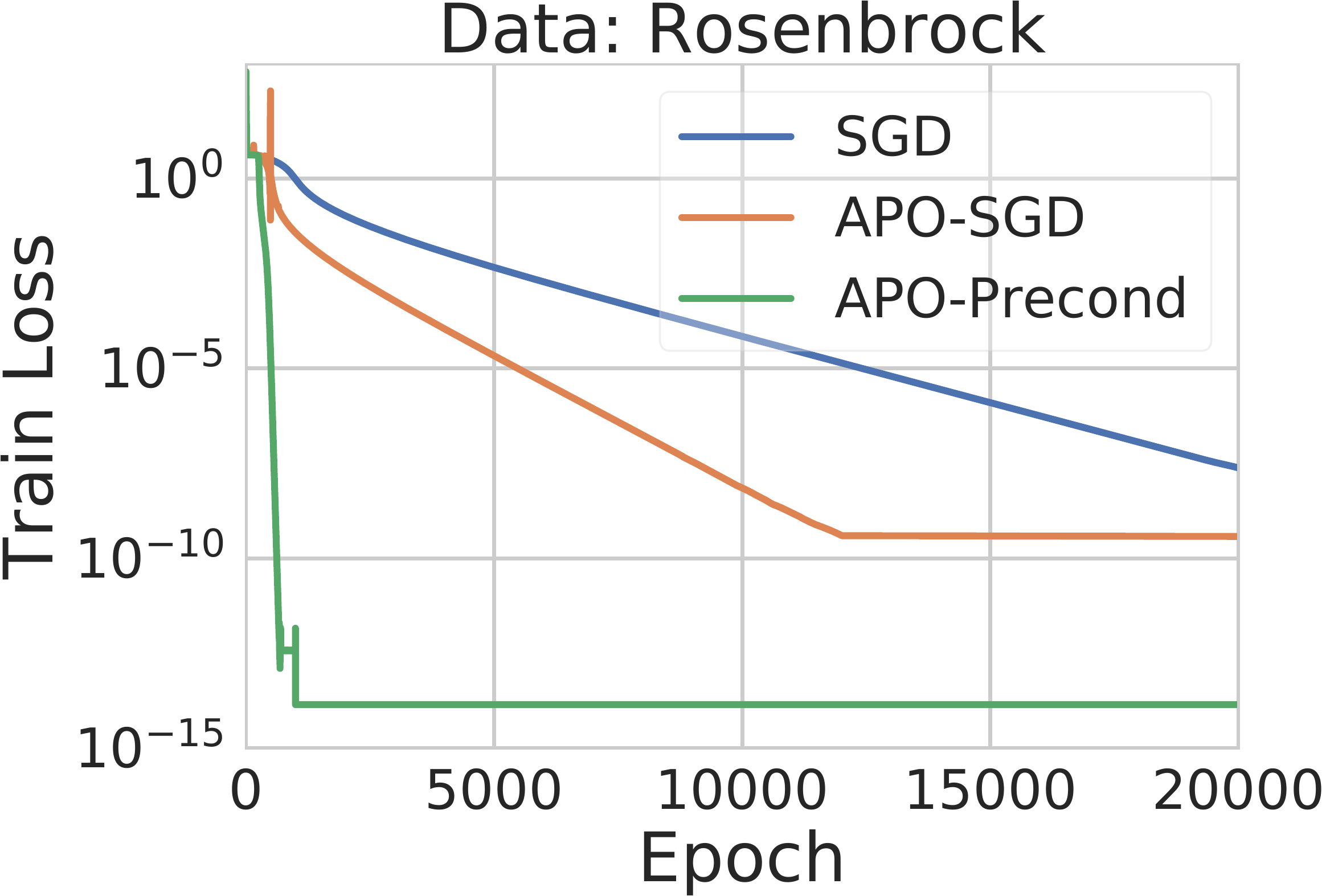}
    \vspace{-0.4cm}
    \caption{Training loss on the Rosenbrock function. Both the learning rate schedules and the preconditioner adapted by APO outperform the optimal fixed learning rate for SGD.}
    \label{fig:rosenbrock-sgd}
\end{wrapfigure}
We validated APO on the two-dimensional Rosenbrock function defined as:
\begin{align*}
    f(x,y) = (1 - x)^2 + 100 (y - x^2)^2
\end{align*}
with initialization $(x, y) = (1, -1.5)$.
In Figure~\ref{fig:rosenbrock-sgd}, we used APO to tune the learning rate for SGD, as well as the full preconditioning matrix.
Both APO-tuned methods outperformed vanilla SGD with the optimal fixed learning rate (chosen through a careful grid search).
Because Rosenbrock has ill-conditioned curvature, second-order optimization with APO-Precond dramatically speeds up the convergence and achieves a lower loss.

\subsection{SVHN}
\label{app:svhn}
We also evaluated APO on the Google Street View House Numbers dataset (SVHN)~\citep{netzer2011reading}, using a WideResNet 16-8.
We followed the experimental setup of~\citet{zagoruyko2016wide}, and used the same decay schedule for the baseline, which decays the learning rate by $10\times$ at epochs 80 and 120.
The optimal fixed learning rate achieves test accuracy fluctuating around 97.20\%, while the manual schedule achieves 98.17\%.
APO rapidly converges to test accuracy 97.96\%, outperforming the baseline while being slightly worse than the manual schedule (Figure~\ref{fig:svhn-sgdm-test-acc}).

\begin{figure}[H]
\centering
\includegraphics[width=0.45\linewidth]{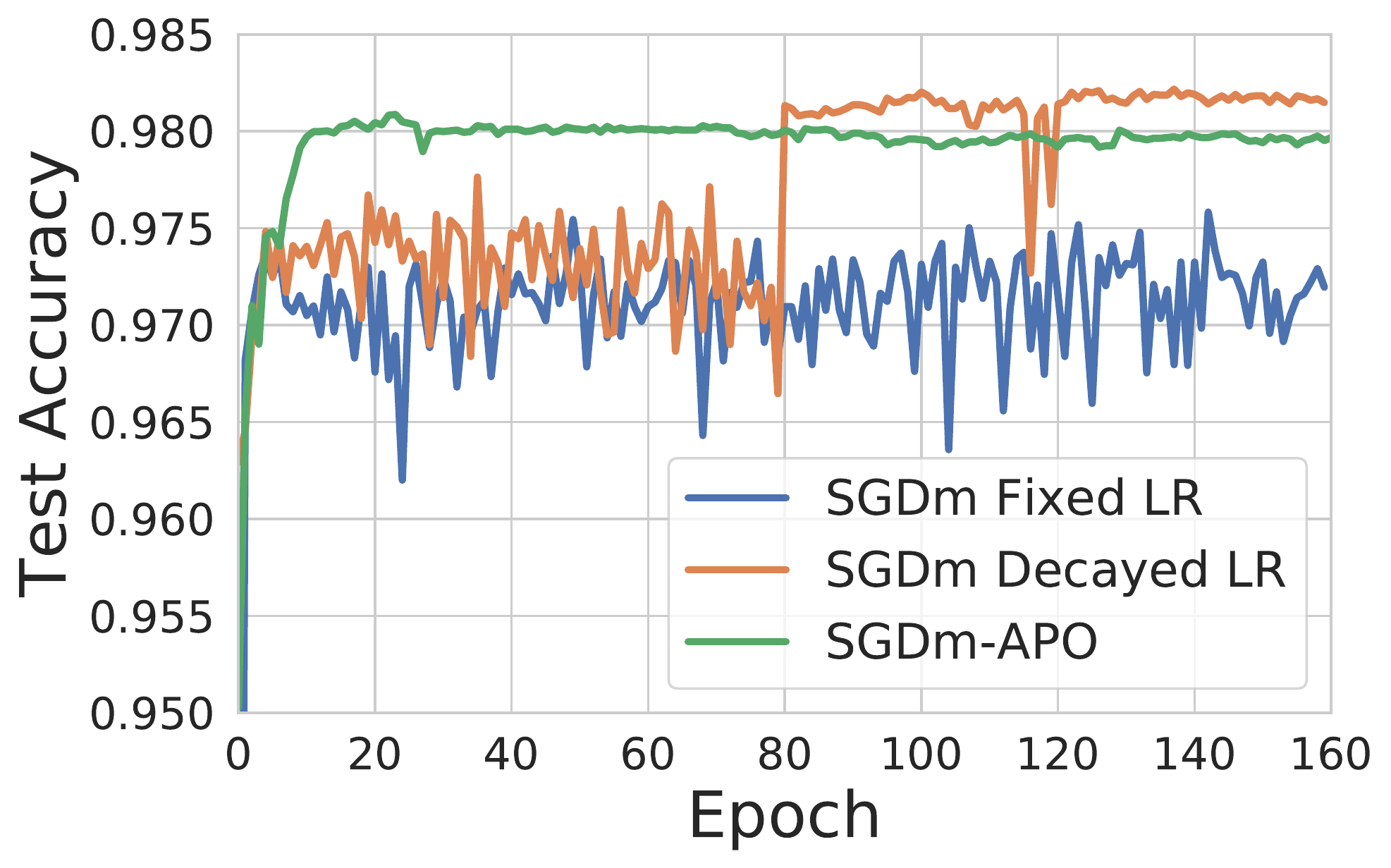}
\caption{Test Accuracy on SVHN using SGDm with fixed learning rate, decayed learning rate schedule, and APO-tuned learning rate.}
\label{fig:svhn-sgdm-test-acc}
\end{figure}

\subsection{CIFAR-10}\label{app:cifar10}

\paragraph{APO Preconditioning Adaptation.}

We show the plots for for experiments listed in Table~\ref{table:testaccuracy} in Figure~\ref{fig:precond-cifar10-nobn} and Figure~\ref{fig:precond-cifar10-bn}.

\begin{figure}[H]
    \centering
    \begin{tabular}{ccc}
    \includegraphics[width=0.3\linewidth]{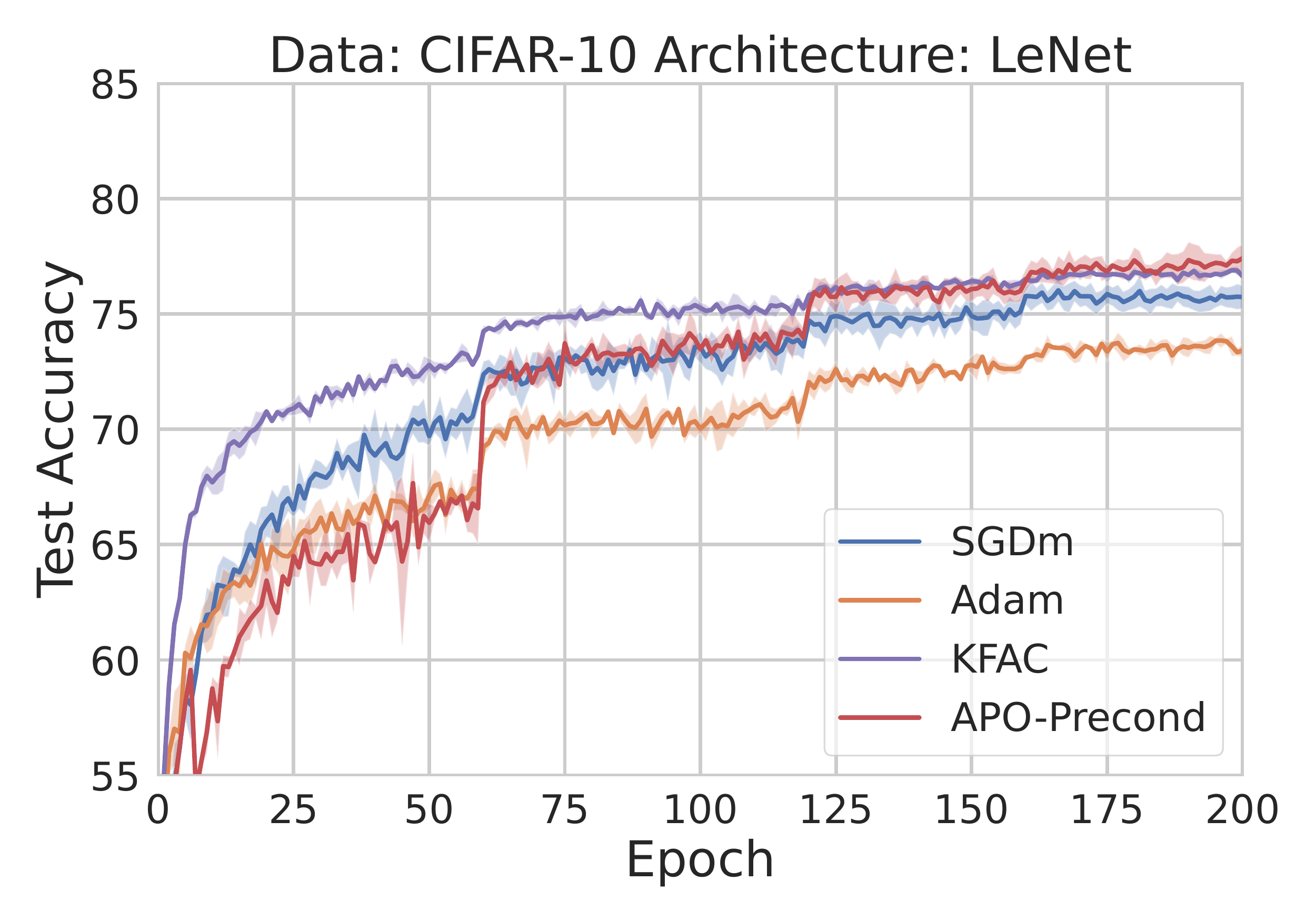}&
    \includegraphics[width=0.3\linewidth]{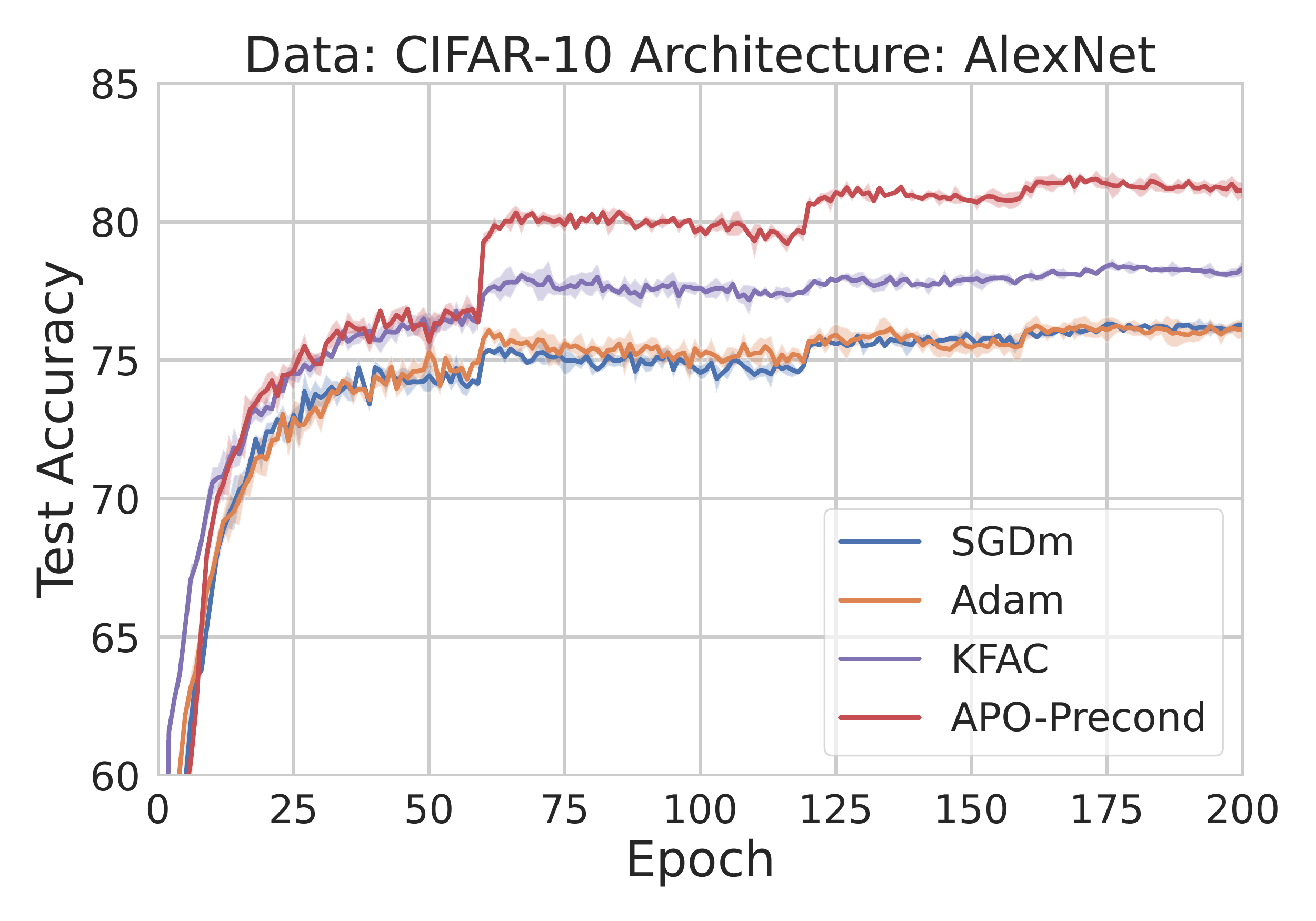}&
    \includegraphics[width=0.3\linewidth]{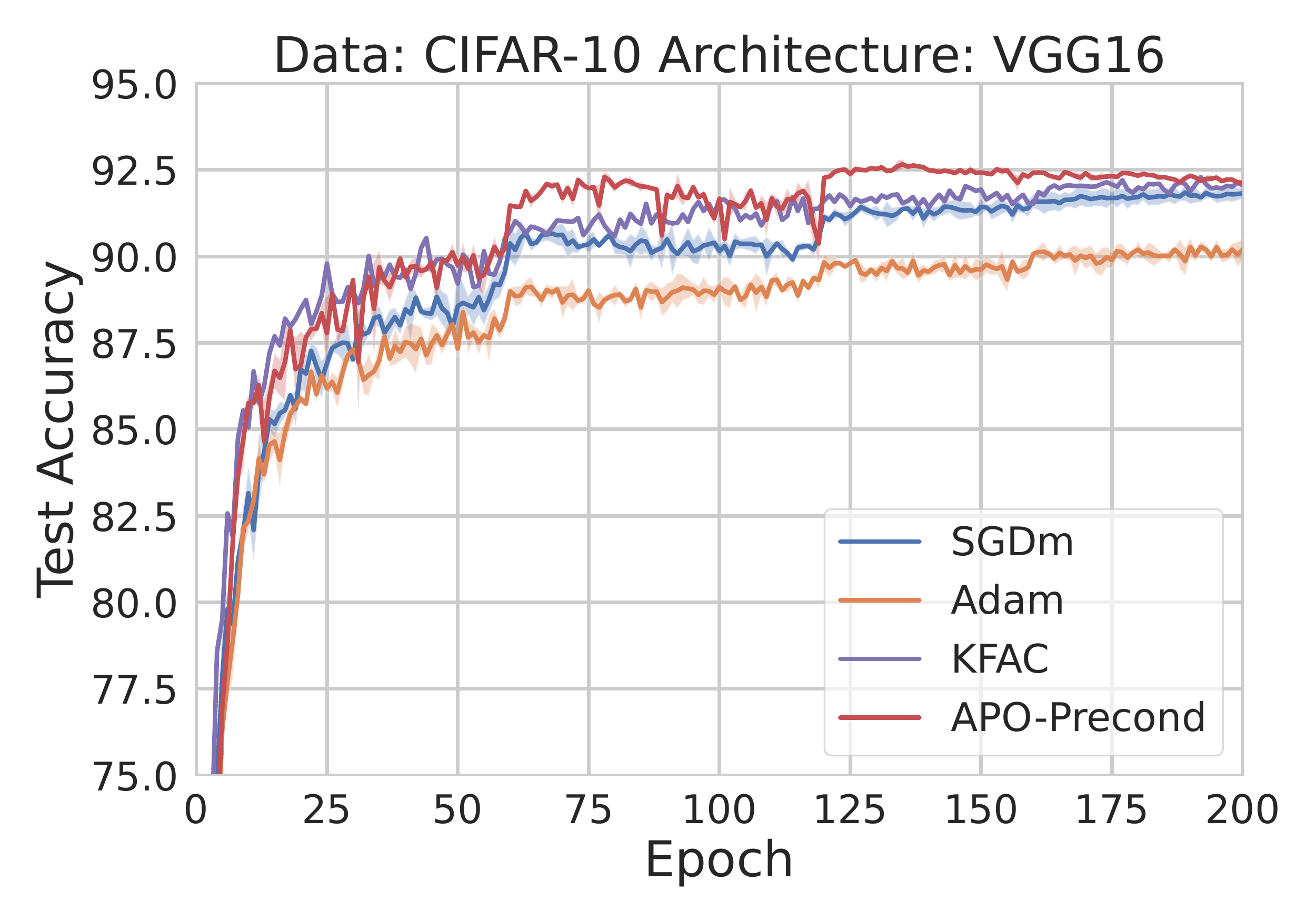}
    \end{tabular}
    \vspace{-2mm}
    \caption{LeNet, AlexNet, and VGG16 on CIFAR-10, using SGDm, Adam, KFAC, and APO-Precond.}
    \vspace{-4mm}
    \label{fig:precond-cifar10-nobn}
\end{figure}

\begin{figure}[H]
    \centering
    \begin{tabular}{ccc}
    \includegraphics[width=0.3\linewidth]{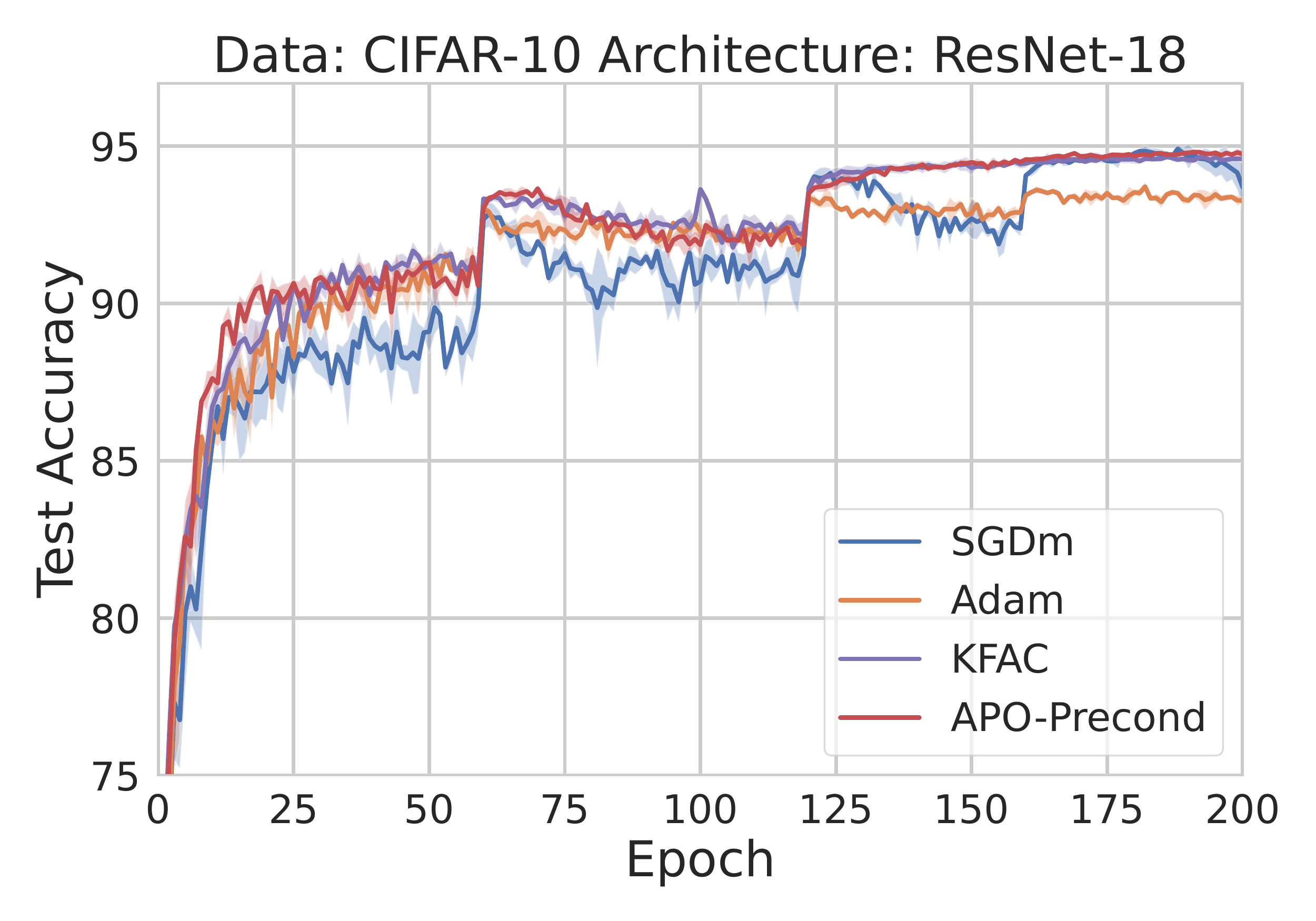}&
    \includegraphics[width=0.3\linewidth]{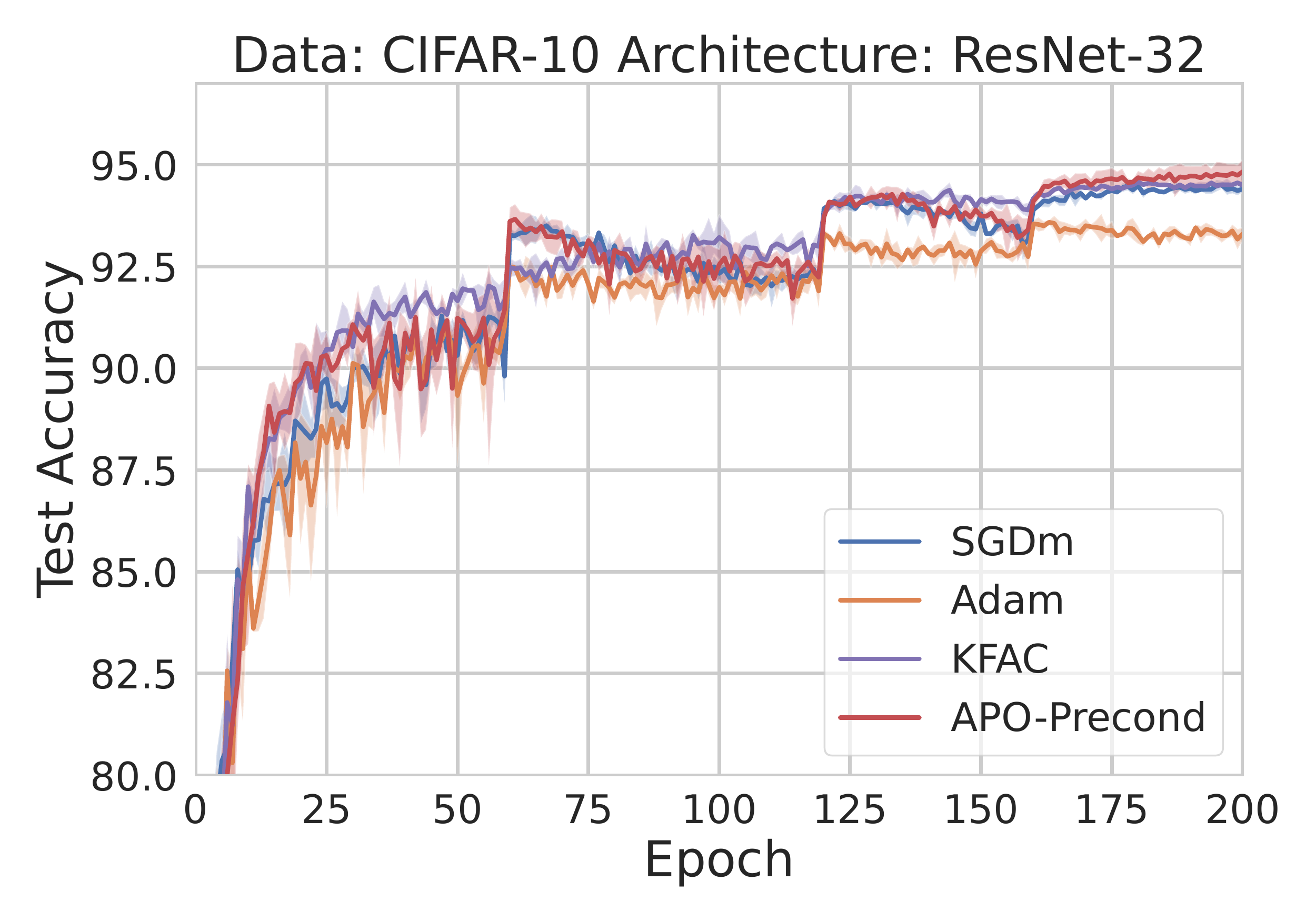}&
    \end{tabular}
    \vspace{-2mm}
    \caption{ResNet-18 and ResNet-32 on CIFAR-10, using SGDm, Adam, KFAC, and APO-Precond.}
    \vspace{-4mm}
    \label{fig:precond-cifar10-bn}
\end{figure}


\paragraph{APO Learning Rate Schedules.}
We show the plots for each base-optimizer and network architecture as below. We find that APO achieves better performance than the best fixed LR, and is comparable to the manual schedule.
The learning rate schedules discovered by APO are nontrivial: most exhibit a large increase in the learning rate at the start of training to make rapid progress, followed by a gradual decrease in the learning rate to fine-tune the solution as optimization approaches a local optimum.
The APO learning rate schedules often span two orders of magnitude, similarly to manual decay schedules.

\begin{table}[h]
\centering
\begin{tabular}{c|ccc}
\toprule
    & \multicolumn{3}{c}{\textbf{CIFAR-10 (ResNet34)}}     \\ \midrule
    & \textbf{Fixed} & \textbf{Decayed}      & \textbf{APO} \\ \midrule
\textbf{SGD}     &  93.00 $\pm$ 0.35  & 93.54 $\pm$ 0.01 & 94.27 $\pm$ 0.02  \\
\textbf{SGDm}    &  92.99 $\pm$ 0.14  & 95.08 $\pm$ 0.24 & 94.47 $\pm$ 0.24  \\
\textbf{RMSprop} &  92.87 $\pm$ 0.32  & 93.97 $\pm$ 0.11 & 93.97 $\pm$ 0.07  \\
\textbf{Adam}    &  93.23 $\pm$ 0.15  & 94.12 $\pm$ 0.10 & 93.80 $\pm$ 0.14  \\ \bottomrule
\end{tabular}
\caption{Test accuracy on CIFAR-10 using ResNet34. APO consistently outperforms the best fixed learning rate for each optimizer, and is on par with carefully-tuned manual schedules. Each reported result is the mean of 4 random restarts, and we report $\pm$ the standard deviation.}
\end{table}

\begin{figure}[H]
\centering
\begin{tabular}{ccc}
\includegraphics[width=0.3\linewidth]{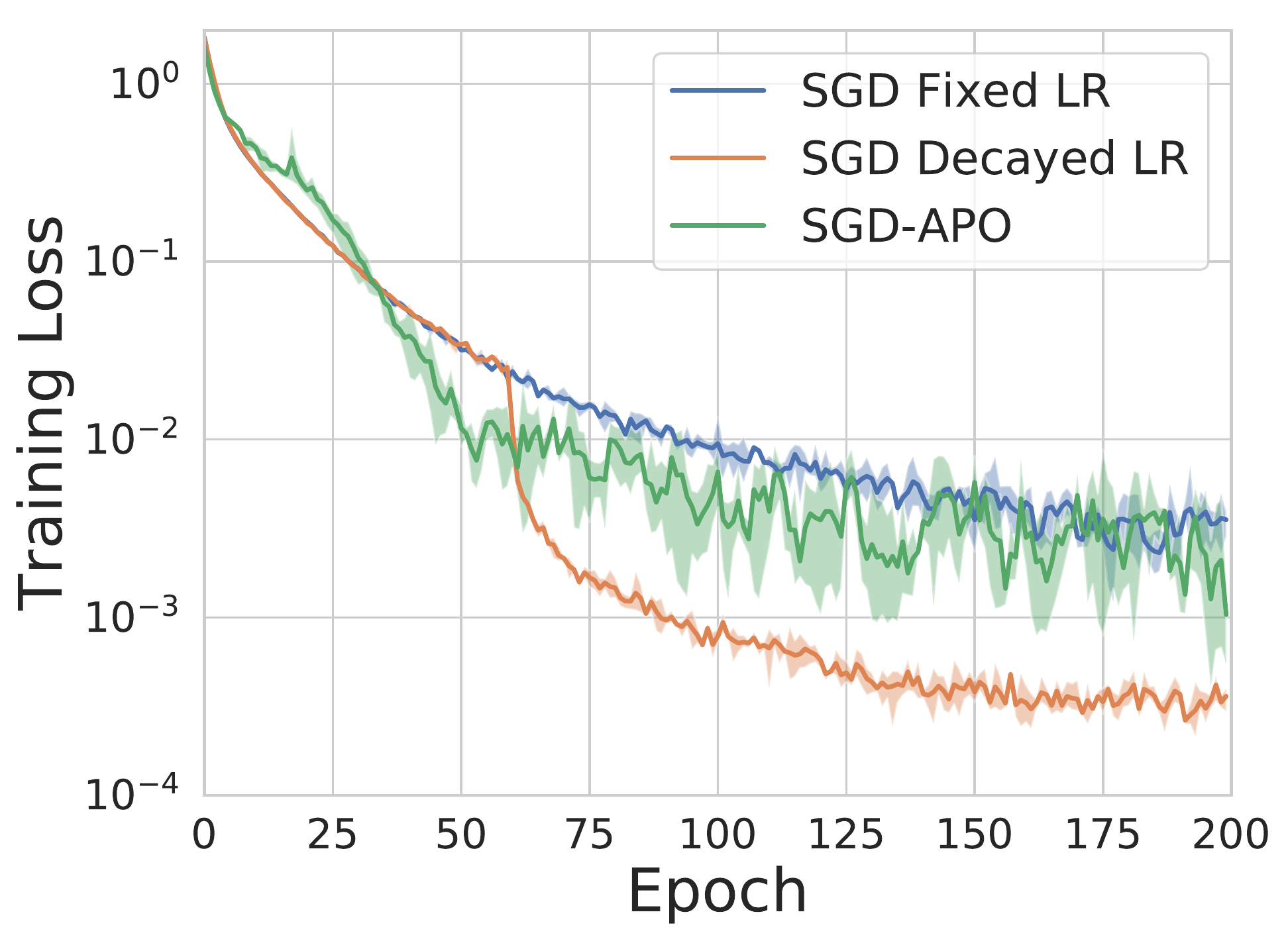}&
\includegraphics[width=0.3\linewidth]{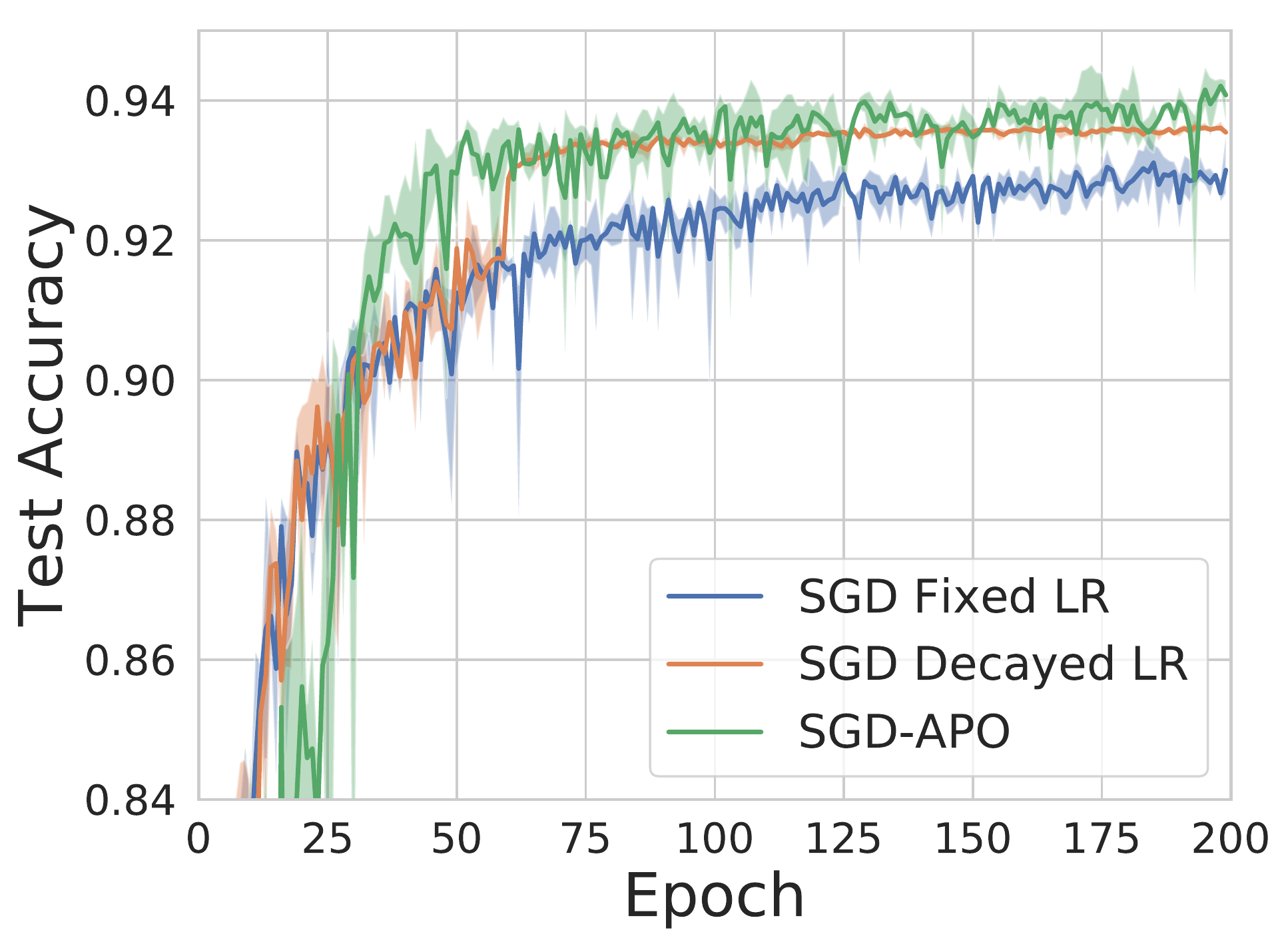}&
\includegraphics[width=0.3\linewidth]{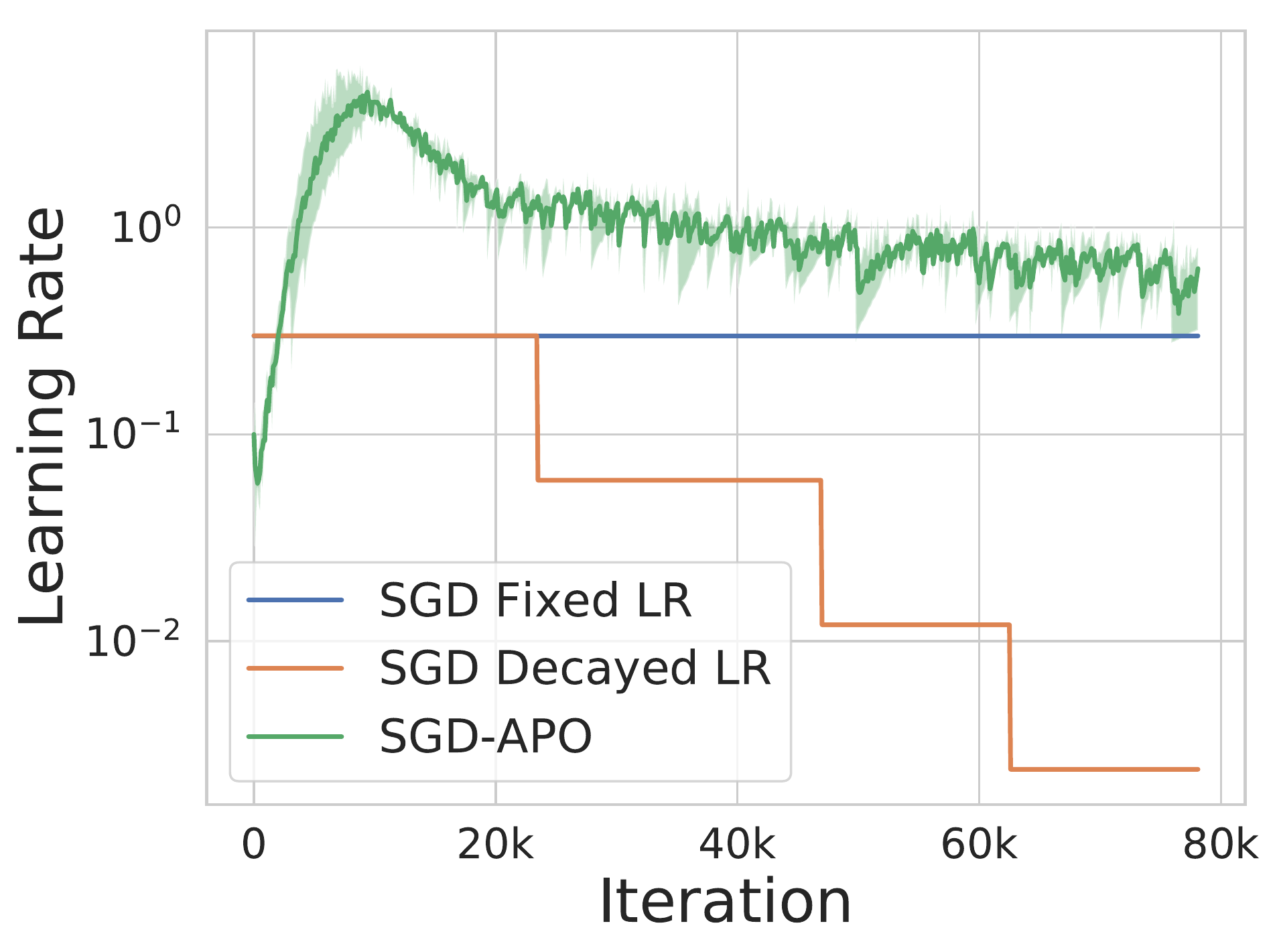}
\end{tabular}
\vspace{-2mm}
\caption{ResNet34 on CIFAR-10, using SGD. The shaded regions show the min/max values over 4 random restarts.}
\vspace{-4mm}
\label{fig:resnet34-cifar10-plain-sgd}
\end{figure}

\begin{figure}[H]
\centering
\begin{tabular}{ccc}
\includegraphics[width=0.3\linewidth]{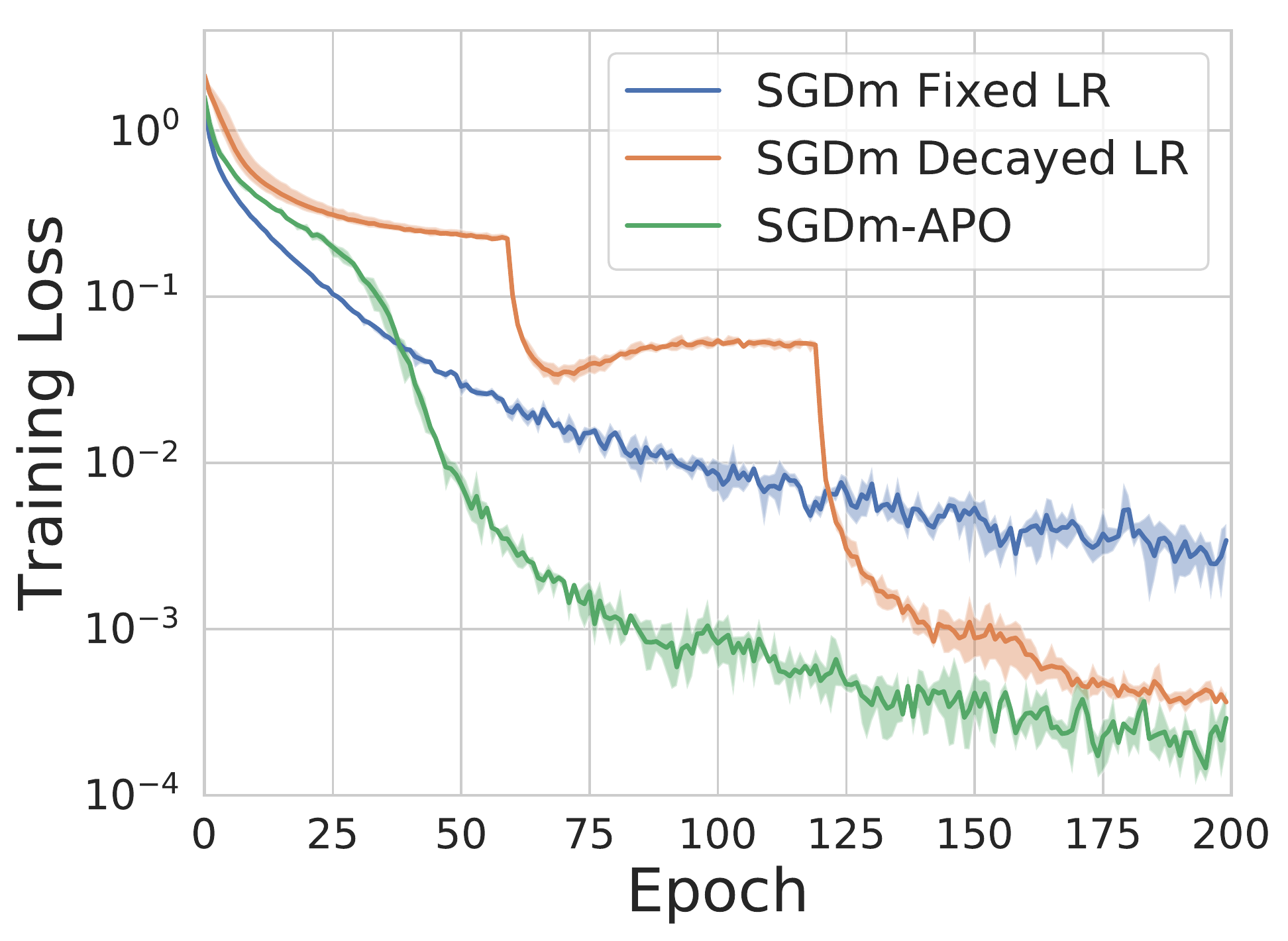}&
\includegraphics[width=0.3\linewidth]{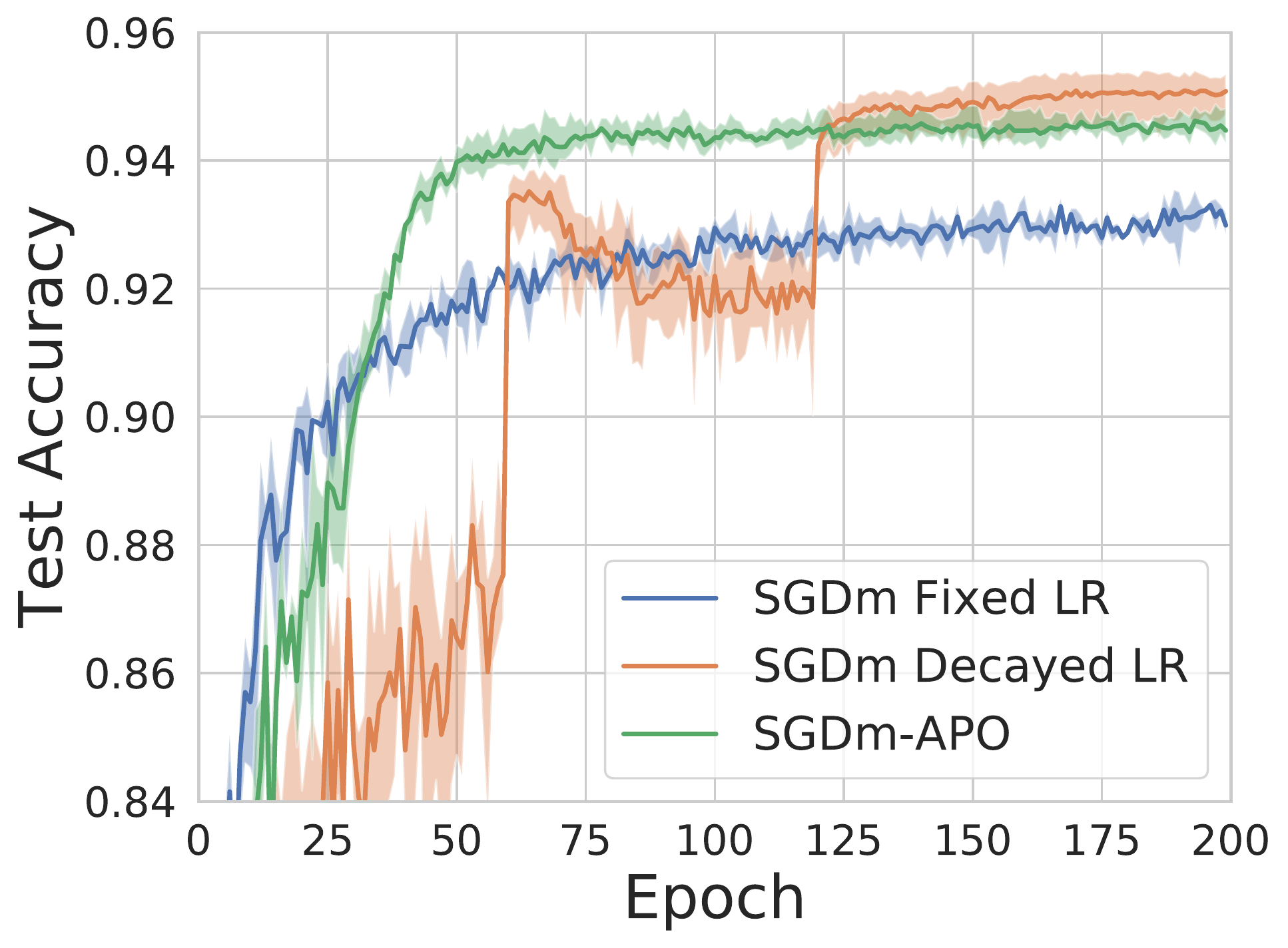}&
\includegraphics[width=0.3\linewidth]{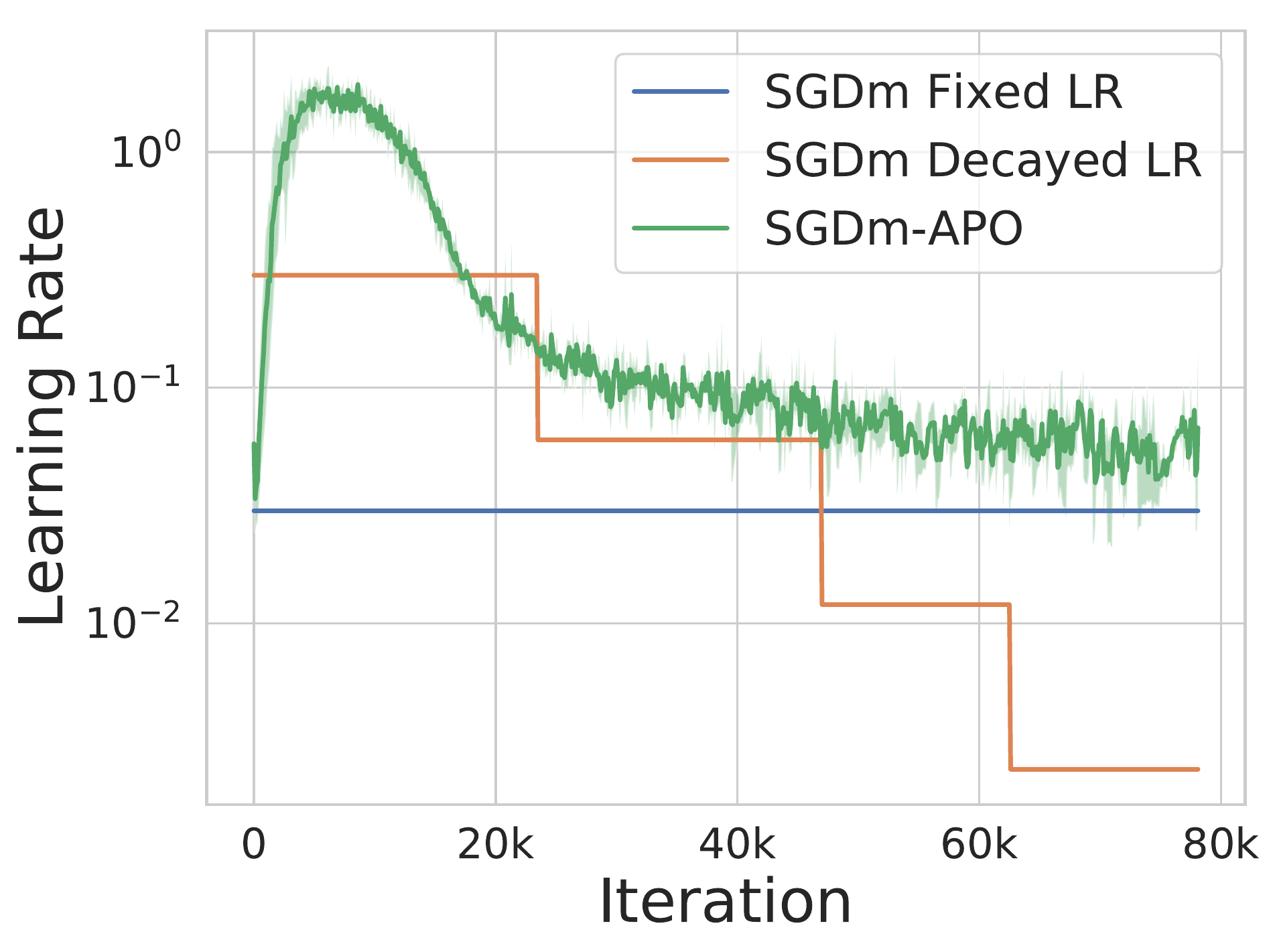}
\end{tabular}
\vspace{-2mm}
\caption{ResNet34 on CIFAR-10, using SGDm. The shaded regions show the min/max values over 4 random restarts.}
\vspace{-4mm}
\label{fig:resnet34-cifar10-sgdm}
\end{figure}

\begin{figure}[H]
\centering
\begin{tabular}{ccc}
\includegraphics[width=0.3\linewidth]{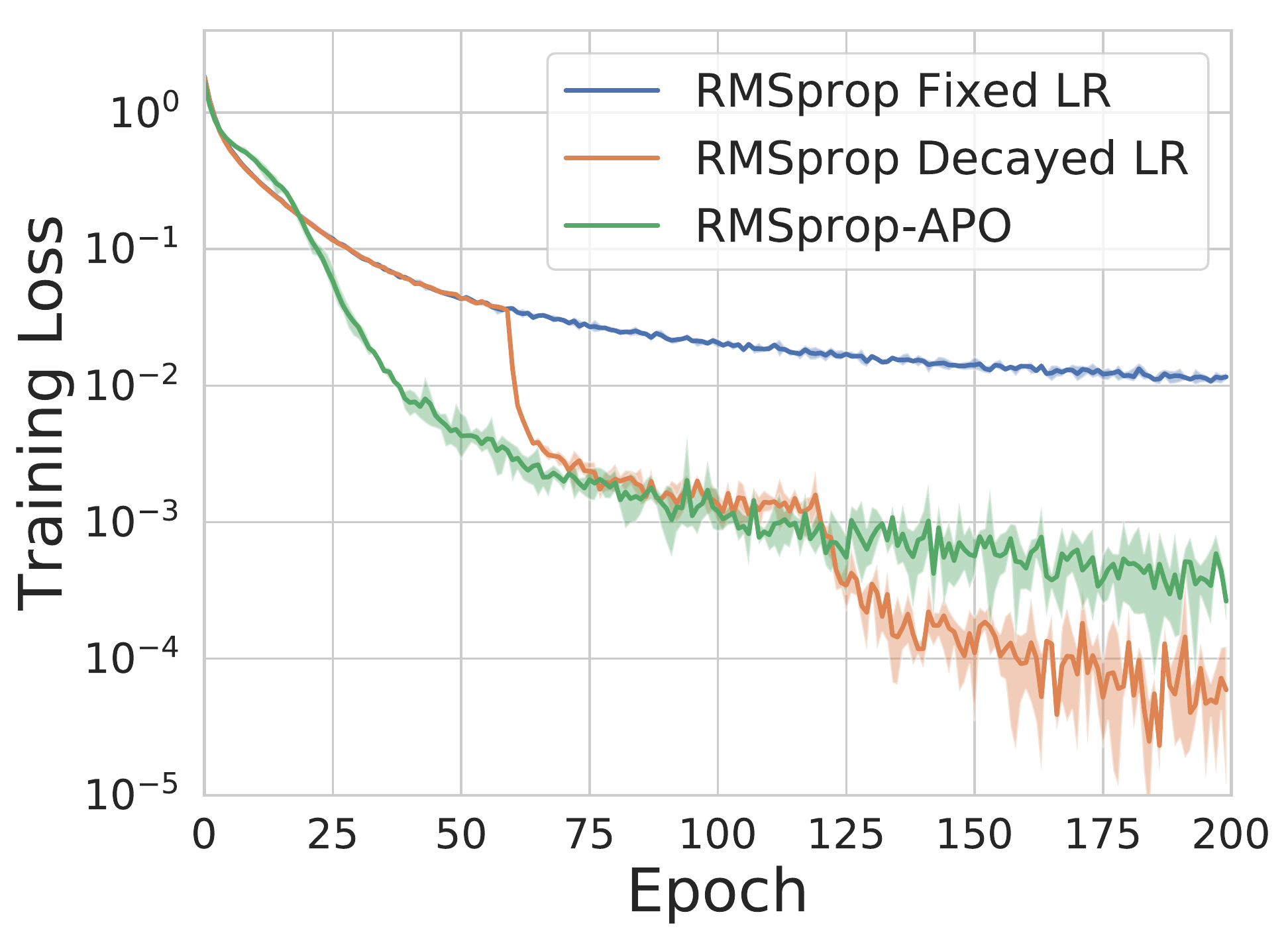}&
\includegraphics[width=0.3\linewidth]{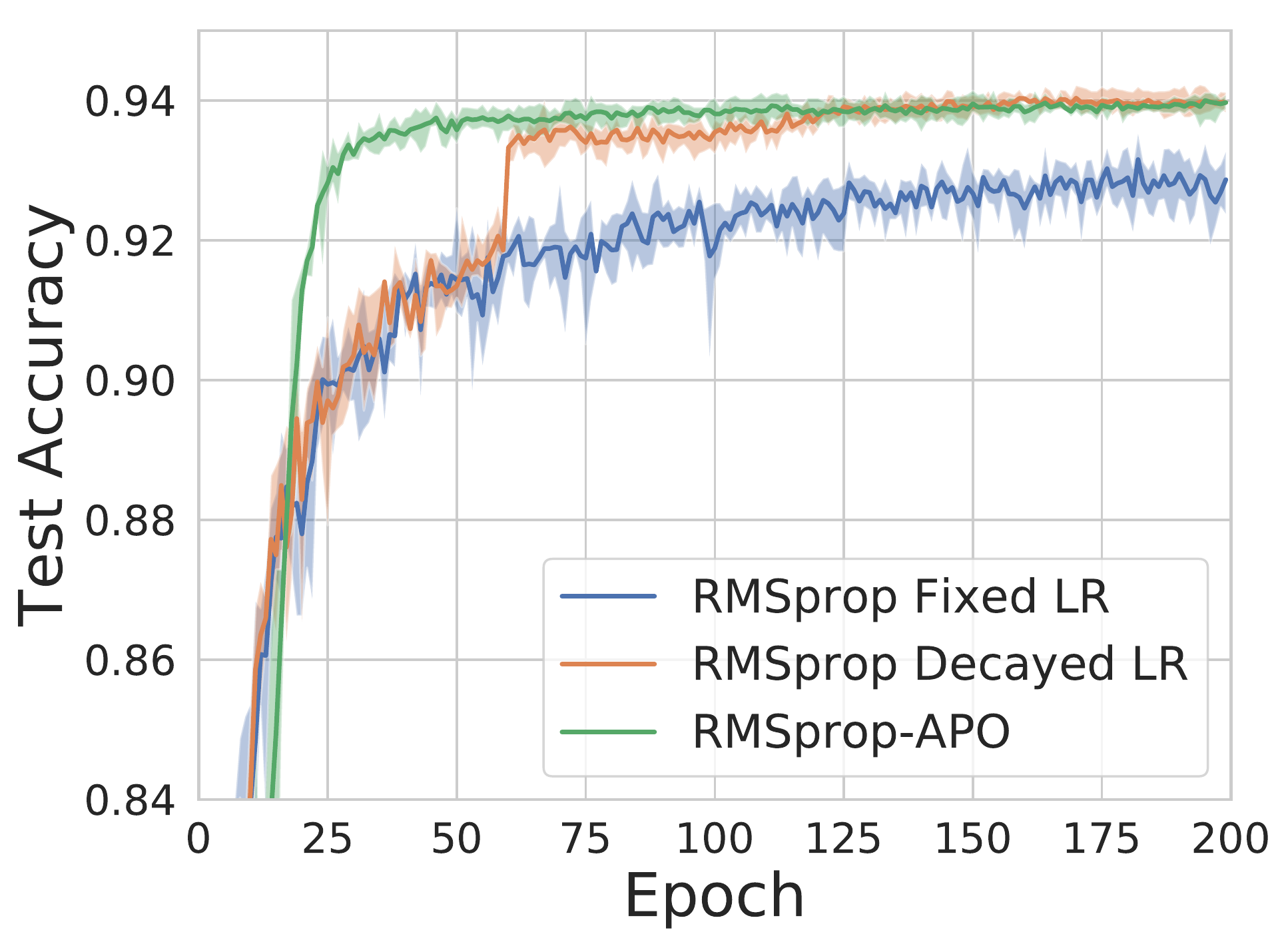}&
\includegraphics[width=0.3\linewidth]{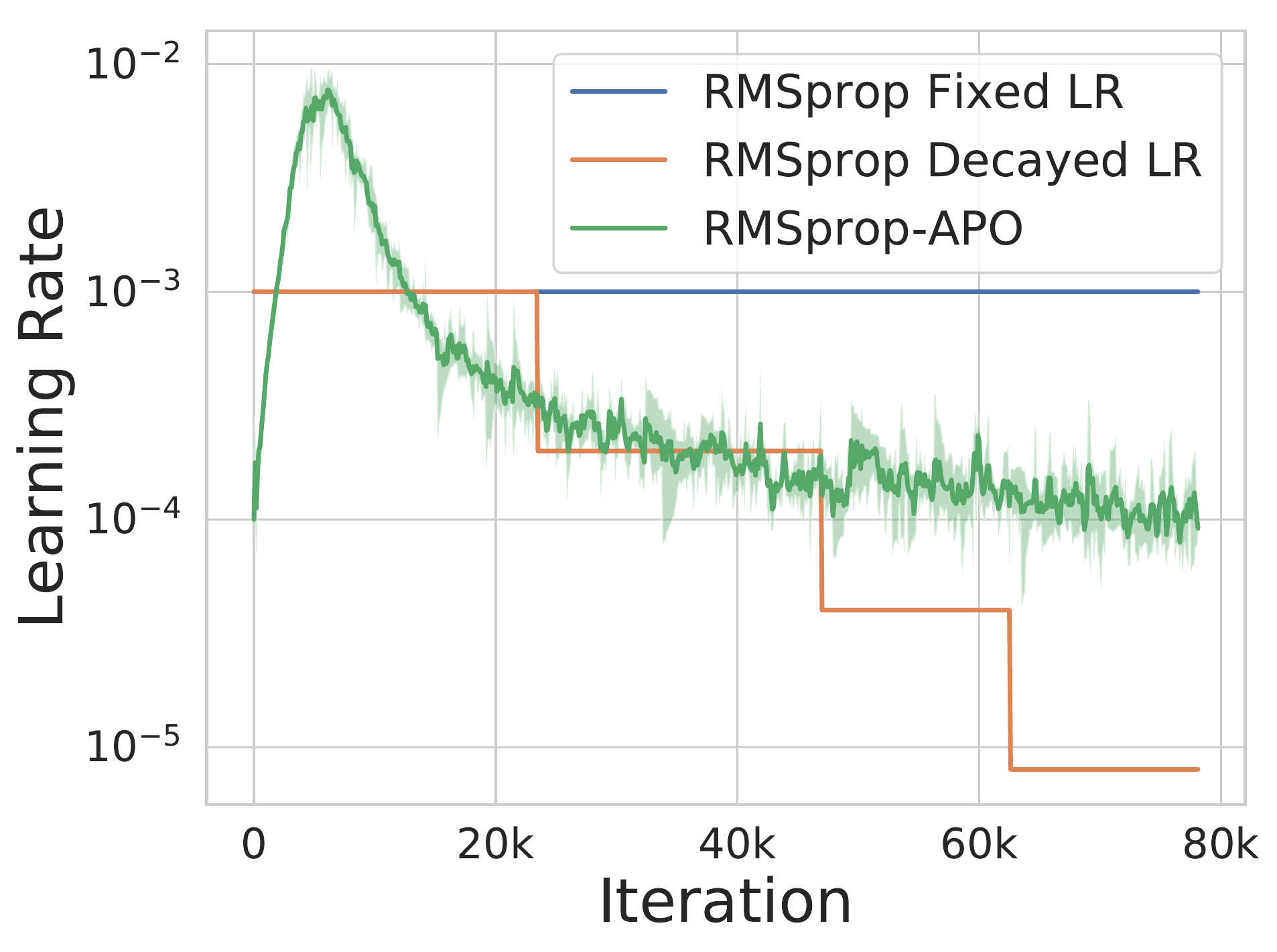}
\end{tabular}
\vspace{-2mm}
\caption{ResNet34 on CIFAR-10, using RMSprop. The shaded regions show the min/max values over 4 random restarts.}
\vspace{-4mm}
\label{fig:resnet34-cifar10-rmsprop}
\end{figure}

\begin{figure}[H]
\centering
\begin{tabular}{ccc}
\includegraphics[width=0.3\linewidth]{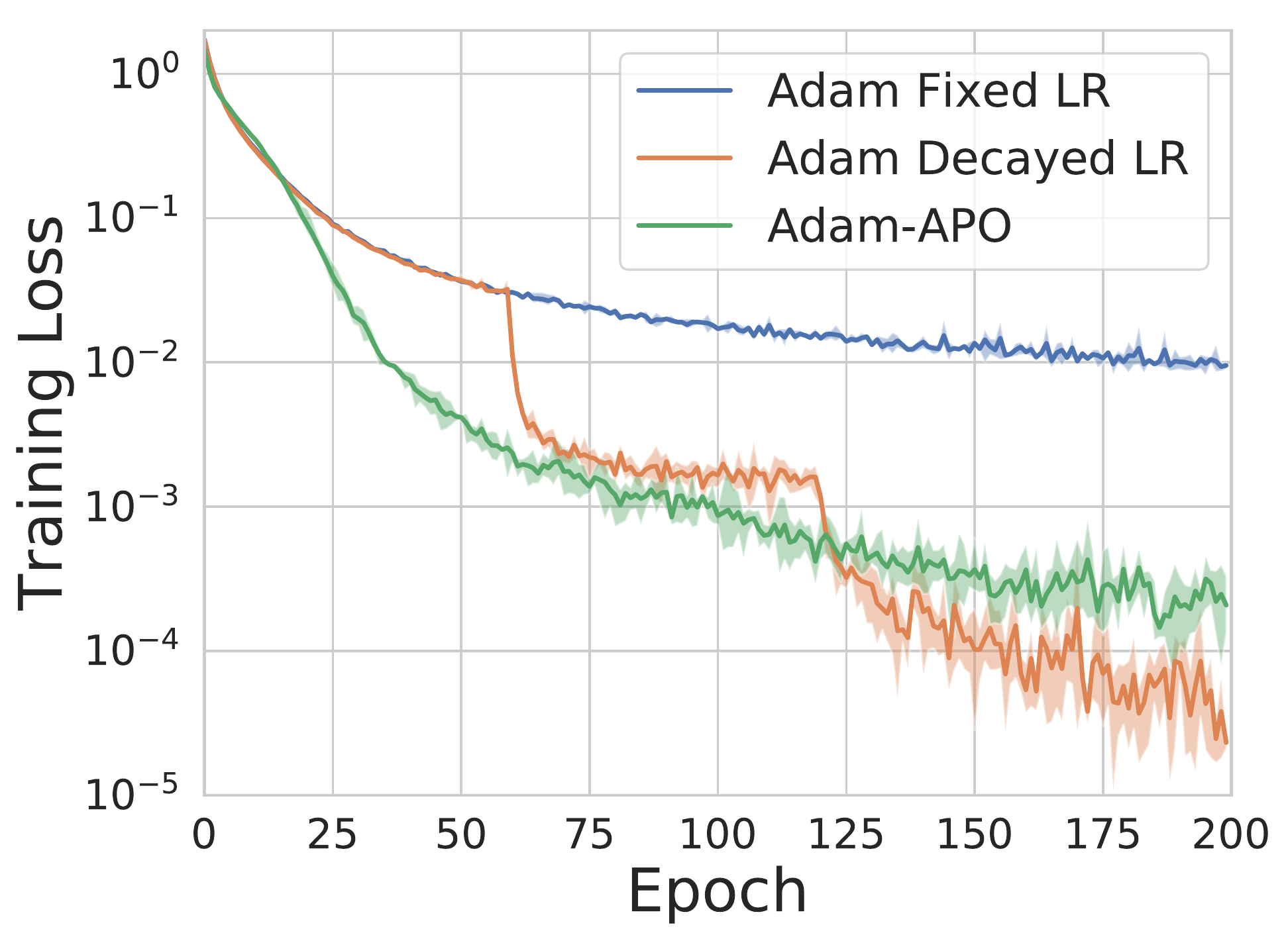}&
\includegraphics[width=0.3\linewidth]{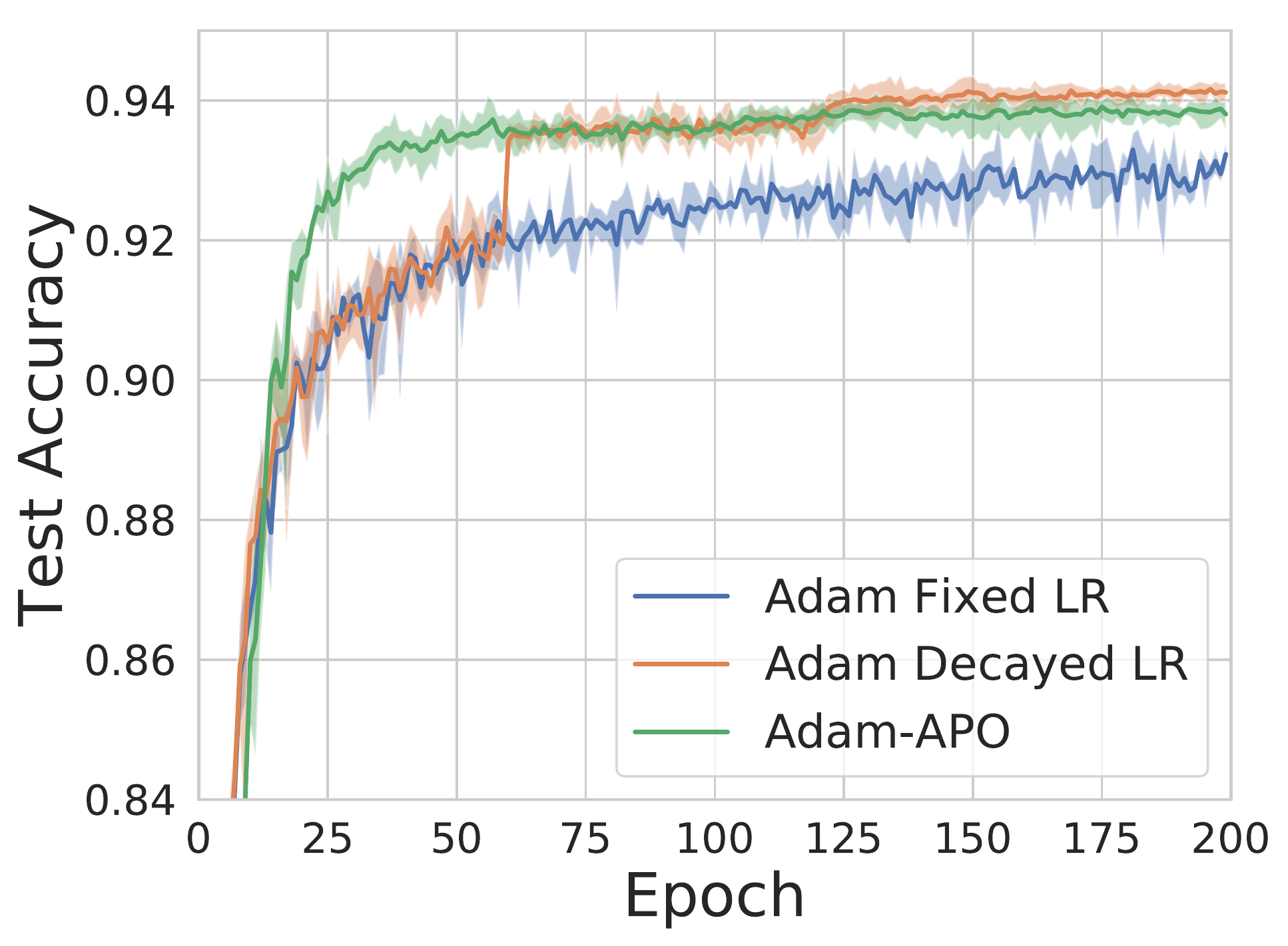}&
\includegraphics[width=0.3\linewidth]{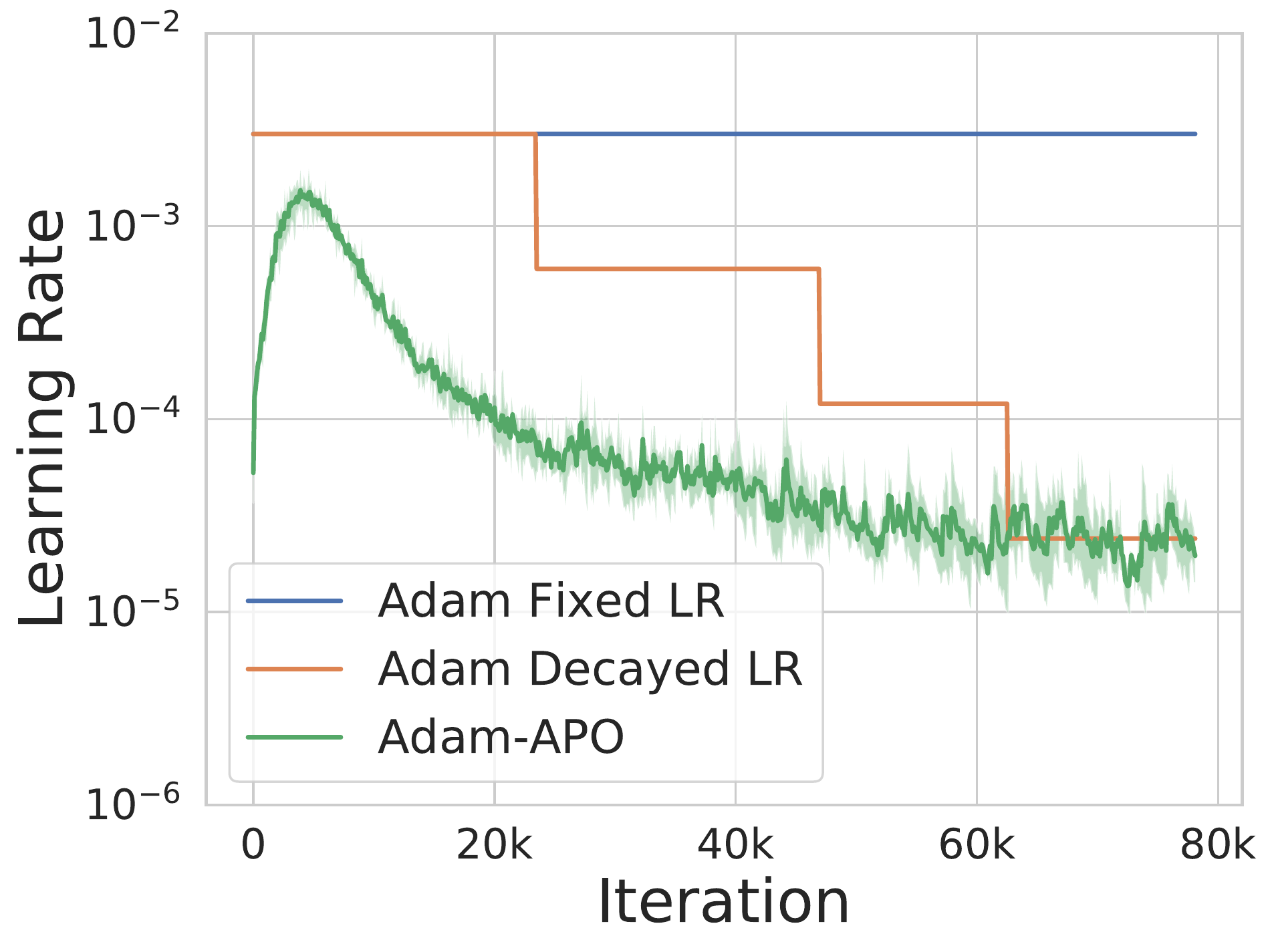}
\end{tabular}
\vspace{-2mm}
\caption{ResNet34 on CIFAR-10, using Adam. The shaded regions show the min/max values over 4 random restarts.}
\vspace{-4mm}
\label{fig:resnet34-cifar10-adam}
\end{figure}


\begin{figure}[H]
\centering
\begin{tabular}{ccc}
\includegraphics[width=0.3\linewidth]{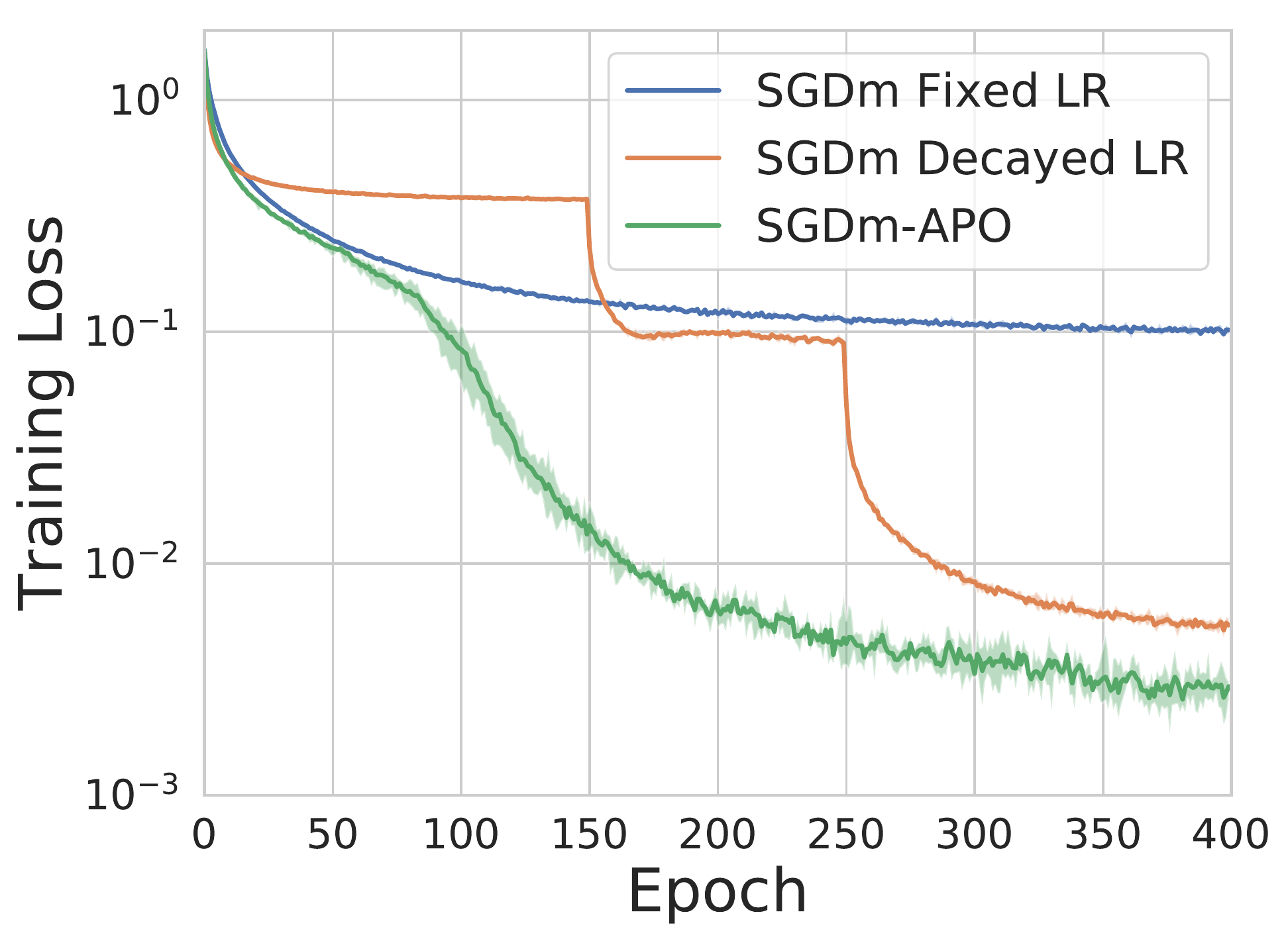}&
\includegraphics[width=0.3\linewidth]{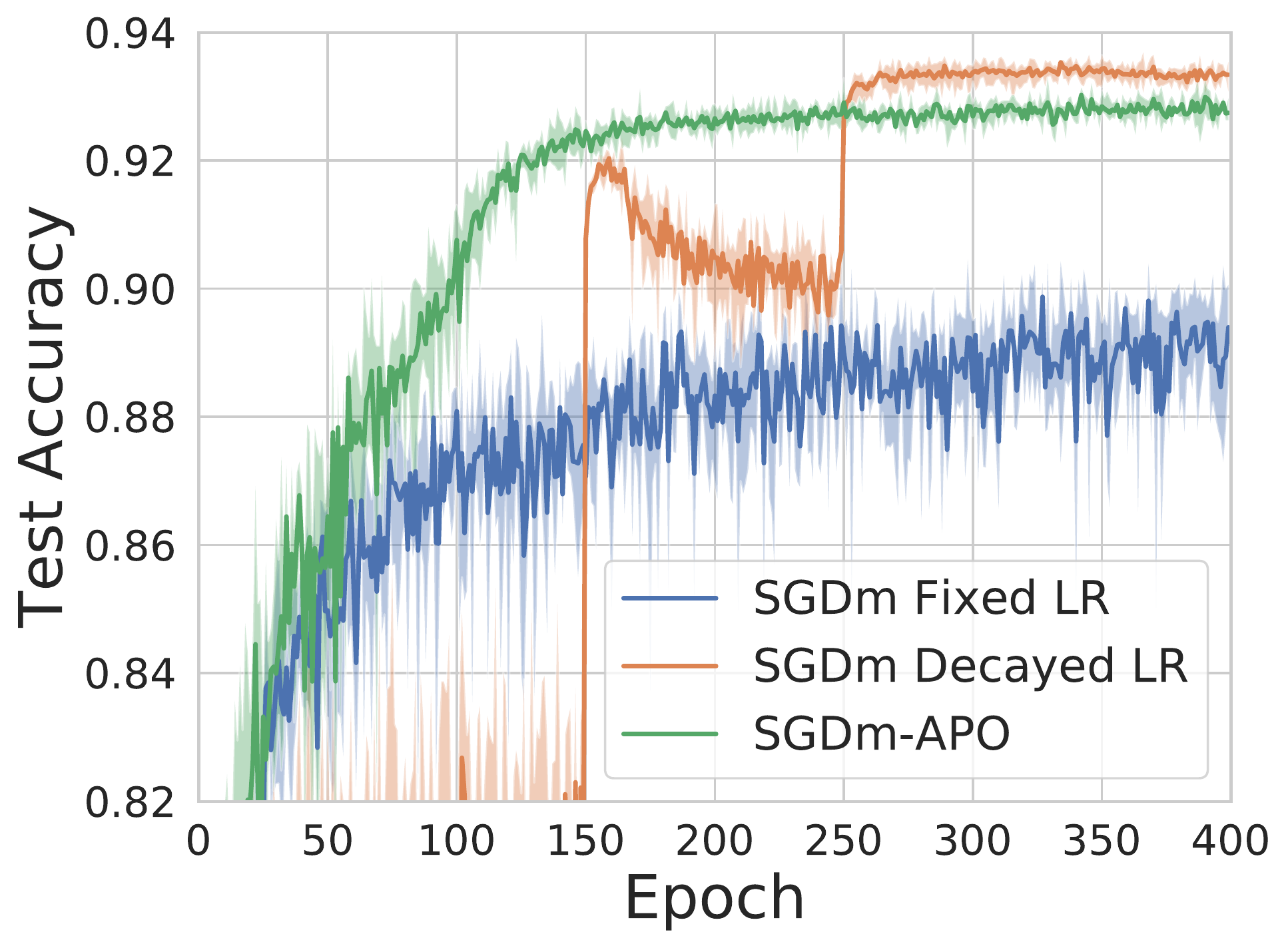}&
\includegraphics[width=0.3\linewidth]{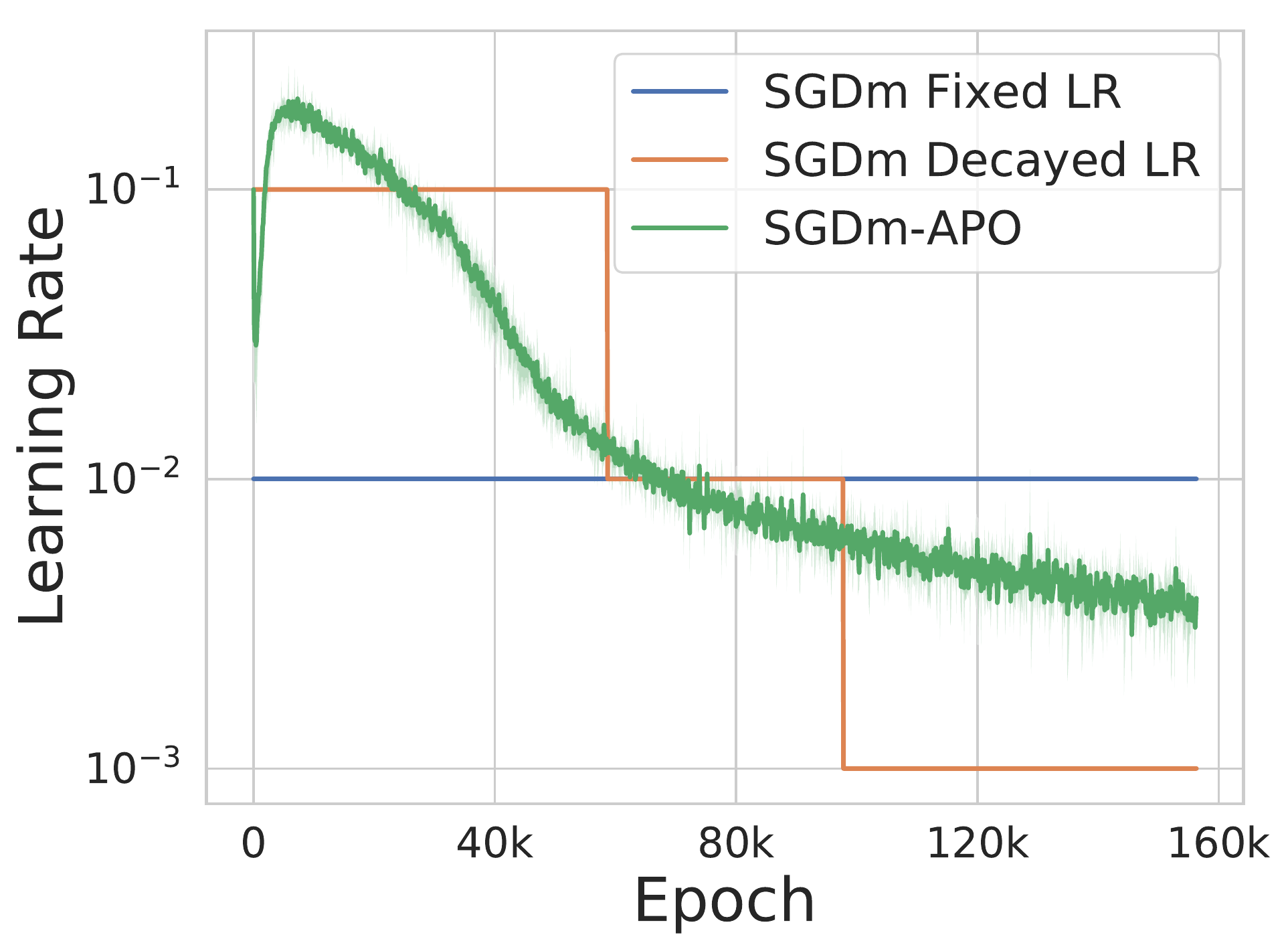}
\end{tabular}
\vspace{-2mm}
\caption{ResNet32 on CIFAR-10, using SGDm. The shaded regions show the min/max values over 4 random restarts.}
\vspace{-4mm}
\label{fig:resnet32-cifar10-sgdm}
\end{figure}

\begin{figure}[H]
\centering
\begin{tabular}{ccc}
\includegraphics[width=0.3\linewidth]{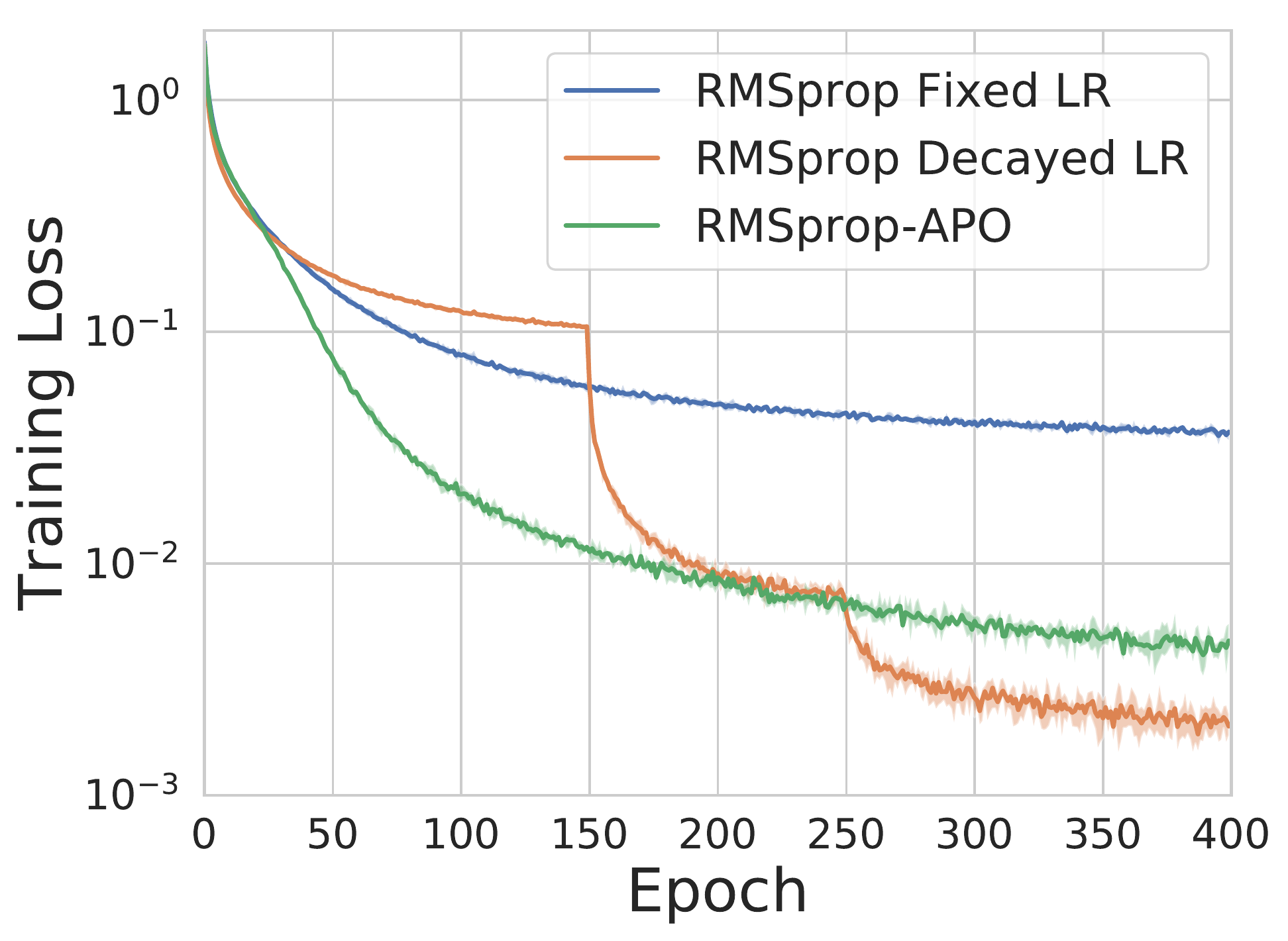}&
\includegraphics[width=0.3\linewidth]{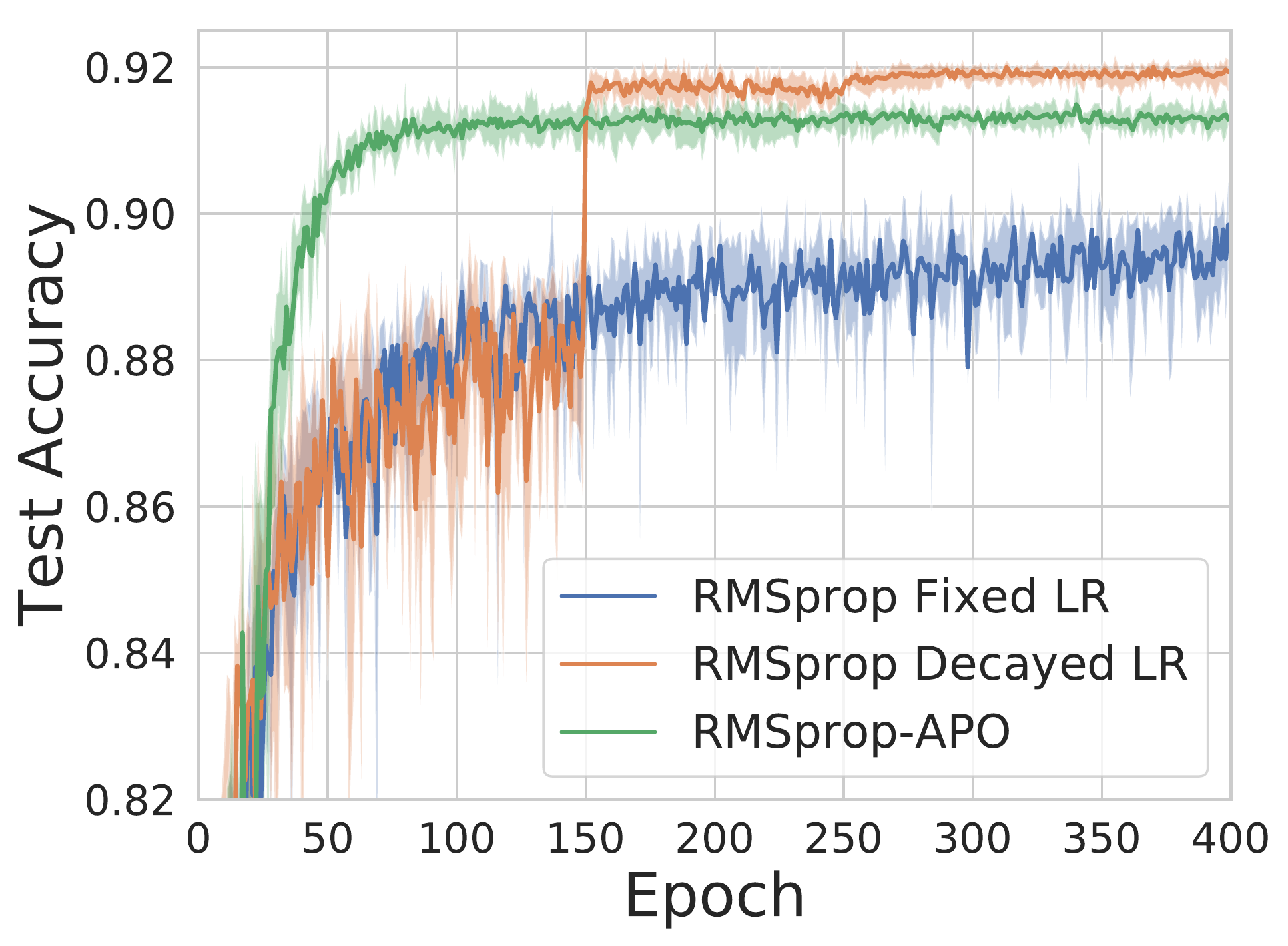}&
\includegraphics[width=0.3\linewidth]{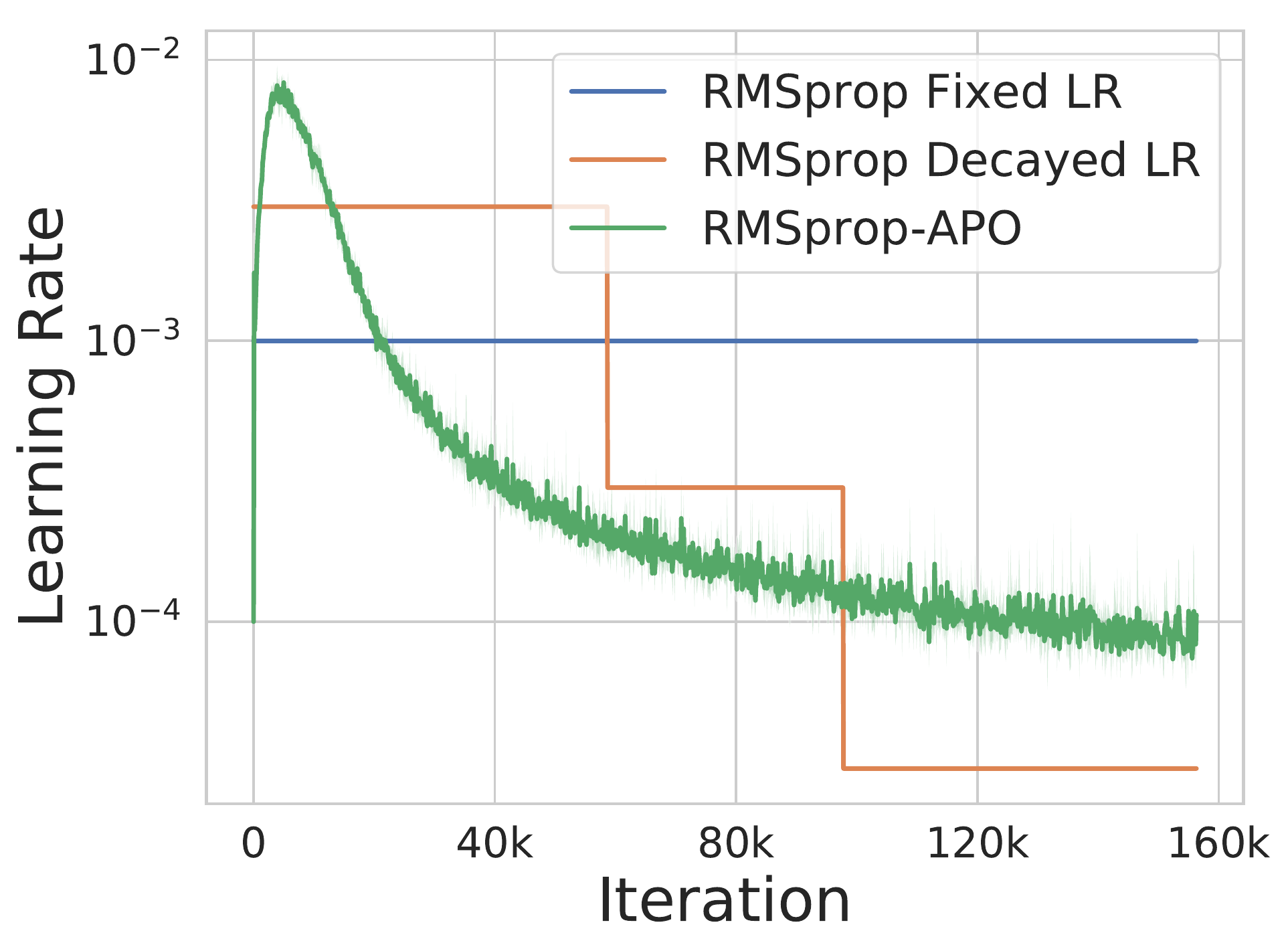}
\end{tabular}
\vspace{-2mm}
\caption{ResNet32 on CIFAR-10, using RMSprop. The shaded regions show the min/max values over 4 random restarts.}
\vspace{-4mm}
\label{fig:resnet32-cifar10-rmsprop}
\end{figure}

\begin{figure}[H]
\centering
\begin{tabular}{ccc}
\includegraphics[width=0.3\linewidth]{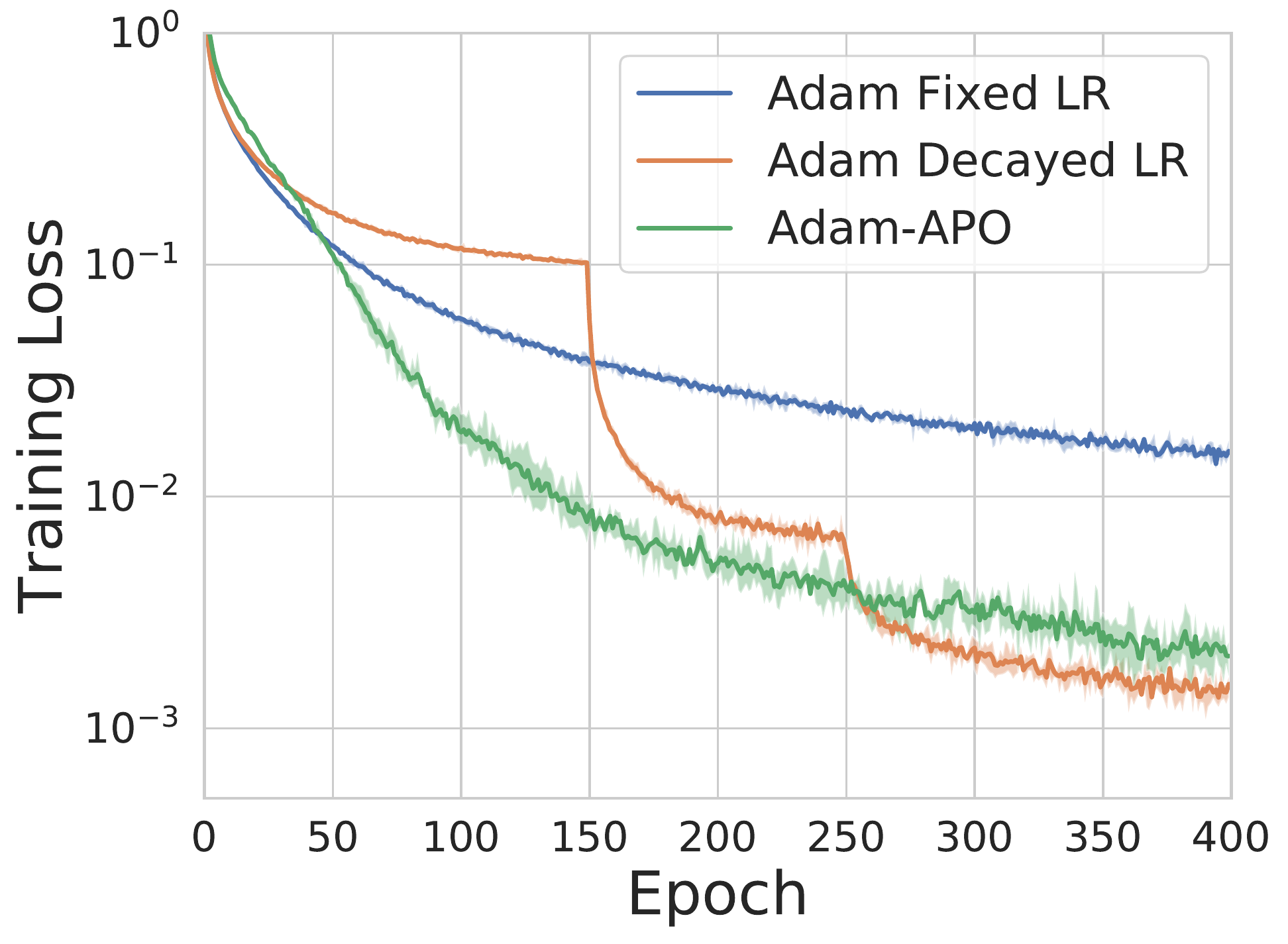}&
\includegraphics[width=0.3\linewidth]{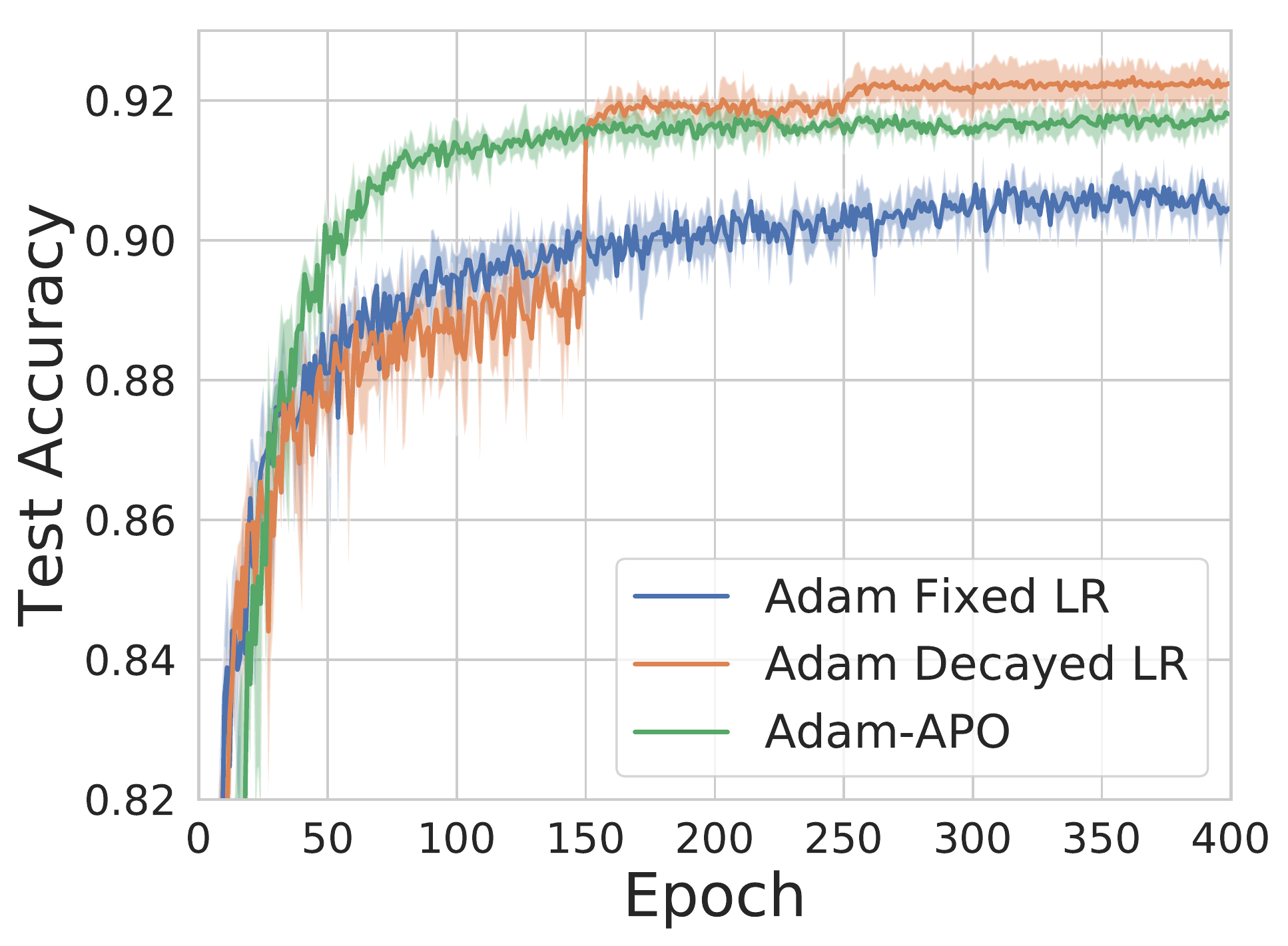}&
\includegraphics[width=0.3\linewidth]{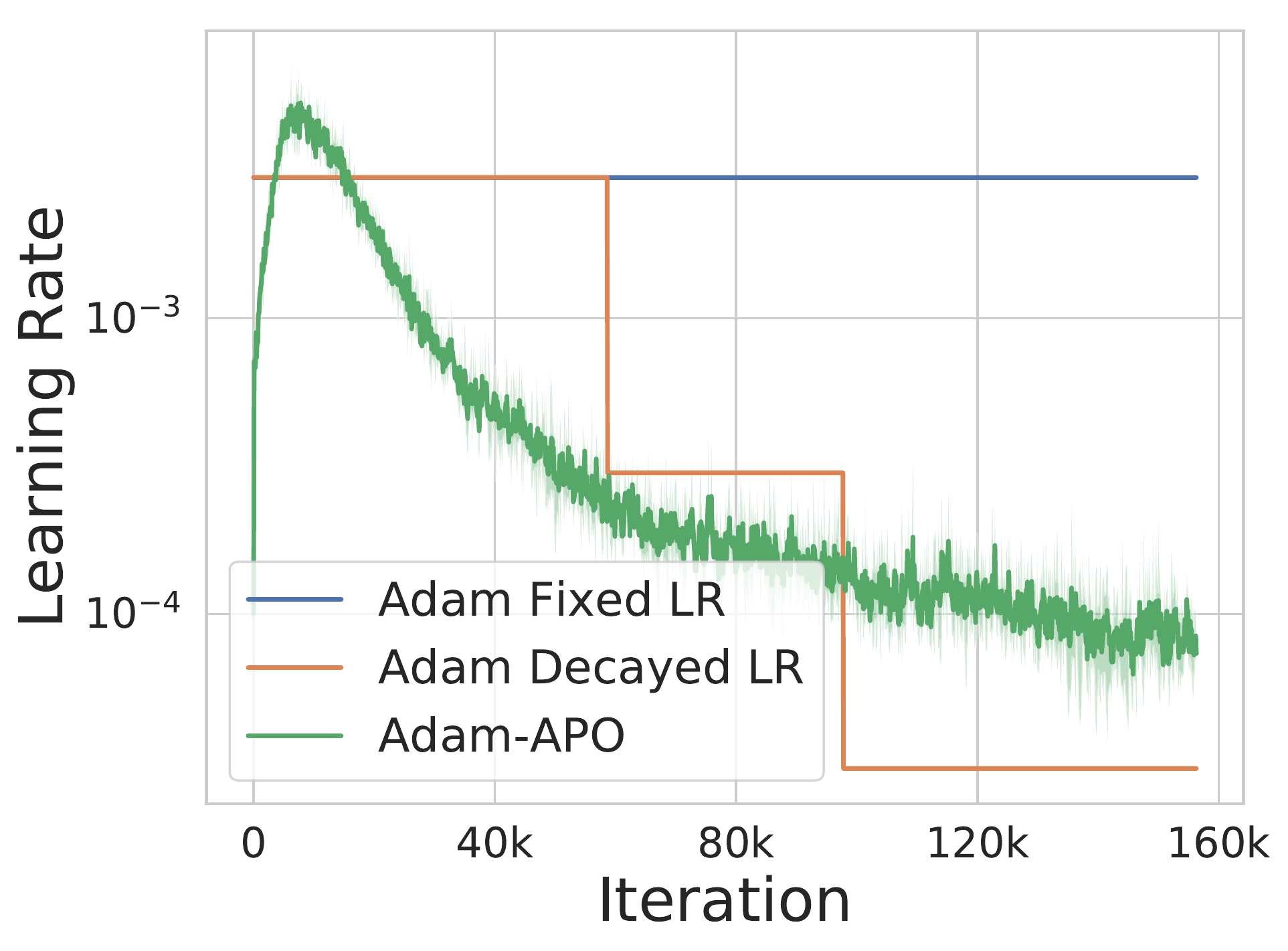}
\end{tabular}
\vspace{-2mm}
\caption{ResNet32 on CIFAR-10, using Adam. The shaded regions show the min/max values over 4 random restarts.}
\vspace{-4mm}
\label{fig:resnet32-cifar10-adam}
\end{figure}


\begin{figure}[H]
\centering
\begin{tabular}{ccc}
\includegraphics[width=0.3\linewidth]{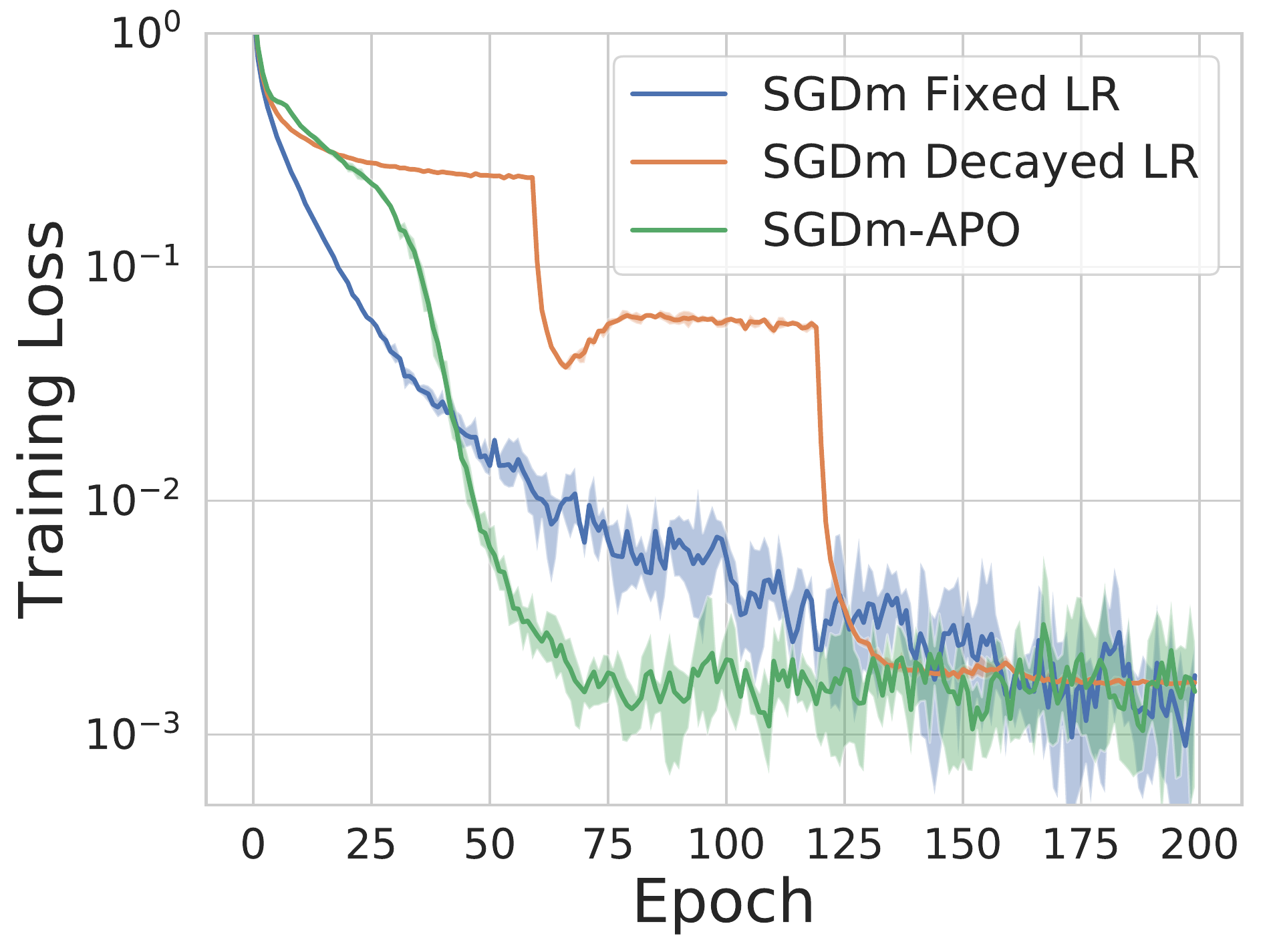}&
\includegraphics[width=0.3\linewidth]{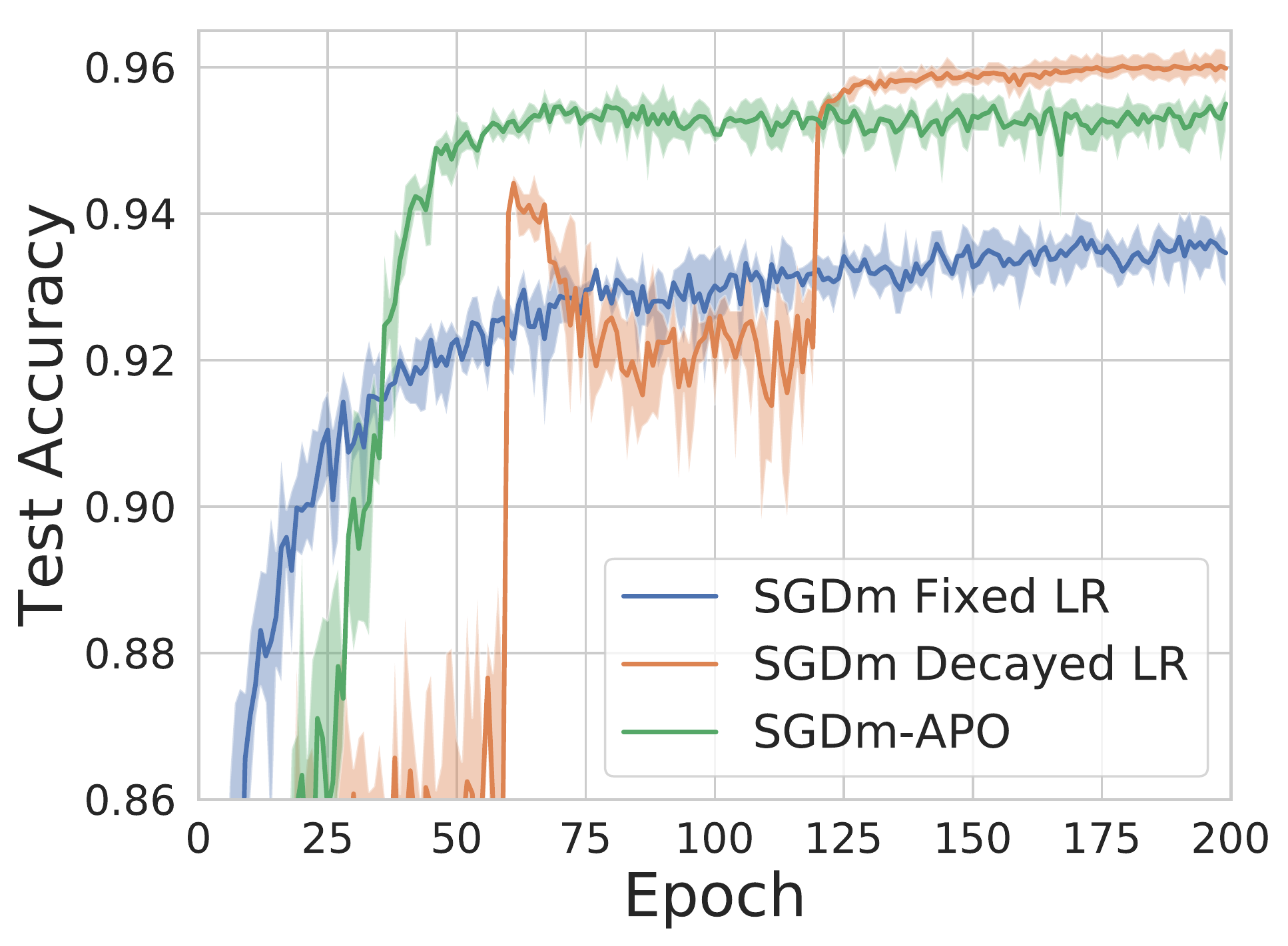}&
\includegraphics[width=0.3\linewidth]{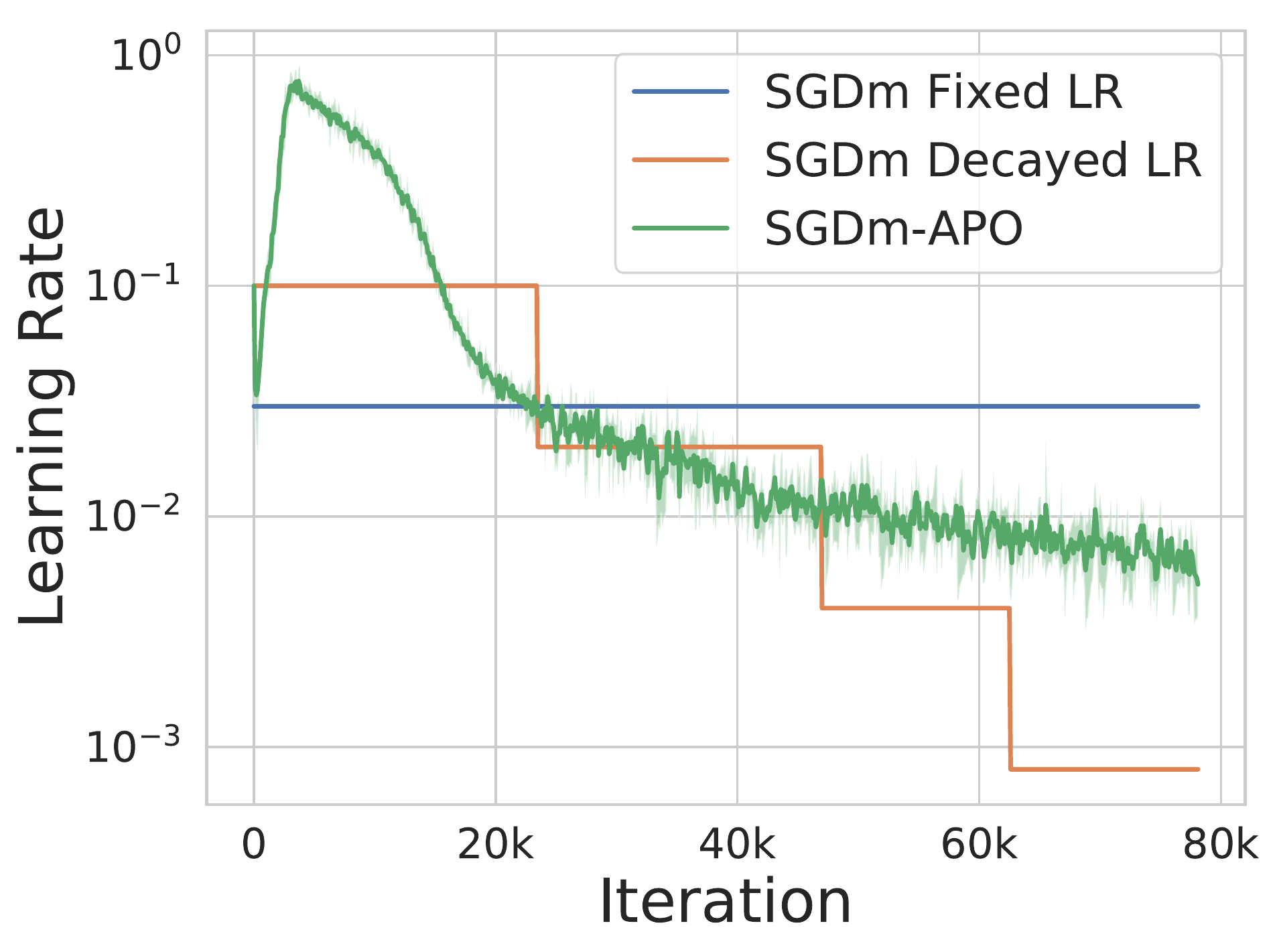}
\end{tabular}
\vspace{-2mm}
\caption{WideResNet 28-10 on CIFAR-10, using SGDm. The shaded regions show the min/max values over 4 random restarts.}
\vspace{-4mm}
\label{fig:wrn-cifar10-plain-sgdm}
\end{figure}

\begin{figure}[H]
\centering
\begin{tabular}{ccc}
\includegraphics[width=0.3\linewidth]{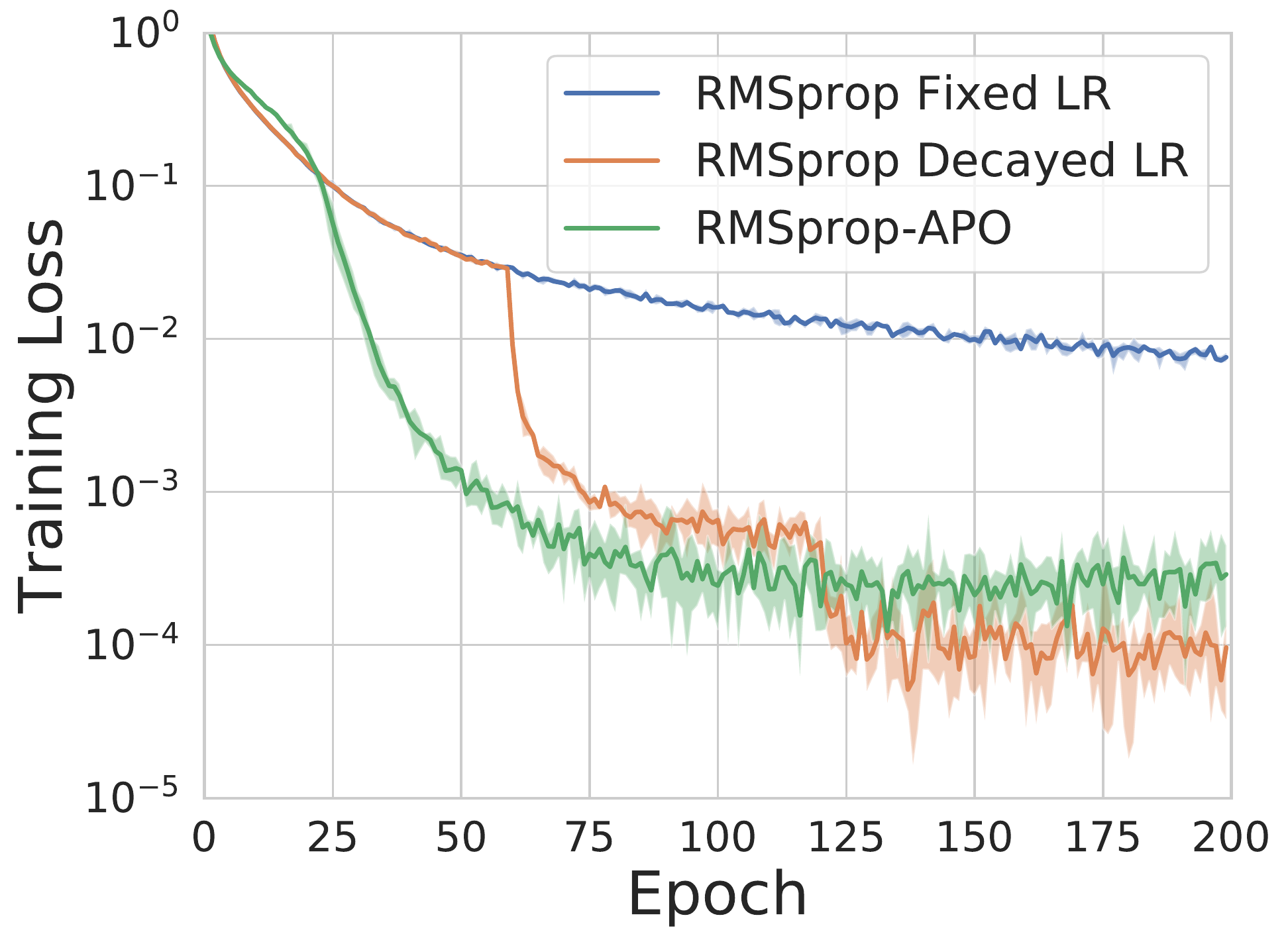}&
\includegraphics[width=0.3\linewidth]{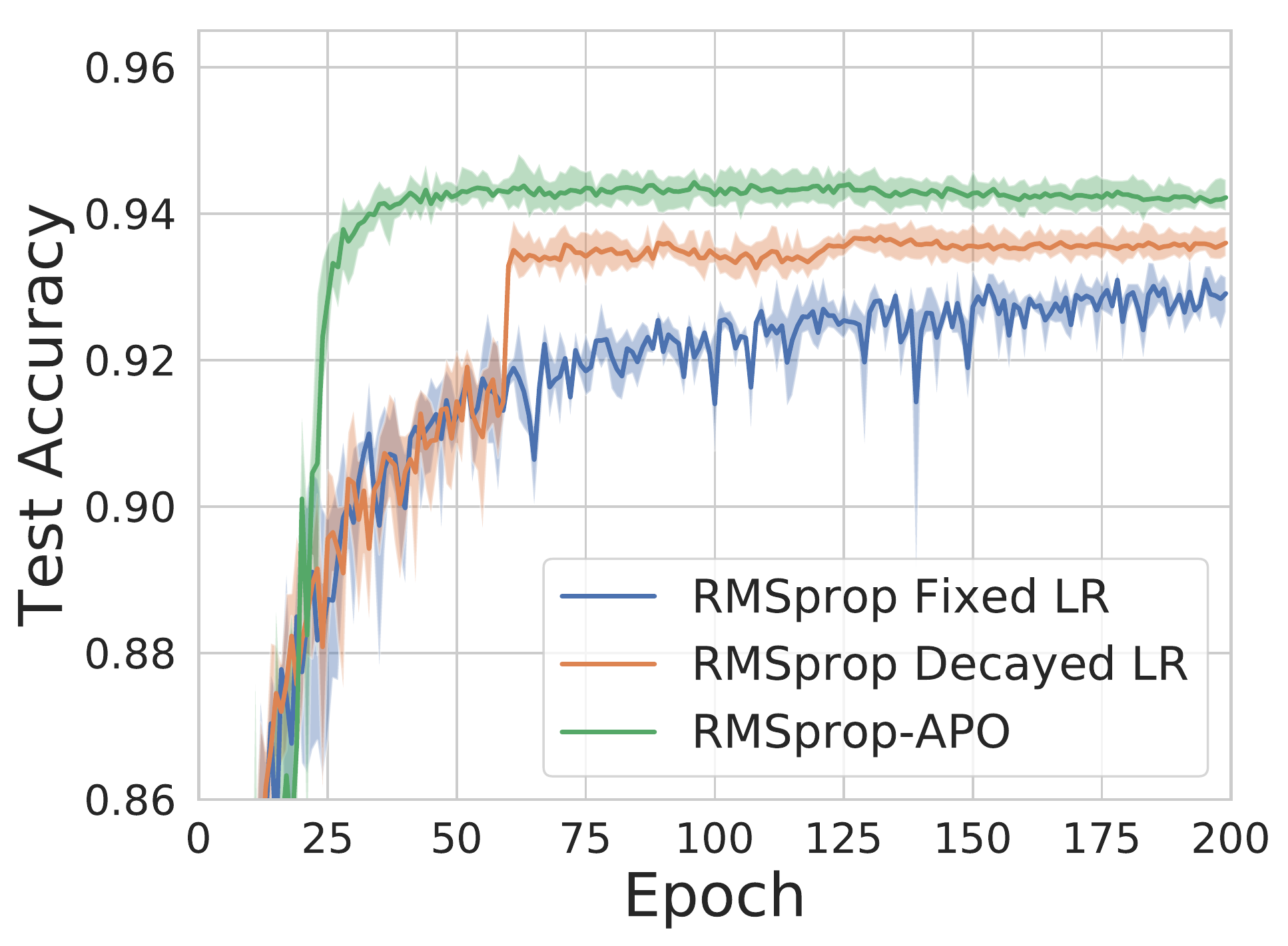}&
\includegraphics[width=0.3\linewidth]{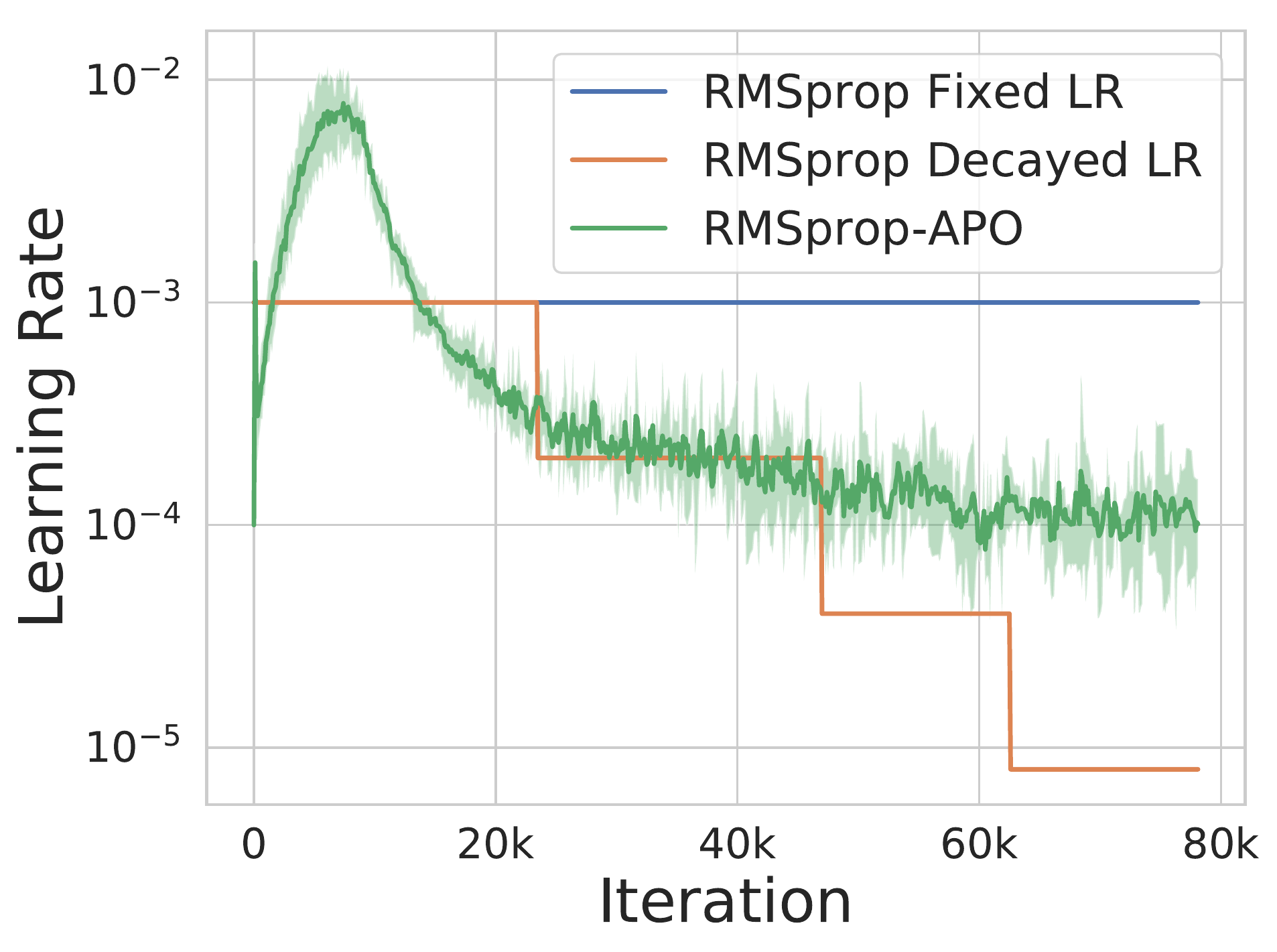}
\end{tabular}
\vspace{-2mm}
\caption{WideResNet 28-10 on CIFAR-10, using RMSprop. The shaded regions show the min/max values over 4 random restarts.}
\vspace{-4mm}
\label{fig:wrn-cifar10-rmsprop}
\end{figure}

\begin{figure}[H]
\centering
\begin{tabular}{ccc}
\includegraphics[width=0.3\linewidth]{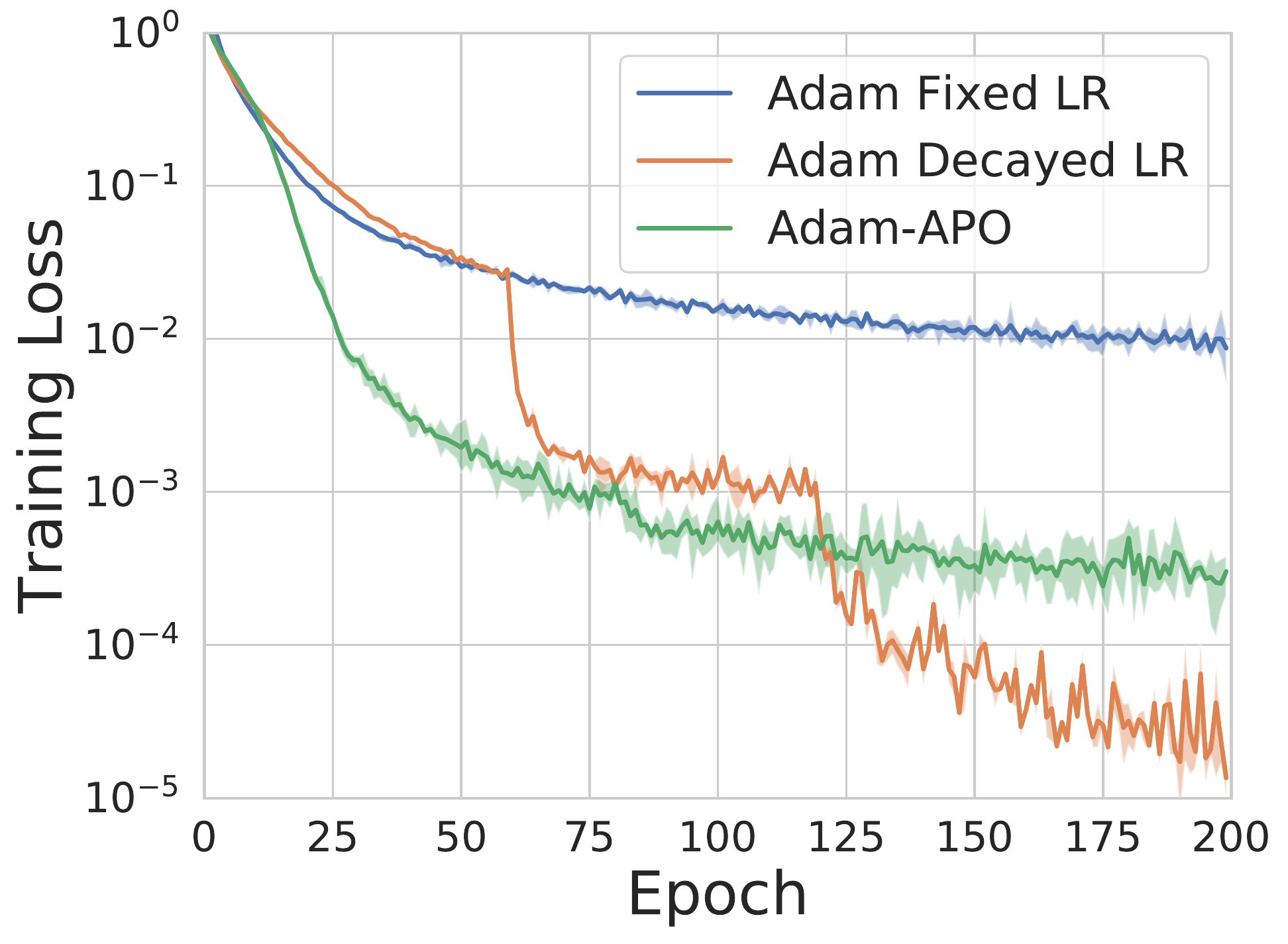}&
\includegraphics[width=0.3\linewidth]{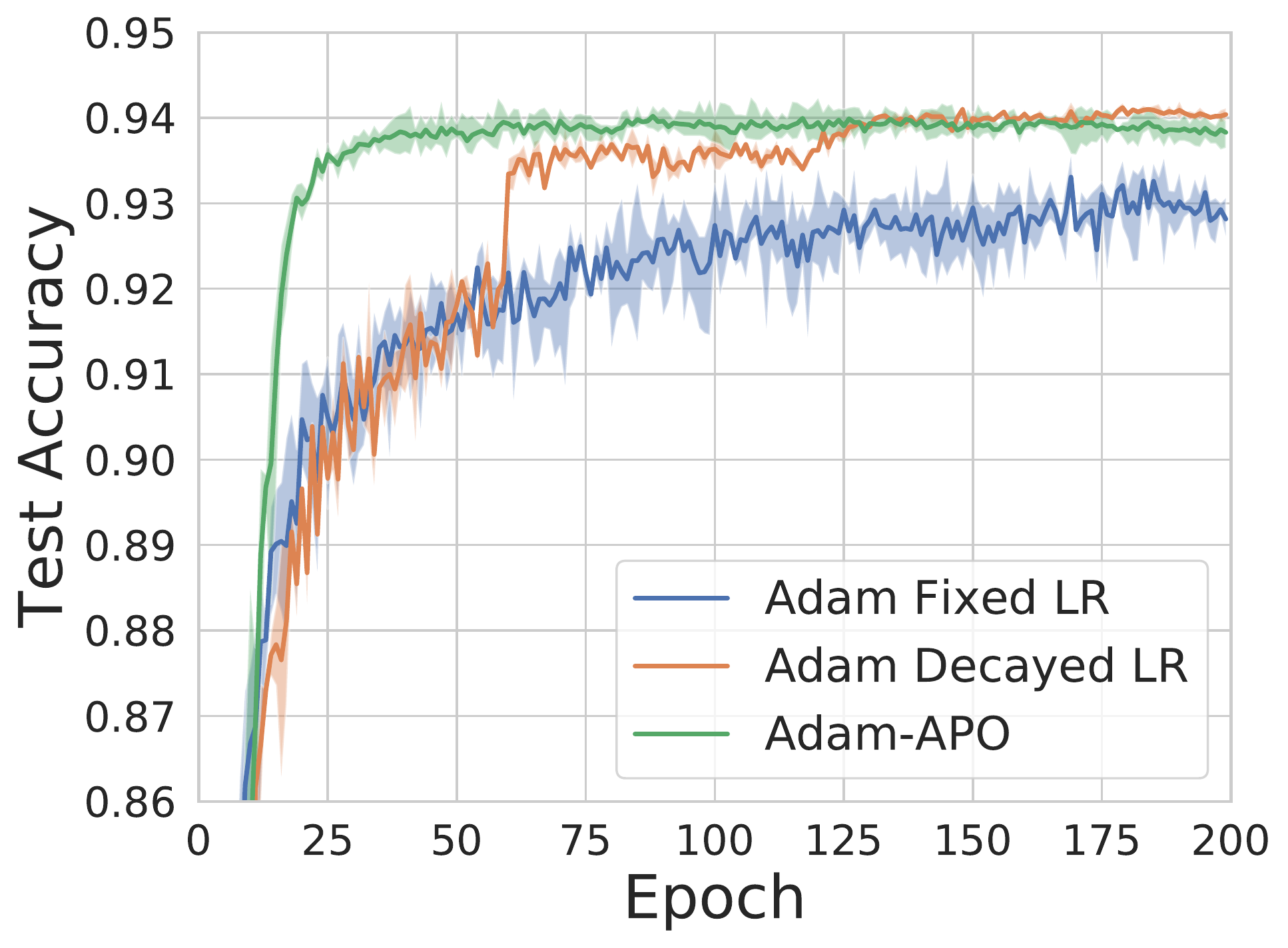}&
\includegraphics[width=0.3\linewidth]{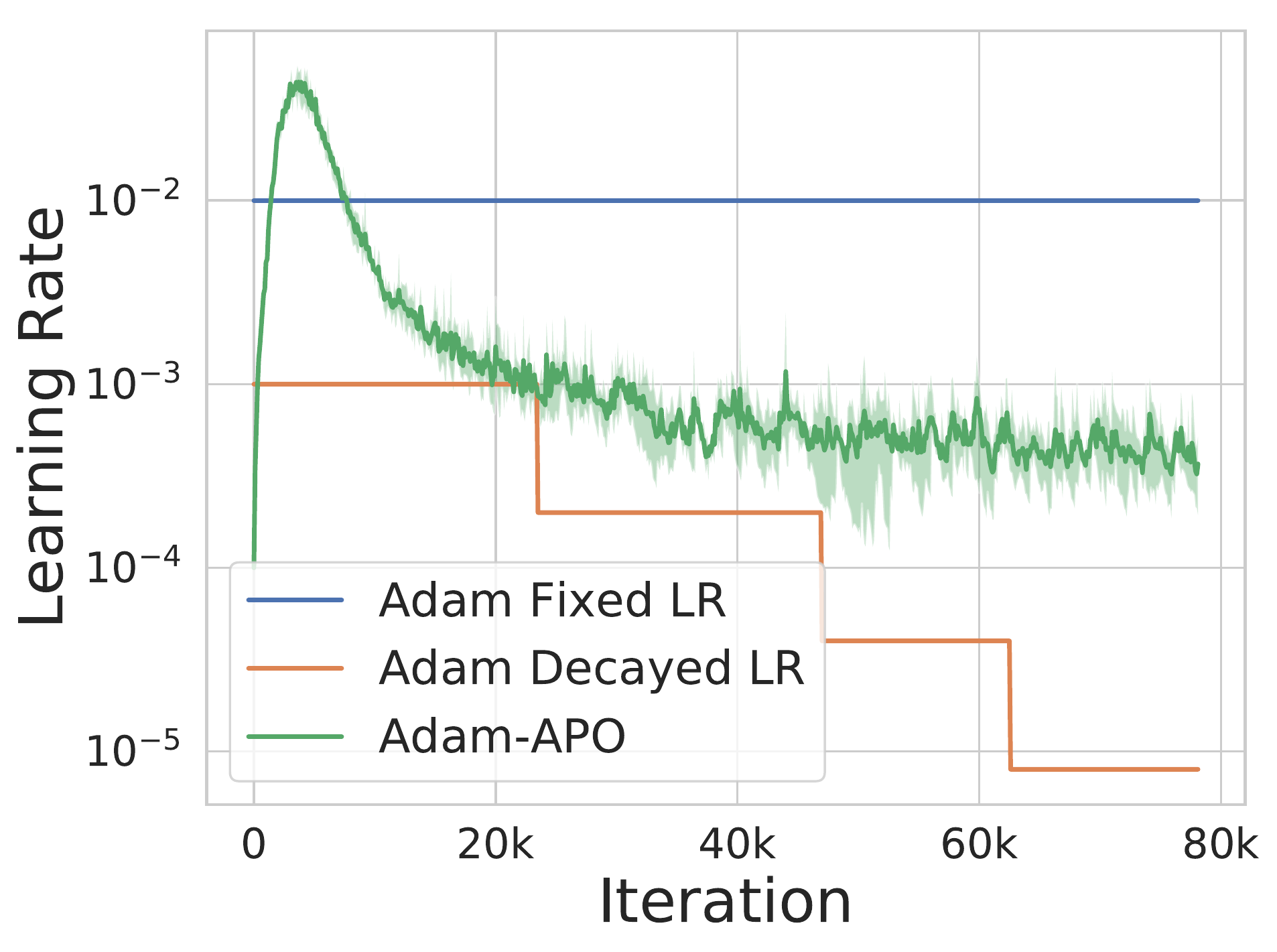}
\end{tabular}
\vspace{-2mm}
\caption{WideResNet 28-10 on CIFAR-10, using Adam. The shaded regions show the min/max values over 4 random restarts.}
\vspace{-4mm}
\label{fig:wrn-cifar10-adam}
\end{figure}

\subsection{CIFAR-100}\label{app:cifar100}

\paragraph{APO Learning Rate Schedules.}

We show training curve plots for different base-optimizers below.
We find that APO usually achieves better performance than fixed LR, and is comparable with the manual schedule.

\begin{figure}[H]
\centering
\begin{tabular}{ccc}
\includegraphics[width=0.3\linewidth]{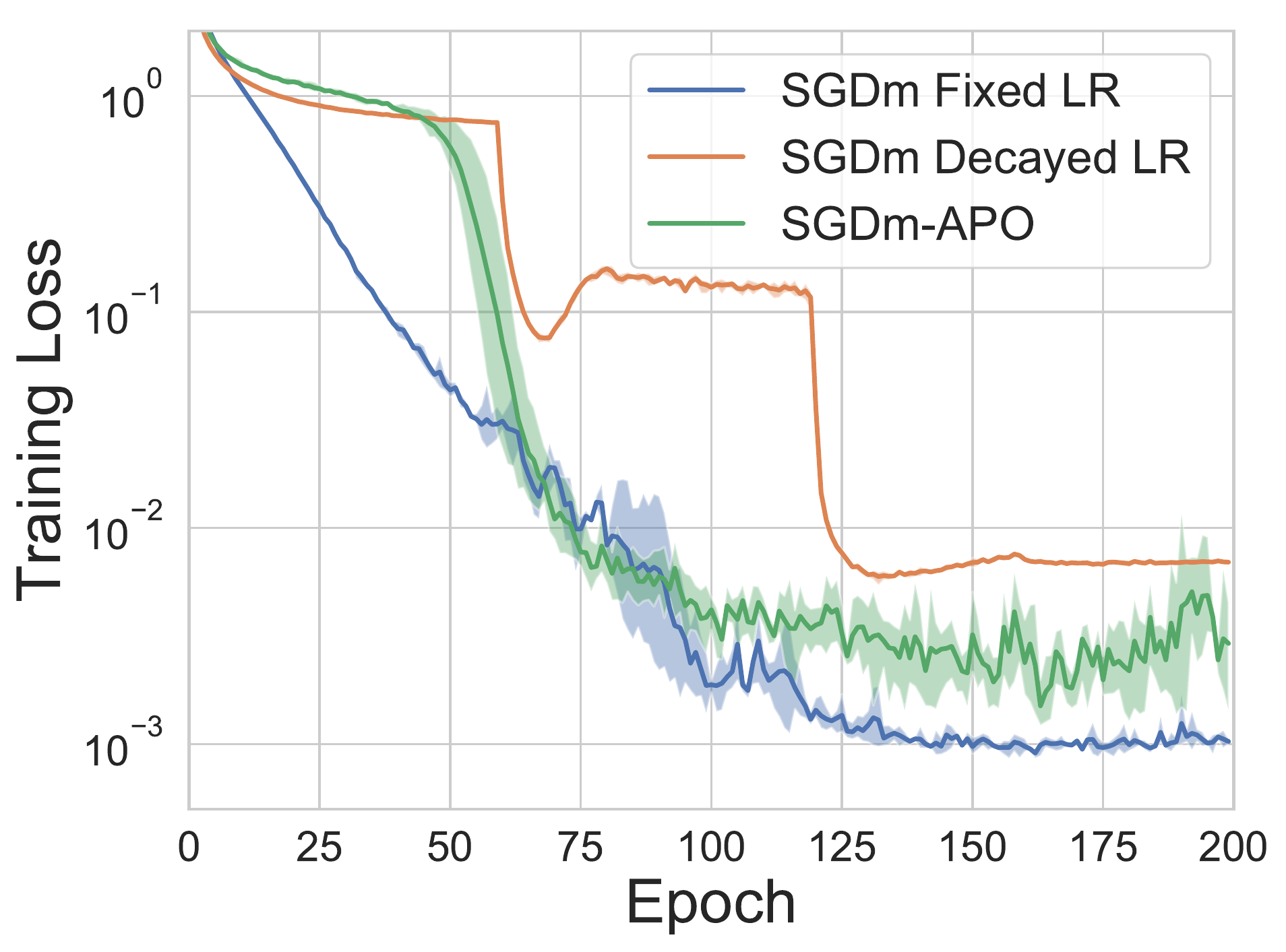}&
\includegraphics[width=0.3\linewidth]{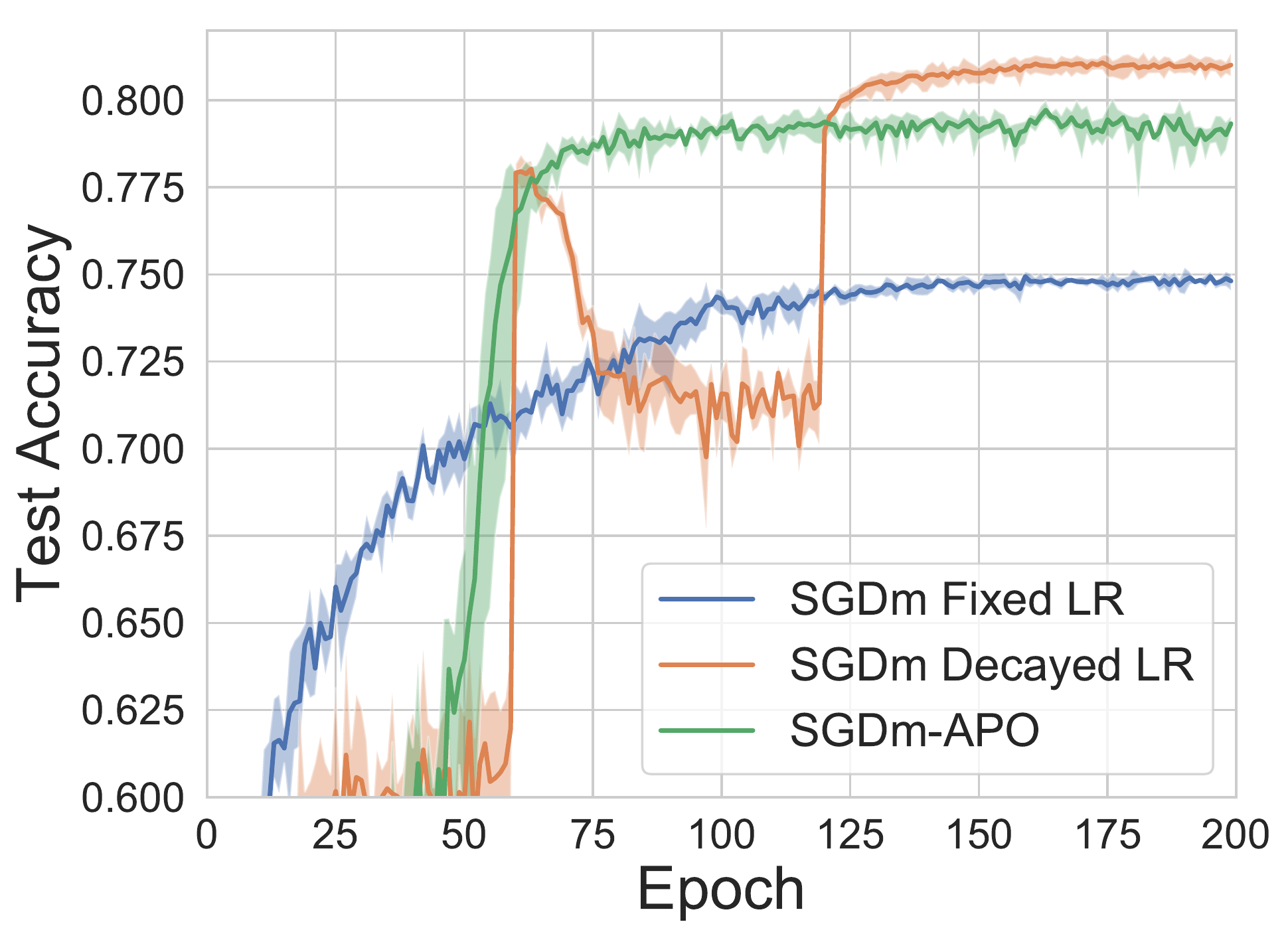}&
\includegraphics[width=0.3\linewidth]{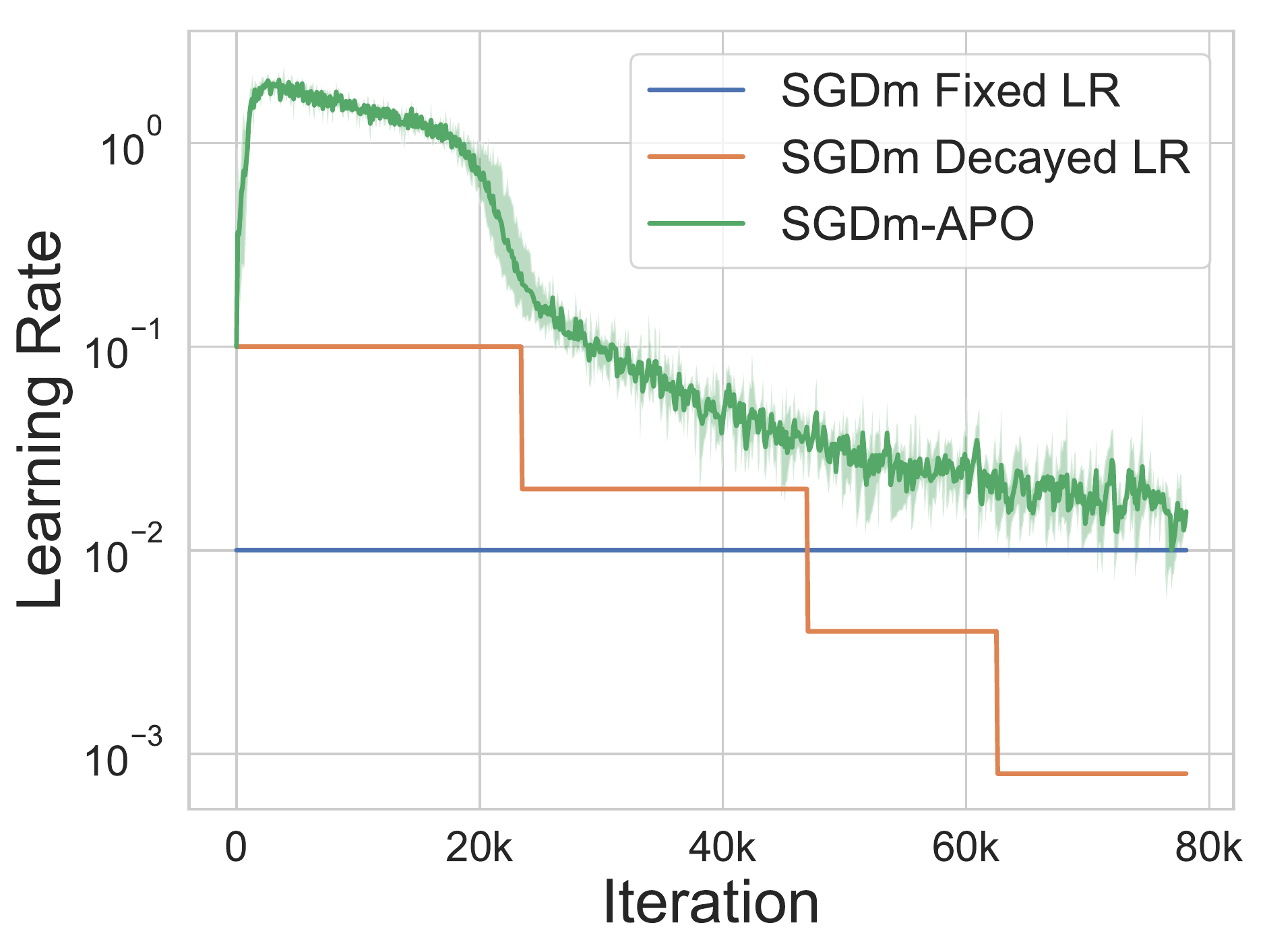}
\end{tabular}
\vspace{-2mm}
\caption{WideResNet on CIFAR-100, using SGDm. The shaded regions show the min/max values over 4 random restarts.}
\vspace{-4mm}
\label{fig:wideresnet-cifar100-sgdm}
\end{figure}


\begin{figure}[H]
\centering
\begin{tabular}{ccc}
\includegraphics[width=0.3\linewidth]{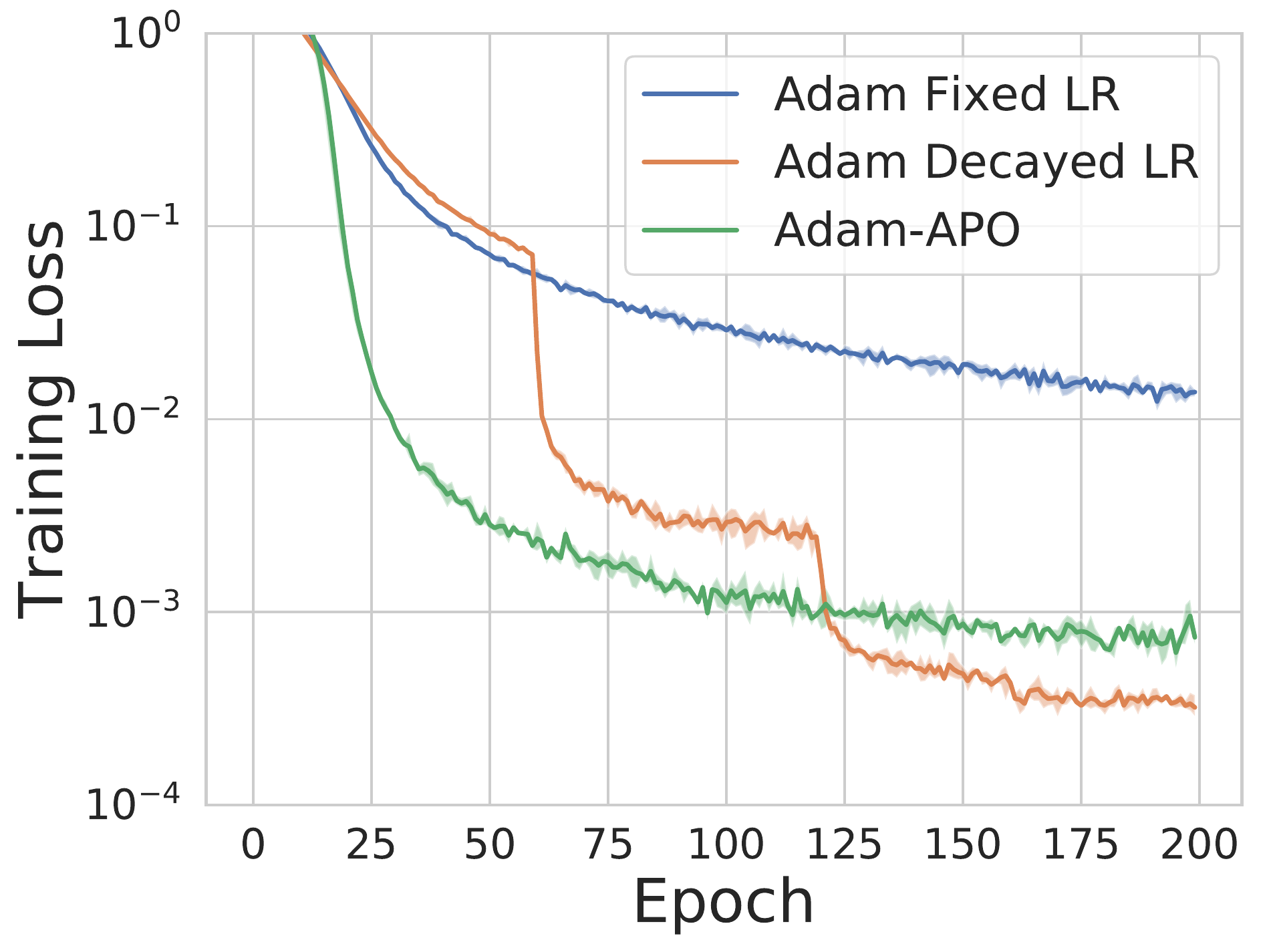}&
\includegraphics[width=0.3\linewidth]{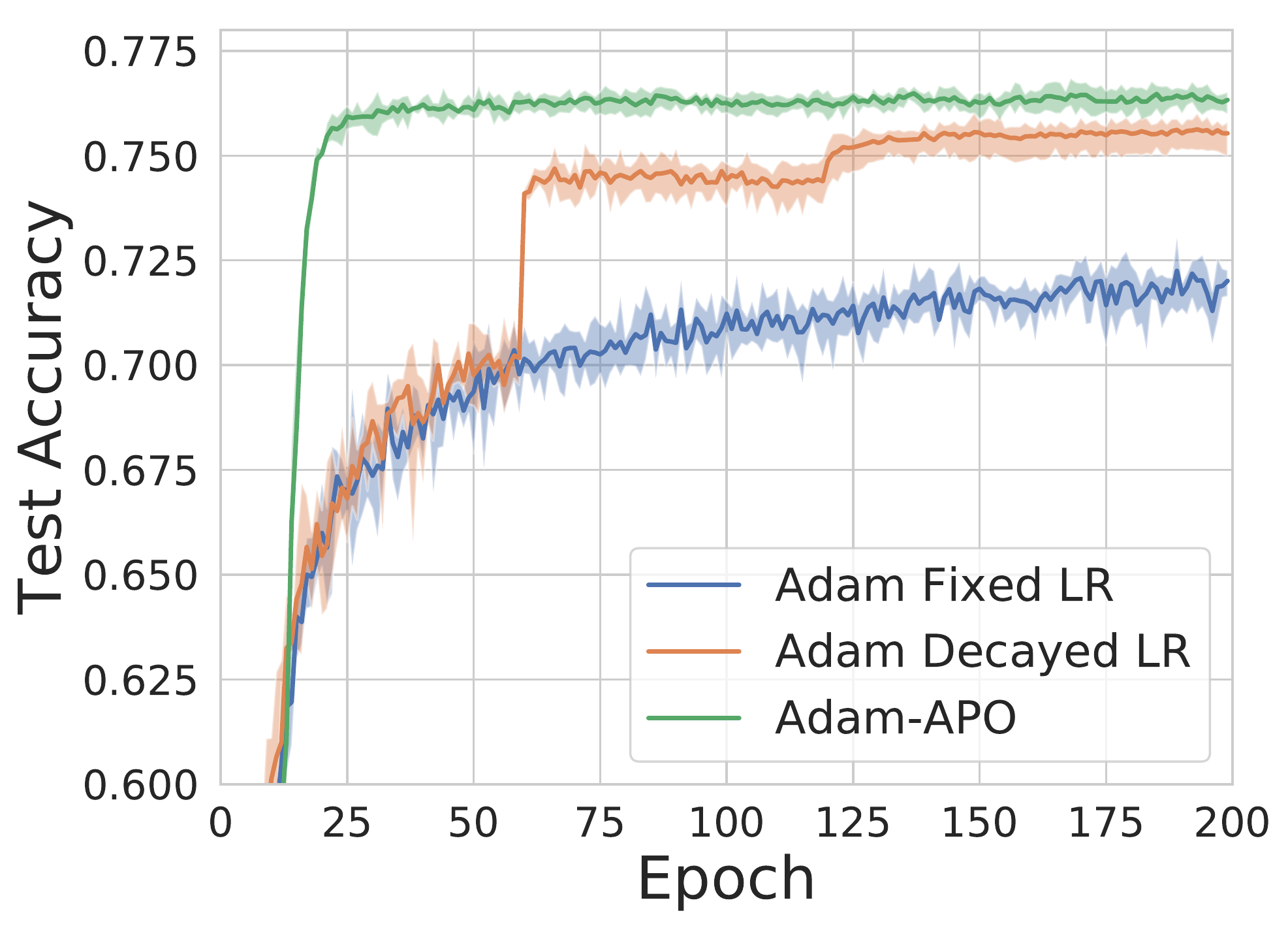}&
\includegraphics[width=0.3\linewidth]{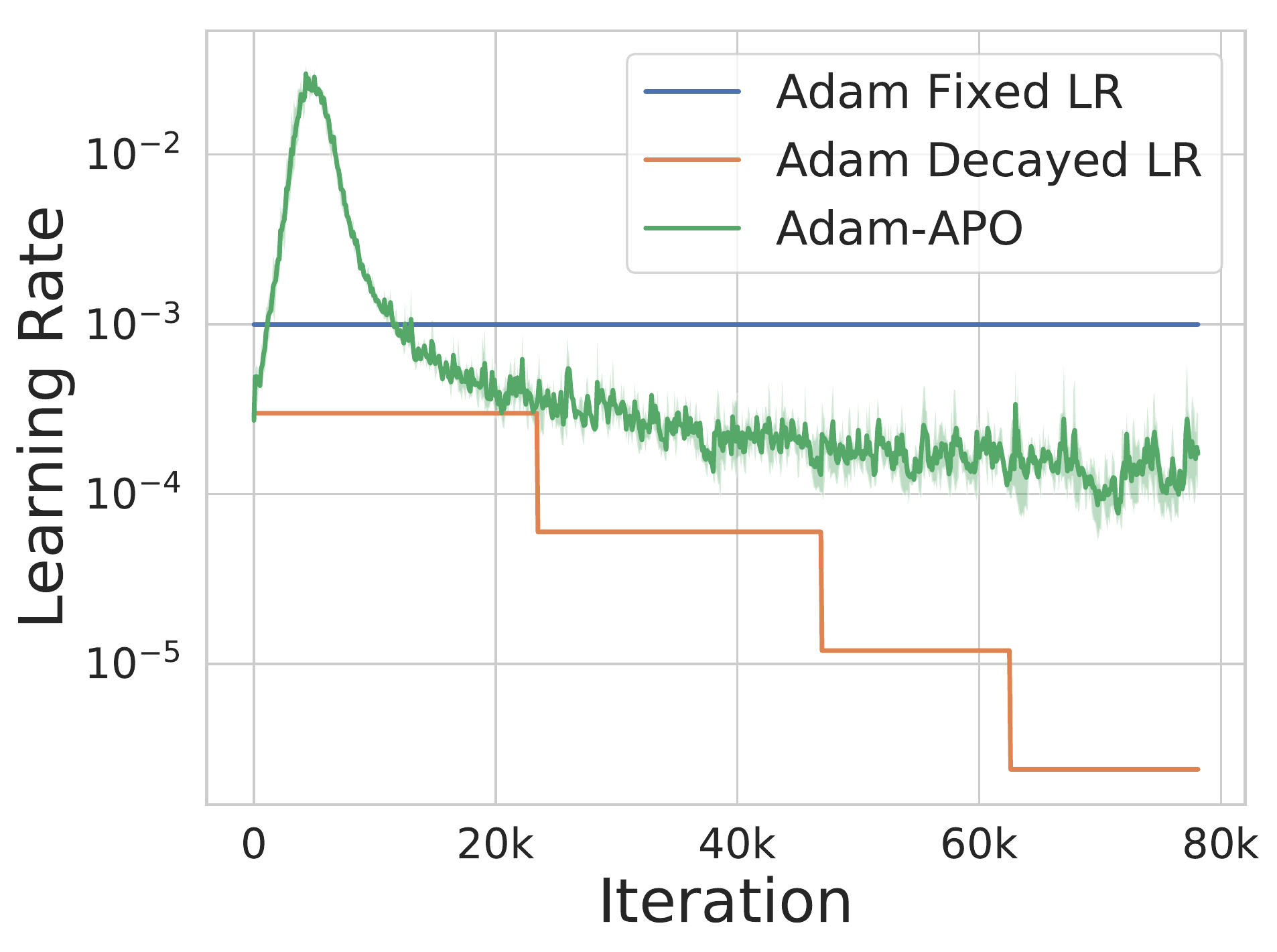}
\end{tabular}
\vspace{-2mm}
\caption{WideResNet on CIFAR-100, using Adam. The shaded regions show the min/max values over 4 random restarts.}
\vspace{-4mm}
\label{fig:wideresnet-cifar100-adam}
\end{figure}

\subsection{16-bit Neural Network Training}
\label{app:16-bit}
We show the training curve plots for training 16-bit ResNet-18 on CIFAR-10 and CIFAR-100 in Figure~\ref{fig:16c} .

\begin{figure}[H]
\centering
\includegraphics[width=0.335\linewidth]{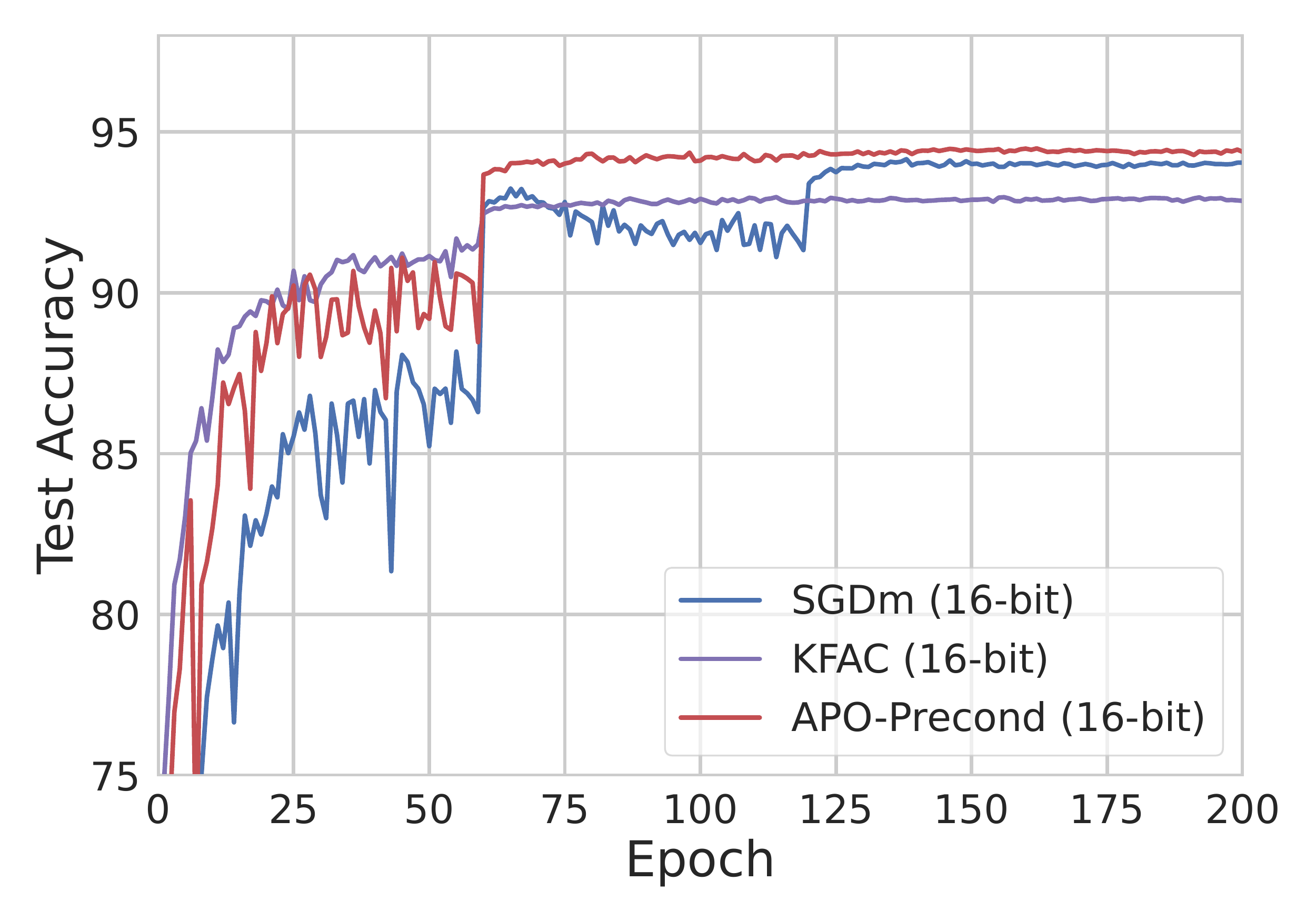}
\includegraphics[width=0.35\linewidth]{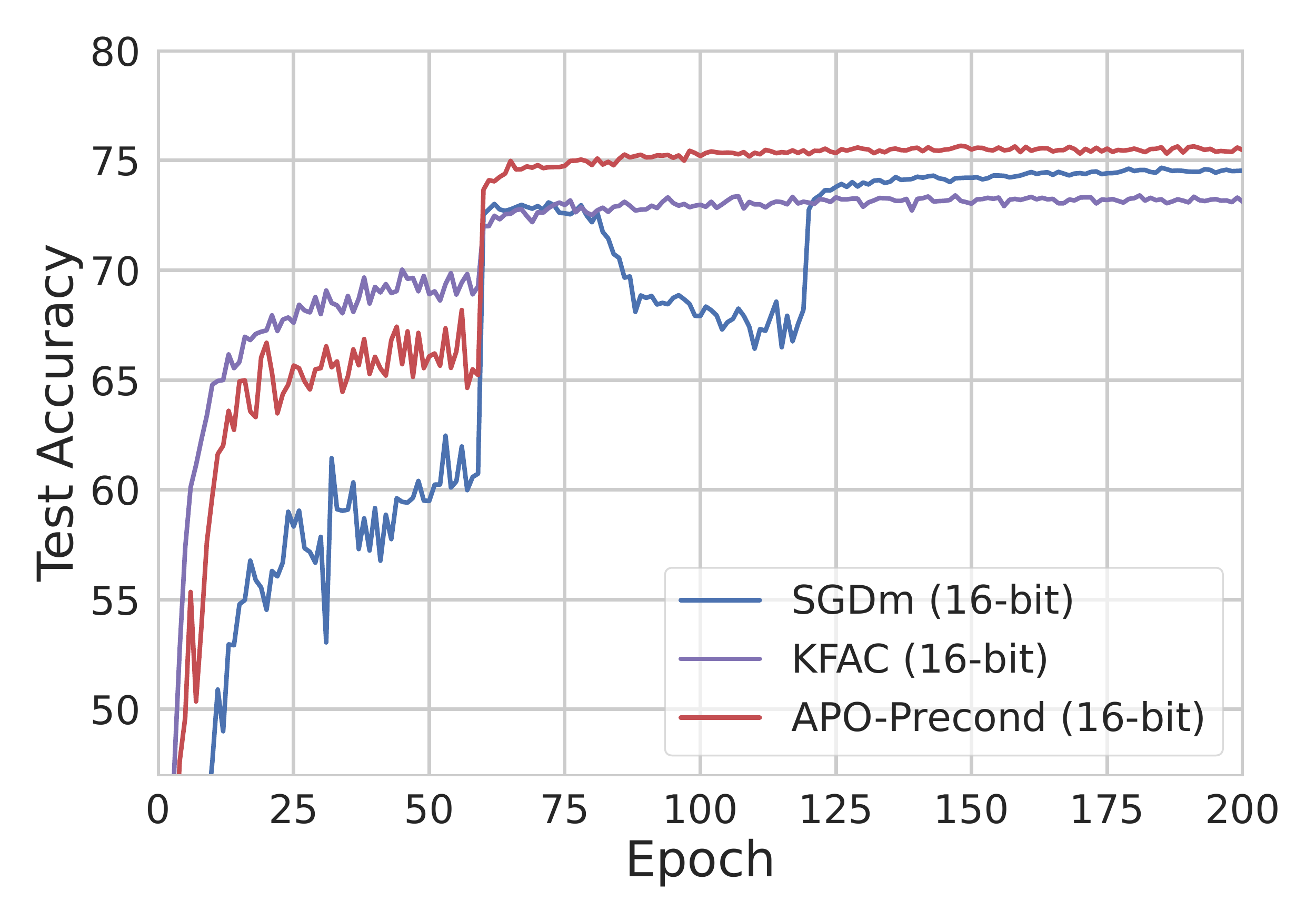}
\caption{Test accuracy curves for 16-bit neural networks: ResNet-18 on CIFAR-10 (\textbf{left}) and CIFAR-100 (\textbf{right}).}
\label{fig:16c}
\end{figure}

\section{Importance of Using the Same Minibatch}
\label{app:same-minibatch-first-term}

In this section, we empirically verify the importance of computing the first term of the meta-objective on the \textit{same mini-batch} that is used to compute the gradient of the base optimizer. Recall that our meta-objective is:
\begin{equation} \label{eq:same-first-term}
    \mathcal{Q}(\boldphi) =
    \cost_{{\color{dkgreen}\batch}}(u(\boldtheta, \boldphi, {\color{dkgreen}\batch}))
    +
    \lamfsd \mathbb{E}_{({\color{blue}\boldxtil}, \cdot) \sim \cD} [ \disf(u(\boldtheta, \boldphi, {\color{dkgreen}\batch}), \boldtheta, {\color{blue}\boldxtil}) ]
    +
    \frac{\lamwsd}{2} || u(\boldtheta, \boldphi, {\color{dkgreen}\batch}) - \boldtheta ||_2^2
\end{equation}
Note that the loss term is computed using the same mini-batch ${\color{dkgreen} \batch}$ used for the update $u(\boldtheta, \boldphi, {\color{dkgreen} \batch})$.
An alternative is to evaluate the loss term on a randomly-sampled mini-batch ${\color{dkred} \batch'} \sim \cD$, yielding:
\begin{equation} \label{eq:diff-first-term}
    \mathcal{Q}(\boldphi) =
    \cost_{{\color{dkred}\batch'}}(u(\boldtheta, \boldphi, {\color{dkgreen}\batch}))
    +
    \lamfsd \mathbb{E}_{({\color{blue}\boldxtil}, \cdot) \sim \cD} [ \disf(u(\boldtheta, \boldphi, {\color{dkgreen}\batch}), \boldtheta, {\color{blue}\boldxtil}) ]
    +
    \frac{\lamwsd}{2} || u(\boldtheta, \boldphi, {\color{dkgreen}\batch}) - \boldtheta ||_2^2
\end{equation}
If we set $\lamfsd = \lamwsd = 0$, then Eq.~\ref{eq:diff-first-term} becomes the meta-objective analyzed by~\citet{wu2018understanding}, which was used to illustrate the short-horizon bias issue in stochastic meta-optimization.

Figure~\ref{fig:same-vs-diff-first-term-rmsprop} shows the result of using Eq.~\ref{eq:same-first-term} vs. Eq.~\ref{eq:diff-first-term} for meta-optimizing the learning rate of a ResNet32 model on CIFAR-10.
We observe that when using a random minibatch ${\color{dkred} \batch'}$ to compute the loss term, the learning rate decays rapidly, preventing long-term progress; in contrast, using the same minibatch ${\color{dkgreen} \batch}$ for the loss term yields a resonable LR schedule which does not suffer from the short-horizon bias issue empirically.

\begin{figure}[h]
\centering
\includegraphics[width=0.32\linewidth]{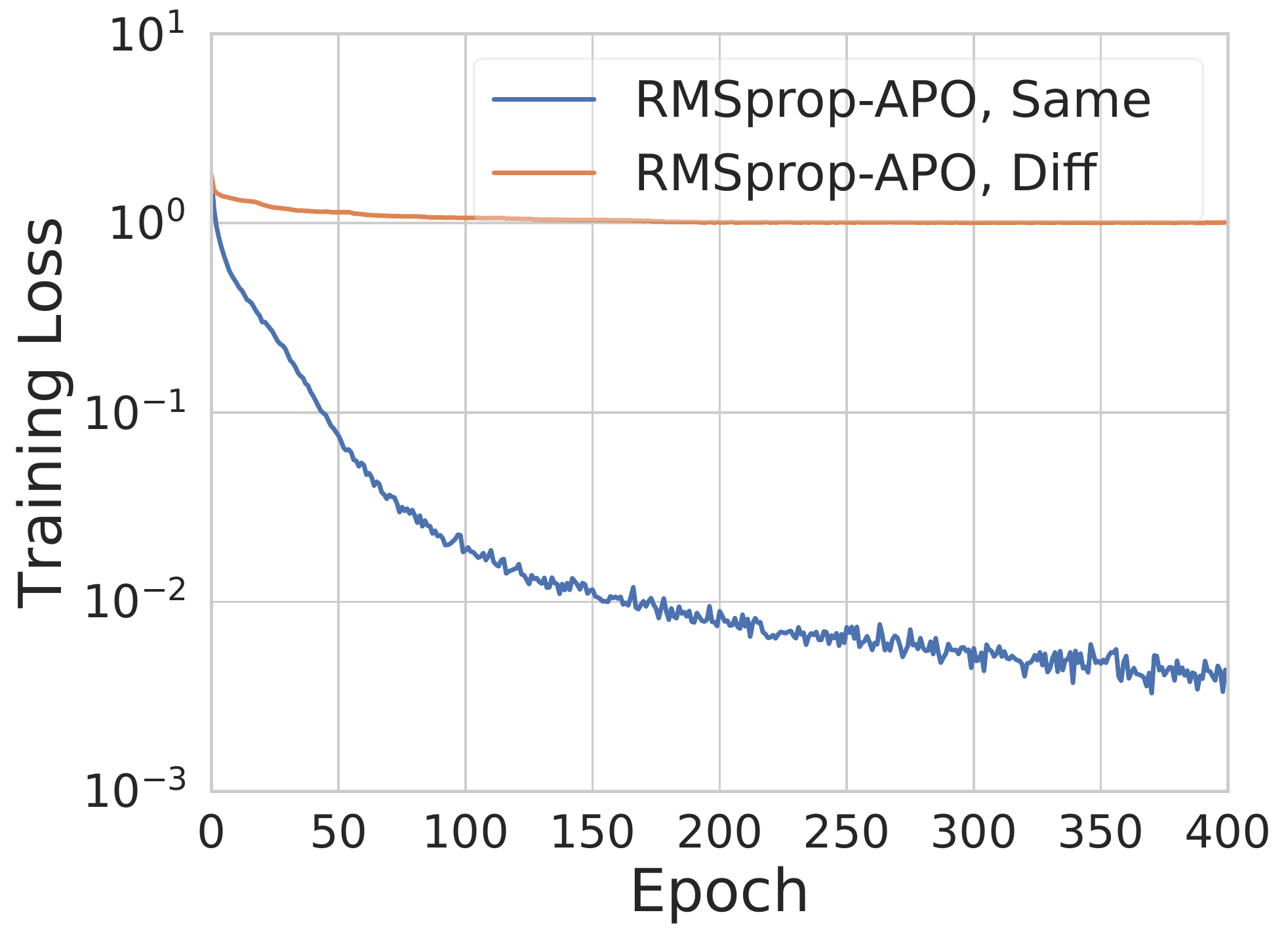}
\includegraphics[width=0.31\linewidth]{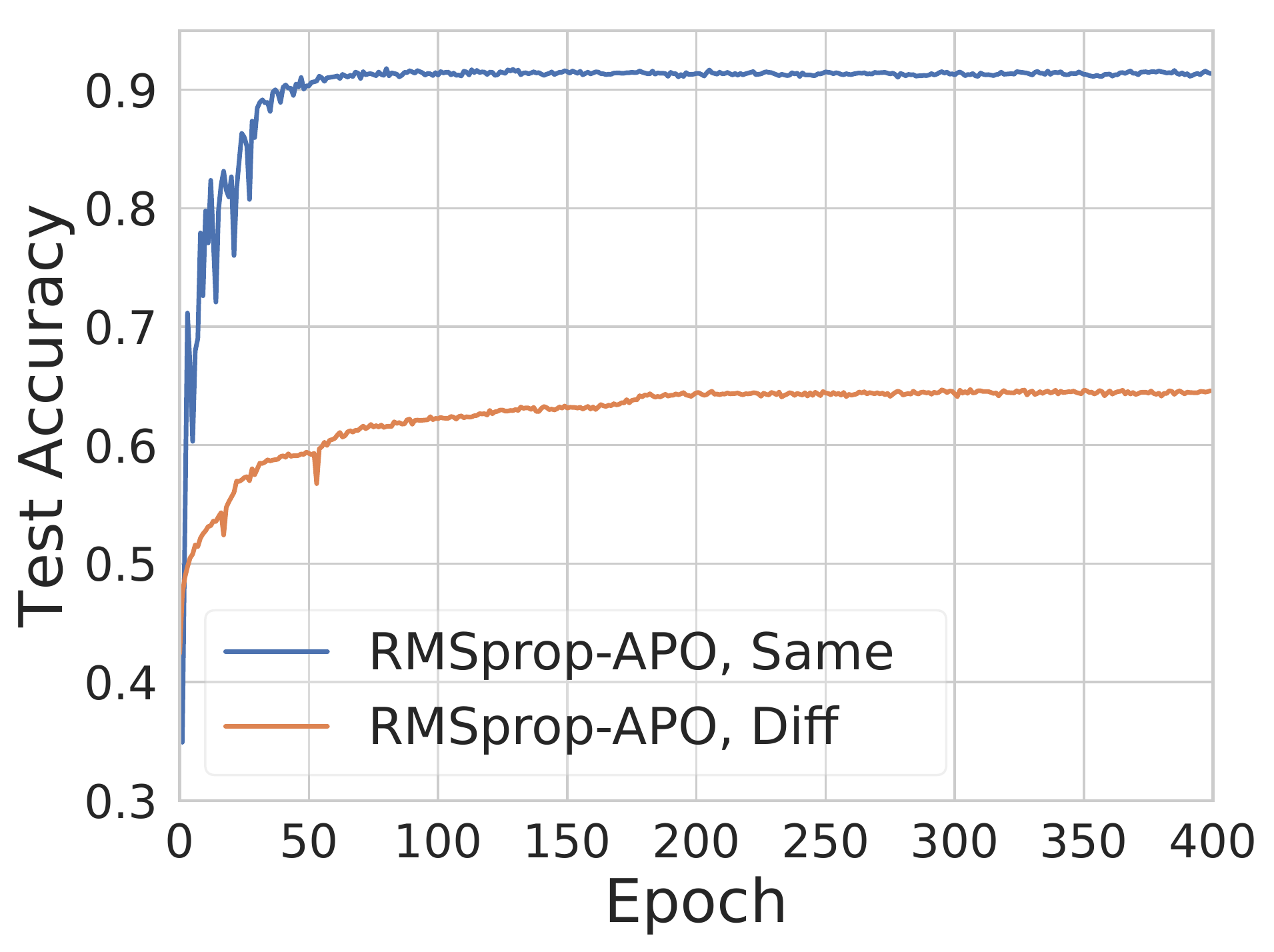}
\includegraphics[width=0.32\linewidth]{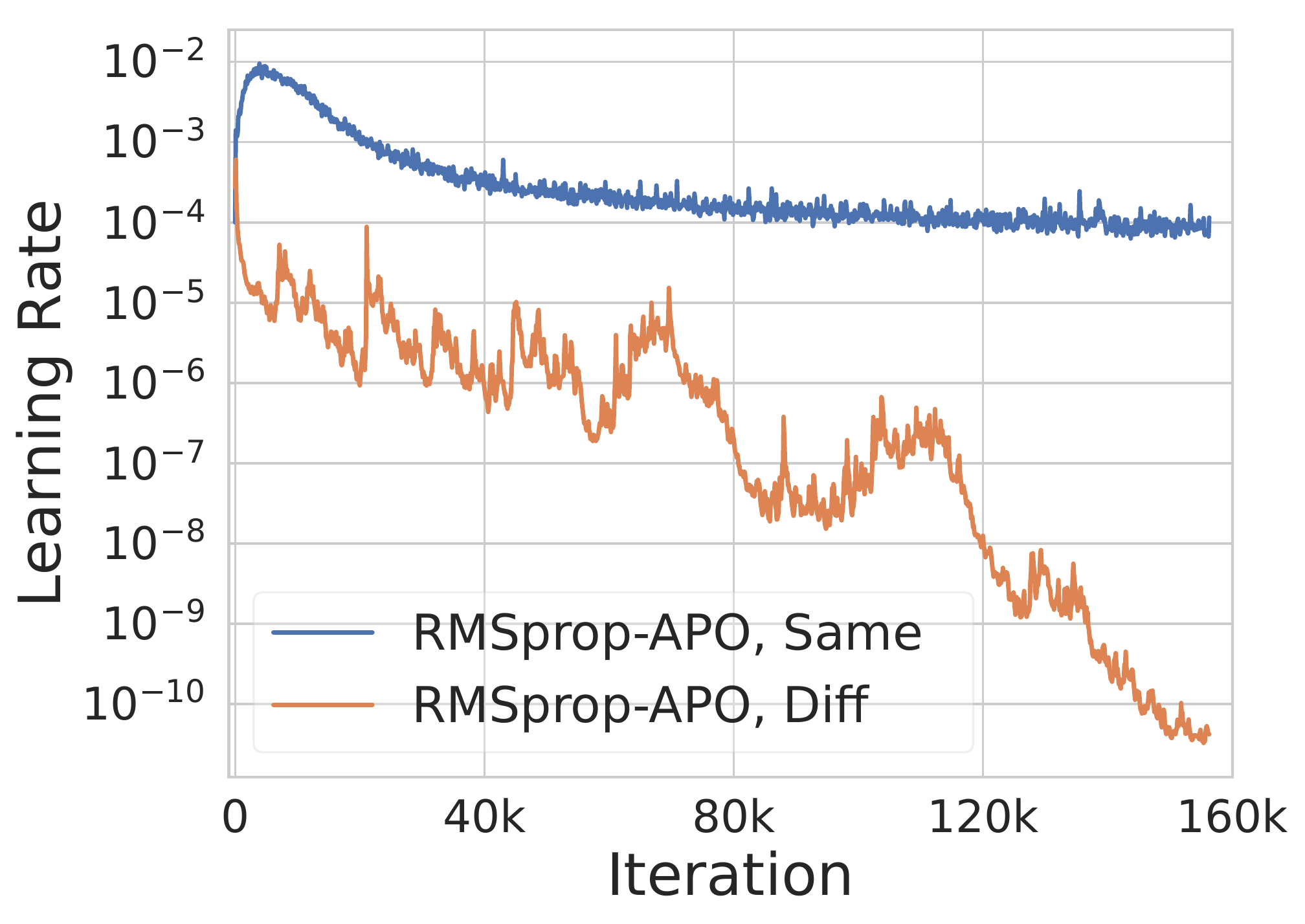}
\caption{\textbf{Left:} Training loss compared between 1) using the same mini-batch for the first term and 2) using a different mini-batch for the first term. \textbf{Right:} Comparison of learning rates for the two conditions.}
\label{fig:same-vs-diff-first-term-rmsprop}
\end{figure}

\begin{figure}[h]
\centering
\includegraphics[width=0.32\linewidth]{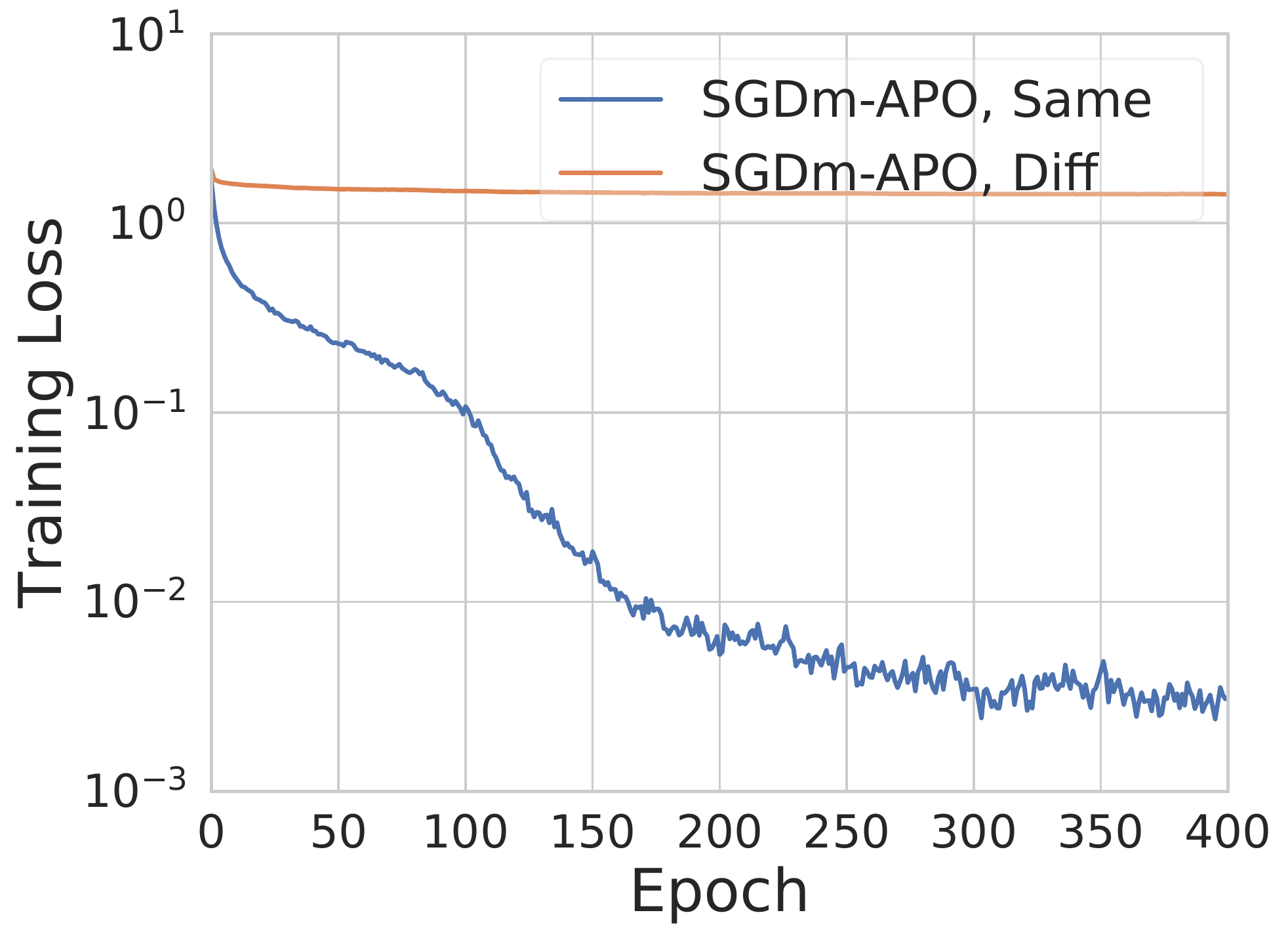}
\includegraphics[width=0.31\linewidth]{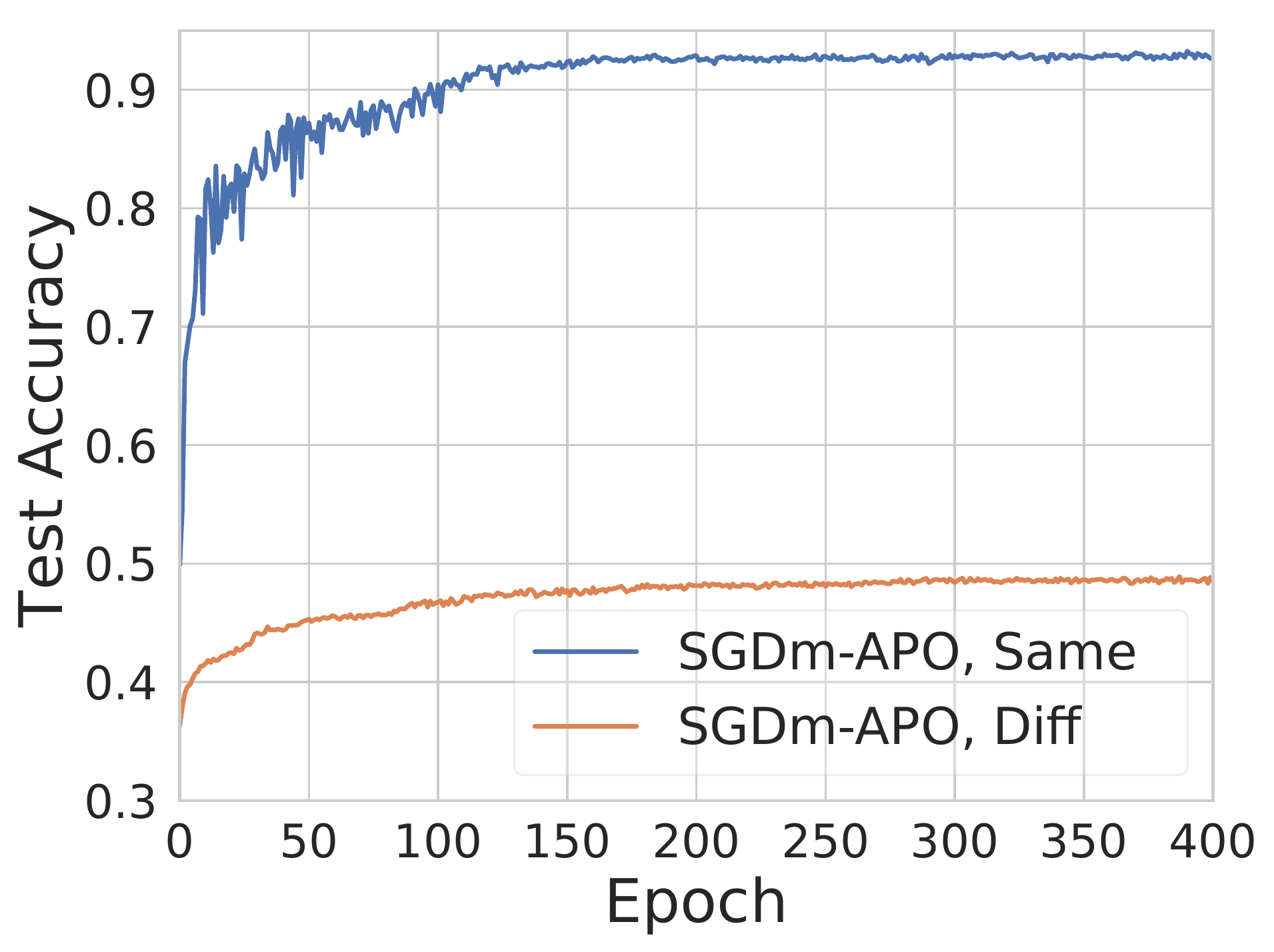}
\includegraphics[width=0.32\linewidth]{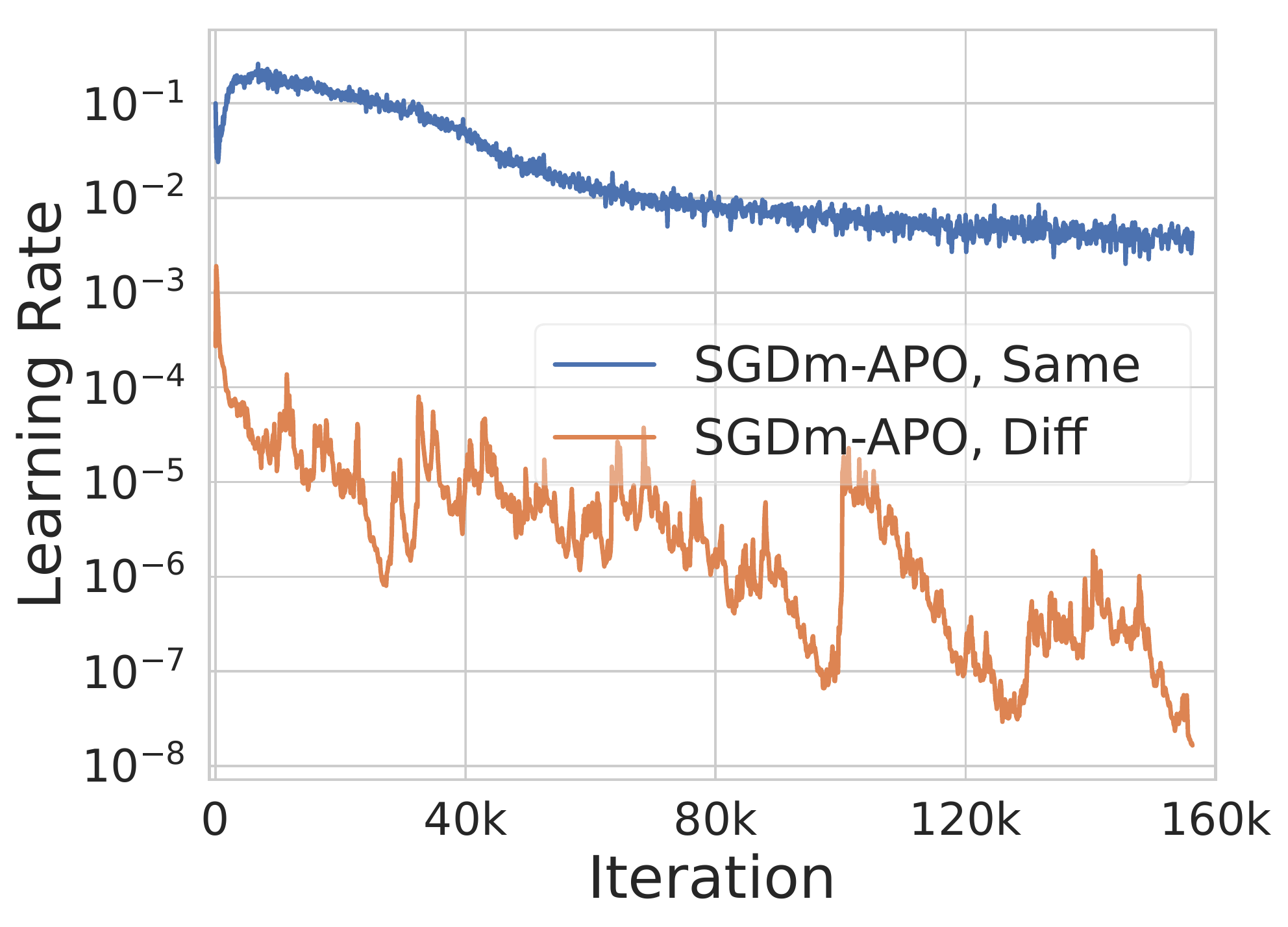}
\caption{\textbf{Left:} Training loss compared between 1) using the same mini-batch for the first term and 2) using a different mini-batch for the first term. \textbf{Right:} Comparison of learning rates for the two conditions.}
\label{fig:same-vs-diff-first-term-sgdm}
\end{figure}

\paragraph{Same Mini-Batch for the FSD Term.}
Another alternative to the meta-objective is to compute the FSD term using the same mini-batch used to compute the loss term, as shown in Eq.~\ref{eq:same-second-term}.
\begin{equation} \label{eq:same-second-term}
    \mathcal{Q}(\boldphi) =
    \cost_{{\color{dkgreen} \batch}}(u(\boldtheta, \boldphi, {\color{dkgreen}\batch}))
    +
    \frac{\lamfsd}{|{\color{dkgreen}\batch}|}\sum_{({\color{dkgreen} \boldxtil}, \cdot) \in {\color{dkgreen} \batch}} \disf(u(\boldtheta, \boldphi, {\color{dkgreen}\batch}), \boldtheta, {\color{dkgreen}\boldxtil})
    +
    \frac{\lamwsd}{2} || u(\boldtheta, \boldphi, {\color{dkgreen} \batch}) - \boldtheta ||_2^2
\end{equation}
Recall that $\disf(u(\boldtheta, \boldphi, {\color{dkgreen} \batch}), \boldtheta, {\color{dkgreen} \boldxtil}) = \rho(f({\color{dkgreen} \boldxtil}, u(\boldtheta, \boldphi, {\color{dkgreen} \batch})), f({\color{dkgreen} \boldxtil}, \boldtheta))$.
The function evaluation $f({\color{dkgreen} \boldxtil}, u(\boldtheta, \boldphi, {\color{dkgreen} \batch}))$ must be performed to compute the loss term; thus, using the same minibatch ${\color{dkgreen} \batch}$ for the FSD term can allow us to reduce computation by re-using this function output in both the loss and FSD computations.
However, with the interpretation of the FSD term in Eq.~\ref{eq:same-second-term} as a Monte Carlo estimate of the expectation $\mathbb{E}_{\tilde{\boldx} \sim \mathcal{D}}[\disf(u(\boldtheta, \boldphi, \batch), \boldtheta, \boldxtil)]$, using the same mini-batch for the loss and dissimilarity would yield a biased estimate.
The effect of using a different vs the same mini-batch to compute the FSD term is shown in Figure~\ref{fig:same-vs-diff-second-term-adam}.

\begin{figure}[h]
\centering
\includegraphics[width=0.32\linewidth]{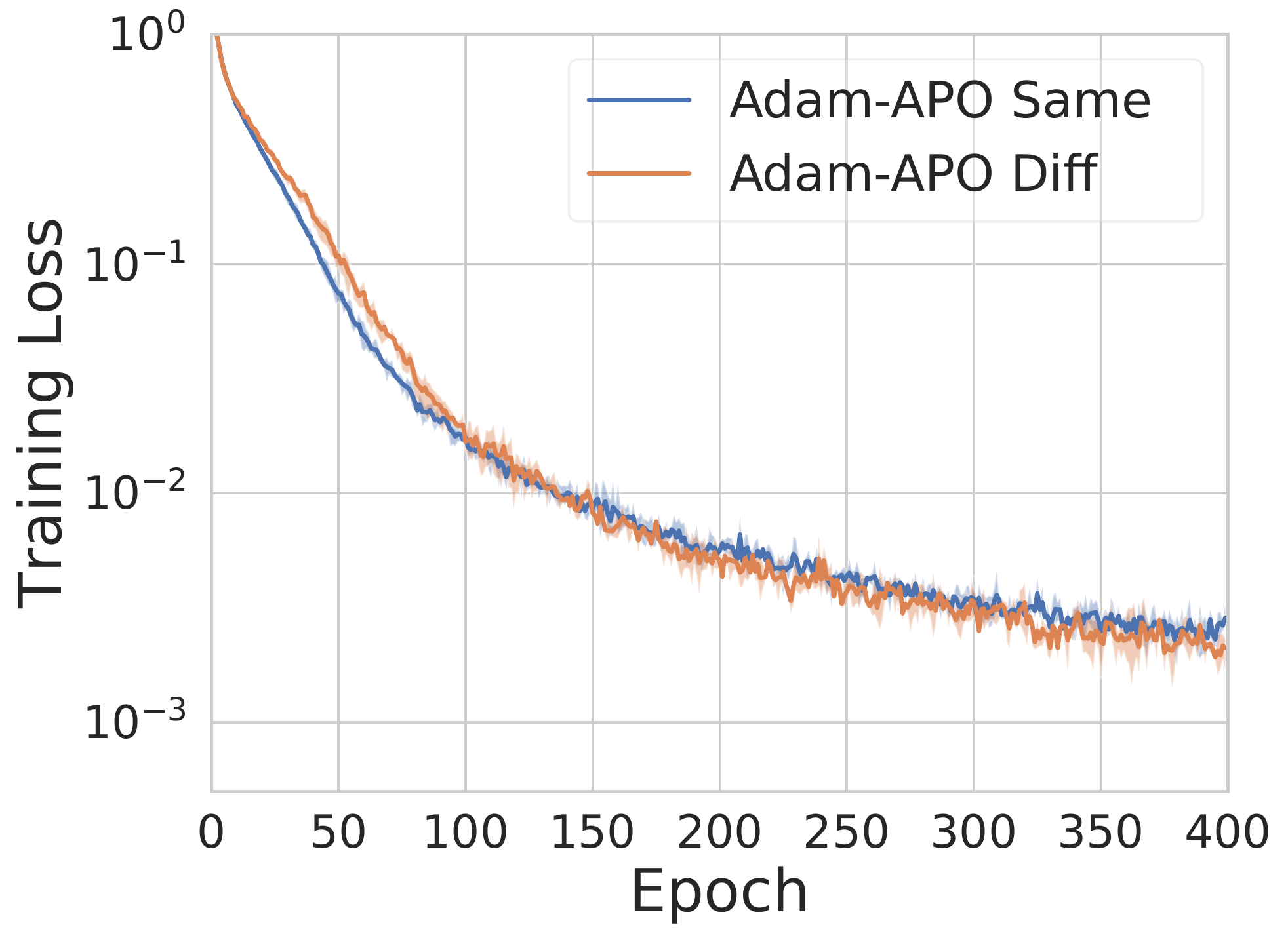}
\includegraphics[width=0.32\linewidth]{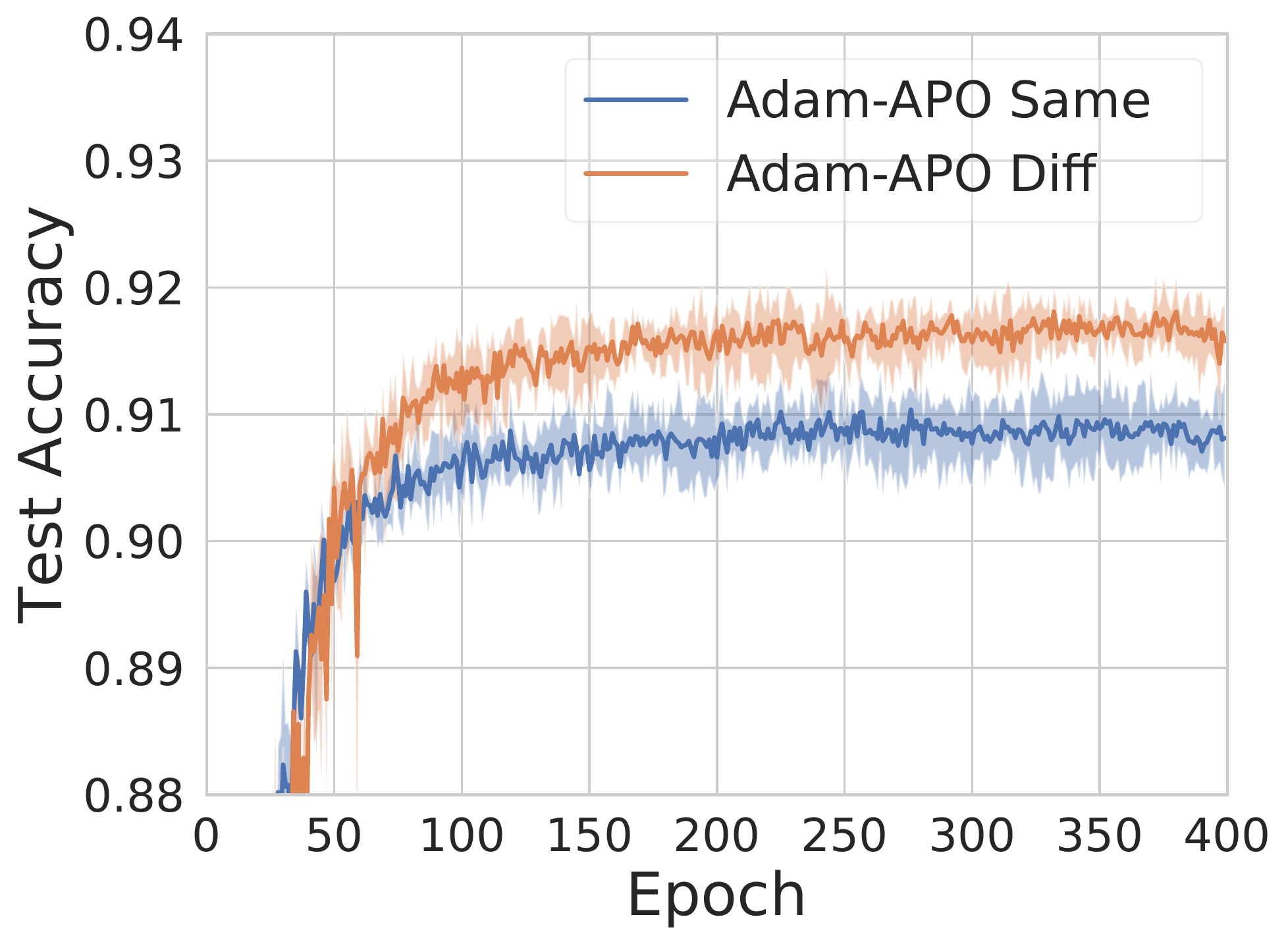}
\includegraphics[width=0.32\linewidth]{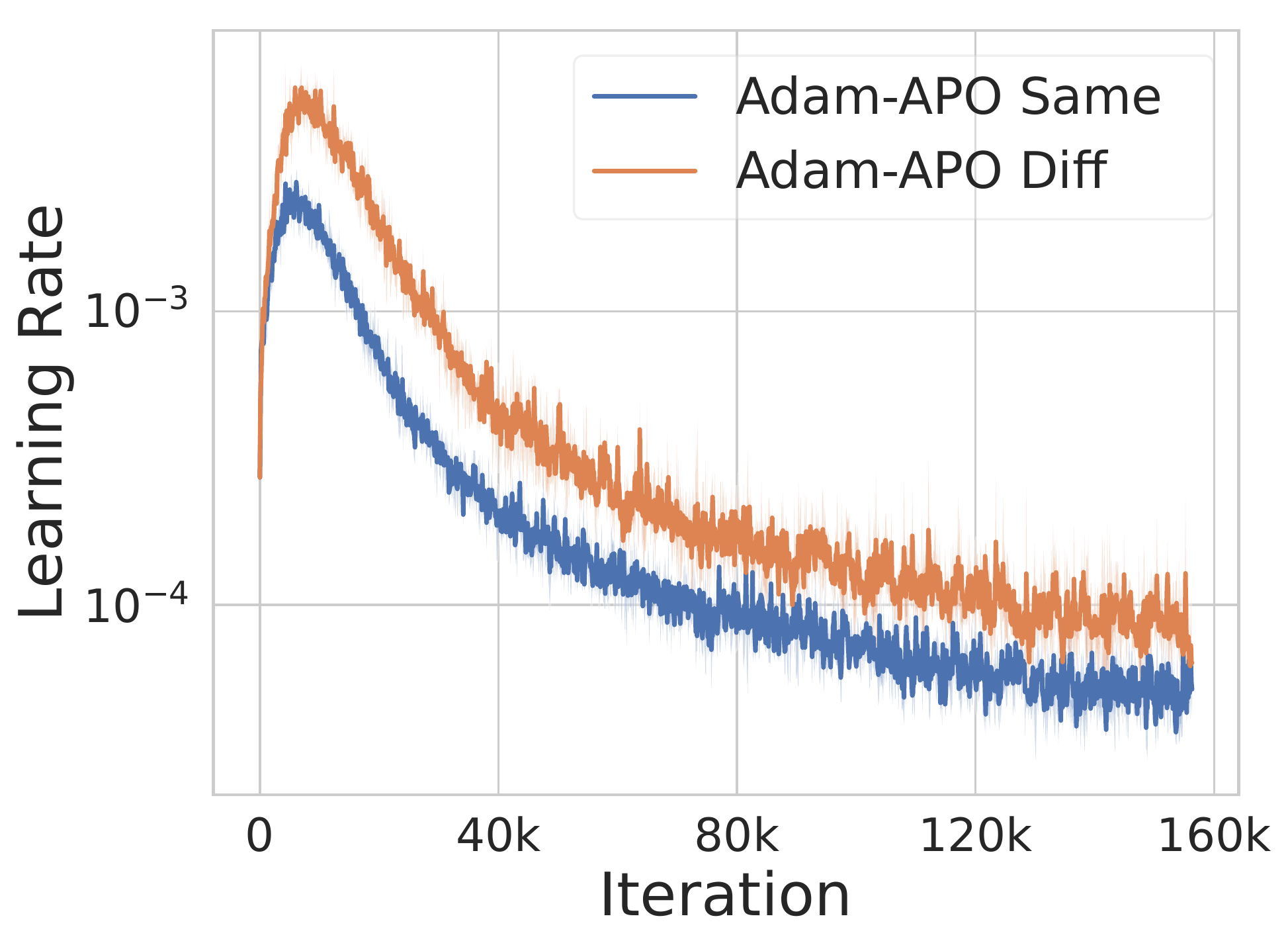}
\caption{\textbf{Left:} Training loss compared between 1) using the same mini-batch for the second term as for the first term of the meta-objective and 2) using a different mini-batch for the second term. \textbf{Right:} Comparison of learning rates for the two conditions.}
\label{fig:same-vs-diff-second-term-adam}
\end{figure}

\section{Approximate Closed-Form Solution for the PPM}
\label{app:approximate-proximal-point}

In Section~\ref{sec:proximal-optimization}, we introduced a general update rule for the stochastic PPM, which is defined as follows: 
\begin{align}
     \vtheta^{(t+1)} &\leftarrow \argmin_{\mathbf{u} \in \mathbb{R}^m} \cost_{\batch} (\mathbf{u})  +
    \lambda_{\text{FSD}} \mathbb{E}_{\tilde{\mathbf{x}} \sim \mathcal{D}} [ \disf(\mathbf{u}, \vtheta^{(t)}, \mathbf{x})]  +
    \lambda_{\text{WSD}} \disw(\mathbf{u}, \vtheta^{(t)}).
    \label{eq:nn-proximal-point-method-2}
\end{align}
We first take the infinitesimal limit by letting $\lambfsd \to \infty$. Then, the optimal update $\mathbf{u}^{\star}$ will stay close to the current parameters $\vtheta^{(t)}$ and we can approximate the loss function with the first-order Taylor approximation. Moreover, since the FSD term $\mathbb{E}_{\tilde{\mathbf{x}} \sim \mathcal{D}} [ \disf(\mathbf{u}, \vtheta^{(t)}, \mathbf{x})]$ is minimized when $\mathbf{u} = \vtheta^{(t)}$, we have $\grad_{\mathbf{u}}\mathbb{E}_{\tilde{\mathbf{x}} \sim \mathcal{D}} [ \disf(\mathbf{u}, \vtheta^{(t)}, \mathbf{x})]|_{\mathbf{u} = \vtheta^{(t)}} = \mathbf{0}$. Hence, the second-order Taylor expansion of the FSD term would be:
\begin{align}
    \mathbb{E}_{\tilde{\mathbf{x}} \sim \mathcal{D}}[ \disf (f(\tilde{\mathbf{x}}, \mathbf{u}), f(\tilde{\mathbf{x}}, \boldsymbol{\theta}^{(t)}))] \approx \frac{1}{2} (\mathbf{u} - \vtheta^{(t)})^{\top} \mathbf{G} (\mathbf{u} - \vtheta^{(t)}),
    \label{eq:app-quad-diss}
\end{align}
where $\mathbf{G} = \nabla^2_{\mathbf{u}} \mathbb{E}_{\tilde{\mathbf{x}} \sim \mathcal{D}} [ \disf(\mathbf{u}, \vtheta^{(t)}, \mathbf{x})]|_{\mathbf{u} = \vtheta^{(t)}}$ is the local Hessian of the FSD term. We further let the weight space discrepancy function to be the squared Euclidean distance $\disw(\mathbf{u}, \vtheta^{(t)}) = \sfrac{1}{2} \|\mathbf{u} - \vtheta^{(t)}\|^2_2$. Combining these insights, we approximate Eqn.~\ref{eq:nn-proximal-point-method-2} by linearzing the loss and taking a quadratic approximation to the FSD term:
\begin{align}
    \vtheta^{(t+1)} \approx \argmin_{\mathbf{u} \in \mathbb{R}^m} \left[\grad_{\vtheta} \cost_{\mathcal{B}}(\vtheta^{(t)})^{\top} (\mathbf{u} - \vtheta^{(t)}) + \frac{\lambfsd}{2} (\mathbf{u} - \vtheta^{(t)})^{\top} \mathbf{G} (\mathbf{u} - \vtheta^{(t)}) + \frac{\lambwsd}{2} \|\mathbf{u} - \vtheta^{(t)} \|^2 \right].
\end{align}
Taking the gradient with respect to $\mathbf{u}$ and setting it equal to $\mathbf{0}$, we have:
\begin{align}
    \grad_{\vtheta} \cost_{\mathcal{B}}(\vtheta^{(t)}) + \lambfsd \mathbf{G} \mathbf{u} - \lambfsd \mathbf{G} \vtheta^{(t)} + \lambwsd \mathbf{u} - \lambwsd \vtheta^{(t)} = \mathbf{0}.
\end{align}
Rearranging the terms, we arrive at the approximate closed-form solution:
\begin{align}
    \mathbf{u}^{\star} \approx \vtheta^{(t)} - (\lambfsd \mathbf{G} + \lambwsd \mathbf{I})^{-1} \grad_{\vtheta} \cost_{\mathcal{B}}(\vtheta^{(t)}).
\end{align}
\section{Optimization Algorithms Derived from the Proximal Objective}
\label{app:opt-alg-derivations}
In this section, we show how several classical optimization algorithms can be derived from the proximal objective introduced in Section~\ref{sec:connection}. Given a mini-batch $\batch$, recall that we consider the following proximal objective for optimizing the model parameters $\boldtheta$:
\begin{align}
    \boldtheta^{(t+1)} &= \argmin_{\boldtheta \in \mathbb{R}^m} \left\{ \cost_{\batch}(\boldtheta) + \lamfsd \mathbb{E}_{\tilde{\boldx}} [\disf(\boldtheta, \vtheta^{(t)}, \tilde{\mathbf{x}})] + \frac{\lamwsd}{2} || \boldtheta - \boldtheta^{(t)} ||_2^2 \right\}.
\end{align}

\begin{table}[h]
    \centering
    \footnotesize
    \begin{tabular}{@{}ccccc@{}}
    \toprule
    \centering
    \textbf{Method} & \textbf{\begin{tabular}[c]{@{}c@{}}Loss Term Approx.\\ $\cost_{\batch}(\boldu)$\end{tabular}} & \textbf{\begin{tabular}[c]{@{}c@{}}FSD Term Approx.\\ $ \disf(\mathbf{u}, \vtheta, \mathbf{x})$\end{tabular}} & \textbf{\begin{tabular}[c]{@{}c@{}}FSD Term\\ Choice of $\disf$\end{tabular}} & \textbf{\begin{tabular}[c]{@{}c@{}}WSD Term\\ $\frac{1}{2} || \mathbf{u} - \boldtheta ||^2$\end{tabular}} \\ \midrule
    \textbf{Gradient Descent}         & $1^{\text{st}}$-order  & - & - & \cmark \\
    \textbf{Newton's Method}          & $2^{\text{nd}}$-order  & - & - & \xmark \\
    \textbf{Damped Newton's Method}   & $2^{\text{nd}}$-order  & - & - & \cmark \\
    \textbf{Natural Gradient}         & $1^{\text{st}}$-order  & $2^{\text{nd}}$-order  & KL & \xmark  \\
    \textbf{Damped Natural Gradient}  & $1^{\text{st}}$-order  & $2^{\text{nd}}$-order  & KL & \cmark  \\
    \textbf{Generalized Gauss-Newton} & $1^{\text{st}}$-order  & $2^{\text{nd}}$-order  & Bregman & \xmark  \\
    \textbf{Damped Generalized Gauss-Newton} & $1^{\text{st}}$-order  & $2^{\text{nd}}$-order  & Bregman & \cmark  \\
    \textbf{Direct Proximal Optimization} & Exact & Exact & Any & \cmark \\
    \bottomrule
    \end{tabular}
\end{table}

\subsection{Gradient Descent}
To derive vanilla gradient descent, we take the first-order Taylor series approximation to the loss term $\cost_{\batch}(\boldtheta)$ about $\boldtheta^{(t)}$, and set $\lamfsd = 0$.
We need to have $\lamwsd > 0$ to ensure that we do not take an infinitely large step.
Thus, each step of gradient descent minimizes an approximation to the proximal objective which we denote by $P^{\text{GD}}_{\boldtheta^{(t)}}(\boldtheta)$:
\begin{align}
    \boldtheta^{(t+1)} &= \argmin_{\boldtheta} P^{\text{GD}}_{\boldtheta^{(t)}}(\boldtheta),
\end{align}
where:
\begin{align}
    P^{\text{GD}}_{\boldtheta^{(t)}}(\boldtheta) = \cost_{\batch}(\boldtheta^{(t)}) + \nabla \cost_{\batch}(\boldtheta^{(t)})^\top (\boldtheta - \boldtheta^{(t)}) + \frac{\lamwsd}{2} || \boldtheta - \boldtheta^{(t)} ||_2^2.
\end{align}
Now, we set the gradient $\nabla_{\boldtheta}P^{\text{GD}}_{\boldtheta^{(t)}}(\boldtheta) = 0$ and solve for $\boldtheta$:
\begin{align}
    \nabla_{\boldtheta} \left( \nabla \cost_{\batch}(\boldtheta^{(t)})^\top (\boldtheta - \boldtheta^{(t)}) + \frac{\lamwsd}{2} || \boldtheta - \boldtheta^{(t)} ||_2^2 \right) &= 0 \\
    \nabla \cost_{\batch}(\boldtheta^{(t)}) + \lamwsd (\boldtheta - \boldtheta^{(t)}) &= 0 \\
    \boldtheta &= \boldtheta^{(t)} - \frac{1}{\lamwsd} \nabla \cost_{\batch}(\boldtheta^{(t)})
\end{align}

\subsection{Newton's Method}

To derive Newton's method, we take the second-order Taylor series approximation to the loss term $\cost_{\batch}(\boldtheta)$ about $\boldtheta^{(t)}$, and set $\lamfsd = 0$.
Below we derive the general form for the \textit{damped} Newton update, which incorporates the WSD term; setting $\lamwsd = 0$ leads to the undamped Newton update as a special case.
Each step of Newton's method minimizes an approximation to the proximal objective which we denote by $P^{\text{Newton}}_{\boldtheta^{(t)}}(\boldtheta)$:
\begin{align}
    \boldtheta^{(t+1)} &= \argmin_{\boldtheta} P^{\text{Newton}}_{\boldtheta^{(t)}}(\boldtheta),
\end{align}
where:
\begin{align}
    P^{\text{Newton}}_{\boldtheta^{(t)}}(\boldtheta) = \cost_{\batch}(\boldtheta^{(t)}) + \nabla \cost_{\batch}(\boldtheta^{(t)})^\top (\boldtheta - \boldtheta^{(t)}) + \frac{1}{2} (\boldtheta - \boldtheta^{(t)})^\top \boldH (\boldtheta - \boldtheta^{(t)}) + \frac{\lamwsd}{2} || \boldtheta - \boldtheta^{(t)} ||_2^2.
    \label{eq:newton-prox}
\end{align}
In Eq.~\ref{eq:newton-prox}, $\boldH = \nabla_{\boldtheta}^2 \cost_{\batch}(\boldtheta)$.
Now, we set the gradient $\nabla_{\boldtheta} P^{\text{Newton}}_{\boldtheta^{(t)}}(\boldtheta) = 0$ and solve for $\boldtheta$:
\begin{align}
    \nabla_{\boldtheta} P^{\text{Newton}}_{\boldtheta^{(t)}}(\boldtheta)
    &=
    0 \\
    \nabla \cost_{\batch}(\boldtheta^{(t)}) + \boldH \boldtheta - \boldH \boldtheta^{(t)} + \lamwsd \boldtheta - \lamwsd \boldtheta^{(t)}
    &=
    0 \\
    (\boldH + \lamwsd \boldI) (\boldtheta - \boldtheta^{(t)})
    &=
    - \nabla \cost_{\batch}(\boldtheta^{(t)}) \\
    \boldtheta &= \boldtheta^{(t)} - (\boldH + \lamwsd \boldI)^{-1} \nabla \cost_{\batch}(\boldtheta^{(t)})
\end{align}

\subsection{Generalized Gauss-Newton Method}
Next, we consider taking the first-order Taylor series approximation to the loss term, and an arbitrary function-space discrepancy $\disf$ approximated by its second-order Taylor series expansion.
We will use the notation $\boldz = f(\boldx, \boldtheta)$ to denote the outputs of a neural network with parameters $\boldtheta$ on inputs $\boldx$.
We have the following proximal objective:
\begin{align}
    P_{\boldtheta^{(t)}}(\boldtheta) = \cost_{\batch}(\boldtheta^{(t)}) + \nabla \cost_{\batch}(\boldtheta^{(t)})^\top (\boldtheta - \boldtheta^{(t)}) + \lamfsd \mathbb{E}_{\boldxtil}[\disf(\boldtheta, \boldtheta^{(t)}, \boldxtil))] + \frac{\lamwsd}{2} || \boldtheta - \boldtheta^{(t)} ||_2^2
\end{align}
Taking the second-order approximation to $\disf$ (with respect to $\boldtheta$), we have:
\begin{align}
    \disf(\boldtheta, \boldtheta^{(t)})
    \approx
    \underbrace{\disf(\boldtheta^{(t)}, \boldtheta^{(t)})}_{=0}
    +
    \underbrace{\nabla_{\boldtheta} \disf(\boldtheta, \boldtheta^{(t)})|_{\boldtheta = \boldtheta^{(t)}}^\top (\boldtheta - \boldtheta^{(t)})}_{=0}
    +
    \frac{1}{2} (\boldtheta - \boldtheta^{(t)})^\top \nabla_{\boldtheta}^2 \disf(\boldtheta, \boldtheta^{(t)})|_{\boldtheta = \boldtheta^{(t)}} (\boldtheta - \boldtheta^{(t)}).
\end{align}
Using the chain rule, we can derive $\nabla_{\boldtheta}^2 \disf(\boldtheta, \boldtheta^{(t)})|_{\boldtheta = \boldtheta^{(t)}}$ as follows:
\begin{align}
    \nabla_{\boldtheta}^2 \disf(\boldtheta, \boldtheta^{(t)})|_{\boldtheta = \boldtheta^{(t)}}
    =
    \underbrace{\left( \frac{\partial \boldz}{\partial \boldtheta} \right)^\top}_{\boldJ_{\boldz \boldtheta}^\top}
    \underbrace{\left( \frac{\partial^2 \rho}{\partial \boldz^2} \right)}_{\boldH_{\rho}}
    \underbrace{\left( \frac{\partial \boldz}{\partial \boldtheta} \right)}_{\boldJ_{\boldz \boldtheta}}
    =
    \boldJ_{\boldz \boldtheta}^\top \nabla_{\boldz}^2 \rho(\boldz, \boldz^{(t)}) \boldJ_{\boldz \boldtheta}
    =
    \boldJ_{\boldz \boldtheta}^\top \boldH_{\rho} \boldJ_{\boldz \boldtheta}
\end{align}
Letting $\boldG$ denote the expectation of the Hessian of the FSD function, $\boldG = \mathbb{E}_{\boldxtil \sim \cD}[\nabla_{\boldtheta}^2 \disf(\boldtheta, \boldtheta^{(t)}, \tilde{\mathbf{x}})]$, we have:
\begin{align}
    P_{\boldtheta^{(t)}}(\boldtheta) = \cost_{\batch}(\boldtheta^{(t)}) + \nabla \cost_{\batch}(\boldtheta^{(t)})^\top (\boldtheta - \boldtheta^{(t)}) + \frac{\lamfsd}{2} (\boldtheta - \boldtheta^{(t)})^\top \boldG (\boldtheta - \boldtheta^{(t)}) + \frac{\lamwsd}{2} || \boldtheta - \boldtheta^{(t)} ||_2^2
\end{align}
If we set the gradient $\nabla_{\boldtheta} P_{\boldtheta^{(t)}}(\boldtheta) = 0$ and solve for $\boldtheta$, we obtain:
\begin{align}
    \boldtheta = \boldtheta^{(t)} - (\lamfsd \boldG + \lamwsd \boldI)^{-1} \nabla \cost_{\batch}(\boldtheta^{(t)})
\end{align}

\section{Optimizing the 1-Step Meta-Objective Recovers Classic Methods}
\label{app:precond-opt}

In this section, we show that when we approximate the loss term and FSD term of our meta-objective $\mathcal{Q}(\boldphi)$ and solve for the analytic $\argmin_{\boldphi} \mathcal{Q}(\boldphi)$, we recover the preconditioners corresponding to classic first- and second-order algorithms, including gradient descent, Gauss-Newton, Generalized Gauss-Newton, and natural gradient.
The preconditioners used by each of these methods are shown in Table~\ref{table:second-order-methods}.
These results parallel those for directly optimizing the parameters $\boldtheta$ using the proximal objective, but optimizing for the meta-parameters $\boldphi = \boldP$.

Throughout this exposition, we use the notation:
\begin{align}
    u(\boldtheta, \boldphi, \batch) = u(\boldtheta, \boldP, \batch) = \boldtheta - \boldP \nabla_{\boldtheta} \cost_{\batch}(\boldtheta) = \boldtheta - \boldP \boldg
\end{align}
where $\boldg = \nabla_{\boldtheta} \cost_{\batch}(\boldtheta)$.
Note that $\boldg$ implicitly depends on the data $\batch$; we use this shorthand to simplify the exposition.
%

\begin{theorem*}
    Consider an approximation $\hat{\mathcal{Q}}(\pre)$ to the meta-objective (Eq.~\ref{eq:meta-objective-exact}) where the loss term is linearized around the current weights $\boldtheta$ and the FSD term is replaced by its second-order approximation around $\boldtheta$:
    \begin{align}
    \hat{\mathcal{Q}}(\pre) = & \mathbb{E}_{\mathcal{B} \sim \mathcal{D}} \Big[ \nabla_{\boldtheta} \cost_{\mathcal{B}}(\boldtheta)^{\top} (u(\boldtheta, \pre, \mathcal{B}) - \boldtheta) \nonumber \\
    &\qquad \quad +
    \lambda_{\text{FSD}} (u(\boldtheta, \pre, \mathcal{B}) - \boldtheta)^{\top} \mathbf{G} (u(\boldtheta, \pre, \mathcal{B}) - \boldtheta)
    +
    \frac{\lambda_{\text{WSD}}}{2} \|u(\boldtheta, \pre, \mathcal{B}) -  \vtheta\|^2 \Big].
    \end{align}
    Denote the gradient on a mini-batch as $\boldg = \grad_{\boldtheta} \cost_{\batch}(\boldtheta)$, and assume that the second moment matrix $\mathbb{E}_{\batch \sim \mathcal{D}} \left[ \boldg \boldg^\top \right]$ is non-singular.
    Then, the preconditioning matrix which minimizes $\hat{\mathcal{Q}}$ is given by $\pre^{\star} = (\lamfsd \mathbf{G} + \lamwsd \mathbf{I})^{-1}$, where $\boldG$ denotes the Hessian of the FSD evaluated at $\boldtheta$.
\end{theorem*}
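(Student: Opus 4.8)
The plan is to substitute the explicit preconditioned update into $\hat{\mathcal{Q}}$, reduce it to a convex quadratic in the matrix $\pre$, and solve the stationarity condition. Writing $u(\boldtheta, \pre, \batch) - \boldtheta = -\pre\boldg$ with $\boldg = \grad_{\boldtheta}\cost_{\batch}(\boldtheta)$, each of the three terms becomes a linear or quadratic form in $\pre$: the linearized loss contributes $-\boldg^\top \pre\boldg$, the (second-order Taylor) FSD term contributes $\tfrac{\lamfsd}{2}\,\boldg^\top \pre^\top \boldG\pre\boldg$, and the WSD term contributes $\tfrac{\lamwsd}{2}\,\boldg^\top \pre^\top \pre\boldg$. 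Rewriting these as traces against the rank-one matrix $\boldg\boldg^\top$ and taking the expectation over $\batch$, I would introduce the second-moment matrix $\boldS := \mathbb{E}_{\batch\sim\mathcal{D}}[\boldg\boldg^\top]$ and note that $\boldG$ is a fixed matrix (the FSD Hessian evaluated at $\boldtheta$), so it pulls out of the expectation. This gives $\hat{\mathcal{Q}}(\pre) = -\Tr(\pre\boldS) + \tfrac{\lamfsd}{2}\Tr(\pre^\top \boldG\pre\boldS) + \tfrac{\lamwsd}{2}\Tr(\pre^\top \pre\boldS)$, a function of the single matrix variable $\pre$.

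Next I would differentiate using the trace identities $\partial_{\pre}\Tr(\pre\boldS) = \boldS$ and $\partial_{\pre}\Tr(\pre^\top \mathbf{M}\pre\boldS) = 2\mathbf{M}\pre\boldS$, valid for symmetric $\mathbf{M}$ and $\boldS$ (here $\boldG$, $\boldI$, and $\boldS$ are all symmetric). Setting $\grad_{\pre}\hat{\mathcal{Q}} = \mathbf{0}$ and collecting the coefficients from the two quadratic terms yields the stationarity equation $(\lamfsd\boldG + \lamwsd\boldI)\pre\boldS = \boldS$. This is the crux: because the per-batch gradient $\boldg$ sits inside the update, only its expected outer product $\boldS$ survives the differentiation, so $\pre$ can be pinned down only through $\boldS$ rather than through any individual $\boldg$.

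It is exactly here that I would invoke the non-singularity hypothesis: since $\boldS$ is invertible, I can right-multiply by $\boldS^{-1}$ to cancel it, obtaining $(\lamfsd\boldG + \lamwsd\boldI)\pre = \boldI$ and hence $\pre^\star = (\lamfsd\boldG + \lamwsd\boldI)^{-1}$. I would then confirm that this stationary point is the global minimizer by checking convexity: vectorizing through $\pre\boldg = (\boldg^\top \otimes \boldI)\,\mathrm{vec}(\pre)$, the Hessian of $\hat{\mathcal{Q}}$ in $\mathrm{vec}(\pre)$ is $\boldS \otimes (\lamfsd\boldG + \lamwsd\boldI)$, a Kronecker product of two positive-definite matrices (using $\boldG\succeq 0$, $\lamwsd>0$, and $\boldS\succ 0$ from invertibility of the PSD second-moment matrix). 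Thus $\hat{\mathcal{Q}}$ is strictly convex and the solution is unique.

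The main obstacle is the conceptual point rather than the algebra: without the non-singularity assumption the minimizer is genuinely non-unique, since for a single fixed $\boldg$ any $\pre$ producing the same vector $\pre\boldg$ attains the same objective value. The result therefore relies essentially on the expectation over mini-batches together with the invertibility of $\boldS$ to determine the full matrix $\pre$, and I would make sure the proof highlights this dependence. The remaining work --- the matrix-calculus bookkeeping and tracking the scalar coefficients, in particular keeping the factor-of-two from differentiating the symmetric quadratic forms consistent with the $\tfrac12$ in the second-order Taylor expansion of the FSD --- is routine.
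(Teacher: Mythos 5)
Your proposal is correct and follows essentially the same route as the paper's proof in Appendix~\ref{app:precond-opt}: substitute $u(\boldtheta,\pre,\batch)-\boldtheta=-\pre\boldg$, reduce $\hat{\mathcal{Q}}$ to a quadratic in $\pre$, differentiate, and cancel the non-singular second-moment matrix $\mathbb{E}[\boldg\boldg^\top]$ from the stationarity equation $(\lamfsd\boldG+\lamwsd\boldI)\pre\,\mathbb{E}[\boldg\boldg^\top]=\mathbb{E}[\boldg\boldg^\top]$. Your trace reformulation and the explicit convexity check via the Kronecker-product Hessian $\boldS\otimes(\lamfsd\boldG+\lamwsd\boldI)$ are small additions beyond what the paper writes (it stops at the stationary point), but they do not change the argument.
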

\begin{proof}
We have the following meta-objective (where the loss term is expanded using its first-order Taylor series approximation):
\begin{align}
    \boldP^\star
    &=
    \argmin_{\boldP} \Big \{ \mathbb{E}_{\batch \sim \cD} \Big[ \cost_{\batch}(\boldtheta) + \underbrace{\nabla_{\boldtheta} \cost_{\batch}(\boldtheta)^\top}_{\boldg} ((\boldtheta - \boldP \boldg) - \boldtheta) \\
    &\qquad\qquad\qquad + \lamfsd \mathbb{E}_{(\boldxtil, \cdot) \sim \cD} \left[\disf(u(\boldtheta, \boldP, \batch), \boldtheta, \boldxtil)\right] + \frac{\lamwsd}{2} || \boldtheta - \boldP \boldg - \boldtheta ||^2 \Big] \Big \}
\end{align}
Let us first derive the second-order Taylor series approximation to the discrepancy term $\disf(u(\boldtheta, \boldphi, \batch), \boldtheta, \boldxtil)$.
To simplify notation, let $d(\boldtheta') \equiv \disf(\boldtheta', \boldtheta, \boldxtil)$ where $\boldtheta$ and the data $\boldxtil$ are implicit.
Then, the second-order expansion of $d$ about $\boldtheta$ is:
\begin{align}
    d(\boldtheta')
    &\approx
    \underbrace{d(\boldtheta)}_{=0} + \underbrace{\nabla d(\boldtheta)^\top (\boldtheta' - \boldtheta)}_{=0} + \frac{1}{2} (\boldtheta' - \boldtheta)^\top \nabla_{\boldtheta}^2 d(\boldtheta) (\boldtheta' - \boldtheta)
\end{align}
We have $d(\boldtheta) = \disf(\boldtheta, \boldtheta, \boldxtil) = \rho(f(\boldxtil, \boldtheta), f(\boldxtil, \boldtheta)) = 0$, and $\nabla d(\boldtheta) = 0$ because $\boldtheta$ is a minimum of $d$, so the first two terms are 0.
Denote the network output on example $\boldxtil$ by $\mathbf{y} = f(\boldxtil, \boldtheta)$.
To compute $\nabla_{\boldtheta}^2 d(\boldtheta)$, we have:
\begin{align}
    \nabla_{\boldtheta}^2 d(\boldtheta) &= \underbrace{\left(\frac{\partial \mathbf{y}}{\partial \boldtheta}\right)^\top}_{\boldJ_{\mathbf{y} \boldtheta}^\top}
    \underbrace{\left(\frac{\partial^2 \rho}{\partial \mathbf{y}^2}\right)}_{\boldH_{\rho}}
    \underbrace{\left(\frac{\partial \mathbf{y}}{\partial \boldtheta} \right)}_{\boldJ_{\mathbf{y} \boldtheta}}
    =
    \boldJ_{\mathbf{y} \boldtheta}^\top \boldH_{\rho} \boldJ_{\mathbf{y} \boldtheta}
\end{align}
To simplify notation, we let $\boldG$ denote the expectation of the Hessian of the FSD function, $\boldG \equiv \mathbb{E}_{\boldxtil \sim \cD}[\nabla_{\boldtheta}^2 d(\boldtheta)$].
Plugging this into the meta-objective, we have:
\begin{align}
    \boldP^\star &= \argmin_{\boldP} \left\{ \mathbb{E}_{\batch \sim \cD} \left[ \cost_{\batch}(\boldtheta) - \boldg^\top \boldP \boldg + \frac{\lamfsd}{2} \boldg^\top \boldP^\top \boldG \boldP \boldg + \frac{\lamwsd}{2} \boldg^\top \boldP^\top \boldP \boldg \right] \right\}
\end{align}
Next, we take the gradient with respect to $\boldP$ and set it to $0$ to solve for $\boldP$:
\begin{align}
    \nabla_{\boldP} \left( \mathbb{E}_{\batch \sim \cD} \left[ \cost_{\batch}(\boldtheta) - \boldg^\top \boldP \boldg + \frac{\lamfsd}{2} \boldg^\top \boldP^\top \boldG \boldP \boldg + \frac{\lamwsd}{2} \boldg^\top \boldP^\top \boldP \boldg \right] \right) &= 0 \\
    \mathbb{E} \left[ - \boldg \boldg^\top + \frac{\lamfsd}{2} (\boldG \boldP \boldg \boldg^\top + \boldG^\top \boldP \boldg \boldg^\top) + \lamwsd \boldP \boldg \boldg^\top \right] &= 0 \\
    \mathbb{E} \left[ - \boldg \boldg^\top + \lamfsd \boldG \boldP \boldg \boldg^\top + \lamwsd \boldP \boldg \boldg^\top \right] &= 0 \\
    (\lamfsd \boldF \boldP + \lamwsd \boldP - \boldI) \mathbb{E} \left[ \boldg \boldg^\top \right] &= 0
\end{align}
Assuming that the second moment matrix $\mathbb{E}_{\batch \sim \cD} [ \boldg \boldg^\top ]$ is non-singular, we have:
\begin{align}
    \lamfsd \boldG \boldP + \lamwsd \boldP - \boldI &= 0 \\
    (\lamfsd \boldG + \lamwsd \boldI) \boldP &= \boldI \\
    \boldP &= (\lamfsd \boldG + \lamwsd \boldI)^{-1}
\end{align}
Thus, $\boldP^\star = (\lamfsd \boldG + \lamwsd \boldI)^{-1}$.
\end{proof}

\section{Optimizing the 1-Step Meta-Objective Yields the KFAC Update}
\label{app:meta-opt-kfac}

In the previous section, we derived the optimal solutions to various approximate proximal objectives, optimizing over an unconstrained preconditioner $\boldP$.
Here, we consider an analogous setup for a structured preconditioner, where $\boldP$ is block-diagonal with the $\ell^{\text{th}}$ block (corresponding to the $\ell^{\text{th}}$ layer in the network) given by $\boldA_\ell \otimes \boldB_\ell$.

\subsection{KFAC Assumptions}
\label{app:kfac-assumptions}

This exposition of the KFAC assumptions is based on~\cite{GrosseNNTDChapter4}.
First we briefly introduce some notation that will be used in the assumptions.
Suppose a layer of a neural network has weights $\mathbf{W}_{\ell}$ and that its input is the activation vector from the previous layer, $\bolda_{\ell-1}$.
Then the output of the layer is a linear transformation of the input, followed by a nonlinear activation function $\sigma$:
\begin{align}
    \bolds_{\ell} &= \mathbf{W}_{\ell} \bolda_{\ell-1} \\
    \bolda_{\ell} &= \sigma(\bolds_{\ell})
\end{align}
where $\bolds_{\ell}$ are the pre-activations.
In the backward pass, we have the following activation and weight derivatives:
\begin{align}
    \mathcal{D} \bolda_{\ell} &= \boldW^\top \mathcal{D} \bolds_{\ell+1} \\
    \mathcal{D} \bolds_{\ell} &= \mathcal{D} \bolda_{\ell} \odot \sigma'(\bolds_{\ell}) \\
    \mathcal{D} \boldW_{\ell} &= \mathcal{D} \bolds_{\ell} \bolda_{\ell-1}^\top
\end{align}

KFAC~\citep{martens2015optimizing} makes the following assumptions on the neural network being optimized:
\begin{itemize}
    \item The layers are independent, such that that the pseudo-derivatives $d w_i$ and $d w_j$ are uncorrelated when $w_i$ and $w_j$ belong to different layers.
    If this assumption is satisfied, then the Fisher information matrix $\boldF$ will be block-diagonal.
    \item The activations $\{\bolda_{\ell} \}$ are independent of the pre-activation pseudo-gradients $\{ \mathcal{D} \bolds_{\ell} \}$. If $\bolda_{\ell-1}$ is independent of $\mathcal{D} \bolds_{\ell}$, then the $\ell^{\text{th}}$ block of the Fisher is:
    \begin{align}
        \hat{\boldG}_{\ell}
        &=
        \boldA^{\text{KFAC}}_{\ell} \otimes \boldB^{\text{KFAC}}_{\ell}
    \end{align}
    where:
    \begin{align}
        \boldA^{\text{KFAC}}_{\ell} &= \mathbb{E}[\bolda_{\ell-1} \bolda_{\ell-1}^\top] \\
        \boldB^{\text{KFAC}}_{\ell} &= \mathbb{E}[\mathcal{D} \bolds_{\ell} \mathcal{D} \bolds_{\ell}^\top]
    \end{align}
\end{itemize}

\subsection{Proof of Corollary~\ref{thm:kfac}}

\begin{corollary*}
Suppose that (1) the assumptions for Theorem~\ref{thm:optimal-precond} are satisfied, (2) the FSD term measures the KL divergence, and (3) $\lamwsd = 0$ and $\lamfsd = 1$.
Moreover, suppose that the parameters $\vtheta$ satisfy the KFAC assumptions listed in Appendix~\ref{app:meta-opt-kfac}.
Then, the optimal solution to the approximate meta-objective recovers the KFAC update, which can be represented using the structured preconditioner in Eq.~\ref{eq:ekfac-param}.
\end{corollary*}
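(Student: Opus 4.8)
The plan is to reduce the corollary to Theorem~\ref{thm:optimal-precond} by showing that, under the stated hypotheses, the unconstrained optimal preconditioner already lies inside the structured class of Eq.~\ref{eq:ekfac-param}, so that constraining $\pre$ to that class changes nothing. First I would specialize Theorem~\ref{thm:optimal-precond}: its minimizer is $\pre^\star = (\lamfsd \boldG + \lamwsd \boldI)^{-1}$. Hypothesis (2) makes the FSD a KL divergence, so $\boldG$ is the Fisher information matrix $\boldF$ (as noted right after the theorem), and hypothesis (3) sets $\lamfsd = 1$, $\lamwsd = 0$. Hence $\pre^\star = \boldF^{-1}$, and $\pre^\star \boldg = \boldF^{-1}\boldg$ is exactly the undamped natural-gradient step, i.e.\ the KFAC update once $\boldF$ is given the KFAC structure.

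Next I would bring in the KFAC assumptions of Appendix~\ref{app:meta-opt-kfac}. Layer-independence makes $\boldF$ block-diagonal across layers, and the independence of activations from pre-activation pseudo-gradients makes the $\ell^{\text{th}}$ block equal to $\boldA^{\text{KFAC}}_\ell \otimes \boldB^{\text{KFAC}}_\ell$. Applying $(\mathbf{X}\otimes\mathbf{Y})^{-1} = \mathbf{X}^{-1}\otimes\mathbf{Y}^{-1}$ block by block, the $\ell^{\text{th}}$ block of $\pre^\star = \boldF^{-1}$ is $(\boldA^{\text{KFAC}}_\ell)^{-1} \otimes (\boldB^{\text{KFAC}}_\ell)^{-1}$, which is the standard KFAC preconditioner block.

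The crux is then to exhibit this block inside the parameterization of Eq.~\ref{eq:ekfac-param}. Since the KFAC factors are symmetric and (assumed) positive definite, I would take $\boldA_\ell$ and $\boldB_\ell$ to be the inverse symmetric square roots of the two Kronecker factors (matched to the $\text{vec}$ ordering) and let every entry of $\boldS_\ell$ equal $1$, so that $\text{diag}(\text{vec}(\boldS_\ell))^2 = \boldI$ and
\begin{align}
(\boldA_\ell \otimes \boldB_\ell)\,\text{diag}(\text{vec}(\boldS_\ell))^2\,(\boldA_\ell \otimes \boldB_\ell)^\top = \boldA_\ell\boldA_\ell^\top \otimes \boldB_\ell\boldB_\ell^\top = (\boldA^{\text{KFAC}}_\ell)^{-1} \otimes (\boldB^{\text{KFAC}}_\ell)^{-1}.
\end{align}
Going instead through the orthogonal eigendecompositions $\boldA^{\text{KFAC}}_\ell = \boldU_A \Lambda_A \boldU_A^\top$, $\boldB^{\text{KFAC}}_\ell = \boldU_B \Lambda_B \boldU_B^\top$ and absorbing $\Lambda_A^{-1}\otimes\Lambda_B^{-1}$ into $\boldS_\ell$ reproduces EKFAC's actual eigenbasis form; this only needs the $\text{vec}$ indices aligned with the Kronecker diagonal. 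Stacking these blocks yields a block-diagonal structured preconditioner equal to $\boldF^{-1}$.

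I would close by observing that the structured family is a subset of all preconditioners and now provably contains $\pre^\star = \boldF^{-1}$; since $\pre^\star$ minimizes $\hat{\mathcal{Q}}$ over all $\pre$, it also minimizes $\hat{\mathcal{Q}}$ over the structured subset, so solving the structured meta-objective recovers $\boldF^{-1}$, the KFAC update. The main obstacle is precisely this containment: the argument works only because the KFAC assumptions force the global optimum to inherit a block-diagonal Kronecker form representable by Eq.~\ref{eq:ekfac-param}; without them the structured optimum would merely be a projection of $\boldF^{-1}$ rather than $\boldF^{-1}$ itself. The representation computation is otherwise routine, the only delicate point being the index-ordering match between $\text{vec}(\boldS_\ell)$ and the Kronecker factors.
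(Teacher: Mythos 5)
Your proposal is correct and follows essentially the same route as the paper's proof: specialize Theorem~\ref{thm:optimal-precond} with the KL-divergence FSD and $\lamwsd=0$, $\lamfsd=1$ to get $\pre^\star = \boldF^{-1}$, invoke the KFAC assumptions to make $\boldF^{-1}$ block-diagonal with Kronecker-factored blocks, and observe this lies in the structured class of Eq.~\ref{eq:ekfac-param}. The only difference is that you supply details the paper leaves implicit — the explicit inverse-square-root construction of $\boldA_\ell$, $\boldB_\ell$, $\boldS_\ell$ and the observation that containment of the unconstrained optimum in the structured family implies it is also the constrained optimum — both of which are correct and strengthen the argument.
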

\begin{proof}
If the KFAC assumptions are satisfied, then $\boldF^{-1}$ is block-diagonal with the block corresponding to the $\ell^{\text{th}}$ layer expressed as $\boldG_{\ell} = \boldA_{\ell}^{-1} \otimes \boldB_{\ell}^{-1}$ where $\boldA_{\ell} = \mathbb{E}[\bar{\bolda}_{\ell-1} \bar{\bolda}_{\ell-1}^\top]$ and $\boldB_{\ell} = \mathbb{E}[\mathcal{D} \bolds_{\ell} \mathcal{D} \bolds_{\ell}^\top]$.
By Theorem~\ref{thm:optimal-precond}, the optimal solution to the approximate meta-objective is $\boldP^\star = \boldF^{-1}$ when $\lamwsd = 0$ and $\lamfsd = 1$. Hence $\boldP^\star$ is this block-diagonal matrix, which can be expressed using our structured parameterization (Eq.~\ref{eq:ekfac-param}).
Thus, APO recovers the KFAC update.
\end{proof}

\section{Ablations}
\label{app:labmda-ablation}

\subsection{Ablation Over \texorpdfstring{$\lamwsd$}{lambwsd} and \texorpdfstring{$\lamfsd$}{lambfsd}}

Figure~\ref{fig:meta-fsd} provides an ablation over $\lamwsd$ and $\lamfsd$ for training AlexNet on CIFAR-10 using APO to adapt the preconditioning matrix. We kept the other proximity weight fixed when performing the experiments. We found that APO-Precond is robust in various ranges of proximity weights $\lambda$.

\begin{figure}[H]
\centering
\includegraphics[width=0.40\linewidth]{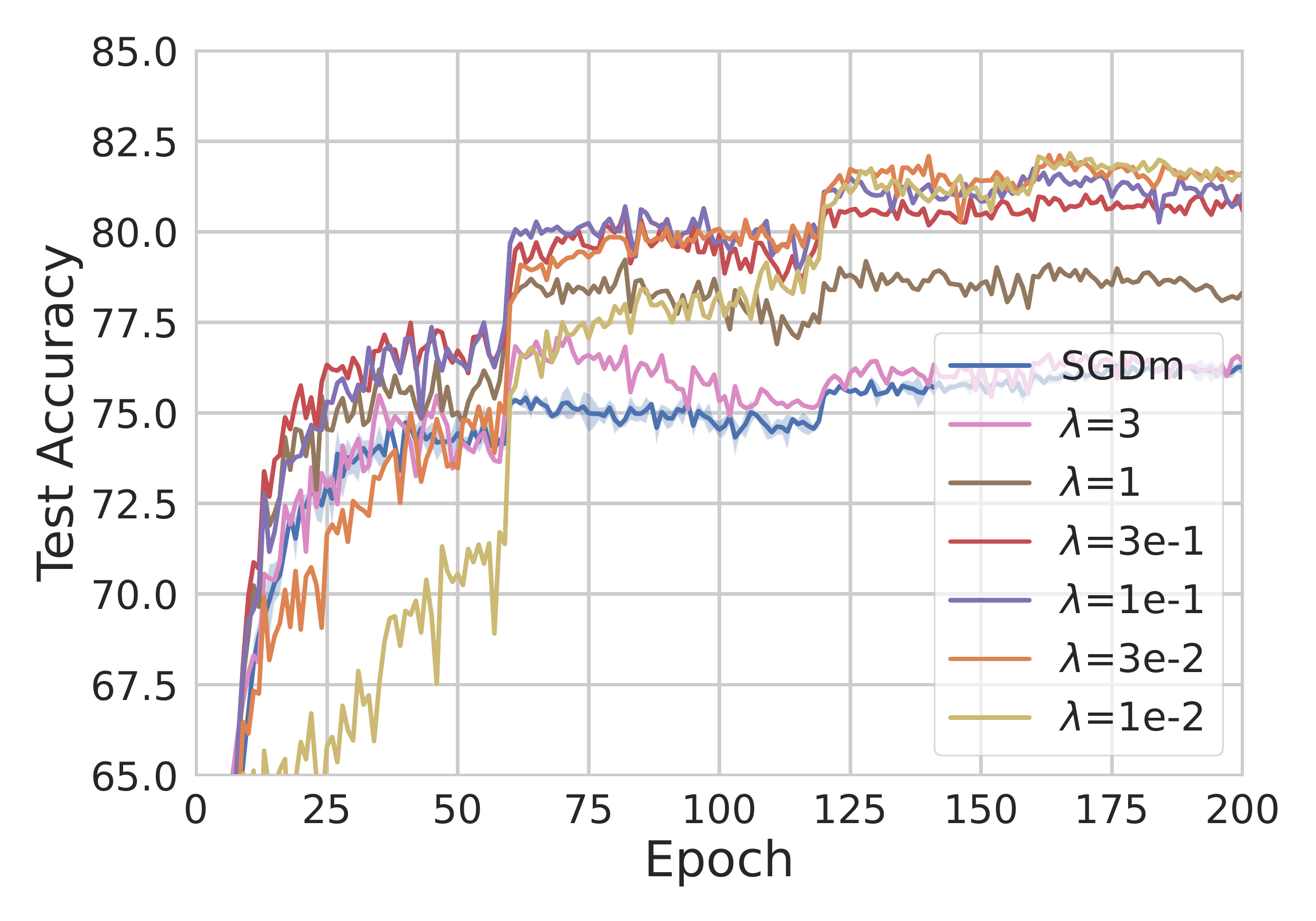}
\includegraphics[width=0.40\linewidth]{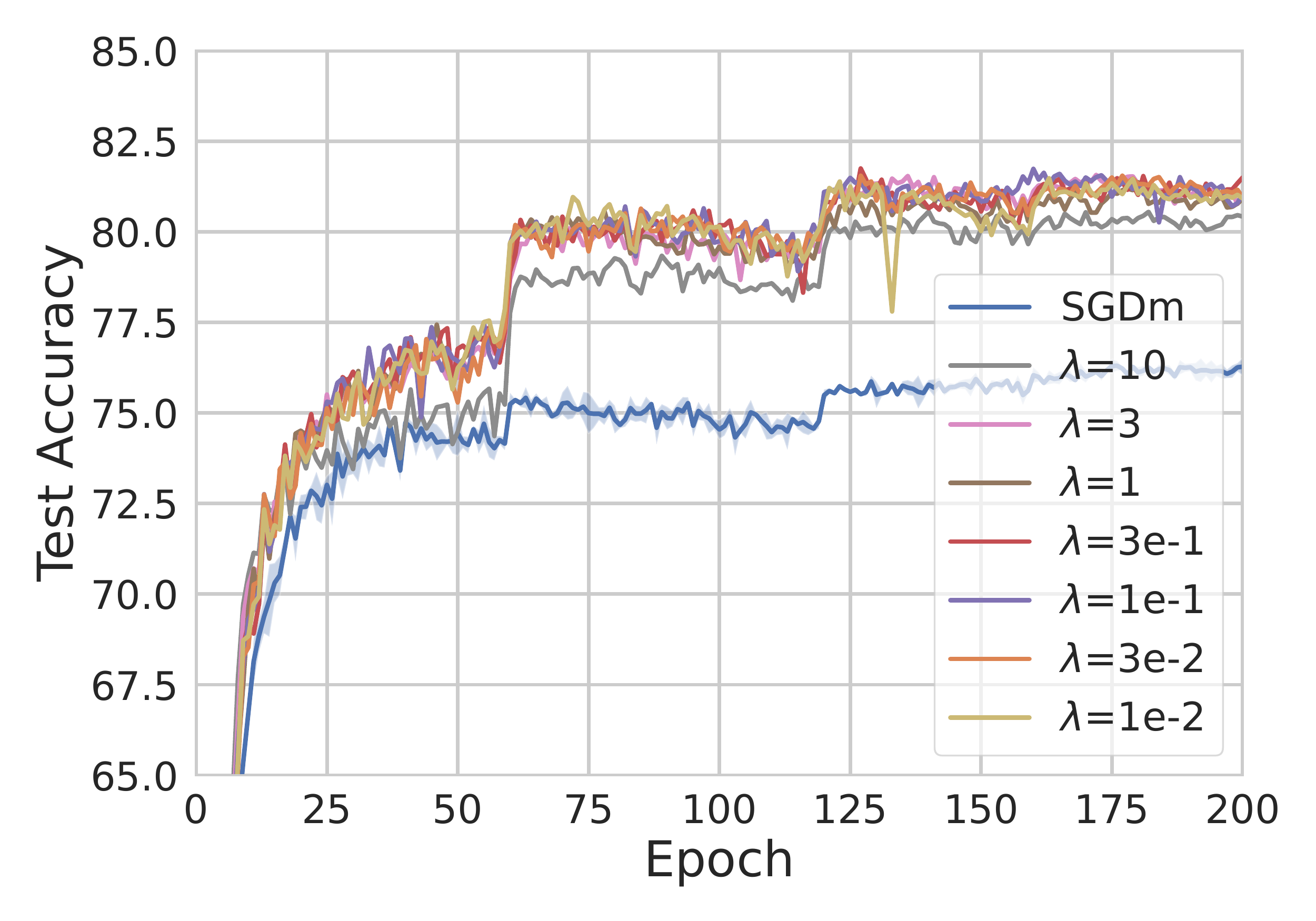}
\vspace{-2mm}
\caption{Ablation over (left) $\lamwsd$ and (right) $\lamfsd$ for training AlexNet on CIFAR-10, using APO-Precond}
\label{fig:meta-fsd}
\end{figure}

\subsection{Robustness to Initial Learning Rate and Meta-Update Interval}
\label{app:robustmeta}
We performed an ablation to evaluate how well APO performs with different meta-update intervals (i.e.~when making more or less frequent updates to the learning rate and preconditioning matrix during training). We used APO to tune the learning rate, and experimented with performing meta-updates once every 10, 20, 50, and 100 base optimization iterations. We trained a ResNet32 model on CIFAR-10, and used SGDm as the base optimizer. The results are shown in Figure~\ref{fig:meta-interval}. We found that APO is robust to the meta-update interval, performing almost identically with respect to training loss, test accuracy, and the adaptive learning rate schedule, for each meta-update interval.

\begin{figure}[h]
    \centering
    \begin{tabular}{ccc}
    \includegraphics[width=0.31\linewidth]{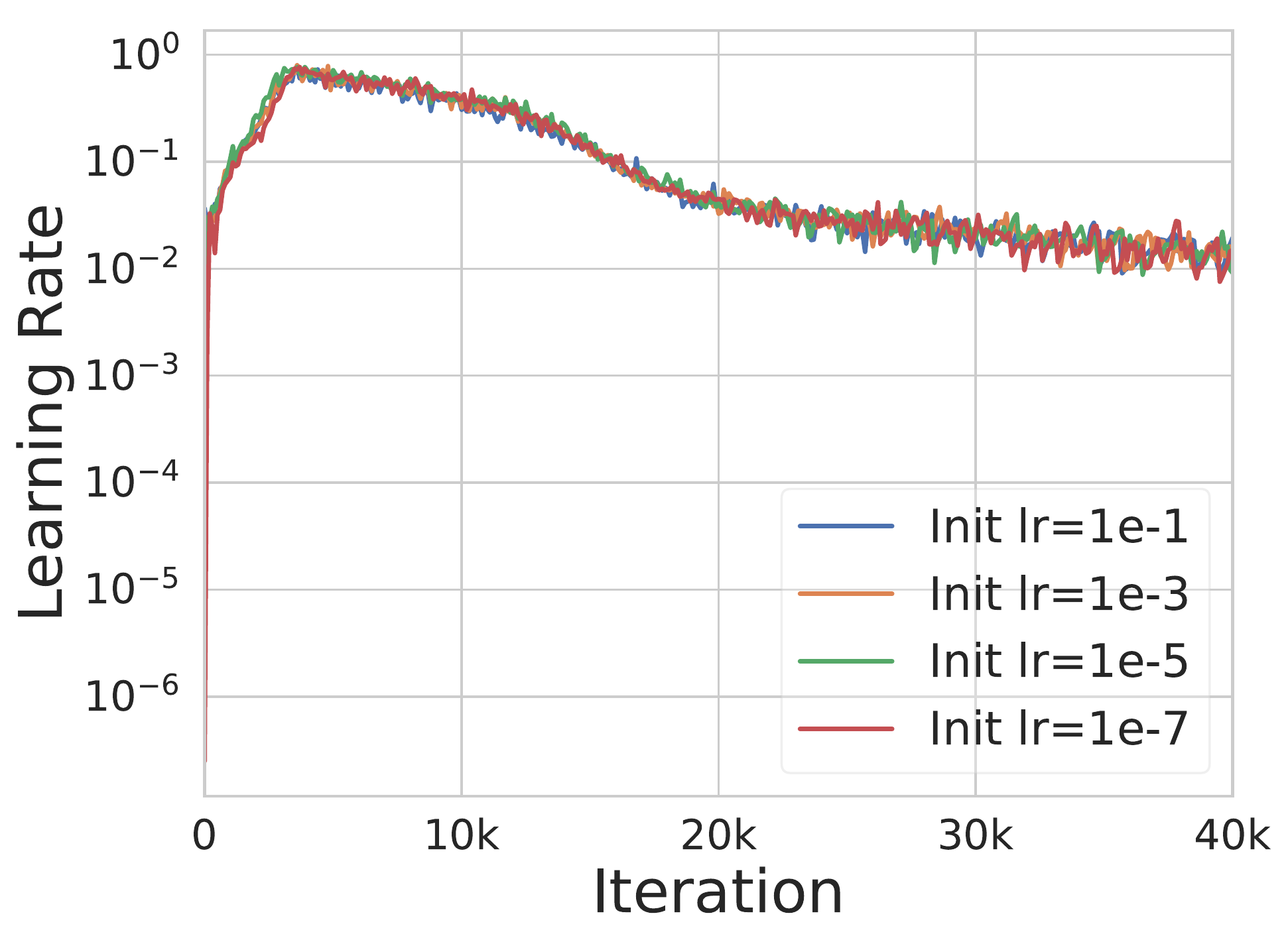} &
    \includegraphics[width=0.31\linewidth]{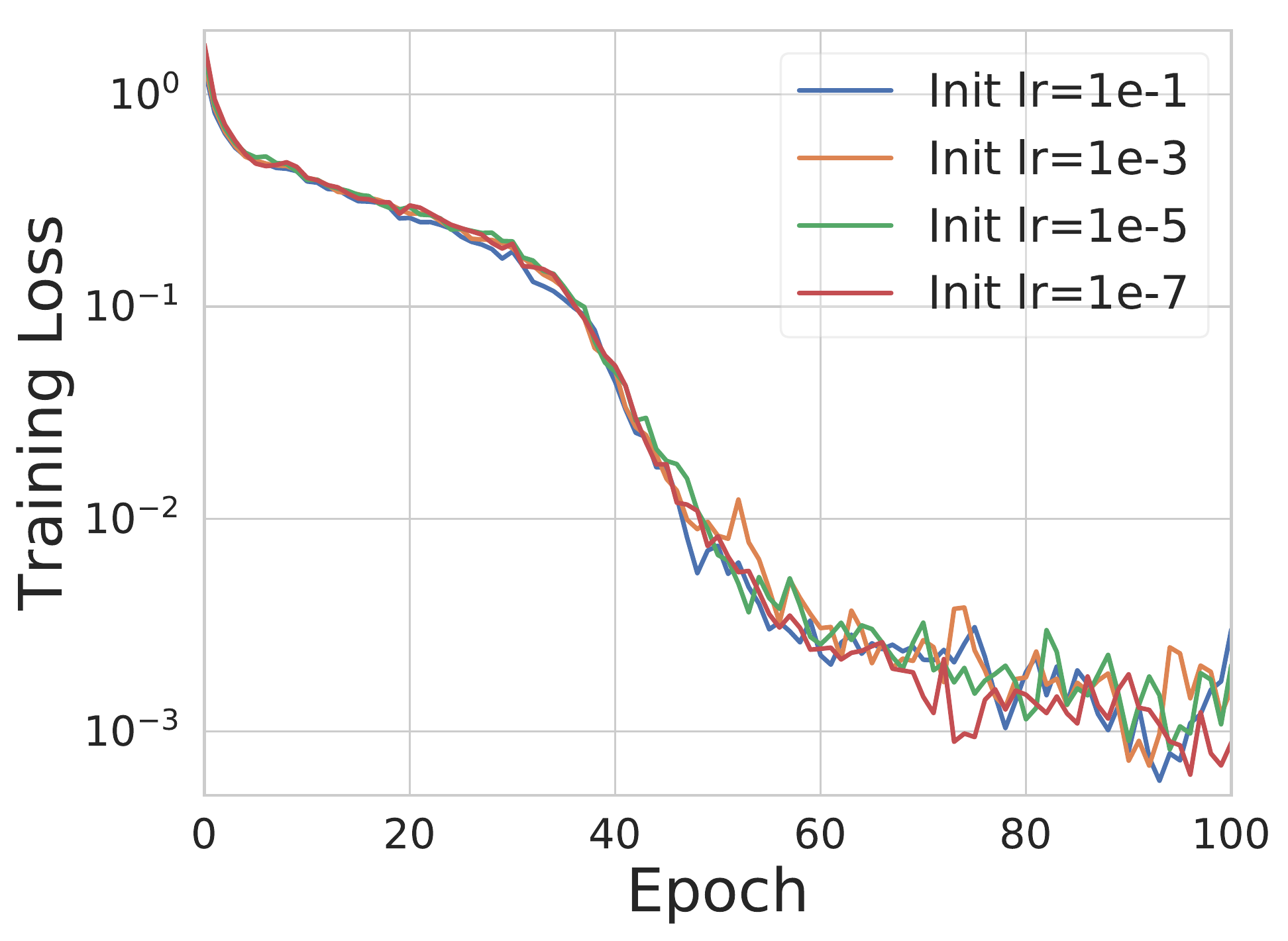} &
    \includegraphics[width=0.31\linewidth]{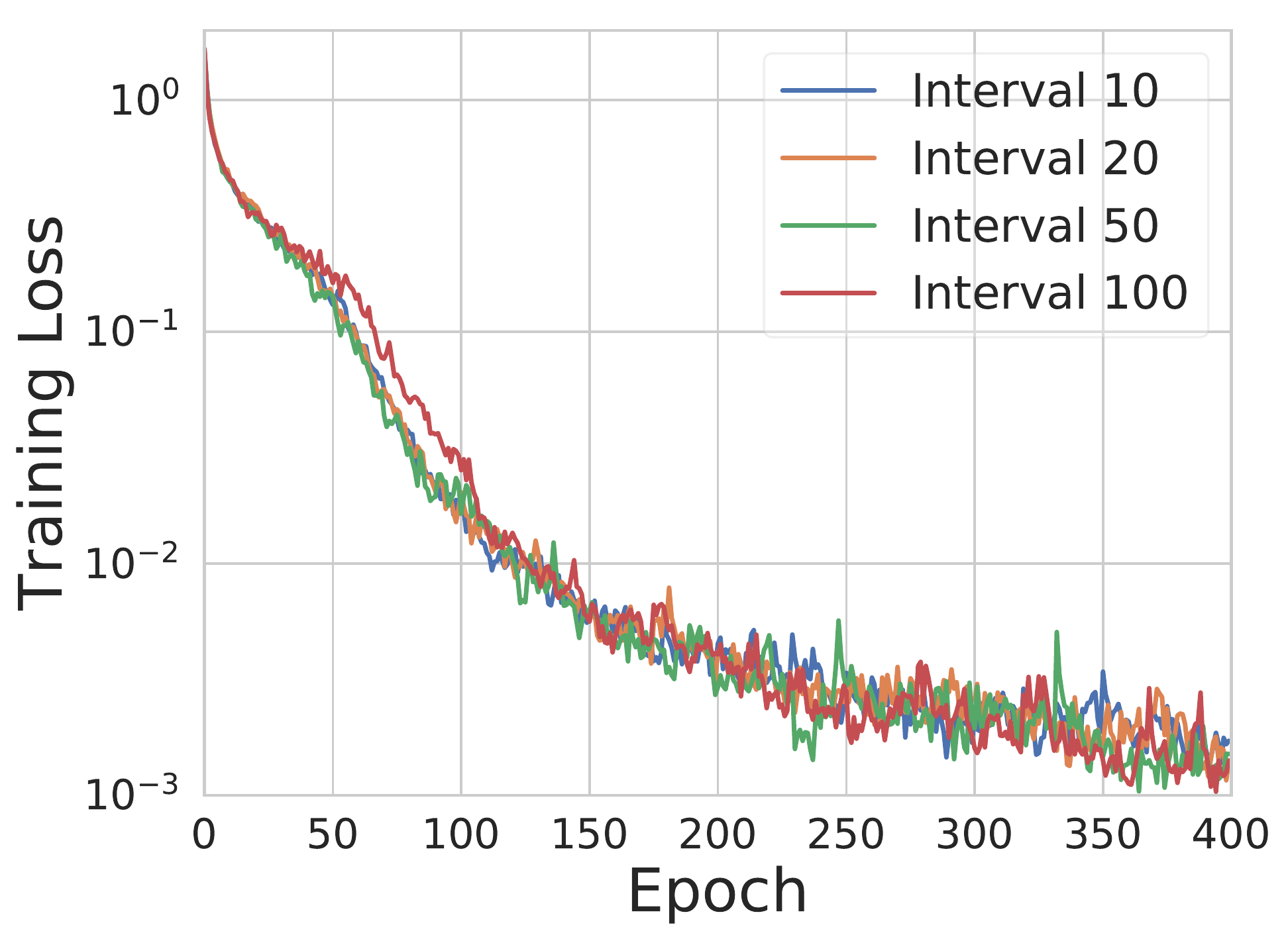} \\
    (a) &
    (b) &
    (c)
    \end{tabular}
    \vspace{-2mm}
    \caption{Robustness to the initial learning rate and meta-update interval. Figures (a) and (b) show that APO achieves almost identical learning rate schedules and training losses for initial learning rates spanning 6 orders of magnitude. Figure (c) shows the training loss for ResNet32 on CIFAR-10 using APO with various meta-update intervals---we observe that APO performs similarly using intervals from 10 to 100, corresponding to computation times $1.3\times$ to $1.03\times$ that of the base optimizer.
    }
    \label{fig:init-lr}
\end{figure}

\begin{figure}[H]
    \centering
    \includegraphics[width=0.31\linewidth]{figures/resnet32_meta_interval/resnet32_meta_interval_train_loss.pdf}
    \includegraphics[width=0.31\linewidth]{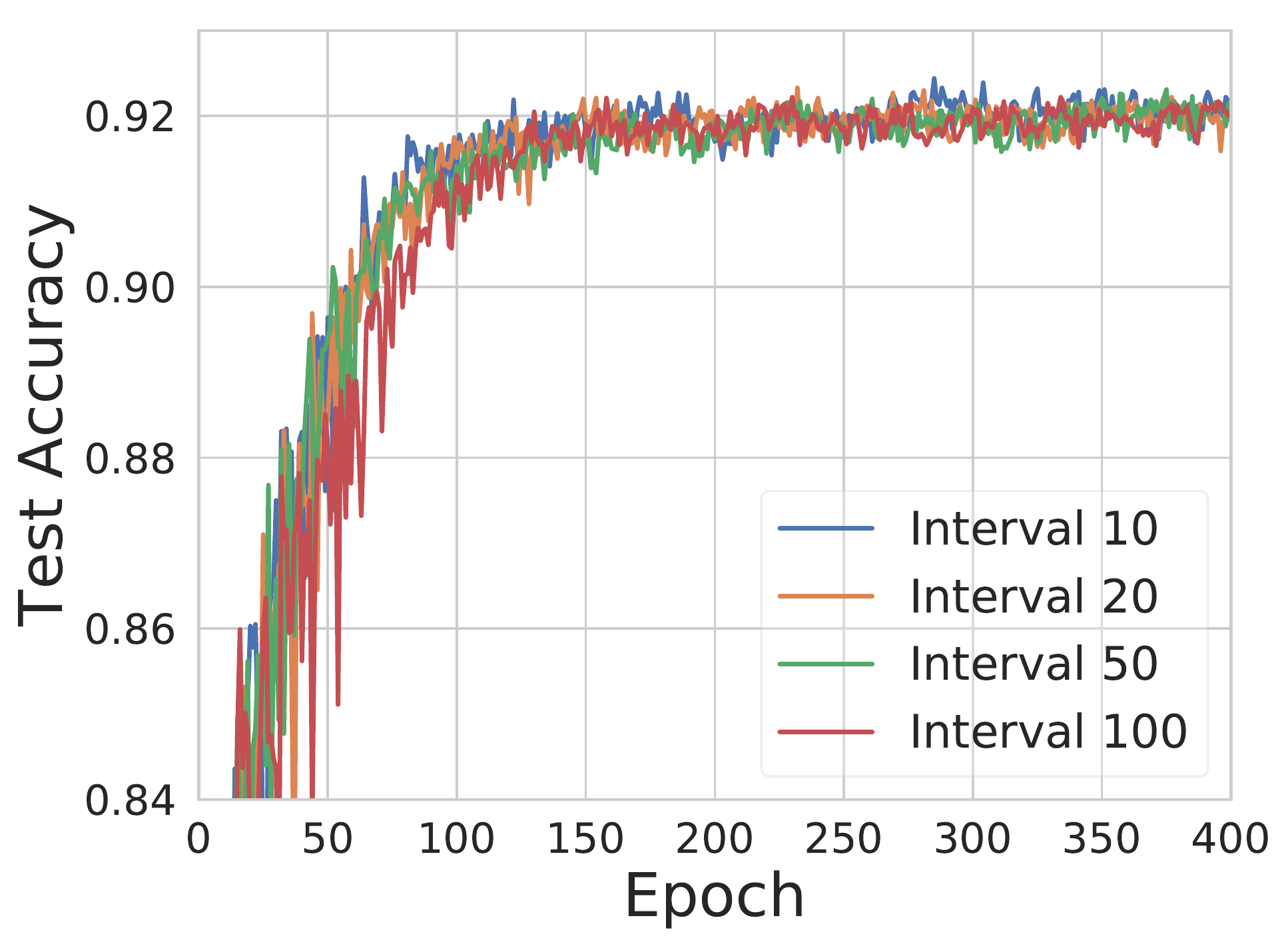}
    \includegraphics[width=0.31\linewidth]{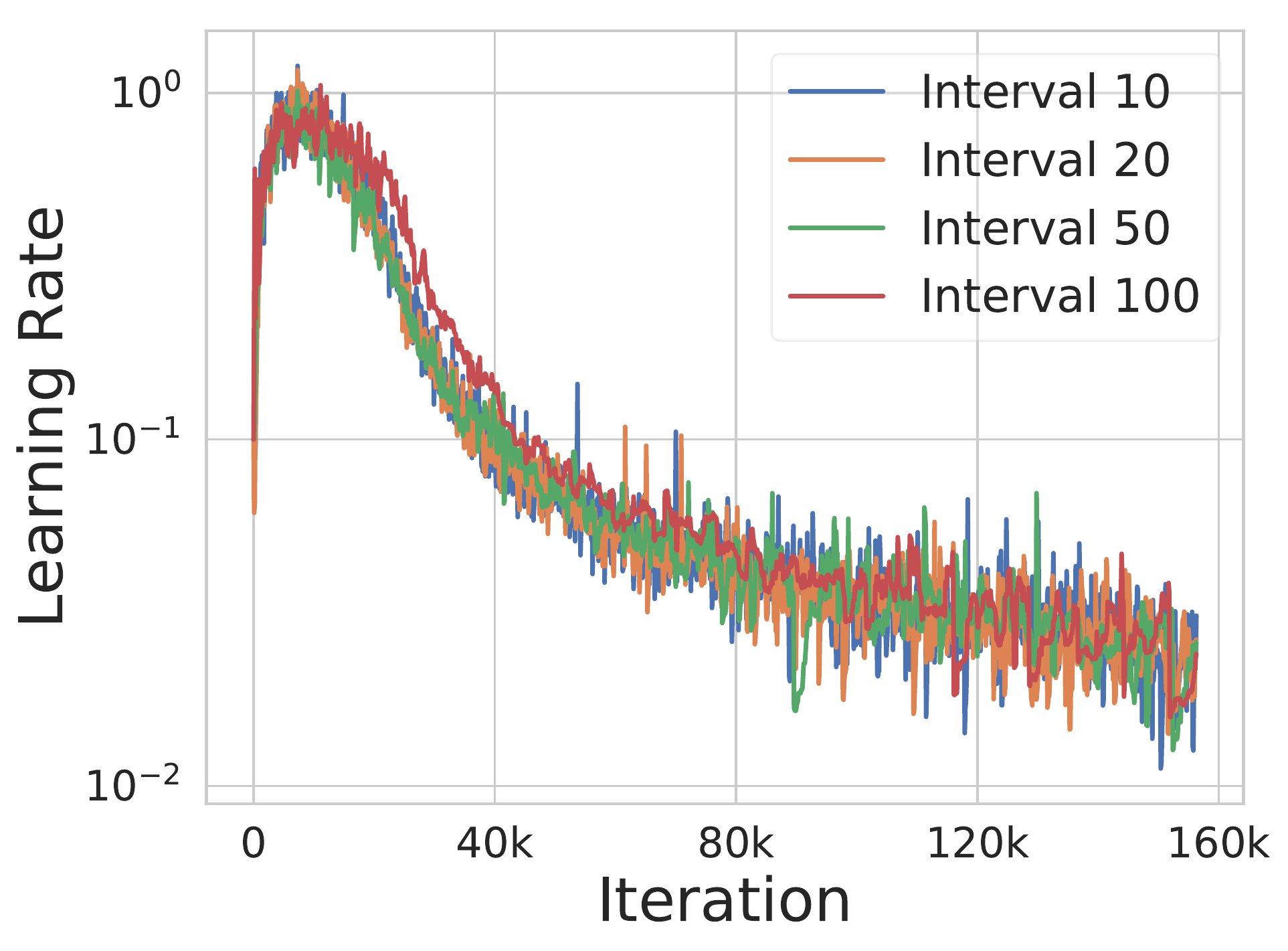}
    \vspace{-2mm}
    \caption{Robustness to the meta update interval. CIFAR-10, SGDm-APO.}
    \label{fig:meta-interval}
\end{figure}

To further examine how meta-update intervals affect the preconditioning adaptation, we trained AlexNet model on CIFAR-10. We used APO to learn the preconditioning matrix and experimented with performing meta-updates once every 10, 20, 50, and 100 base optimization iterations. The results are shown in Figure~\ref{fig:precond-meta-interval}. As in the learning rate adaptation problem, we found that APO performs similarly using intervals from 10 to 100, achieving higher test accuracy than the baseline method.
\begin{figure}[H]
    \centering
    \includegraphics[width=0.41\linewidth]{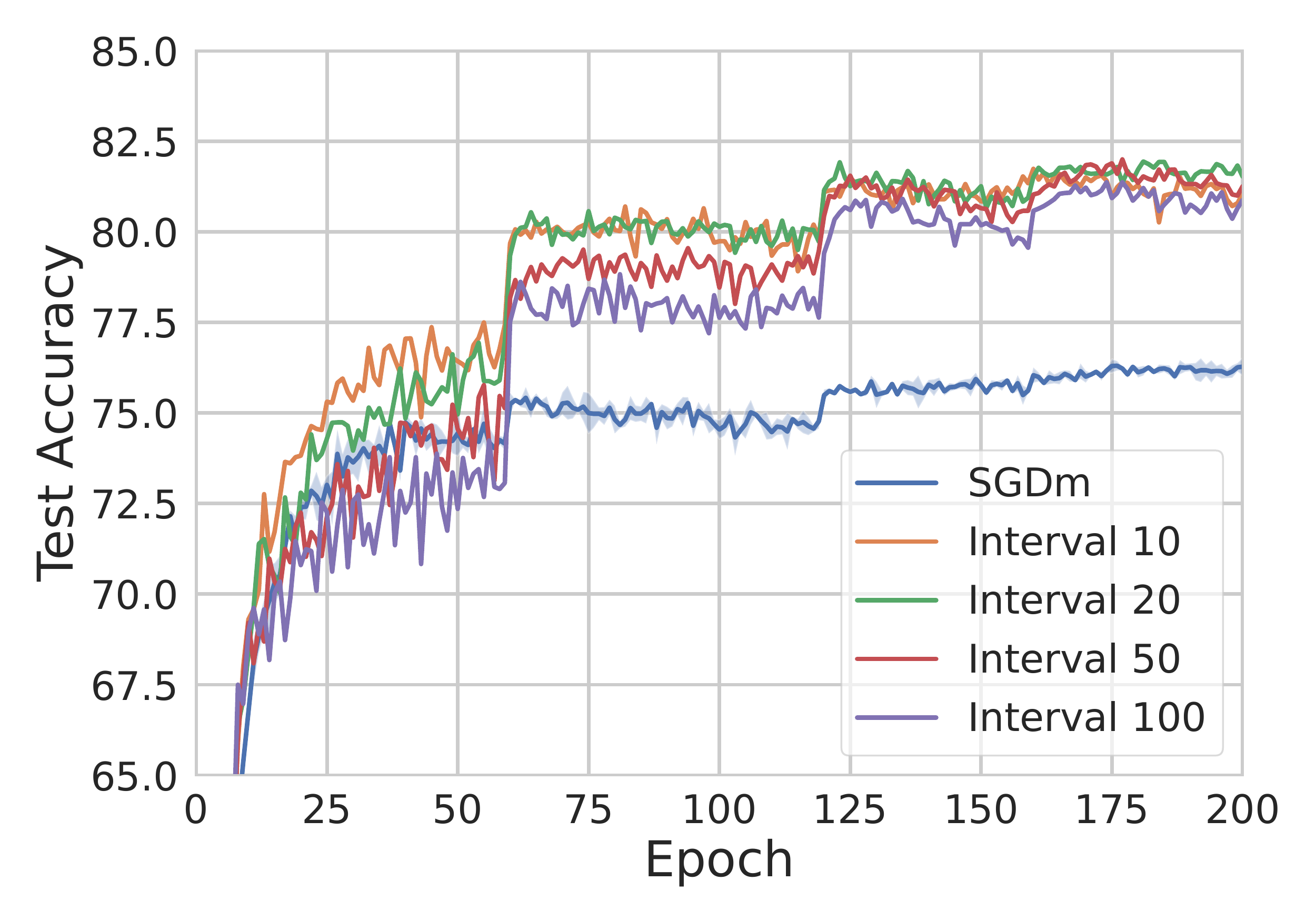}
    \vspace{-2mm}
    \caption{Robustness to the meta update interval on AlexNet. CIFAR-10, APO-Precond.}
    \label{fig:precond-meta-interval}
\end{figure}

\end{document}